\numberwithin{equation}{section}
\theoremstyle{plain}
\newtheorem{theorem}{Theorem}[section]
\newtheorem{lemma}{Lemma}[section]
\theoremstyle{remark}
\newtheorem{definition}{Definition}
\newtheorem{assumption}{Assumption}
\newtheorem{corollary}{Corollary}
\newtheorem{Remark}{\bf Remark}
\newtheorem{NS}{NS}
\def\argmin{\mbox{argmin}}
\def\argmax{\mbox{argmax}}
\def\Pdim{\text{Pdim}}
\def\Erm{\mathrm{E}}
\begin{document}
	
\begin{frontmatter}
		\title{Deep Sufficient Representation Learning \\ via Mutual Information}
		\runtitle{
Deep Sufficient Representation Learning}
		
\begin{aug}
			\author[A]{\fnms{Siming} \snm{Zheng}\ead[label=e1,mark]{simingzheng@cuhk.edu.hk}}
			\author[C]{\fnms{Yuanyuan} \snm{Lin}\ead[label=e2,mark]{ylin@sta.cuhk.edu.hk}}
			\and
			\author[D]{\fnms{Jian} \snm{Huang}\ead[label=e3,mark]{j.huang@polyu.edu.hk}}
\address[A]{Department of Statistics, The Chinese University of Hong Kong,  Hong Kong SAR  \printead{e1}}							
\address[C]{Department of Statistics, The Chinese University of Hong Kong,  Hong Kong SAR  \printead{e2}}			
\address[D]{Department of Applied Mathematics, The Hong Kong Polytechnic University, Hong Kong SAR
  \printead{e3}}	
\end{aug}
		
\begin{abstract}
We propose a mutual information-based sufficient representation learning (MSRL) approach, which uses the variational formulation of mutual information and leverages the approximation power of deep neural networks.  MSRL learns a sufficient representation with the maximum mutual information with the response and a user-selected distribution.  It can easily handle multi-dimensional continuous or categorical response variables.  MSRL is shown to be consistent in the sense that the conditional probability density function of the response variable given the learned representation converges to the conditional probability density function of the response variable given the predictor.  Non-asymptotic error bounds for MSRL are also established under suitable conditions.  To establish the error bounds, we derive a generalized Dudley's inequality for an order-two U-process indexed by deep neural networks, which may be of independent interest.
We discuss how to determine the intrinsic dimension of the underlying data distribution.  Moreover, we evaluate the performance of MSRL via extensive numerical experiments and real data analysis and demonstrate that MSRL outperforms some existing nonlinear sufficient dimension reduction methods.

\medskip\noindent
\textbf{Keywords:}
Deep neural networks; non-asymptotic error bounds; nonparametric estimation;  representation learning; sufficient dimension reduction
\end{abstract}

	\end{frontmatter}

\section{Introduction}
Learning a good representation of a predictor $X$ for predicting a response $Y$ is a fundamental problem in statistics and machine learning.  An effective data representation is crucial to the success for downstream modeling and analysis tasks.  In this paper, we propose a nonparametric method for learning a sufficient representation of $X$. We formulate an objective function based on  a variational form of the mutual information and use deep neural networks to approximate the representation function.  We  shall refer to our proposed method as {Mutual-information-based Sufficient Representation Learning} (MSRL).

MSRL is inspired by the idea underlying sufficient dimension reduction \citep[SDR,][]{Li:1991, Cook:1998}.  The goal of SDR is to learn a linear representation of $X$ such that the response $Y$ is conditionally independent of $X$ given this representation, that is, the representation is sufficient.
SDR is a semiparametric method in the sense that only the representation to be estimated is assumed to take a linear form, but no parametric form is assumed for the conditional distribution of $Y$ given $X$.  \cite{Li:1991} introduced the sliced inverse regression (SIR) for estimating the sufficient direction under the assumption that the predictor satisfies a linear expectation property.  Subsequently, various related methods have been proposed  under similar assumptions, including sliced
average variance estimation \citep[SAVE,][]{Cook:Weisberg:1991}, principal Hessian directions \citep{Li:1992}, minimum average variance estimation
\citep{Xia:Tong:Li:Zhu:2002}, and cumulative slicing estimation \citep{Zhu:Zhu:Feng:2010}, among others.  We refer to \cite{Cook:2007,Cook:2018} and \cite{Ma:Zhu:2013} and the references therein for thorough reviews of the SDR methods.

In recent years, several studies have developed nonlinear SDR methods using the kernel tricks.  \cite{Lee2013aos} proposed the generalized sliced inverse regression (GSIR) and generalized sliced average variance (GSAVE)  methods based on the reproducing kernel Hilbert space (RKHS) theory.  Other kernel-based nonlinear SDR methods have been developed by  \cite{Li:Artemiou:Li:2011}, \cite{Li:Song:2017},  and \cite{Shin:Wu:Zhang:Liu:2017}, among others.  For these kernel-based methods, an $n\times n$ Gram matrix is involved in the estimation procedure, where $n$ is the sample size. Hence, these methods are computationally expensive when $n$ is relatively large.

MSRL uses an objective function based on the mutual information for learning a sufficient representation $R(X)$ so that $Y$ and $X$ are conditionally independent given $R(X)$.  It takes advantage of the strong approximation power of deep neural networks and model the representation $R(X)$ using deep neural networks to extract nonlinear features in a nonparametric fashion.  To facilitate the implementation of the proposed method, we consider a dual form of the mutual information criterion. Then the estimation of $R(X)$ can be done using the adversarial training procedure \citep{goodfellow2014generative}.  MSRL can easily deal with the case when $Y$ is a multi-dimensional response or a categorical variable.  MSRL is easy-to-implement and computationally efficient, even for very large $n$.

The information bottleneck (IB) method \citep{tishby:Pereira:Bialek:1999, Tishby:Zaslavsky:2015} is also based on the mutual information.  The goal of IB is to find a maximally compressed mapping of the
predictor
that can preserve as much information on the
response
as possible.
To achieve this, IB maximizes the mutual information between $Y$ and a parametrically modeled representation $R(X)$, under a constraint that the mutual information between $X$ and $R(X)$ is smaller than a pre-specified nonnegative constant.  Different from the IB method and motivated by a result from the optimal transport theory, we introduce a new constraint  to enforce the  representation to follow a user-specified distribution.  Making use of the variational formulation of the $f$-divergence, we formulate MSRL
as a minimax optimization problem and compute the estimated representation
using the adversarial learning techniques \citep{goodfellow2014generative, nowozin2016f}.

Our contributions are as follows.
\begin{itemize}
	\item We propose MSRL, a new nonparametric sufficient representation learning method based on $f$-mutual information and deep  neural networks. We formulate the problem as that of maximizing the mutual information between the response and the representation of the predictor. Computationally, we use a dual form of the mutual information and parameterize the representation using deep neural networks. MSRL can directly handle multi-dimensional continuous response and can be easily modified to deal with categorical response variables.
	
	\item We show that the learned representation is consistent in the sense that the conditional probability density function (p.d.f) of $Y$ given the  learned representation converges to the conditional p.d.f of $Y$ given $X$. We also study the convergence properties by establishing the non-asymptotic error bounds  under suitable conditions.  To establish the non-asymptotic error bounds, we derive a generalized Dudley's inequality for an order-two U-process indexed by deep neural network functions using Rademacher chaos. This inequality may be of independent interest in other problems where U-processes arise.
	
	\item We evaluate the performance of MSRL via extensive numerical experiments and real data analysis and demonstrate that MSRL outperforms some existing nonlinear sufficient dimension reduction methods. We also discuss the extension to accommodate the cases involving categorical variables and provide a feasible algorithm to determine the intrinsic dimension of the sufficient representation.
\end{itemize}

The rest of the paper is organized as follows.  We first give some preliminary information-theoretic background and describe our proposed MSRL method  in Section \ref{Info-BackGroud-EstMethod}.  In Section \ref{Theoretical_Results}, we establish the consistency property of MSRL and study its non-asymptotic error bounds under reasonable conditions.  We provide a high-level description of the proofs for the theoretical results in Section \ref{high_level_description} and briefly discuss the extension to handle categorical responses or predictor in Section \ref{Extensions}.  We describe a training algorithm and a way to determine the intrinsic dimension of the representation in Section \ref{Implementation}.  In Section \ref{Experiments} we conduct numerical experiments to evaluate the performance of MSRL.  Concluding remarks are given in Section \ref{Discussion}.
	
	\section{$f$-mutual information and sufficient representations}
To learn a sufficient representation $R$, we use mutual information to characterize conditional independence.  Let $f:\mathbb{R}^+\to\mathbb{R}$ be a differentiable and convex function with $f(1)=0$.  Let $P_1$ and $P_2$ be two probability measures supported on a set $\mathcal{Z}$ and with p.d.f's $p_1(\cdot)$ and $p_2(\cdot)$, respectively.  The $f$-divergence \citep{ali1966general} 
between $P_1$ and $P_2$ is defined as
\[
\mathbb{D}_f(P_1~||~P_2) :=\int_{\mathcal{Z}}p_2(z)f\left(\frac{p_1(z)}{p_2(z)}\right)dz.
\]
It can be easily verified that $\mathbb{D}_f(P_1~||~P_2)\ge0$ and the equality holds if $p_1\equiv p_2$. Commonly-used $f$-functions include $f(t)=t\log t$, $f(t)=-(t+1)\log(t+1/2)+t\log t$, and $f(t)=(t-1)^2$, corresponding to  the well-known Kullback-Leibler (KL),
Jensen-Shanon (JS), and $\chi^2$ divergences.

Throughout the paper, we let $(X, Y)$
 be a pair of $d_{X}$-vector of
covariate and $d_{Y}$-vector of response, where   $X\in\mathcal{X}=[0,1]^{d_{X}}$ and $Y\in\mathcal{Y}=[0,1]^{d_{Y}}.$
The probability measures induced by $(X,Y)$, $X$ and $Y$ are respectively denoted by $P_{XY},P_{X}$ and $P_{Y}$, and their corresponding p.d.f's are $p_{XY}, p_{X}$ and $p_{Y}$.  In addition, $P_{X}P_{Y}$ denotes the product measure.  Using these notation, the \textit{$f$-mutual information} between $X$ and $Y$ is defined as
\[
\mathbb{I}_f(X;Y) :=\mathbb{D}_f(P_{XY}~||~P_XP_Y)=\int_{\mathcal{X}\times\mathcal{Y}}p_X(x)p_Y(y)f\left(\frac{p_{XY}(x,y)}{p_X(x)p_Y(y)}\right)dxdy,
\]
and $\mathbb{I}_f(X;Y)=0$ if and only if $X$ and $Y$ are independent, when $f(t)$ is strictly convex at $t=1$ \citep{Liese:Vajda:2006,Esposito2020RobustGV}.
Thus, the  \textit{$f$-mutual information} $\mathbb{I}_f(X;Y)$ characterises  the independence between $X$ and $Y$. We now state a fundamental data-processing inequality for $f$-mutual information in Lemma \ref{f_Data_Processing_Inequality} below, which serves as the key theoretical basis of the proposed method.
\begin{lemma}\label{f_Data_Processing_Inequality}
	For any measurable function $g:\mathcal{X}\to\mathbb{R}^d$, we have
	\begin{equation}\label{InformationLossInequality}
		\mathbb{I}_f(X;Y)\ge \mathbb{I}_f(g(X);Y).
	\end{equation}
	If all the relevant p.d.f's and conditional p.d.f's are positive, and $f$ is strictly convex, the equality  in (\ref{InformationLossInequality})
	holds if and only if (iff) $X$ and $Y$ are conditionally independent given $g(X)$, that is,
	\begin{equation}\label{sufficiency_characterization}
		\mathbb{I}_f(X;Y)=\mathbb{I}_f(g(X);Y)\ \ \Leftrightarrow \ \  Y\perp \!\!\! \perp X~|~g(X).
	\end{equation}
\end{lemma}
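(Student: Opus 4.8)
The plan is to show that, writing $Z=g(X)$, the density ratio driving $\mathbb{I}_f(Z;Y)$ is a \emph{conditional average} of the density ratio driving $\mathbb{I}_f(X;Y)$, and then to apply Jensen's inequality (together with its strict-convexity equality case) to the convex function $f$. Concretely, assuming for the moment that the relevant densities exist, set
\[
r(x,y):=\frac{p_{XY}(x,y)}{p_X(x)p_Y(y)},\qquad q(z,y):=\frac{p_{ZY}(z,y)}{p_Z(z)p_Y(y)},
\]
so that $\mathbb{I}_f(X;Y)=\int p_Y(y)\int p_X(x)f(r(x,y))\,dx\,dy$ and, likewise, $\mathbb{I}_f(Z;Y)=\int p_Y(y)\int p_Z(z)f(q(z,y))\,dz\,dy$.

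The first key step is the identity
\[
q(z,y)=\frac{p_{Y\mid Z}(y\mid z)}{p_Y(y)}=\int r(x,y)\,p_{X\mid Z}(x\mid z)\,dx=\mathbb{E}\!\left[r(X,y)\mid Z=z\right],
\]
obtained by writing $p_{Y\mid Z}(y\mid z)=\int p_{Y\mid X}(y\mid x)\,p_{X\mid Z}(x\mid z)\,dx$ (valid since $Z=g(X)$ is a function of $X$) and using $p_{Y\mid X}(y\mid x)=p_Y(y)\,r(x,y)$. Applying Jensen's inequality to the convex $f$ with respect to the probability measure $p_{X\mid Z}(\cdot\mid z)$ gives $f(q(z,y))\le\int f(r(x,y))\,p_{X\mid Z}(x\mid z)\,dx$; multiplying by $p_Z(z)$, integrating in $z$, and using the disintegration $\int p_Z(z)\,p_{X\mid Z}(x\mid z)\,dz=p_X(x)$ (equivalently, $p_X(x)=p_Z(g(x))\,p_{X\mid Z}(x\mid g(x))$) yields $\int p_Z(z)f(q(z,y))\,dz\le\int p_X(x)f(r(x,y))\,dx$. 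Integrating against $p_Y(y)$ gives \eqref{InformationLossInequality}. The same argument can be phrased measure-theoretically via pushforwards, namely $\mathbb{D}_f(g_{\#}\mu\,\|\,g_{\#}\nu)\le\mathbb{D}_f(\mu\,\|\,\nu)$ because $d(g_{\#}\mu)/d(g_{\#}\nu)=\mathbb{E}_\nu[d\mu/d\nu\mid g]$, which removes the need for a density of $Z$.

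For \eqref{sufficiency_characterization}, note that when $f$ is strictly convex the Jensen step above is an equality for $P_Y$-almost every $y$ if and only if $r(X,y)$ is $P_{X\mid Z=z}$-almost surely constant for $P_Z$-a.e.\ $z$, i.e.\ if and only if $r(x,y)$ — equivalently $p_{Y\mid X}(y\mid x)=p_Y(y)\,r(x,y)$ — depends on $x$ only through $g(x)$. If this holds, write $p_{Y\mid X}(y\mid x)=\psi(g(x),y)$; integrating against $p_{X\mid Z}(\cdot\mid z)$ gives $p_{Y\mid Z}(y\mid z)=\psi(z,y)$, hence $p_{Y\mid X}(y\mid x)=p_{Y\mid Z}(y\mid g(x))$, which is exactly $Y\perp\!\!\!\perp X\mid g(X)$. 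Conversely, $Y\perp\!\!\!\perp X\mid g(X)$ forces $p_{Y\mid X}(y\mid x)=p_{Y\mid g(X)}(y\mid g(x))$, so $r(x,y)=p_{Y\mid g(X)}(y\mid g(x))/p_Y(y)$ is a function of $g(x)$ and the inequality is tight. The standing positivity assumptions on the densities and conditional densities are what make all of these manipulations and the a.e.\ statements legitimate.

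The main technical obstacle is the disintegration identity $\int p_Z(z)\,p_{X\mid Z}(x\mid z)\,dz=p_X(x)$ and, more delicately, making the equality case precise: one must argue that strict convexity of $f$ rules out equality unless the integrand $r(\cdot,y)$ is $P_{X\mid Z=z}$-almost surely constant, and then that the resulting ``$r$ is a function of $g(x)$'' statements, holding for $P_Y$-a.e.\ $y$, assemble into the single conditional-independence statement \eqref{sufficiency_characterization}. Everything else reduces to a routine application of Jensen's inequality and Fubini's theorem.
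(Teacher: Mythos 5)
Your proof is correct, and it takes a genuinely different route from the paper. The paper first establishes three auxiliary lemmas (Lemmas A.1--A.3 in the supplement): Lemma A.1 shows that $\mathbb{D}_f$ is invariant when the two joints share a common conditional $p_{Y\mid X}$ but differ in the $X$-marginal; Lemma A.2 shows the joint $f$-divergence dominates the marginal one, with equality iff the conditionals of $Y$ given $X$ agree; Lemma A.3 then proves the data-processing inequality for an arbitrary Markov chain $Y\to X\to U$ by constructing the auxiliary joint $q_{U,X,Y}(u,x,y)=p_{U\mid X,Y}(u\mid x,y)\,p_X(x)\,p_Y(y)$, applying Lemma A.1 to identify $\mathbb{D}_f(P_{UXY}\,\|\,Q_{UXY})=\mathbb{I}_f(X;Y)$, and marginalizing with Lemma A.2; Lemma 2.1 is then read off as the special case $U=g(X)$. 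You instead prove the statement in one stroke by exhibiting the density ratio for $(g(X),Y)$ as a conditional expectation of the density ratio for $(X,Y)$, i.e.\ $q(z,y)=\mathbb{E}[r(X,y)\mid g(X)=z]$ (equivalently $d(g_\#\mu)/d(g_\#\nu)=\mathbb{E}_\nu[d\mu/d\nu\mid g]$), and applying Jensen's inequality to the convex $f$ directly. Both proofs rest on the same core tool (Jensen applied to $f$ on a density ratio, with strict convexity pinning down the equality case), and both glide over the same measure-theoretic subtlety that the joint law of $(g(X),X,Y)$ is singular with respect to Lebesgue measure. The paper's construction has the advantage of proving the more general Markov-chain statement and cleanly modularizing the argument into a ``lift to a joint'' step and a ``marginalize via Jensen'' step, which the authors reuse; your argument is more economical and transparent but is tailored to the deterministic-function case, which is all that Lemma 2.1 requires. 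Your treatment of the equality case (constancy of $r(\cdot,y)$ on fibers of $g$ for $P_Y$-a.e.\ $y$ assembling into $Y\perp\!\!\!\perp X\mid g(X)$) matches the content of the equality-case analysis in Lemma A.2/A.3, just stated in primal rather than auxiliary-distribution form.
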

The data-processing inequality in (\ref{InformationLossInequality})  can be also found in \cite{Polyanskiy:Wu:2017} and \cite{Igal2019}.  We provide a proof of Lemma \ref{f_Data_Processing_Inequality} in Section \ref{appA} in
the supplementary material.
In what follows, if $X$ and $Y$ are independent given a representation $R(X)$, i.e., $Y\perp \!\!\! \perp X|R(X)$, the representation $R$ is said to be \textit{sufficient}.  By Lemma \ref{f_Data_Processing_Inequality}, we can characterize a sufficient representation $V:\mathcal{X}\to\mathbb{R}^{d_{V}}$ with $d_{V}\in\mathbb{N}^+$ and $d_{V}\le d_X$ as a minimizer
\begin{eqnarray*}
	V&\in&\argmin_{v~:~\textup{dim}(v)=d_V}\{\mathbb{I}_f(X;Y)-\mathbb{I}_f(v(X);Y)\}\\
	&=&\argmin_{v~:~\textup{dim}(v)=d_V}\{-\mathbb{I}_f(v(X);Y)\},
\end{eqnarray*}
where $\textup{dim}(T)=k$ means $T:\mathcal{X}\to\mathbb{R}^k$ for a transformation $T$ of $X$.

For a sufficient representation $V$ and any injective transformation $T$, $T\circ V$  is also a sufficient representation by the basic property of conditional probability.  Based on a result from the optimal transport theory given in Lemma \ref{OT_map}, under some regularity conditions, there exists an injective map $T^*:\mathbb{R}^{d_{V}}\to[0,1]^{d_{V}}$ depending on $V$ such that
\[
T^*(V(X))\sim  \gamma_{U}\equiv \textup{Uniform}[0,1]^{d_{V}},
\]
where $\textup{Uniform}[0,1]^{d_{V}}$ denotes the uniform distribution on $[0,1]^{d_{V}}$.  Letting
$R_V := T^*\circ V:\ \ \mathcal{X}\to[0,1]^{d_{V}}, $
we have
\[
Y \perp \!\!\! \perp X~|~R_V(X),~~R_V(X)\sim \textup{Uniform}[0,1]^{d_V},
\]
which implies that $R_V(X)$ is a sufficient representation.  We refer to it as  a {\it $d_V$-dimensional uniformized sufficient representation (USR)}.

In addition,  define
\[
d_0 :=\min\{\textup{dim}(R):\ R~\text{is a USR}\}.
\]
We call $d_0$ the \textit{intrinsic dimension} and denote a $d_0$-dimensional USR by $R_0$. Namely,
\begin{equation}\label{underlying_Representation}
	Y \perp \!\!\! \perp X~|~R_0(X),~~R_0(X)\sim \textup{Uniform}[0,1]^{d_0}.	
\end{equation}
Our goal is to learn such a  $d_0$-dimensional USR $R_0$.  A sufficient and necessary condition to ensure the existence of such a $R_0$ is that there exists a sufficient representation $R$ with $\textup{dim}(R)=d_0$ having the second order moment and absolute continuous distribution. This condition is mild as the sufficient representation condition is also required for all the SDR methods.  Moment condition and absolute continuity condition are also found in the SDR literature, such as the conditions for Theorem 4 in \cite{Li:Artemiou:Li:2011} and Assumptions 1 and 2 in \cite{Shin:Wu:Zhang:Liu:2017}.  In what follows, we assume that a $d_0$-dimensional USR $R_0$ exists.

\begin{Remark}
	The reference distribution $\gamma_{U}$ is user-specified. We choose the uniform distribution for simplicity.  Other distributions can also be considered, such as the multivariate Gaussian distribution and the uniform distribution on a sphere.
\end{Remark}

\subsection{Population objective function based on mutual informaiton}
In the rest of this paper, we focus on the case when $f(t)=t\log t$, that is, $\mathbb{D}_{f}$ is the KL-divergence and $\mathbb{I}_{f}$ is the mutual information. The KL-divergence and mutual information, denoted by $\mathbb{D}_{\textup{KL}}$ and $\mathbb{I}_{\textup{KL}}$ respectively, have wide applications in statistics \citep{Nguyen2010,Berrett:Samworth:2019}.
To
estimate $R_0$ in (\ref{underlying_Representation}), by the definition of $R_0$, we note that
\[
R_0\in\argmin_{R~:~\textup{dim}(R)=d_0}\{-\mathbb{I}_{\textup{KL}}(R(X);Y)\}\text{ and }\mathbb{D}_{\textup{KL}}(P_{R_0}~||~\gamma_{U})=0,
\]
where $P_{T}$ denotes the probability measure induced by $T(X)$ for any transformation $T$ of $X$. These properties plus the nonnegativity of KL divergence motivate us to consider minimizing the
population-level objective function:
\begin{align}
	\label{Lobja}
	\mathbb{L}(R;\lambda) :=-\mathbb{I}_{\textup{KL}}(Y;R(X)) + \lambda \mathbb{D}_{\textup{KL}}(P_{R}~||~\gamma_{U}),
\end{align}
where $\textup{dim}(R)=d_0$ and $\lambda$ is a nonnegative constant.  Heuristically, minimizing $-\mathbb{I}_{\textup{KL}}(Y;R(X))$ enforces the conditional independence of $X$ and $Y$ given $R(X)$, and the regularization term $\lambda \mathbb{D}_{\textup{KL}}(P_{R}~||~\gamma_{U})$ with $\lambda\textgreater0$ encourages $R(X)$ to be distributed as the reference distribution $\textup{Uniform}[0,1]^{d_0}$.  The following theorem  shows that the target $R_0$ is a minimizer of $\mathbb{L}(R;\lambda)$ over all measurable functions $R$ satisfying $\textup{dim}(R)=d_0$.
\begin{theorem}\label{Representation_Learnable_Theorem}
	The uniformized sufficient representation (USR) $R_0$  in (\ref{underlying_Representation}) satisfies that
	\[
	R_0\in\argmin_{R~:~\textup{dim}(R)=d_0}\mathbb{L}(R;\lambda)
	\]
	for any $\lambda\ge0$.
\end{theorem}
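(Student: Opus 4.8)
The plan is to bound the two terms of $\mathbb{L}(R;\lambda)$ separately from below by quantities that do not depend on $R$, and then observe that the USR $R_0$ attains both lower bounds simultaneously, hence is a global minimizer.

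First I would handle the mutual-information term. Applying Lemma \ref{f_Data_Processing_Inequality} with $f(t)=t\log t$ and $g=R$ for an arbitrary measurable $R$ with $\textup{dim}(R)=d_0$ gives $\mathbb{I}_{\textup{KL}}(X;Y)\ge \mathbb{I}_{\textup{KL}}(R(X);Y)$, equivalently $-\mathbb{I}_{\textup{KL}}(Y;R(X))\ge -\mathbb{I}_{\textup{KL}}(X;Y)$, and the right-hand side is a constant independent of $R$. Since $f(t)=t\log t$ is strictly convex and the relevant p.d.f.'s and conditional p.d.f.'s are positive under the standing assumptions, the equality characterization \eqref{sufficiency_characterization} shows this lower bound is attained precisely when $Y\perp\!\!\!\perp X\mid R(X)$. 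Because $R_0$ is a USR, \eqref{underlying_Representation} yields $Y\perp\!\!\!\perp X\mid R_0(X)$, so $-\mathbb{I}_{\textup{KL}}(Y;R_0(X))=-\mathbb{I}_{\textup{KL}}(X;Y)$; i.e., $R_0$ attains the lower bound for the first term. Next I would handle the regularization term: the nonnegativity of the $f$-divergence (noted just after its definition) specializes to $\mathbb{D}_{\textup{KL}}(P_R~\|~\gamma_U)\ge 0$, so $\lambda\,\mathbb{D}_{\textup{KL}}(P_R~\|~\gamma_U)\ge 0$ for every $\lambda\ge 0$. Again by \eqref{underlying_Representation}, $R_0(X)\sim \textup{Uniform}[0,1]^{d_0}$, i.e. $P_{R_0}=\gamma_U$, so $\lambda\,\mathbb{D}_{\textup{KL}}(P_{R_0}~\|~\gamma_U)=0$, and $R_0$ attains the lower bound for the second term as well.

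Combining the two displays, for every measurable $R$ with $\textup{dim}(R)=d_0$ and every $\lambda\ge 0$,
\[
\mathbb{L}(R;\lambda)=-\mathbb{I}_{\textup{KL}}(Y;R(X))+\lambda\,\mathbb{D}_{\textup{KL}}(P_R~\|~\gamma_U)\ \ge\ -\mathbb{I}_{\textup{KL}}(X;Y)+0\ =\ \mathbb{L}(R_0;\lambda),
\]
so $R_0\in\argmin_{R:\textup{dim}(R)=d_0}\mathbb{L}(R;\lambda)$. There is essentially no hard step here — the argument is just ``each term is minimized, and $R_0$ minimizes both at once.'' The only point requiring a little care is verifying that the positivity/absolute-continuity hypotheses needed for the equality case of Lemma \ref{f_Data_Processing_Inequality} are indeed in force for $R_0$ (so that sufficiency genuinely forces $-\mathbb{I}_{\textup{KL}}(Y;R_0(X))=-\mathbb{I}_{\textup{KL}}(X;Y)$ rather than merely $\ge$), and noting explicitly that the per-term lower bounds $-\mathbb{I}_{\textup{KL}}(X;Y)$ and $0$ do not depend on $R$, which is what makes ``achieving both bounds'' equivalent to global optimality.
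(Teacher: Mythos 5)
Your proof is correct and takes essentially the same route as the paper: bound each term of $\mathbb{L}(R;\lambda)$ from below by an $R$-independent quantity using the data-processing inequality and nonnegativity of the KL divergence, then note that the USR $R_0$ attains both bounds. One small simplification: the direction you need (sufficiency of $R_0$ implies $\mathbb{I}_{\textup{KL}}(Y;R_0(X))=\mathbb{I}_{\textup{KL}}(Y;X)$) does not actually require the positivity/strict-convexity hypotheses of the equality case in Lemma~\ref{f_Data_Processing_Inequality}, so the caveat in your last paragraph is unnecessary.
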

A
proof of Theorem \ref{Representation_Learnable_Theorem} is given in
Section \ref{appB} in the supplementary material.
 To facilitate the estimation of $R_0$ based on $\mathbb{L}(R;\lambda)$, we use a variational formulation of $f$-divergence \citep[Lemma 1 of][]{Nguyen2010}.
\begin{lemma}\label{variational_representation}
	Suppose that two distributions $P_1$ and $P_2$ are supported on the same set
	$\mathcal{Z}\subseteq \mathbb{R}^d$. For any differentiable and convex function $f:\mathbb{R}^+\to\mathbb{R}$ with $f(1)=0$,  the $f$-divergence between $P_1$ and $P_2$
	\begin{eqnarray}\label{varep}
		\mathbb{D}_f(P_1~||~P_2)=\sup_{D:\ \mathbb{R}^d\to \textup{dom}(f^*)} E_{Z\sim P_1}D(Z)-E_{Z\sim P_2}f^*(D(Z)),
	\end{eqnarray}
	where $f^*(t)=\sup_{x\in\mathbb{R}}\{tx-f(x)\}$ is the Fenchel conjugate of $f$, and the maximum in (\ref{varep}) is attained at $D(z)=f'(p_1(z)/p_2(z))$.  Here, $f'$ denotes the first derivative of $f$.
\end{lemma}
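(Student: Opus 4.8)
The plan is to prove the identity (\ref{varep}) by establishing a lower bound and a matching upper bound, both of which flow from a single pointwise application of the Fenchel--Young inequality to the likelihood ratio $p_1/p_2$. Since $f$ is convex and differentiable on $\mathbb{R}^+$ with conjugate $f^*(t)=\sup_{x}\{tx-f(x)\}$, Fenchel--Young states that for every $u\in\mathbb{R}^+$ and every $t\in\mathrm{dom}(f^*)$,
\[
f(u)\ \ge\ ut-f^*(t),
\]
with equality if and only if $t\in\partial f(u)=\{f'(u)\}$. Because $P_1$ and $P_2$ are supported on the same set $\mathcal{Z}$, the ratio $u(z):=p_1(z)/p_2(z)$ is well defined and strictly positive for $P_2$-a.e.\ $z$; substituting $u=u(z)$, $t=D(z)$ for an arbitrary measurable $D:\mathbb{R}^d\to\mathrm{dom}(f^*)$ and multiplying by $p_2(z)\ge 0$ gives
\[
p_2(z)\,f\!\left(\frac{p_1(z)}{p_2(z)}\right)\ \ge\ D(z)\,p_1(z)-f^*(D(z))\,p_2(z),\qquad z\in\mathcal{Z} .
\]

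Next I would integrate this inequality over $\mathcal{Z}$. The left-hand side integrates to $\mathbb{D}_f(P_1\,\|\,P_2)$ by definition, and on the right $\int D(z)p_1(z)\,dz=E_{Z\sim P_1}D(Z)$ and $\int f^*(D(z))p_2(z)\,dz=E_{Z\sim P_2}f^*(D(Z))$. A short check shows that this right-hand integral is meaningful, taking values in $[-\infty,\mathbb{D}_f(P_1\|P_2)]$: convexity together with $f(1)=0$ yields the affine minorant $f(u)\ge f'(1)(u-1)$, so the positive part of $D(z)p_1(z)-f^*(D(z))p_2(z)$ is dominated by that of $p_2(z)f(u(z))$, which is integrable whenever $\mathbb{D}_f(P_1\|P_2)<\infty$. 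Hence
\[
\mathbb{D}_f(P_1\,\|\,P_2)\ \ge\ E_{Z\sim P_1}D(Z)-E_{Z\sim P_2}f^*(D(Z))
\]
for every admissible $D$, and taking the supremum over $D$ gives one half of (\ref{varep}).

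For the reverse inequality and the assertion about the maximizer, I would plug in $D^*(z):=f'\!\big(p_1(z)/p_2(z)\big)$, which is measurable as a composition of measurable maps. The equality case of Fenchel--Young gives $f^*(f'(u))=uf'(u)-f(u)<\infty$, so $D^*(z)\in\mathrm{dom}(f^*)$ for $P_2$-a.e.\ $z$ (hence $D^*$ is an admissible test function) and the displayed pointwise inequality becomes an equality at $D=D^*$. Integrating that identity over $\mathcal{Z}$ yields $E_{Z\sim P_1}D^*(Z)-E_{Z\sim P_2}f^*(D^*(Z))=\mathbb{D}_f(P_1\,\|\,P_2)$, which proves the matching upper bound and shows the supremum is attained at $D^*$; when $\mathbb{D}_f(P_1\|P_2)=+\infty$ one reaches the same conclusion by applying the argument to truncations of $D^*$ and letting the truncation level tend to infinity. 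The only delicate points are the $P_2$-a.e.\ identifications of $u(z)$ and the verification that $D^*$ takes values in $\mathrm{dom}(f^*)$; the analytic content is entirely the one-line Fenchel--Young bound, so I do not anticipate a genuine obstacle.
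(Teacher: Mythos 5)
The paper does not prove this lemma; it simply cites it as Lemma~1 of Nguyen, Wainwright, and Jordan (2010). Your proof is correct and is essentially the argument given in that reference: the lower bound is the pointwise Fenchel--Young inequality $f(u)\ge ut-f^*(t)$ applied at $u=p_1(z)/p_2(z)$, $t=D(z)$ and integrated against $p_2$, and the matching upper bound together with identification of the maximizer follows from the equality case $t=f'(u)$, so that $D^*(z)=f'(p_1(z)/p_2(z))$ turns the pointwise inequality into an identity. Your handling of the integrability issues (using the affine minorant $f(u)\ge f'(1)(u-1)$ to control the negative part of $p_2 f(p_1/p_2)$, and truncating $D^*$ when $\mathbb D_f=\infty$) fills in exactly the measure-theoretic details that the original reference treats lightly, and I see no gap.
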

When $f(t)=t\log t$, its Fenchel conjugate $f^*(t)=e^{t-1}$.  It then follows from Lemma \ref{variational_representation} that
\begin{eqnarray}
	\mathbb{D}_{\textup{KL}}(P_1~||~P_2)&=&\sup_{D:\ \mathbb{R}^d\to \mathbb{R}} E_{Z\sim P_1}D(Z)-E_{Z\sim P_2}e^{D(Z)}+1.
	\label{variational_representation_KL}
\end{eqnarray}
The sample version of the variational expression in (\ref{variational_representation_KL}) can be easily obtained by replacing the expectations by their empirical analogues.

By (\ref{variational_representation_KL}), we can write (\ref{Lobja}) as
\begin{align}
	\label{Lobja-dual}
	\mathbb{L}(R;\lambda) =&-\sup\limits_{D:\ \mathbb{R}^{d_Y+d_0}\to \mathbb{R}}
	\left\{ E_{(X, Y) \sim P_{XY}} D(Y,R(X))
	-E_{(X, Y)\sim P_XP_Y}e^{D(Y,R(X))}\right\} \nonumber \\
	&+\lambda\sup\limits_{Q:\ \mathbb{R}^{d_0}\to \mathbb{R}}
	\left\{E_{X\sim P_X} Q(R(X))-E_{U\sim P_U} e^{Q(U)}\right\}.
\end{align}
Hence, using the expression in (\ref{Lobja-dual}), we can obtain the sample version of $\mathbb{L}(R;\lambda)$ by approximating the function classes of $R$, $D$, and $Q$, and using the empirical analogues for the expectations.  We introduce the neural network function classes used to approximate the function classes of $R$, $D$, and $Q$ in the next subsection.

\subsection{Deep neural networks}\label{DNN_description}
The class of feedforward neural networks (FNNs) $\mathcal{F}$ consists of functions $F_{\boldsymbol{\theta}}:\mathbb{R}^{d_{\textup{in}}}\to\mathbb{R}^{d_{\textup{out}}}$ with input dimensionality $\textup{dim}_{\textup{in}}(\mathcal{F})=d_{\textup{in}}$, output dimensionality $\textup{dim}_{\textup{out}}(\mathcal{F})=d_{\textup{out}}$, weight and bias parameters $\boldsymbol{\theta}$, depth $\mathcal{L}$, width $\mathcal{W}$, size $\mathcal{S}$, number of neurons $\mathcal{U}$, and $F_{\boldsymbol{\theta}}$ satisfying $\|F_{\boldsymbol{\theta}}\|_\infty\le \mathcal{B}$ for some $0\le \mathcal{B}\le\infty$, where $\|\cdot\|_{\infty}$ is the sup-norm on some specific domain.  Specifically,  the architecture of $F_{\boldsymbol{\theta}}$ can be expressed as:
\[
F_{\boldsymbol{\theta}}(x)=\mathbb{L}_{\mathcal{L}}\circ\sigma\circ\mathbb{L}_{\mathcal{L}-1}\circ\sigma\circ\cdots \circ \sigma \circ \mathbb{L}_{1}\circ\sigma\circ\mathbb{L}_{0}(x),\ \ x\in\mathbb{R}^{d_{\textup{in}}},
\]
where $\mathbb{L}_{i}(z)=\Theta_ix+b_i,z\in\mathbb{R}^{k_{i}}$ with weight matrix $\Theta_i\in \mathbb{R}^{k_{i+1}\times k_i}$ and bias vector $b_i\in \mathbb{R}^{k_{i+1}},~i=0,1,\ldots,\mathcal{L}$, and $\sigma$ is the componentwise linear rectified linear unit (ReLU) activation function.  For $i=1,2,\ldots,\mathcal{L},~k_{i}$ is the number of neurons (width) in the $i$th hidden layer; then $k_0=d_{\textup{in}}$ and $k_{\mathcal{L}+1}=d_{\textup{out}}$.  For this network, we say that it has $\mathcal{L}$ hidden layers when considering the layers containing the activation function, or $(\mathcal{L}+1)$ layers in total when considering the layers involving affine transformations. We write the parameters
of the network as $\boldsymbol{\theta}=(\Theta_0,\Theta_1,\ldots,\Theta_{\mathcal{L}},b_1,b_2,\ldots,b_{\mathcal{L}})$. The width parameter
$\mathcal{W}=\max\{k_i,i=1,\ldots,\mathcal{L}\}$ is the maximum width of hidden layers; the number of neurons $\mathcal{U}$ is defined as the number of neurons in $f_{\boldsymbol{\theta}}$, i.e., $\mathcal{U}=\sum_{i=1}^{\mathcal{L}}k_i$; the size $\mathcal{S}$ is the total number of parameters in the network, i.e., $\mathcal{S}=\sum_{i=0}^{\mathcal{L}}{k_{i+1} \times(k_i + 1)}$. The multi-layer structure implies that
\[
\mathcal{S}\le  \mathcal{W}(d_{\textup{in}}+1)+(\mathcal{W}^2+\mathcal{W})(\mathcal{L}-1)+(\mathcal{W}+1)d_{\textup{out}}.
\]
Note that the number of hidden layers $\mathcal{L}$ may depend on $n,d_{\textup{in}}$, and $\mathcal{W},\mathcal{U},\mathcal{S}$ may depend on $n,d_{\textup{in}}$ and $d_{\textup{out}}$, but we suppress the dependence for notational simplicity.  We simply refer to all
ReLU-activated FNNs using the ReLU activation function as ReLU networks.

We use three FNNs $\mathcal{R},\mathcal{D}$, and $\mathcal{Q}$ with  parameters given below to approximate the functions $R$, $D$, and $Q$ in (\ref{Lobja-dual}), respectively:
\begin{eqnarray*}
	\mathcal{R}&:&\textup{weight and bias parameter}~
	\boldsymbol{\theta},\textup{dim}_{\textup{in}}(\mathcal{R})=d_X,\textup{dim}_{\textup{out}}(\mathcal{R})=d_0,\mathcal{L}_{\mathcal{R}},\mathcal{W}_{\mathcal{R}},\mathcal{S}_{\mathcal{R}},\mathcal{B}_{\mathcal{R}}; \\\mathcal{D}&:&\textup{weight and bias parameter}~\boldsymbol{\phi},\textup{dim}_{\textup{in}}(\mathcal{D})=d_Y+d_0,\textup{dim}_{\textup{out}}(\mathcal{D})=1,\mathcal{L}_{\mathcal{D}},\mathcal{W}_{\mathcal{D}},\mathcal{S}_{\mathcal{D}},\mathcal{B}_{\mathcal{D}}; \\
	\mathcal{Q}&:&\textup{weight and bias parameter}~\boldsymbol{\psi},\textup{dim}_{\textup{in}}(\mathcal{Q})=d_0,\textup{dim}_{\textup{out}}(\mathcal{Q})=1,\mathcal{L}_{\mathcal{Q}},\mathcal{W}_{\mathcal{Q}},\mathcal{S}_{\mathcal{Q}},\mathcal{B}_{\mathcal{Q}}.
\end{eqnarray*}
In what follows, $\mathcal{R}$ is called the representer network, $\mathcal{D}$ the MI discriminator network, and $\mathcal{Q}$ the push-forward network.

\subsection{Empirical objective function}
In practice, the distribution of $(X,Y)$ is usually unknown and only a random sample $\{(X_i,Y_i),i=1,\ldots, n\}$ that are i.i.d. copies of $(X,Y)$ is available. Let $U_1,\ldots,U_n$ be i.i.d. observations sampled from $\textup{Uniform}[0,1]^{d_0}$.  The empirical objective function for the proposed MSRL estimator is an empirical version of $\mathbb{L}(R;\lambda)$, defined as
\begin{eqnarray*}
	\mathbb{L}^{\textup{net}}_n(R;\lambda):= -\sup\limits_{D\in\mathcal{D}}\mathbb{L}_n^{\textup{MI}}(D,R)
	+\lambda\sup\limits_{Q\in\mathcal{Q}}\mathbb{L}_n^{\textup{Push}}(Q,R),
\end{eqnarray*}
where
\begin{align}
	\label{MIdef}
	\mathbb{L}_n^{\textup{MI}}(D,R):=\frac{1}{n}\sum_{i=1}^{n}D(Y_i,R(X_i))-\frac{1}{n(n-1)}\sum_{i\neq j}e^{D(Y_i,R(X_j))},
\end{align}
and
\begin{align}\label{Pushdef}
	\mathbb{L}_n^{\textup{Push}}(Q,R)
	:=\frac{1}{n}\sum_{i=1}^{n}Q(R(X_i))-\frac{1}{n}\sum_{i=1}^ne^{Q(U_i)}.
\end{align}
Then the MSRL representation is
\[
\hat{R}_n^{\lambda}\in\argmin_{R\in \mathcal{R}}\mathbb{L}^{\textup{net}}_n(R;\lambda).
\]
Below, we use an example to illustrate how MSRL differs from two existing nonlinear SDR methods: generalized sliced inverse regression (GSIR) and generalized sliced average variance estimation (GSAVE).  We consider the model
\begin{equation}\label{toy_model}
	Y=\textup{sign}[2\sin(\beta_1^\top X)+\epsilon_1]\cdot\log [|\sin(\beta_2^\top X)+c+\epsilon_2|],
\end{equation}
where $\beta_1,\beta_2$ are two parameter vectors generated from the uniform distribution on the ten-dimensional sphere, $X\sim N(0,\boldsymbol{\textup{I}}_{10})$, $\epsilon_1,\epsilon_2\sim N(0,0.25)$ independently, and $c=5$.  Intuitively, a better representation should be able to better predict the sign of the response.
We learn the two-dimensional representations, apply them to 1000 testing data points, and then plot these transformed data along with their response sign.  We use two reference distributions for this experiment, $\gamma_{U}\sim\textup{Uniform[0,1]}$ or $\gamma_{S}\sim\sin(Z)$ with $Z\sim N(0,1)$.  The training details are given in
Section \ref{Network_Struc} in the supplementary material.
and the results are displayed in Figures \ref{Feature_Plotting} and \ref{Density_Drawing}.

From Figure \ref{Feature_Plotting}, compared to GSIR and GSAVE, the learned MSRL features are more predictive for the response sign as the transformed data based on the learned MSRL features behave more similarly to the linearly separable data.  This partially implies the sufficiency of the learned representation through MSRL.  The results in Figure \ref{Density_Drawing} show that the learned representation through MSRL has the desired reference distribution.
\begin{figure}[htbp]
	\centering
	\subfigure[MSRL with $\gamma_U$]{
		\begin{minipage}[t]{0.4\linewidth}
			\centering
			\includegraphics[width=2.5in]{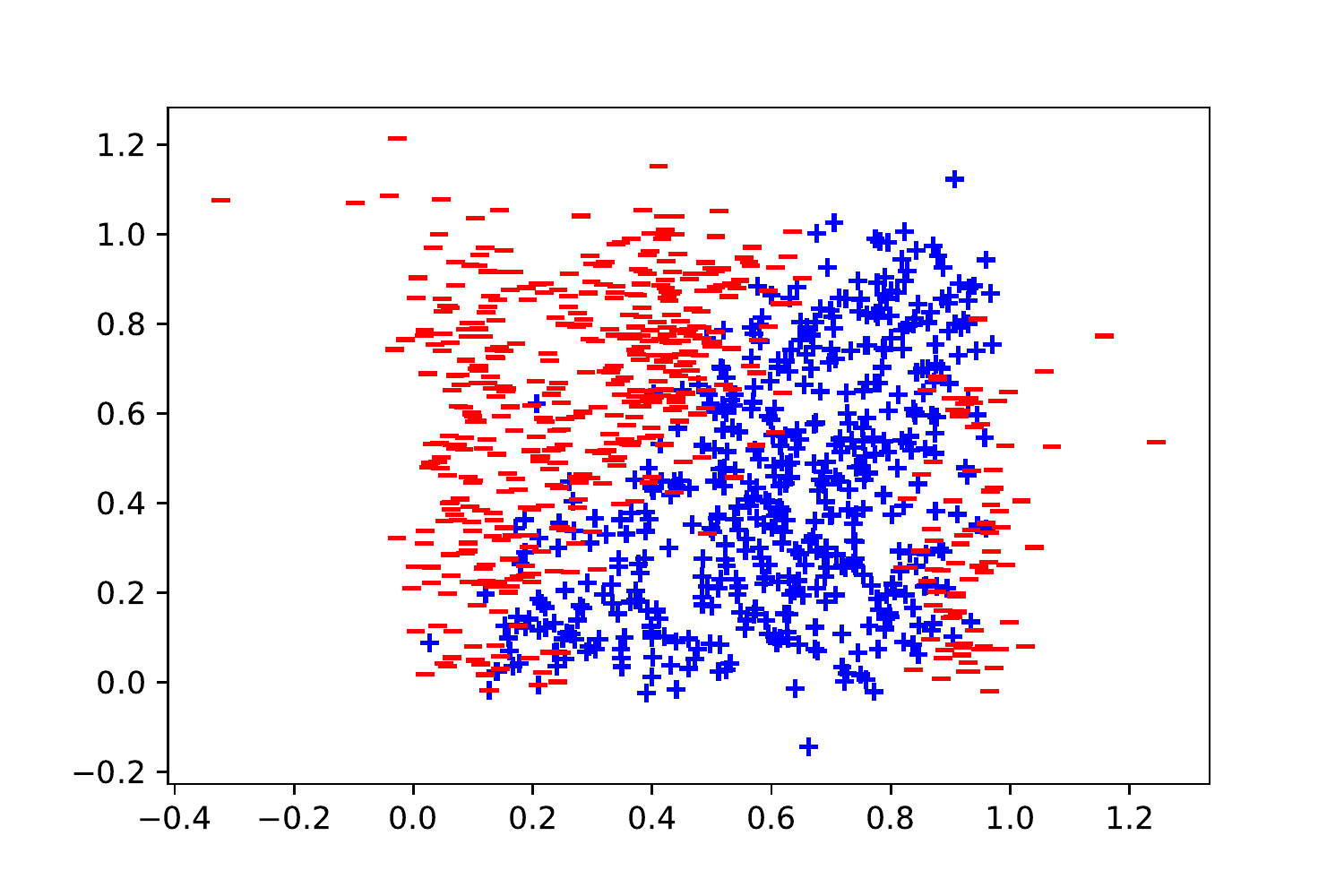}
		\end{minipage}
	}
	\quad
	\subfigure[MSRL with $\gamma_S$]{
		\begin{minipage}[t]{0.4\linewidth}
			\centering
			\includegraphics[width=2.5in]{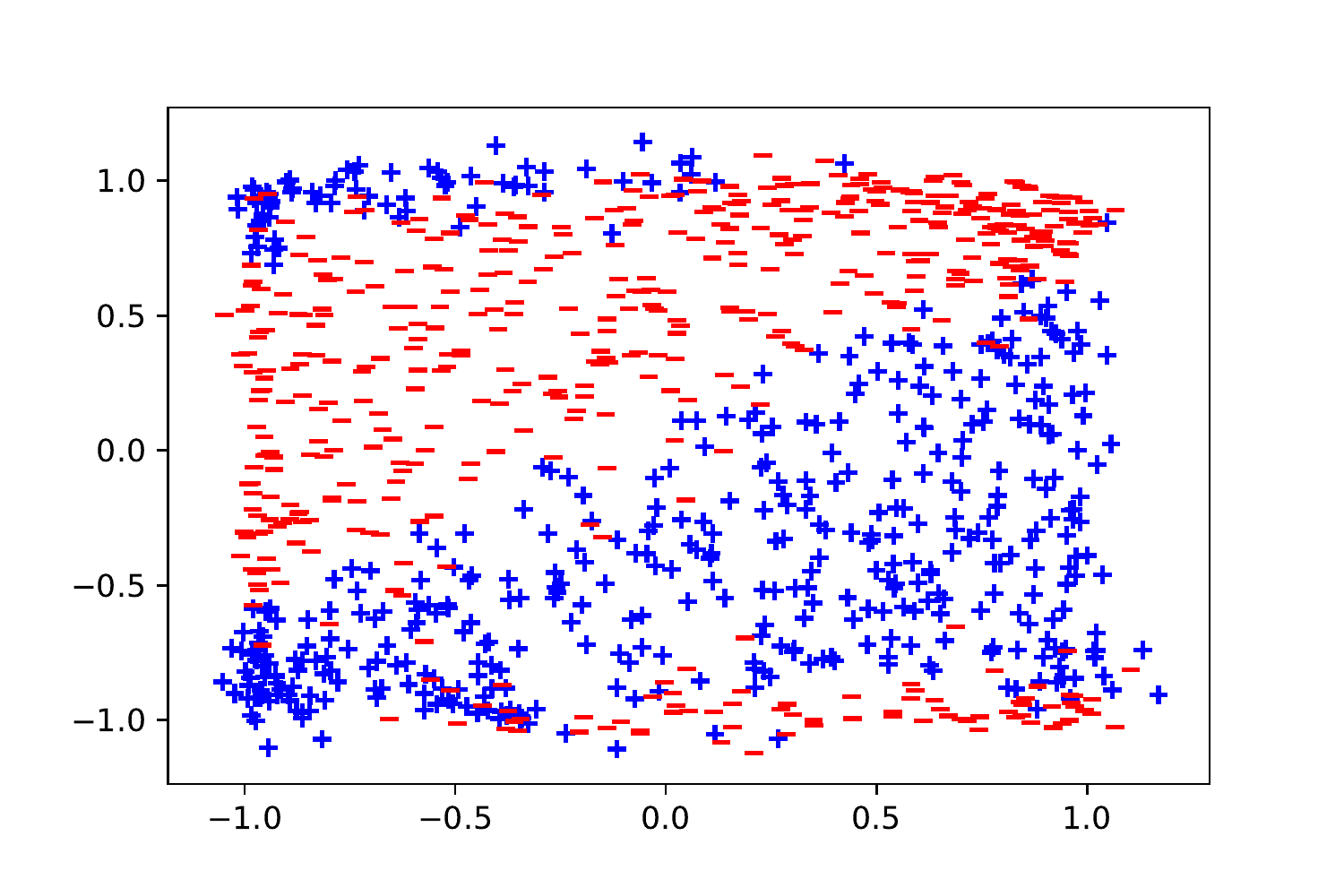}
		\end{minipage}
	}
	\quad
	\subfigure[GSIR]{
		\begin{minipage}[t]{0.4\linewidth}
			\centering
			\includegraphics[width=2.5in]{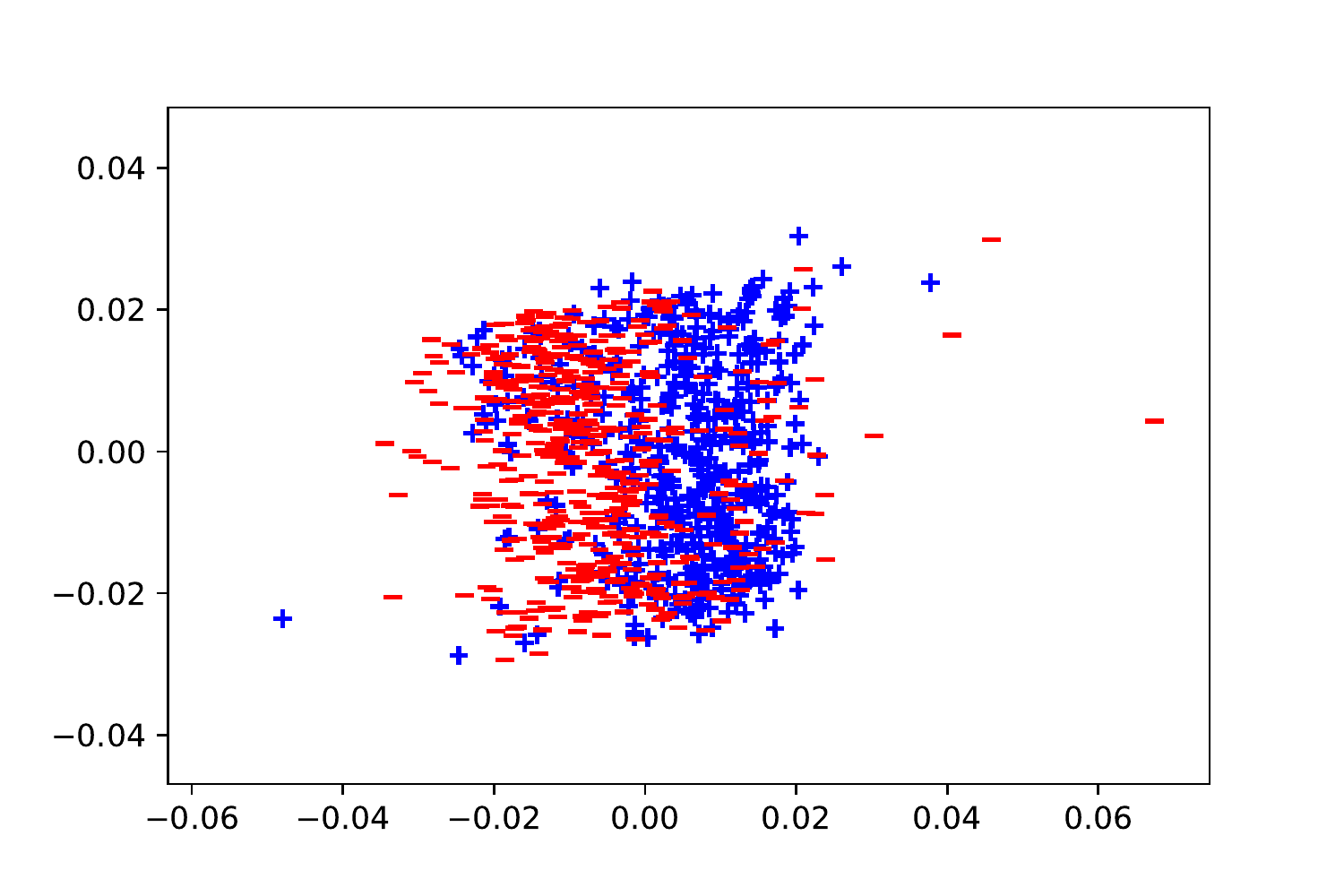}
		\end{minipage}
	}
	\quad
	\subfigure[GSAVE]{
		\begin{minipage}[t]{0.4\linewidth}
			\centering
			\includegraphics[width=2.5in]{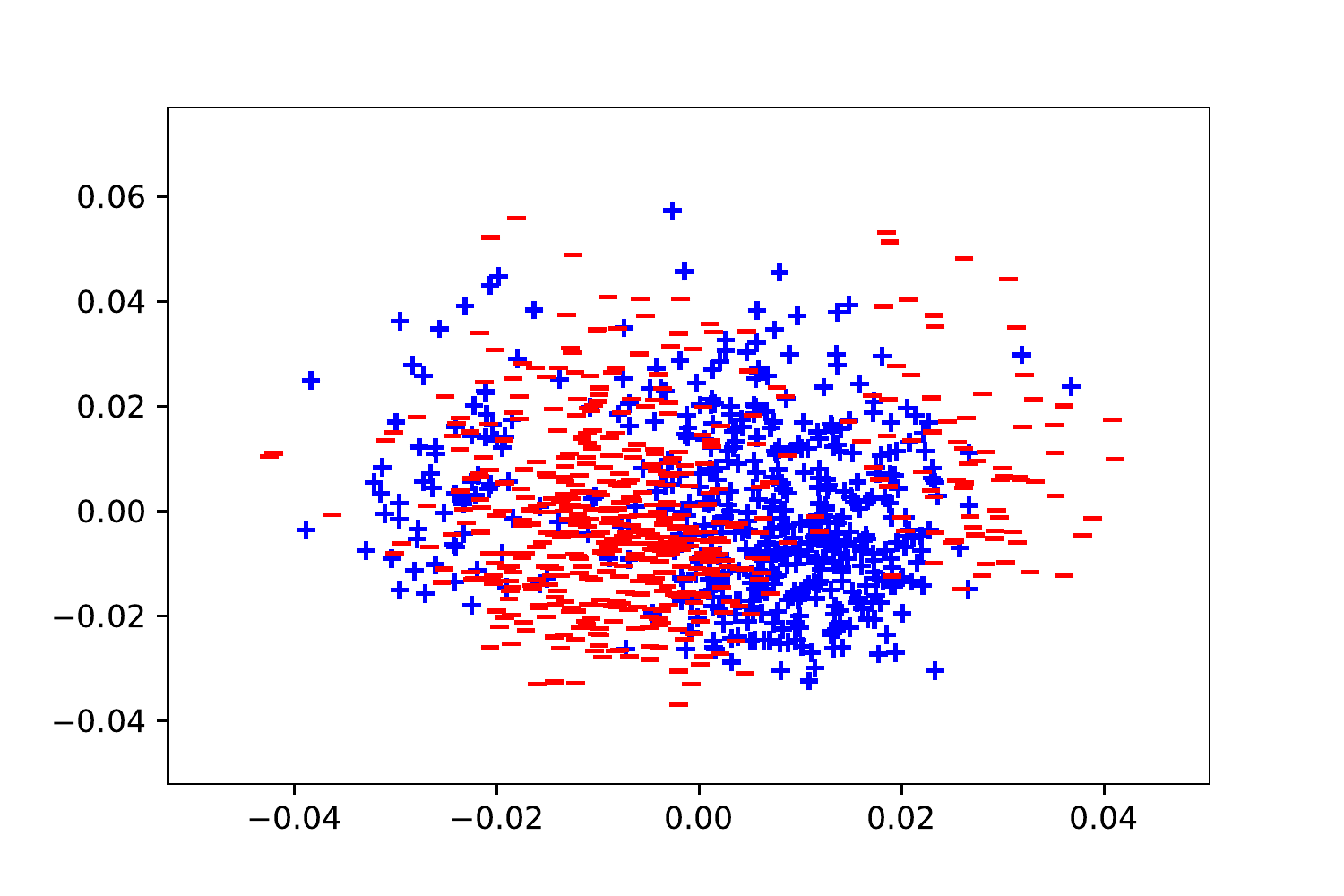}
		\end{minipage}
	}
	\centering
	\caption{The plots of a synthetic dataset from (\ref{toy_model}) based on the learned two-dimensional representations. Red ``--" indicates that the response of the data point is negative and blue ``+" indicates that the response of the data point is positive.
	}\label{Feature_Plotting}
\end{figure}
\begin{figure}[htbp]
	\centering
	\subfigure[MSRL with $\gamma_U$]{
		\begin{minipage}[t]{0.4\linewidth}
			\centering
			\includegraphics[width=2.5in]{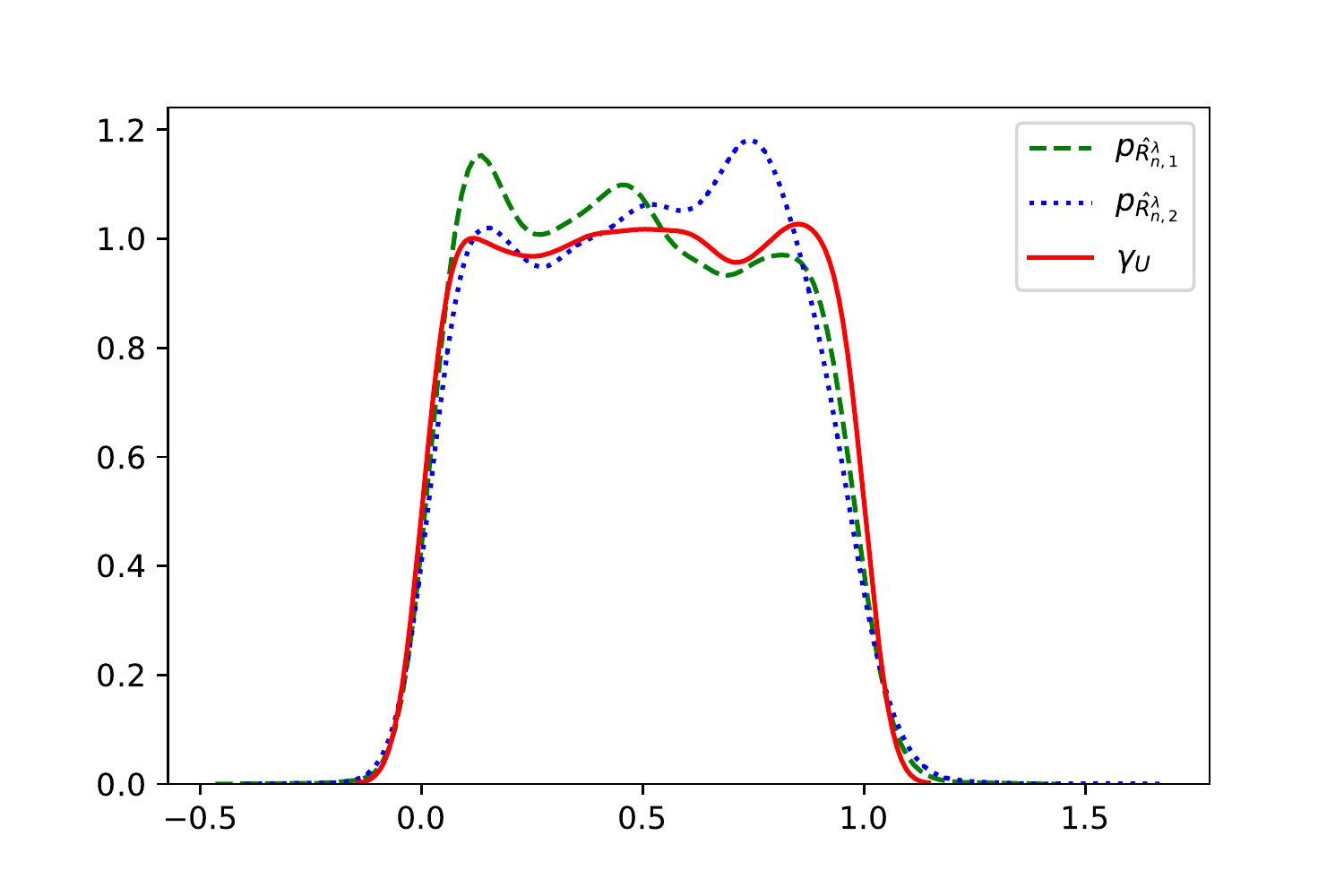}
		\end{minipage}
	}
	\quad
	\subfigure[MSRL with $\gamma_S$]{
		\begin{minipage}[t]{0.4\linewidth}
			\centering
			\includegraphics[width=2.5in]{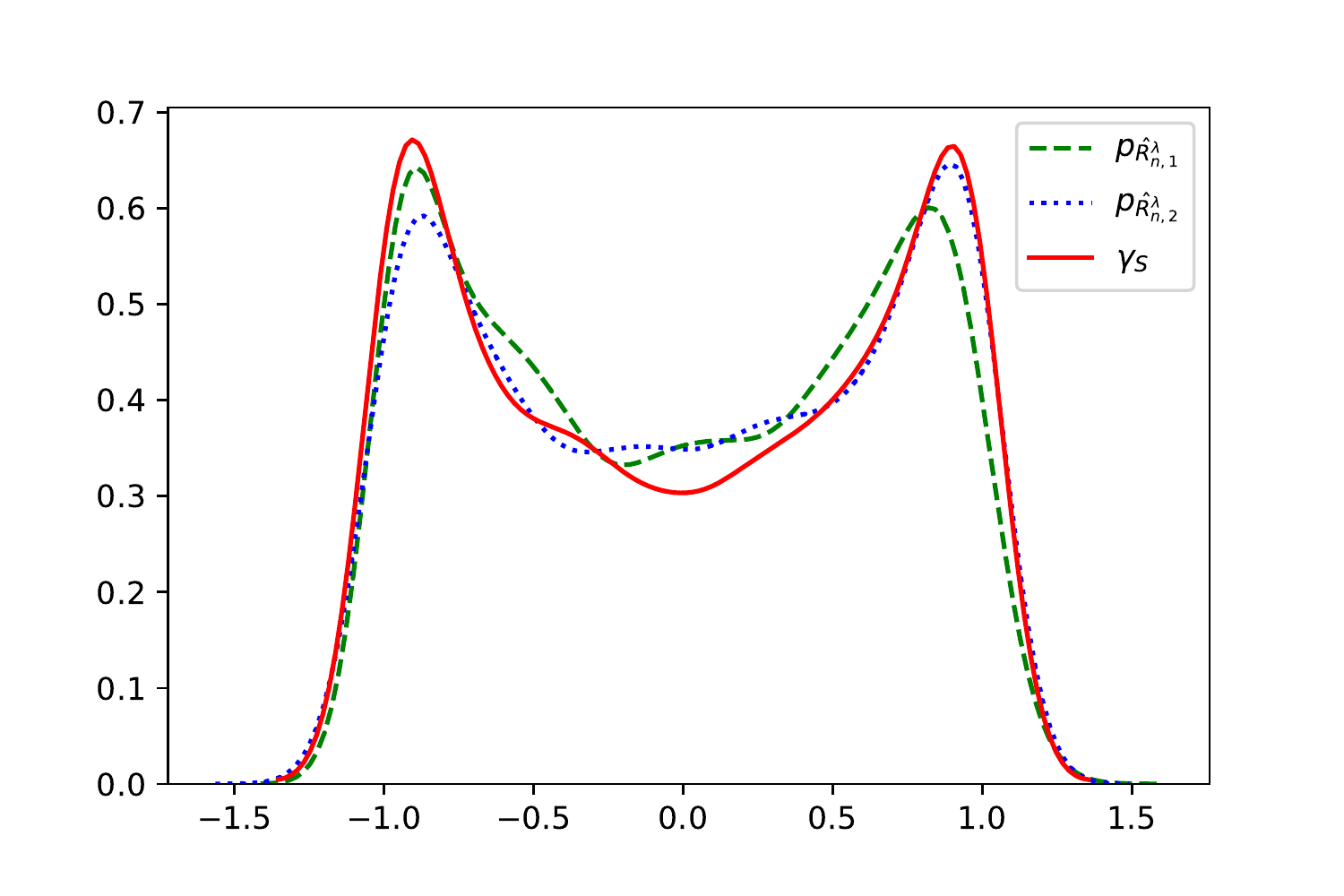}
		\end{minipage}
	}
	\centering
	\caption{The kernel density estimates for the components of the MSRL estimator and the corresponding reference distribution.}\label{Density_Drawing}
\end{figure}

\section{Theoretical results}\label{Theoretical_Results}
In this section, we establish consistency results and non-asymptotic error bounds for the proposed MSRL method.  The intrinsic dimension $d_0$ is  assumed known for simplicity.  Our theoretical analysis focus on continuous type random variables $X$ and $Y$.  Extension to the categorical case is discussed in Section \ref{Extensions}.

\subsection{Consistency}
For any $d\in\mathbb{N}^+$, we denote the class of continuous functions and the class of Lipschitz functions by
\begin{eqnarray*}
	\mathcal{F}_{\textup{C}}([0,1]^d)&=&\{f:[0,1]^d\to\mathbb{R},~ f \text{ continuous}\}
\end{eqnarray*}
and
\begin{eqnarray*}
	\mathcal{F}_{\textup{Lip}}([0,1]^d,B)&=&\{f:[0,1]^d\to\mathbb{R},~|f(x)-f(y)|\le B\|x-y\|_2\}
\end{eqnarray*}
for a constant $B\textgreater0$.

For any transformation $T$ of $X$, let $p_T$ and $p_{Y|T}$ denote the p.d.f of $T(X)$ and the conditional p.d.f of $Y$ given $T(X)$.  We need the following regularity conditions in establishing a consistency result.

\begin{assumption}\label{Weak_Assumption_on_TrueR}
	The target USR $R_0=(R_{0,1}, \ldots,R_{0,d_0})$, where $R_{0,j}\in\mathcal{F}_{\textup{C}}([0,1]^{d_X})$ and $0\le R_{0,j}(x) \le1$ for any $x\in[0,1]^{d_X}$ and $j=1,\ldots,d_0$.  The conditional p.d.f $p_{Y|R_0}(y|r)\in\mathcal{F}_{\textup{C}}([0,1]^{d_Y+d_0})$ and $1/B_{1}\le \inf_{y,r}p_{Y|R_0}(y|r)\le\sup_{y,r}p_{Y|R_0}(y|r)\le B_{1}$, where $B_{1}$ is a constant satisfying $1\le B_{1}\le\infty$.
\end{assumption}
\begin{assumption}\label{Weak_Assumption_on_NetworkR}
	For every $R\in\mathcal{R}$, $p_{Y|R}(y|r)\in\mathcal{F}_{\textup{Lip}}([0,1]^{d_Y+d_0},B_{1})$ and $1/B_{1}\le \inf_{y,r}p_{Y|R}(y|r)\le\sup_{y,r}p_{Y|R}(y|r)\le B_{1}$. Analogously, $p_{R}(r)\in\mathcal{F}_{\textup{Lip}}([0,1]^{d_0},B_{1})$ and $1/B_{1}\le \inf_{r}p_{R}(r)\le\sup_{r}p_{R}(r)\le B_{1}$.
\end{assumption}

For ease of notation, we define three functions $W(\cdot)$, $L(\cdot)$, and $S(\cdot)$ on $\mathbb{N}^+$ that are needed to specify the required neural network structures: for any $d\in \mathbb{N}^+$,
\begin{align*}
	W(d)&:=24d\left\lceil n^{\frac{d}{2(2+d)}} /\log n\right\rceil, \nonumber \\
	L(d)&:=20d\lceil\log n\rceil,\nonumber\\
	S(d)&:=C_1d^{2}\left\lceil n^{\frac{d}{(d+2\beta)}}/\log n\right\rceil,
\end{align*}
for a positive constant $C_1$, where $\lceil a\rceil$ for any $a\in\mathbb{R}$ denotes the smallest integer no less than $a$. We specify the network structures as follows.

\begin{NS}\label{Structure_on_NetworkR}
	The representer network $\mathcal{R}$ has parameters: depth $\mathcal{L}_{\mathcal{R}}=L(d_X)+3$, width $\mathcal{W}_{\mathcal{R}}=d_0W(d_X)$, $\mathcal{S}_{\mathcal{R}}\le4d_0S(d_X)$. And for every $R\in\mathcal{R}$, $0\le R(x)\le1$ for any $x\in[0,1]^{d_X}$, where the inequality holds in a componentwise sense.
\end{NS}

\begin{NS}\label{Structure_on_NetworkD}
	The discriminator network $\mathcal{D}$ has parameters: depth $\mathcal{L}_{\mathcal{D}}=L(d_Y+d_0)$, width $\mathcal{W}_{\mathcal{D}}=W(d_Y+d_0)$, $\mathcal{S}_{\mathcal{D}}\le S(d_Y+d_0)$, and $\mathcal{B}_\mathcal{D}\ge\max\{1,2\log B_{1}\}$, where $B_{1}$ is defined in Assumptions \ref{Weak_Assumption_on_TrueR} \& \ref{Weak_Assumption_on_NetworkR}.
\end{NS}
\begin{NS}\label{Structure_on_NetworkQ}
	The discriminator network $\mathcal{Q}$ has parameters: depth $\mathcal{L}_{\mathcal{Q}}=L(d_0)$, width $\mathcal{W}_{\mathcal{Q}}=W(d_0)$, $\mathcal{S}_{\mathcal{Q}}\le S(d_0)$, and $\mathcal{B}_\mathcal{Q}\ge\max\{1,\log B_{1}\}$, where $B_{1}$ is given in Assumptions \ref{Weak_Assumption_on_TrueR} \& \ref{Weak_Assumption_on_NetworkR}.
\end{NS}
\begin{theorem}
	\label{Weak_Consistency_Results0}
	Suppose Assumptions \ref{Weak_Assumption_on_TrueR} \& \ref{Weak_Assumption_on_NetworkR} hold and the network parameters of $\mathcal{R},\mathcal{D}$, and $\mathcal{Q}$ satisfy (NS\ref{Structure_on_NetworkR}), (NS\ref{Structure_on_NetworkD}) and (NS\ref{Structure_on_NetworkQ}) respectively, then
	\[
	E\{\mathbb{L}(\hat{R}_n^{\lambda};\lambda)-\mathbb{L}(R_0;\lambda)\}\to 0,\text{ as } n\to\infty.
	\]
\end{theorem}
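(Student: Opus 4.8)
The plan is to control the expected excess risk by a standard approximation‑plus‑generalization decomposition, using only that each piece is $o(1)$; the same pieces with quantitative bounds give the non‑asymptotic rates. Write $\mathbb{L}^{\textup{MI}}(D,R):=E_{(X,Y)\sim P_{XY}}D(Y,R(X))-E_{(X,Y)\sim P_XP_Y}e^{D(Y,R(X))}$ and $\mathbb{L}^{\textup{Push}}(Q,R):=E_{X\sim P_X}Q(R(X))-E_{U\sim P_U}e^{Q(U)}$ for the population counterparts of \eqref{MIdef} and \eqref{Pushdef}. By Lemma~\ref{variational_representation} with $f(t)=t\log t$, $f^*(t)=e^{t-1}$, $\mathbb{I}_{\textup{KL}}(Y;R(X))=\sup_D\mathbb{L}^{\textup{MI}}(D,R)+1$ and $\mathbb{D}_{\textup{KL}}(P_R\|\gamma_U)=\sup_Q\mathbb{L}^{\textup{Push}}(Q,R)+1$, so $\mathbb{L}(R;\lambda)=-\sup_D\mathbb{L}^{\textup{MI}}(D,R)+\lambda\sup_Q\mathbb{L}^{\textup{Push}}(Q,R)+(\lambda-1)$. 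Introduce the population objective with network critics, $\mathbb{L}^{\textup{net}}(R;\lambda):=-\sup_{D\in\mathcal D}\mathbb{L}^{\textup{MI}}(D,R)+\lambda\sup_{Q\in\mathcal Q}\mathbb{L}^{\textup{Push}}(Q,R)+(\lambda-1)$, let $R^\ast\in\mathcal R$ be a best sup‑norm approximant of $R_0$, and note $\mathbb{L}(\hat R_n^\lambda;\lambda)-\mathbb{L}(R_0;\lambda)\ge0$ a.s.\ by Theorem~\ref{Representation_Learnable_Theorem}, so only an upper bound in expectation is needed.

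With $c:=\lambda-1$, decompose
\begin{align*}
\mathbb{L}(\hat R_n^\lambda;\lambda)-\mathbb{L}(R_0;\lambda)
={}&\underbrace{\mathbb{L}(\hat R_n^\lambda;\lambda)-\mathbb{L}^{\textup{net}}(\hat R_n^\lambda;\lambda)}_{(\mathrm I)}
+\underbrace{\mathbb{L}^{\textup{net}}(\hat R_n^\lambda;\lambda)-c-\mathbb{L}^{\textup{net}}_n(\hat R_n^\lambda;\lambda)}_{(\mathrm{II})}\\
&+\underbrace{\mathbb{L}^{\textup{net}}_n(\hat R_n^\lambda;\lambda)-\mathbb{L}^{\textup{net}}_n(R^\ast;\lambda)}_{(\mathrm{III})\,\le\,0}
+\underbrace{\mathbb{L}^{\textup{net}}_n(R^\ast;\lambda)+c-\mathbb{L}^{\textup{net}}(R^\ast;\lambda)}_{(\mathrm{IV})}\\
&+\underbrace{\mathbb{L}^{\textup{net}}(R^\ast;\lambda)-\mathbb{L}(R^\ast;\lambda)}_{(\mathrm V)}
+\underbrace{\mathbb{L}(R^\ast;\lambda)-\mathbb{L}(R_0;\lambda)}_{(\mathrm{VI})}.
\end{align*}
Here $(\mathrm{III})\le0$ because $\hat R_n^\lambda$ minimizes $\mathbb{L}^{\textup{net}}_n(\cdot;\lambda)$ over $\mathcal R\ni R^\ast$; $(\mathrm I),(\mathrm V)$ are critic‑approximation errors, each bounded in absolute value by $\sup_{R\in\mathcal R}\{\sup_D\mathbb{L}^{\textup{MI}}(D,R)-\sup_{D\in\mathcal D}\mathbb{L}^{\textup{MI}}(D,R)\}+\lambda\sup_{R\in\mathcal R}\{\sup_Q\mathbb{L}^{\textup{Push}}(Q,R)-\sup_{Q\in\mathcal Q}\mathbb{L}^{\textup{Push}}(Q,R)\}$; $(\mathrm{VI})$ is the representer‑approximation error; and $(\mathrm{II}),(\mathrm{IV})$ are generalization errors, each bounded by $\sup_{R\in\mathcal R,D\in\mathcal D}|\mathbb{L}^{\textup{MI}}(D,R)-\mathbb{L}^{\textup{MI}}_n(D,R)|+\lambda\sup_{R\in\mathcal R,Q\in\mathcal Q}|\mathbb{L}^{\textup{Push}}(Q,R)-\mathbb{L}^{\textup{Push}}_n(Q,R)|$.

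For the approximation terms, the optimizing critics in Lemma~\ref{variational_representation} are $D^\ast_R(y,r)=\log\{p_{Y|R}(y|r)/p_Y(y)\}$ and $Q^\ast_R(r)=\log p_R(r)$, which under Assumptions~\ref{Weak_Assumption_on_TrueR}--\ref{Weak_Assumption_on_NetworkR} are bounded in absolute value by $2\log B_1$ and $\log B_1$ (matching $\mathcal B_{\mathcal D}$ and $\mathcal B_{\mathcal Q}$ in (NS\ref{Structure_on_NetworkD})--(NS\ref{Structure_on_NetworkQ})) and Lipschitz with a constant uniform over $R\in\mathcal R$. Because the widths, depths and sizes of $\mathcal R,\mathcal D,\mathcal Q$ grow with $n$ under (NS\ref{Structure_on_NetworkR})--(NS\ref{Structure_on_NetworkQ}), the ReLU approximation theorem for Lipschitz functions on $[0,1]^d$ gives $\|R^\ast-R_0\|_\infty\to0$ and $\sup_{R\in\mathcal R}\inf_{D\in\mathcal D}\|D-D^\ast_R\|_\infty\to0$, $\sup_{R\in\mathcal R}\inf_{Q\in\mathcal Q}\|Q-Q^\ast_R\|_\infty\to0$; inserting these into $\mathbb{L}^{\textup{MI}},\mathbb{L}^{\textup{Push}}$ and using $|e^a-e^b|\le e^{\max\{a,b\}}|a-b|$ on the bounded ranges yields $(\mathrm I),(\mathrm V)\to0$. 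For $(\mathrm{VI})$ I would show $R\mapsto\mathbb{L}(R;\lambda)$ is continuous at $R_0$ in $\|\cdot\|_\infty$: $R^\ast\to R_0$ uniformly forces $(Y,R^\ast(X))\Rightarrow(Y,R_0(X))$, and the uniform density bounds and equi‑Lipschitz continuity of Assumptions~\ref{Weak_Assumption_on_TrueR}--\ref{Weak_Assumption_on_NetworkR}, via an Arzela--Ascoli argument, upgrade this to uniform convergence of the joint densities, hence of $p_{Y|R^\ast}\to p_{Y|R_0}$ and $p_{R^\ast}\to p_{R_0}\equiv1$; dominated convergence then gives $\mathbb{I}_{\textup{KL}}(Y;R^\ast(X))\to\mathbb{I}_{\textup{KL}}(Y;R_0(X))=\mathbb{I}_{\textup{KL}}(Y;X)$ (by the sufficiency part of Lemma~\ref{f_Data_Processing_Inequality}) and $\mathbb{D}_{\textup{KL}}(P_{R^\ast}\|\gamma_U)\to0$, so $(\mathrm{VI})\to0$ since $\mathbb{L}(R_0;\lambda)=-\mathbb{I}_{\textup{KL}}(Y;X)$.

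The generalization error is the crux. The push‑forward difference and the linear part $\tfrac1n\sum_iD(Y_i,R(X_i))-E D(Y,R(X))$ of the MI difference are ordinary one‑ or two‑sample empirical processes over uniformly bounded ReLU classes; their expected suprema are controlled by Dudley's entropy integral and the pseudo‑dimension bound for ReLU networks, hence $o(1)$ as the network sizes in (NS\ref{Structure_on_NetworkR})--(NS\ref{Structure_on_NetworkQ}) grow sublinearly in $n$. The remaining term $\tfrac1{n(n-1)}\sum_{i\neq j}e^{D(Y_i,R(X_j))}$ is an order‑two U‑statistic, so $\sup_{D\in\mathcal D,R\in\mathcal R}\big|\tfrac1{n(n-1)}\sum_{i\neq j}e^{D(Y_i,R(X_j))}-E\,e^{D(Y',R(X))}\big|$, with $Y'$ an independent copy of $Y$, is an order‑two U‑process indexed by deep networks; I would bound its expectation with the paper's generalized Dudley inequality, i.e.\ decoupling together with a Rademacher chaos of order two chained against the metric entropy of $\{(x,y)\mapsto e^{D(y,R(x))}:D\in\mathcal D,R\in\mathcal R\}$, which is in turn controlled by the ReLU pseudo‑dimension; this again gives an $o(1)$ bound. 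Taking expectations of $(\mathrm I)$--$(\mathrm{VI})$ then completes the proof. I expect the U‑process step to be the main obstacle: the $i\neq j$ sum is not a sum of independent terms, so plain symmetrization does not apply and the decoupling/Rademacher‑chaos machinery behind the generalized Dudley inequality is essential; a secondary subtlety is the sup‑norm continuity of the mutual information functional, which is only lower semicontinuous in general and requires the density regularity in Assumptions~\ref{Weak_Assumption_on_TrueR}--\ref{Weak_Assumption_on_NetworkR} to upgrade weak convergence of induced laws to uniform convergence of conditional densities.
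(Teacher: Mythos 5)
Your decomposition (I)--(VI) is essentially the paper's Lemma~\ref{Error Decomposition}: (I) and (V) match the critic-approximation terms $\lambda\Delta_1$ and $\Delta_2$ (after noting that one of the two differences inside each of (I), (V) is nonpositive by sup-monotonicity, so each contributes only one approximation error, not both); (II), (IV) match the stochastic terms $2\lambda\Delta_3$, $2\Delta_4$; (III)${}\le 0$ and (VI)${}=\Delta_5$. You also correctly single out the order-two U-process in $\Delta_4$ as the crux and invoke the Rademacher-chaos/decoupling argument behind Lemma~\ref{Symmetrization}, which is how the paper proceeds. On those parts the reasoning is sound.

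There is, however, a genuine gap in your treatment of (VI)${}=\mathbb{L}(R^*;\lambda)-\mathbb{L}(R_0;\lambda)$. You propose to upgrade $\|R^*-R_0\|_\infty\to0$ to uniform convergence of conditional densities $p_{Y|R^*}\to p_{Y|R_0}$ and $p_{R^*}\to1$ via Arzel\`a--Ascoli. Even granting the uniform $B_1$-Lipschitz and lower/upper bounds of Assumption~\ref{Weak_Assumption_on_NetworkR}, Arzel\`a--Ascoli only yields subsequential uniform convergence of $p_{Y|R^*}$ to some continuous limit; it does not identify that limit as $p_{Y|R_0}$. The map $R\mapsto p_{Y|R}$ is a pushforward-conditioning operation, and there is no obvious pointwise or weak convergence $p_{Y|R}(y|r)\to p_{Y|R_0}(y|r)$ as $R\to R_0$ in $\|\cdot\|_\infty$ — the conditioning event $\{R(X)=r\}$ changes with $R$ and this route requires substantial extra work. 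The paper sidesteps this entirely in Lemma~\ref{obj_app_error_upper_bound}: it lower-bounds $\sup_{D}\mathbb{L}^{\textup{MI}}(D,R)$ by plugging in the \emph{fixed} optimal critic $D_{R_0}(y,r)=\log\{p_{Y|R_0}(y|r)/p_Y(y)\}$, and lower-bounds $\sup_{Q}\mathbb{L}^{\textup{Push}}(Q,R_0)$ by plugging in $Q_R=\log p_R$, yielding
\[
\mathbb{L}(R;\lambda)-\mathbb{L}(R_0;\lambda)\le E_{P_{XY}}\left|D_{R_0}(Y,R_0(X))-D_{R_0}(Y,R(X))\right|+E_{P_XP_Y}\left|e^{D_{R_0}(Y,R_0(X))}-e^{D_{R_0}(Y,R(X))}\right|+\lambda E\left|Q_R(R(X))-Q_R(R_0(X))\right|.
\]
Each term is a single bounded continuous (resp. Lipschitz) critic evaluated at $R(X)$ versus $R_0(X)$, and $R(X)\to R_0(X)$ in probability then finishes by the continuous mapping theorem and dominated convergence (for consistency) or the Lipschitz bound and $L^1$ approximation of $R_0$ by $\mathcal R$ (for rates). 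No comparison of $p_{Y|R}$ to $p_{Y|R_0}$ is ever needed. You should replace the Arzel\`a--Ascoli step by this fixed-critic lower-bound argument; as written, (VI) is not established.
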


The proof of Theorem \ref{Weak_Consistency_Results0} is deferred to
Section \ref{appB} in the supplementary material.

\begin{corollary}[Consistency] 
	\label{Weak_Consistency_Results}
	Suppose Assumptions \ref{Weak_Assumption_on_TrueR} \& \ref{Weak_Assumption_on_NetworkR} hold and the network parameters of $\mathcal{R},\mathcal{D}$, and $\mathcal{Q}$ satisfy (NS\ref{Structure_on_NetworkR}), (NS\ref{Structure_on_NetworkD}) and (NS\ref{Structure_on_NetworkQ}) respectively, then
	\begin{equation}\label{weak_Results}
		E_{X}\|p_{Y|X}-p_{Y|\hat{R}_n^{\lambda}}\|_{L_{1}}\to 0,~~E\|p_{\hat{R}_n^{\lambda}}-1\|_{L_1}\to0,\ \ \text{ as } n\to\infty,
	\end{equation}
	where $\|p_{Y|X}-p_{Y|R}\|_{L_{1}}=\int_{\mathcal{Y}}|p_{Y|X}(y|X=x)-p_{Y|R}(y|R(X)=R(x))|dy$ is a function of $x$ and
	$\|p_{R}-1\|_{L_1}=\int_{[0,1]^{d_0}}|p_{R}(r)-1|dr$.
\end{corollary}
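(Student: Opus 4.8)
The plan is to deduce Corollary~\ref{Weak_Consistency_Results} from Theorem~\ref{Weak_Consistency_Results0} by splitting the excess population objective into two nonnegative pieces, converting each piece into an $L_1$ distance via Pinsker's inequality, and then passing to expectations. The starting point is that, since $R_0$ is a USR, the equality case of Lemma~\ref{f_Data_Processing_Inequality} gives $\mathbb{I}_{\textup{KL}}(Y;R_0(X))=\mathbb{I}_{\textup{KL}}(X;Y)$ while the uniformization property gives $\mathbb{D}_{\textup{KL}}(P_{R_0}\,\|\,\gamma_U)=0$, so by (\ref{Lobja}) we have $\mathbb{L}(R_0;\lambda)=-\mathbb{I}_{\textup{KL}}(X;Y)$. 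Hence, for the fitted representation $\hat{R}_n^{\lambda}\in\mathcal{R}$,
\[
\mathbb{L}(\hat{R}_n^{\lambda};\lambda)-\mathbb{L}(R_0;\lambda)
= \big\{\mathbb{I}_{\textup{KL}}(X;Y)-\mathbb{I}_{\textup{KL}}(Y;\hat{R}_n^{\lambda}(X))\big\}
+ \lambda\,\mathbb{D}_{\textup{KL}}(P_{\hat{R}_n^{\lambda}}\,\|\,\gamma_U)
=: A_n + \lambda B_n,
\]
where $A_n\ge 0$ by the data-processing inequality (\ref{InformationLossInequality}) and $B_n\ge 0$ because it is a KL divergence. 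Since Theorem~\ref{Weak_Consistency_Results0} states $E[A_n+\lambda B_n]\to 0$ and both summands are nonnegative, one obtains $E[A_n]\to 0$ and, when $\lambda>0$, $E[B_n]\to 0$.

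The second step rewrites $A_n$ as an expected conditional KL divergence. Expanding the two mutual informations as expectations of log density-ratios under $P_{XY}$ (using the pushforward identity $E_{Y,R(X)}[\,\cdot\,]=E_{X,Y}[\,\cdot\,]$), conditioning on $X$, and using that $\hat{R}_n^{\lambda}(X)$ is a function of $X$, one obtains
\[
A_n = E_X\Big[\,\mathbb{D}_{\textup{KL}}\big(p_{Y|X}(\cdot\,|\,X)\ \big\|\ p_{Y|\hat{R}_n^{\lambda}}(\cdot\,|\,\hat{R}_n^{\lambda}(X))\big)\Big];
\]
the interchange of integrals and the finiteness of the integrand are legitimate because the boundedness in Assumptions~\ref{Weak_Assumption_on_TrueR}--\ref{Weak_Assumption_on_NetworkR} forces all the relevant densities (and hence the log-ratios above) to lie in $[1/B_1,B_1]$ up to constants. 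Applying Pinsker's inequality pointwise in $X$, $\mathbb{D}_{\textup{KL}}(p_{Y|X}(\cdot|X)\,\|\,p_{Y|\hat{R}_n^{\lambda}}(\cdot|\hat{R}_n^{\lambda}(X)))\ge \tfrac12\big(\|p_{Y|X}-p_{Y|\hat{R}_n^{\lambda}}\|_{L_1}\big)^2$, and Pinsker's inequality for $B_n$, $\mathbb{D}_{\textup{KL}}(P_{\hat{R}_n^{\lambda}}\,\|\,\gamma_U)\ge\tfrac12\|p_{\hat{R}_n^{\lambda}}-1\|_{L_1}^2$, gives
\[
E_X\big(\|p_{Y|X}-p_{Y|\hat{R}_n^{\lambda}}\|_{L_1}\big)^2\le 2 A_n,
\qquad
\|p_{\hat{R}_n^{\lambda}}-1\|_{L_1}^2\le 2 B_n.
\]

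Finally I would take expectations over the sample and invoke Jensen's inequality. From $E\big[E_X(\|p_{Y|X}-p_{Y|\hat{R}_n^{\lambda}}\|_{L_1})^2\big]\le 2E[A_n]\to 0$, Jensen applied to the joint law of the sample and $X$ yields $\big(E\,E_X\|p_{Y|X}-p_{Y|\hat{R}_n^{\lambda}}\|_{L_1}\big)^2\le E\,E_X(\|p_{Y|X}-p_{Y|\hat{R}_n^{\lambda}}\|_{L_1})^2\to 0$, and likewise $E\|p_{\hat{R}_n^{\lambda}}-1\|_{L_1}\to 0$ from $E\|p_{\hat{R}_n^{\lambda}}-1\|_{L_1}^2\le 2E[B_n]\to 0$; together these give (\ref{weak_Results}) (convergence in $L_1$/expectation, which also yields convergence in probability by Markov's inequality). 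I expect the only genuinely delicate step to be the identity for $A_n$: one must verify that $p_{Y|\hat{R}_n^{\lambda}}$ is a bona fide conditional density for the data-dependent network representation and that the conditioning and Fubini manipulations are valid, and it is precisely the uniform boundedness built into Assumptions~\ref{Weak_Assumption_on_TrueR}--\ref{Weak_Assumption_on_NetworkR} that makes this go through; everything else is a routine combination of Pinsker's and Jensen's inequalities with Theorem~\ref{Weak_Consistency_Results0}.
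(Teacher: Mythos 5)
Your proof is correct and follows essentially the same route as the paper: the paper's Lemma~\ref{pinsker} rewrites $\mathbb{L}(R;\lambda)-\mathbb{L}(R_0;\lambda)=\mathbb{I}_{\textup{KL}}(Y;X|R(X))+\lambda\,\mathbb{D}_{\textup{KL}}(P_R\,\|\,\gamma_U)$ via the chain rule and then applies Pinsker, which is exactly your identification of $A_n$ as an expected conditional KL divergence followed by Pinsker and Jensen. The only cosmetic difference is that you separate $E[A_n]\to 0$ and $E[B_n]\to 0$ by nonnegativity before invoking Pinsker, whereas the paper applies Pinsker to the sum directly; both are immediate.
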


Corollary \ref{Weak_Consistency_Results}
shows that the conditional p.d.f
of $Y$ given the the learned representation $\hat{R}_n^{\lambda}$  converges to the conditional p.d.f 
of $Y$ given $X$, which implies  the asymptotic sufficiency of $\hat{R}_n^{\lambda}$.  The slow convergence or consistency  in Theorem \ref{Weak_Consistency_Results} is mainly due to the fact that the class of continuous function  is much larger than a bounded and smooth function class. The difficulties in the approximation of  a very general continuous function class are responsible for  the arbitrarily slow convergence rate.

\subsection{Error bound} 
To derive error bounds with faster convergence rate,  we impose some slightly stronger smoothness assumptions on the relevant function classes.  In this subsection, we establish non-asymptotic error bound  and faster convergence rates under the H\"older continuity assumptions.   We  first give the definition of  a  H\"older  class below.
\begin{definition}[H\"older  class]\label{HolderClass}
	A H\"older class $\mathcal{H}^\beta([0,1]^d,B)$ with  $\beta=k+a$,  $k\in \mathbb{N}^+$,  $a\in(0,1]$ and a finite constant $B>0$,
	is a function class consisting of function $f:[0,1]^d\to\mathbb{R}$ satisfying
	\[
	\max\limits_{\|\boldsymbol{\alpha}\|_1\le k}\|\partial^{\boldsymbol{\alpha}}f\|_{\infty}\le B,\ \ \max\limits_{\|\boldsymbol{\alpha}\|_1= k}\max\limits_{x\neq y}\frac{|\partial^{\boldsymbol{\alpha}}f(x)-\partial^{\boldsymbol{\alpha}}f(y)|}{\|x-y\|_2^a}\le B,
	\]
	where $\|\boldsymbol{\alpha}\|_1=\sum_{i=1}^{d}\alpha_i$ and  $\partial^{\boldsymbol{\alpha}}=\partial^{\alpha_1}\partial^{\alpha_2}\cdots\partial^{\alpha_d}$ for $\boldsymbol{\alpha}=(\alpha_1,\alpha_2,\ldots,\alpha_d)\in \mathbb{N}^{+d}$.
\end{definition}
In this subsection, other than Assumptions \ref{Weak_Assumption_on_TrueR} \& \ref{Weak_Assumption_on_NetworkR}, we make
the following H\"older continuity assumptions for the target and related density functions.  We use the same smoothness parameter $\beta$ for all function classes for simplicity and when different function classes have various smoothness parameters, we can use the smallest smoothness parameter in the analysis.  In the following Assumptions \ref{Strong_Assumption_on_TrueR} \& \ref{Strong_Assumption_on_NetworkR}, the smoothness parameter $\beta\textgreater0$ and $B_{2}$ is a constant satisfying $1\le B_{2}\le\infty$.

\begin{assumption}\label{Strong_Assumption_on_TrueR}
	Each component of  the target USR $R_0$  belongs to the H\"older  class $ \mathcal{H}^\beta([0,1]^{d_X},B_{2})$
	i.e.,
	$R_{0,i}(x) \in\mathcal{H}^\beta([0,1]^{d_X},B_{2})$. And  the conditional p.d.f $p_{Y|R_0}(y|r)\in\mathcal{H}^\beta([0,1]^{d_Y+d_0},B_{2})$ and $1/B_{2}\le \inf_{y,r}p_{Y|R_0}(y|r)$. \end{assumption}

\begin{assumption}\label{Strong_Assumption_on_NetworkR}
	For every $R\in\mathcal{R}$, $p_{Y|R}(y|r)\in\mathcal{H}^\beta([0,1]^{d_Y+d_0},B_{2})$, and $1/B_{2}\le \inf_{y,r}p_{Y|R}(y|r)$. In addition, $p_{R}(r)\in\mathcal{H}^\beta([0,1]^{d_0},B_{2})$ and $1/B_{2}\le \inf_{r}p_{R}(r)$.		
\end{assumption}
A straightforward but tedious calculation yields that
\begin{equation}\label{log_holder_class}
	\log p_{Y|R_0}(y|r)\in \mathcal{H}^\beta([0,1]^{d_Y+d_0},2dc_2(\lfloor\beta \rfloor)B^{c_1(\lfloor\beta \rfloor)+c_2(\lfloor\beta \rfloor)}_{2}),
\end{equation}
where $c_1(s)=2^{s-1},c_2(s)=c_1(s)[c_2(s-1)+1]$ for any $s\in\mathbb{N}^+$ and $c_2(0)=1$, and $\lfloor a\rfloor$ for any $a\in\mathbb{R}$ is the greatest integer smaller than $a$. Similar results also hold for $\log p_{Y|R}(y|r),\log p_{Y}(y)$ and $\log p_{R}(r)$.

To describe the network structures needed to achieve a  faster convergence rate, we define the following three functions $\tilde{W}(\cdot)$, $\tilde{L}(\cdot)$, and $\tilde{S}(\cdot)$ on $\mathbb{N}^+$: for any $d\in \mathbb{N}^+$,
\begin{align*}
	\tilde{W}(d)&:=114(\lfloor\beta \rfloor+1)^2d^{\lfloor\beta \rfloor+1}, \nonumber \\
	\tilde{L}(d)&:=21(\lfloor\beta \rfloor+1)^2\left\lceil n^{\frac{d}{2(d+2\beta)}}\log_2\left(8n^{\frac{d}{2(d+2\beta)}}\right)\right\rceil,\nonumber\\
	\tilde{S}(d)&:=C_2(\lfloor\beta \rfloor+1)^6d^{2\lfloor\beta \rfloor+2}\left\lceil n^{\frac{d}{2(d+2\beta)}}\log_2^{-3}n\right\rceil,
\end{align*}
for a positive constant $C_2$.

\begin{NS}\label{Strong_Structure_on_NetworkR}
	Representer network $\mathcal{R}$ has parameters: depth $\mathcal{L}_{\mathcal{R}}=\tilde{L}(d_X)+3$, width $\mathcal{W}_{\mathcal{R}}=d_0\tilde{W}(d_X)$, $\mathcal{S}_{\mathcal{R}}\le4d_0\tilde{S}(d_X)$. And for every $R\in\mathcal{R}$, $0\le R(x)\le1$ for any $x\in[0,1]^{d_X}$, where the inequality holds in a componentwise sense.
\end{NS}

\begin{NS}\label{Strong_Structure_on_NetworkD}
	The discriminator network $\mathcal{D}$ has parameters: depth $\mathcal{L}_{\mathcal{D}}=\tilde{L}(d_Y+d_0)$, width $\mathcal{W}_{\mathcal{D}}=\tilde{W}(d_Y+d_0)$, $\mathcal{S}_{\mathcal{D}}\le \tilde{S}(d_Y+d_0)$, and $\mathcal{B}_\mathcal{D}\ge\max\{1,2\log B_{2}\}$, where $B_{2}$ is defined in Assumptions \ref{Strong_Assumption_on_TrueR} \& \ref{Strong_Assumption_on_NetworkR}.
\end{NS}

\begin{NS}\label{Strong_Structure_on_NetworkQ}
	The discriminator network $\mathcal{Q}$ has parameters: depth $\mathcal{L}_{\mathcal{Q}}=\tilde{L}(d_0)$, width $\mathcal{W}_{\mathcal{Q}}=\tilde{W}(d_0)$, $\mathcal{S}_{\mathcal{Q}}\le \tilde{S}(d_0)$, and $\mathcal{B}_\mathcal{Q}\ge\max\{1,\log B_{2}\}$, where $B_{2}$ is defined in Assumptions \ref{Strong_Assumption_on_TrueR} \& \ref{Strong_Assumption_on_NetworkR}.
\end{NS}

Let
\begin{align}
	\gamma (d,\beta) & = \{(d_Y+d_0)\vee d_X\}^{\lfloor\beta \rfloor+1}, \label{gamma_def}\\
	\xi_n(d,\beta)& = n^{-\frac{\beta(\beta\wedge1)}{2(2\beta+d_X)}\wedge\frac{\beta}{2\{2\beta+(d_Y+d_0)\vee d_X\}}} \label{xi_def},
\end{align}
where $a\vee b=\max(a,b)$, $a\wedge b=\min(a,b)$ for any $a,b\in\mathbb{R}$.

\begin{theorem}[Non-asymptotic risk bounds]
	\label{Non_Asymp_Results0}
	Suppose Assumptions \ref{Strong_Assumption_on_TrueR} \& \ref{Strong_Assumption_on_NetworkR} hold and the network parameters of $\mathcal{R},\mathcal{D}$, and $\mathcal{Q}$ satisfy (NS\ref{Strong_Structure_on_NetworkR}), (NS\ref{Strong_Structure_on_NetworkD}) and (NS\ref{Strong_Structure_on_NetworkQ}) respectively. Then,
	\begin{equation}\label{excess_risk_bound}
		E\{\mathbb{L}(\hat{R}_n^{\lambda};\lambda)-\mathbb{L}(R_0;\lambda)\}
		\le C(\lambda \vee2)\gamma^2(d,\beta) \xi_n^2(d,\beta),
	\end{equation}
	where $C$ is a constant only depending on the constants $B_2$ and $\beta$ in Assumptions \ref{Strong_Assumption_on_TrueR}
	\& \ref{Strong_Assumption_on_NetworkR}, and $\gamma (d,\beta),\xi_n(d,\beta)$ are defined in (\ref{gamma_def}) and (\ref{xi_def}) respectively.
\end{theorem}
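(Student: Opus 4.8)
The plan is to run an oracle-type decomposition of the excess risk $E\{\mathbb{L}(\hat R_n^\lambda;\lambda)-\mathbb{L}(R_0;\lambda)\}$ into three pieces: a \emph{representer approximation error} (replacing $R_0$ by a best network approximant $R^\ast\in\mathcal{R}$), a \emph{discriminator approximation error} (replacing the suprema over all measurable $D,Q$ by suprema over the network classes $\mathcal{D},\mathcal{Q}$), and a \emph{stochastic error} (replacing population expectations by their empirical and U-statistic analogues). Introduce the population functionals $L^{\textup{MI}}(D,R)$ and $L^{\textup{Push}}(Q,R)$ appearing inside the suprema of (\ref{Lobja-dual}), and the intermediate population-network objective $\mathbb{L}^{\textup{net}}(R;\lambda):=-\sup_{D\in\mathcal{D}}L^{\textup{MI}}(D,R)+\lambda\sup_{Q\in\mathcal{Q}}L^{\textup{Push}}(Q,R)$. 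Telescoping through $\mathbb{L}^{\textup{net}}(\hat R_n^\lambda;\lambda)$, $\mathbb{L}_n^{\textup{net}}(\hat R_n^\lambda;\lambda)$, $\mathbb{L}_n^{\textup{net}}(R^\ast;\lambda)$, $\mathbb{L}^{\textup{net}}(R^\ast;\lambda)$ and using that $\hat R_n^\lambda$ minimizes $\mathbb{L}_n^{\textup{net}}(\cdot;\lambda)$ over $\mathcal{R}\ni R^\ast$ (so that term is $\le 0$) yields
\begin{align*}
E\{\mathbb{L}(\hat R_n^\lambda;\lambda)-\mathbb{L}(R_0;\lambda)\}
&\le 2\,E\sup_{R\in\mathcal{R}}\bigl|\mathbb{L}^{\textup{net}}(R;\lambda)-\mathbb{L}_n^{\textup{net}}(R;\lambda)\bigr|
+2\sup_{R\in\mathcal{R}}\bigl|\mathbb{L}^{\textup{net}}(R;\lambda)-\mathbb{L}(R;\lambda)\bigr|\\
&\quad+\bigl\{\mathbb{L}(R^\ast;\lambda)-\mathbb{L}(R_0;\lambda)\bigr\}.
\end{align*}
Since $|\sup f-\sup g|\le\sup|f-g|$, the first two terms split into an MI part and a $\lambda$-weighted Push part; this is where the $(\lambda\vee2)$ factor enters.

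For the \emph{representer approximation error}, choose $R^\ast\in\mathcal{R}$ approximating $R_0$ componentwise in sup-norm; the Hölder approximation theory for ReLU networks applied with the width/depth/size of (NS\ref{Strong_Structure_on_NetworkR}) gives $\|R^\ast-R_0\|_\infty\lesssim\gamma(d,\beta)\,n^{-\beta/(2(2\beta+d_X))}$, the $\gamma$ factor collecting the dimension constants $\{(d_Y+d_0)\vee d_X\}^{\lfloor\beta\rfloor+1}$ and the $1/2$ in the exponent coming from the deliberately undersized network (size $\sim n^{d_X/2(d_X+2\beta)}$ rather than $n^{d_X/(d_X+2\beta)}$). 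Crucially, because $R_0$ \emph{minimizes} $\mathbb{L}(\cdot;\lambda)$ over all measurable maps (Theorem \ref{Representation_Learnable_Theorem}), this excess risk is \emph{second order} in the perturbation: writing it via $\mathbb{I}_{\textup{KL}}(Y;X)-\mathbb{I}_{\textup{KL}}(Y;R^\ast(X))=E_X\mathbb{D}_{\textup{KL}}\bigl(p_{Y|R_0(X)}\,\|\,p_{Y|R^\ast(X)}\bigr)$ and $\mathbb{D}_{\textup{KL}}(P_{R^\ast}\|\gamma_U)$, invoking the Hölder continuity of $r\mapsto p_{Y|R}(\cdot|r)$ and $r\mapsto p_R(r)$ from Assumptions \ref{Strong_Assumption_on_TrueR}--\ref{Strong_Assumption_on_NetworkR}, and a $\mathbb{D}_{\textup{KL}}$-versus-$\chi^2$ bound, gives $\mathbb{L}(R^\ast;\lambda)-\mathbb{L}(R_0;\lambda)\lesssim(\lambda\vee1)\|R^\ast-R_0\|_\infty^{2(\beta\wedge1)}\lesssim(\lambda\vee1)\gamma^2(d,\beta)\,n^{-\beta(\beta\wedge1)/(2\beta+d_X)}$, exactly the first term of $\xi_n^2(d,\beta)$; the factor $\beta\wedge1$ is the Hölder exponent lost when composing a $\beta$-Hölder density with a $\beta$-Hölder representation.

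For the \emph{discriminator approximation error}, by Lemma \ref{variational_representation} the pointwise maximizers are $D^\ast_R(y,r)=\log\{p_{Y|R}(y|r)/p_Y(y)\}$ and $Q^\ast_R(r)=\log p_R(r)$; by (\ref{log_holder_class}) these lie in a \emph{fixed} Hölder ball for all $R\in\mathcal{R}$, with sup-norms bounded by $2\log B_2$ and $\log B_2$ respectively — which is precisely why (NS\ref{Strong_Structure_on_NetworkD})--(NS\ref{Strong_Structure_on_NetworkQ}) require $\mathcal{B}_\mathcal{D}\ge2\log B_2$ and $\mathcal{B}_\mathcal{Q}\ge\log B_2$. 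Approximating $D^\ast_R,Q^\ast_R$ by $\tilde D\in\mathcal{D},\tilde Q\in\mathcal{Q}$ in sup-norm and using the concavity of $D\mapsto L^{\textup{MI}}(D,R)$ and $Q\mapsto L^{\textup{Push}}(Q,R)$ (again a second-order gap, bounded by the $L^2$ and then the sup-norm approximation error, with $e^{(\cdot)}$ Lipschitz on the bounded range) together with the sizes in (NS\ref{Strong_Structure_on_NetworkD})--(NS\ref{Strong_Structure_on_NetworkQ}) gives $0\le\sup_R|\mathbb{L}^{\textup{net}}(R;\lambda)-\mathbb{L}(R;\lambda)|\lesssim(\lambda\vee1)\gamma^2(d,\beta)\,n^{-\beta/(2\beta+(d_Y+d_0)\vee d_X)}$, the second term of $\xi_n^2$.

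The \emph{stochastic error} $E\sup_{R\in\mathcal{R}}|\mathbb{L}^{\textup{net}}(R;\lambda)-\mathbb{L}_n^{\textup{net}}(R;\lambda)|$ is the main obstacle. It splits into (i) the ordinary bounded empirical processes $\lambda\,E\sup_{R,Q}|(P_n-P)[Q\circ R]|$, $\lambda\,E\sup_{Q}|(P_n-P)[e^{Q}]|$ from the Push term, and $E\sup_{R,D}|(P_n-P)[D(Y,R(X))]|$ from the linear part of the MI term, all controlled by Dudley's entropy integral using the ReLU pseudo-dimension bound $\mathrm{Pdim}\lesssim\mathcal{S}\mathcal{L}\log\mathcal{S}$, giving $O(\sqrt{\mathcal{S}\mathcal{L}\log n/n})$; and (ii) the genuinely second-order term $\sup_{R,D}\bigl|\tfrac1{n(n-1)}\sum_{i\ne j}e^{D(Y_i,R(X_j))}-E_{P_XP_Y}e^{D(Y,R(X))}\bigr|$, a U-process of order two indexed by the uniformly bounded class $\{(y,x')\mapsto e^{D(y,R(x'))}:D\in\mathcal{D},R\in\mathcal{R}\}$. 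For (ii) I would apply the generalized Dudley inequality for order-two U-processes (proved by decoupling plus Rademacher-chaos chaining — the result flagged as of independent interest), with covering numbers of the composed exponential-network class again controlled through the ReLU pseudo-dimension; equivalently, a Hoeffding decomposition reduces it to a Hájek-projection empirical process of the same $O(\sqrt{\mathcal{S}\mathcal{L}\log n/n})$ order plus a degenerate remainder of order $O(\mathcal{S}\mathcal{L}\log n/n)$ that the chaos chaining shows is lower order. Plugging in the (NS) sizes $\mathcal{S}\mathcal{L}\sim n^{d/(d+2\beta)}\cdot\mathrm{polylog}$ with $d=(d_Y+d_0)\vee d_X$ gives $\sqrt{\mathcal{S}\mathcal{L}\log n/n}\sim n^{-\beta/(2\beta+d)}\cdot\mathrm{polylog}$, matching the second term of $\xi_n^2$ up to logarithmic factors absorbed into $C$. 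Collecting the three pieces, pulling out $\gamma^2(d,\beta)$ and $(\lambda\vee2)$, and keeping the worse of the two exponents yields $C(\lambda\vee2)\gamma^2(d,\beta)\xi_n^2(d,\beta)$. I expect the hard part to be the U-process concentration in (ii), together with the composition-of-Hölder bookkeeping that produces the $\beta\wedge1$ loss and the "max-dimension" $(d_Y+d_0)\vee d_X$ throughout.
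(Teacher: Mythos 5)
Your overall decomposition (telescope through $\mathbb{L}^{\textup{net}}$, $\mathbb{L}_n^{\textup{net}}$, drop the minimizer gap, collect representer approximation, discriminator approximation, and stochastic errors) is the same one the paper uses via Lemma~\ref{Error Decomposition}, and your treatment of the stochastic error --- Hoeffding projection, the order-two Rademacher-chaos maximal inequality of Lemma~\ref{Symmetrization}, and the $\Pdim \lesssim \mathcal{S}\mathcal{L}\log\mathcal{S}$ bound of \cite{bartlett2019nearly} feeding into Dudley --- matches the paper's Lemmas~\ref{upper_bound_empirical_MI_part}--\ref{upper_bound_empirical_Push_part} essentially verbatim, including the observation that the degenerate remainder is of lower order.

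Where you diverge from the paper, and where you need to be careful, is in the two approximation-error pieces. The paper does \emph{not} exploit a second-order gain. For $\Delta_5=\mathbb{L}(R^*;\lambda)-\mathbb{L}(R_0;\lambda)$, Lemma~\ref{obj_app_error_upper_bound} plugs the fixed optimal discriminators $D_{R_0},Q_{R^*}$ into both objectives and bounds the difference by a \emph{first-order} Hölder bound $\lesssim E_X\|R^*(X)-R_0(X)\|^{\beta\wedge 1}$ (see~(\ref{obj_app_error})), and similarly the discriminator approximation errors $\Delta_1,\Delta_2$ in Lemmas~\ref{App_Push_part}--\ref{App_MI_part} are bounded by the sup-norm of $D^*_R-\tilde D$ (not its square). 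Your second-order claim --- $\mathbb{D}_{\textup{KL}}(p\|q)\lesssim\|p-q\|^2$ near equality, and the concavity of $L^{\textup{MI}}(\cdot,R)$ in $D$ --- is plausible and would give an even tighter $\Delta_5$, but it is not what the paper relies on.

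The reason your calculation nevertheless lands on the right final rate is a compensating arithmetic slip. With the architecture in (NS\ref{Strong_Structure_on_NetworkR}) (depth $\sim n^{d_X/(2(d_X+2\beta))}$ up to $\log$, width $O(1)$), the ReLU Hölder approximation rate of \citet{jiao2021deep} is $(NM)^{-2\beta/d_X}$ with $NM\sim n^{d_X/(2(d_X+2\beta))}$, which gives $\|R^*-R_0\|_\infty\lesssim n^{-\beta/(2\beta+d_X)}$ --- \emph{not} $n^{-\beta/(2(2\beta+d_X))}$ as you wrote; you appear to have used the rate $S^{-\beta/d}$ rather than the ReLU rate $S^{-2\beta/d}$ in terms of the size-proxy. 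The paper pairs the correct rate $n^{-\beta/(2\beta+d_X)}$ with the first-order $(\beta\wedge1)$ Hölder bound to arrive at $n^{-\beta(\beta\wedge1)/(2\beta+d_X)}$; you pair the too-small rate with a squared exponent and arrive at the same number. This coincidence masks a genuine mistake. If you correct the approximation rate and keep your second-order argument you would in fact overshoot (get $n^{-2\beta(\beta\wedge1)/(2\beta+d_X)}$), which is still fine for the theorem but is not what the NS structures are calibrated to deliver; conversely, if you use the paper's first-order bound with your incorrect rate you would undershoot. In short, the route is sound at the level of strategy, but you should fix the ReLU approximation exponent (factor of two) and either keep the second-order argument and accept a better-than-needed $\Delta_5$, or switch to the paper's simpler first-order Hölder bound as in Lemma~\ref{obj_app_error_upper_bound}.
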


\begin{corollary}[Non-asymptotic error bounds]
	\label{Non_Asymp_Results}
	Under the conditions of Theorem \ref{Non_Asymp_Results0}, we have
	\begin{equation}\label{conpdf_R_hat_conRate}
		E_{X}\|p_{Y|X}-p_{Y|\hat{R}_n^{\lambda}}\|_{L_{1}}\le C_{1\lambda}
		\gamma(d,\beta) \xi_n(d, \beta)
	\end{equation}
	and
	\begin{equation}\label{pdf_R_hat_conRate}
		E\|p_{\hat{R}_n^{\lambda}}-1\|_{L_1}\
		\le
		C_{2\lambda} \gamma(d,\beta) \xi_n(d,\beta),
	\end{equation}
	where $C_{1\lambda}=C(\lambda \vee2)^{1/2}$, $C_{2\lambda}=C\{(\lambda \vee2)/\lambda\}^{1/2}$. Here $C$ is a constant only depending on the constants $B_{2}$ and $\beta$ in Assumptions \ref{Strong_Assumption_on_TrueR}
	and  \ref{Strong_Assumption_on_NetworkR}.
\end{corollary}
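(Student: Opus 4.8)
The plan is to read off both error bounds directly from the excess-risk bound (\ref{excess_risk_bound}) in Theorem \ref{Non_Asymp_Results0}: split $\mathbb{L}(\hat{R}_n^{\lambda};\lambda)-\mathbb{L}(R_0;\lambda)$ into a mutual-information part and a regularization part, show each is nonnegative (so each is dominated by the whole excess risk), reinterpret the two parts as an averaged conditional Kullback--Leibler divergence and a Kullback--Leibler divergence against the uniform law, and finally convert these into $L_1$ bounds with Pinsker's inequality.

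First I would write, using the definition of $\mathbb{L}(\cdot;\lambda)$ in (\ref{Lobja}),
\[
\mathbb{L}(\hat{R}_n^{\lambda};\lambda)-\mathbb{L}(R_0;\lambda)=A_n+\lambda B_n,
\]
where $A_n:=\mathbb{I}_{\textup{KL}}(Y;R_0(X))-\mathbb{I}_{\textup{KL}}(Y;\hat{R}_n^{\lambda}(X))$ and $B_n:=\mathbb{D}_{\textup{KL}}(P_{\hat{R}_n^{\lambda}}\,\|\,\gamma_U)-\mathbb{D}_{\textup{KL}}(P_{R_0}\,\|\,\gamma_U)$. Since $R_0$ is a USR, $P_{R_0}=\gamma_U$, hence $\mathbb{D}_{\textup{KL}}(P_{R_0}\,\|\,\gamma_U)=0$ and $B_n=\mathbb{D}_{\textup{KL}}(P_{\hat{R}_n^{\lambda}}\,\|\,\gamma_U)\ge 0$. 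By Lemma \ref{f_Data_Processing_Inequality} applied with $g=\hat{R}_n^{\lambda}$, $\mathbb{I}_{\textup{KL}}(Y;\hat{R}_n^{\lambda}(X))\le \mathbb{I}_{\textup{KL}}(X;Y)$, while $Y\perp \!\!\! \perp X\,|\,R_0(X)$ gives $\mathbb{I}_{\textup{KL}}(Y;R_0(X))=\mathbb{I}_{\textup{KL}}(X;Y)$, so $A_n\ge 0$ as well. Therefore $A_n\le \mathbb{L}(\hat{R}_n^{\lambda};\lambda)-\mathbb{L}(R_0;\lambda)$ and $B_n\le \lambda^{-1}\{\mathbb{L}(\hat{R}_n^{\lambda};\lambda)-\mathbb{L}(R_0;\lambda)\}$; taking the sample expectation and invoking (\ref{excess_risk_bound}) yields $E\,A_n\le C(\lambda\vee 2)\gamma^2(d,\beta)\xi_n^2(d,\beta)$ and $E\,B_n\le C\{(\lambda\vee 2)/\lambda\}\gamma^2(d,\beta)\xi_n^2(d,\beta)$.

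Next I would identify $A_n$ with an averaged conditional divergence: because $R_0$ is a function of $X$ and $Y\perp \!\!\! \perp X\,|\,R_0(X)$, we have $p_{Y|X}(\cdot\,|\,x)=p_{Y|R_0}(\cdot\,|\,R_0(x))$, so
\[
A_n=E_{(X,Y)}\log\frac{p_{Y|X}(Y\,|\,X)}{p_{Y|\hat{R}_n^{\lambda}}(Y\,|\,\hat{R}_n^{\lambda}(X))}=E_X\,\mathbb{D}_{\textup{KL}}\!\left(p_{Y|X}(\cdot\,|\,X)\,\big\|\,p_{Y|\hat{R}_n^{\lambda}}(\cdot\,|\,\hat{R}_n^{\lambda}(X))\right),
\]
the inner integrals being finite and the integrand nonnegative thanks to the positive density bounds in Assumptions \ref{Strong_Assumption_on_TrueR} \& \ref{Strong_Assumption_on_NetworkR}; similarly $B_n=\int_{[0,1]^{d_0}}p_{\hat{R}_n^{\lambda}}(r)\log p_{\hat{R}_n^{\lambda}}(r)\,dr$ since $\gamma_U$ has density $1$ on $[0,1]^{d_0}$. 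Then Pinsker's inequality, $\|p_1-p_2\|_{L_1}^2\le 2\,\mathbb{D}_{\textup{KL}}(P_1\,\|\,P_2)$, applied to each conditional pair and to $(p_{\hat{R}_n^{\lambda}},1)$, together with Jensen's inequality (to pull the square outside $E_X$ and the sample expectation), gives
\[
\big(E_X\|p_{Y|X}-p_{Y|\hat{R}_n^{\lambda}}\|_{L_1}\big)^2\le 2\,E\,A_n,\qquad \big(E\|p_{\hat{R}_n^{\lambda}}-1\|_{L_1}\big)^2\le 2\,E\,B_n.
\]
Combining with the bounds on $E\,A_n$ and $E\,B_n$ and taking square roots yields (\ref{conpdf_R_hat_conRate}) and (\ref{pdf_R_hat_conRate}) with $C_{1\lambda}\asymp (\lambda\vee 2)^{1/2}$ and $C_{2\lambda}\asymp\{(\lambda\vee 2)/\lambda\}^{1/2}$, the numerical factor $\sqrt{2}$ and the square root of the Theorem-\ref{Non_Asymp_Results0} constant being absorbed into the constant $C$.

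Given Theorem \ref{Non_Asymp_Results0}, this is a short deduction, so there is no serious obstacle; the only non-mechanical content is the pair of information identities in the second and third paragraphs — that $R_0$ being a USR forces $\mathbb{I}_{\textup{KL}}(Y;R_0(X))=\mathbb{I}_{\textup{KL}}(X;Y)$, and that $A_n$ collapses to an expected conditional Kullback--Leibler divergence — plus the bookkeeping needed to justify that every density appearing is positive and integrable so that these manipulations and Pinsker's inequality apply. Both are supplied by Lemma \ref{f_Data_Processing_Inequality} and the standing Assumptions \ref{Strong_Assumption_on_TrueR} \& \ref{Strong_Assumption_on_NetworkR}.
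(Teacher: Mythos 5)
Your proposal is correct and follows essentially the same route as the paper: the identity $A_n=\mathbb{I}_{\textup{KL}}(Y;X|\hat{R}_n^{\lambda}(X))$ that you derive by hand is exactly the chain-rule step in Lemma \ref{pinsker}, and your Pinsker-plus-Jensen conversion of $E\,A_n$ and $E\,B_n$ into the two $L_1$ bounds mirrors the paper's use of (\ref{Pinsker_Results}) together with the excess-risk bound (\ref{excess_risk_bound}). The only cosmetic difference is that you verify the nonnegativity of each piece explicitly via the data-processing inequality before dropping the complementary term, whereas the paper leaves that implicit.
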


The excess error bound (\ref{excess_risk_bound}) follows from the error decomposition inequality in Lemma \ref{Error Decomposition} and
\[
\Delta_j\lesssim n^{-\frac{\beta}{2\beta+(d_Y+d_0)\vee d_X}} \text{ for }j=1,2,3,4,\text{ and }\Delta_5\lesssim n^{-\frac{\beta(\beta\wedge1)}{2\beta+d_X}},
\]
where $a\lesssim b$ means $a$ is smaller than $b$ up to a constant factor independent of $n$.
The error bounds (\ref{conpdf_R_hat_conRate}) and (\ref{pdf_R_hat_conRate}) follow from the excess error bound and Pinsker's inequality.  We defer the
proofs of Theorem \ref{Non_Asymp_Results} and Corollary \ref{Non_Asymp_Results} to
Section \ref{appB} in the supplementary material.
An appealing feature in Theorem \ref{Non_Asymp_Results} and Corollary \ref{Non_Asymp_Results} is that the prefactors in the error bounds depend on the relevant dimensions polynomially.

\begin{Remark}
	The cubic support assumption $\mathcal{X}\times\mathcal{Y}=[0,1]^{d_{X}}\times[0,1]^{d_{Y}}$ is made for technical convenience.  It can be relaxed to the unbounded cases.  For the unbounded case,  under the assumption that $P(\|Z\|_\infty\ge c_1\log n)\le c_2/n$ for some positive constants $c_1,c_2$ and $Z=(Y,X)$, the consistency result  in Theorem \ref{Weak_Consistency_Results} and error bounds in Theorem \ref{Non_Asymp_Results} can be derived, at a small price of an additional polynomial $\log n$ factor in the convergence rate.
\end{Remark}

\begin{Remark}
	By Corollary \ref{Non_Asymp_Results}, we see that if $\lambda\textgreater2$, $C_{1\lambda}=C\lambda^{1/2}$ and $C_{2\lambda}=C$, where $C$ is defined in Corollary \ref{Non_Asymp_Results}.  This implies that $\lambda$ with a large value would slow the convergence rate for the asymptotic sufficiency of $\hat{R}_n^{\lambda}$ in (\ref{conpdf_R_hat_conRate}) and does not impact the pushing-forward convergence rate in (\ref{pdf_R_hat_conRate}).  In contrast, if $\lambda\le2$, $C_{1\lambda}=\sqrt{2}C$ and $C_{2\lambda}=C(2/\lambda)^{1/2}$.  Thus $\lambda$ with a small value would not impact the convergence rate for the asymptotic sufficiency but deteriorate the pushing-forward convergence rate.  Based on these observations, we can use a constant value for $\lambda$.  In particular, we use $\lambda=2$ in our numerical experiments below, which leads to satisfactory performance of MSRL.
\end{Remark}

\section{A high-level description of the proofs}\label{high_level_description}
In this section, we provide a description of the main ideas and technical results needed for
proving the error analysis results stated in Section \ref{Theoretical_Results}.

\subsection{Error decomposition}
We first give two basic lemmas that provide a basis for the error analysis.
The first lemma gives an error decomposition for bounding the excess risk and the second lemma
bounds the $L_1$ distance between two density functions in terms of the excess risk.


For any measurable functions $R,D$ and $Q$, let
\[
\mathbb{L}^{\textup{MI}}(D,R)
=\Erm\mathbb{L}_n^{\textup{MI}}(D,R),~~\mathbb{L}^{\textup{Push}}(Q,R)=
\Erm \mathbb{L}_n^{\textup{Push}}(Q,R),
\]
where $\mathbb{L}_n^{\textup{MI}}$ and $\mathbb{L}_n^{\textup{Push}}$ are defined in (\ref{MIdef}) and (\ref{Pushdef}) respectively.

\begin{lemma}[Error Decomposition]\label{Error Decomposition}
	Let $\hat{R}_n^{\lambda}\in\argmin_{R\in \mathcal{R}}\mathbb{L}^{\textup{net}}_n(R;\lambda)$,
	and let $R^*\in\argmin_{R\in \mathcal{R}}\mathbb{L}(R;\lambda)$ be the best representation in the neural network class $\mathcal{R}$.  Then,
	for any fixed $\lambda\ge0$,
	\[
	\Erm\{\mathbb{L}(\hat{R}_n^{\lambda};\lambda)-\mathbb{L}(R_0;\lambda)\}\le \lambda \Delta_1+\Delta_2+2\lambda \Delta_3+2\Delta_4+\Delta_5,
	\]
	where
	\begin{align*}
		\Delta_1&:=E\left\{\sup\limits_{Q}\mathbb{L}^{\textup{Push}}(Q,\hat{R}_n^{\lambda})-\sup\limits_{Q\in\mathcal{Q}}\mathbb{L}^{\textup{Push}}(Q,\hat{R}_n^{\lambda})\right\},\\
		\Delta_2&:=\sup\limits_{D}\mathbb{L}^{\textup{MI}}(D,R^*)-\sup\limits_{D\in\mathcal{D}}\mathbb{L}^{\textup{MI}}(D,R^*),\\
		\Delta_3&:=E\left\{\sup\limits_{Q\in\mathcal{Q},R\in\mathcal{R}}|\mathbb{L}_n^{\textup{Push}}(Q,R)-\mathbb{L}^{\textup{Push}}(Q,R)|\right\},  \\
		\Delta_4&:=E\left\{\sup\limits_{D\in\mathcal{D},R\in\mathcal{R}}|\mathbb{L}_n^{\textup{MI}}(D,R)
		-\mathbb{L}^{\textup{MI}}(D,R)|\right\},\\
		\Delta_5&:=\mathbb{L}(R^*;\lambda)-\mathbb{L}(R_0;\lambda).
	\end{align*}
\end{lemma}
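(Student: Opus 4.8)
The plan is to run the standard insert-and-cancel (``add and subtract'') argument for the excess risk of an $M$-estimator, adapted to the fact that each objective is itself a supremum over an auxiliary discriminator. First I would write
\begin{align*}
\mathbb{L}(\hat{R}_n^{\lambda};\lambda) - \mathbb{L}(R_0;\lambda)
= {}& \bigl[\mathbb{L}(\hat{R}_n^{\lambda};\lambda) - \mathbb{L}^{\textup{net}}_n(\hat{R}_n^{\lambda};\lambda)\bigr]
+ \bigl[\mathbb{L}^{\textup{net}}_n(\hat{R}_n^{\lambda};\lambda) - \mathbb{L}^{\textup{net}}_n(R^*;\lambda)\bigr] \\
& {}+ \bigl[\mathbb{L}^{\textup{net}}_n(R^*;\lambda) - \mathbb{L}(R^*;\lambda)\bigr]
+ \bigl[\mathbb{L}(R^*;\lambda) - \mathbb{L}(R_0;\lambda)\bigr].
\end{align*}
The second bracket is $\le 0$ because $\hat{R}_n^{\lambda}$ minimizes $\mathbb{L}^{\textup{net}}_n(\cdot;\lambda)$ over $\mathcal{R}$ and $R^*\in\mathcal{R}$; the fourth bracket is exactly $\Delta_5$. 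It then remains to bound the first and third brackets in expectation, and for this I would repeatedly invoke the two elementary facts that $\sup_a F(a) - \sup_a G(a) \le \sup_a|F(a)-G(a)|$ and that enlarging the index set can only increase a supremum.

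For the first bracket I would substitute the definitions of $\mathbb{L}$ and $\mathbb{L}^{\textup{net}}_n$ and treat the mutual-information part and the push-forward part separately. In the MI part the leading minus sign is favourable: $-\sup_{D}\mathbb{L}^{\textup{MI}}(D,\hat{R}_n^{\lambda}) \le -\sup_{D\in\mathcal{D}}\mathbb{L}^{\textup{MI}}(D,\hat{R}_n^{\lambda})$, so the MI part is at most $\sup_{D\in\mathcal{D}}\mathbb{L}_n^{\textup{MI}}(D,\hat{R}_n^{\lambda}) - \sup_{D\in\mathcal{D}}\mathbb{L}^{\textup{MI}}(D,\hat{R}_n^{\lambda}) \le \sup_{D\in\mathcal{D},\,R\in\mathcal{R}}|\mathbb{L}_n^{\textup{MI}}(D,R)-\mathbb{L}^{\textup{MI}}(D,R)|$, using $\hat{R}_n^{\lambda}\in\mathcal{R}$; its expectation is $\Delta_4$. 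In the push-forward part I would instead insert and subtract $\sup_{Q\in\mathcal{Q}}\mathbb{L}^{\textup{Push}}(Q,\hat{R}_n^{\lambda})$: the gap to the unconstrained supremum $\sup_{Q}\mathbb{L}^{\textup{Push}}(Q,\hat{R}_n^{\lambda})$ is precisely the integrand of $\Delta_1$, and the gap between $\sup_{Q\in\mathcal{Q}}\mathbb{L}^{\textup{Push}}$ and $\sup_{Q\in\mathcal{Q}}\mathbb{L}_n^{\textup{Push}}$ is again bounded, uniformly over $\mathcal{Q}\times\mathcal{R}$, by the integrand of $\Delta_3$. Hence the first bracket contributes at most $\lambda\Delta_1 + \lambda\Delta_3 + \Delta_4$ in expectation.

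The third bracket is handled symmetrically, except that $R^*$ is deterministic, so the discriminator-restriction gaps carry no expectation. Writing $\mathbb{L}^{\textup{net}}_n(R^*;\lambda) - \mathbb{L}(R^*;\lambda)$ out, the MI part equals $\sup_{D}\mathbb{L}^{\textup{MI}}(D,R^*) - \sup_{D\in\mathcal{D}}\mathbb{L}_n^{\textup{MI}}(D,R^*)$; inserting $\sup_{D\in\mathcal{D}}\mathbb{L}^{\textup{MI}}(D,R^*)$ splits it into exactly $\Delta_2$ plus a uniform empirical deviation bounded in expectation by $\Delta_4$. For the push-forward part the minus sign is now favourable, giving $\sup_{Q\in\mathcal{Q}}\mathbb{L}_n^{\textup{Push}}(Q,R^*) - \sup_{Q\in\mathcal{Q}}\mathbb{L}^{\textup{Push}}(Q,R^*) \le \sup_{Q\in\mathcal{Q},\,R\in\mathcal{R}}|\mathbb{L}_n^{\textup{Push}}(Q,R)-\mathbb{L}^{\textup{Push}}(Q,R)|$, expectation $\le\Delta_3$. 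So the third bracket contributes at most $\Delta_2 + \lambda\Delta_3 + \Delta_4$; adding the three contributions together with $\Delta_5$ yields $\lambda\Delta_1 + \Delta_2 + 2\lambda\Delta_3 + 2\Delta_4 + \Delta_5$, as claimed.

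The argument is pure bookkeeping and I do not anticipate a genuine obstacle; the one thing to watch carefully is the direction of each inequality. Specifically, the minus sign in front of the MI term makes restricting the discriminator to $\mathcal{D}$ work \emph{in our favour} at both $\hat{R}_n^{\lambda}$ and $R^*$, which is why the MI network-approximation defect enters only through $\Delta_2$ (evaluated at the fixed $R^*$) and never through an $\hat{R}_n^{\lambda}$-indexed term, whereas for the push-forward term it is the other way around and the defect at the \emph{random} $\hat{R}_n^{\lambda}$ produces $\Delta_1$. One should also keep in mind that because $\hat{R}_n^{\lambda}$ is data-dependent the relevant empirical-process control must be uniform over $\mathcal{R}$, so the factor $2$ on $\Delta_3$ and $\Delta_4$ is intrinsic: the uniform deviation is incurred once when comparing $\mathbb{L}$ with $\mathbb{L}^{\textup{net}}_n$ at $\hat{R}_n^{\lambda}$ and once at $R^*$.
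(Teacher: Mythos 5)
Your proposal is correct and is essentially the same ``add-and-subtract'' telescoping argument the paper uses: the paper simply writes out the chain of six differences that results from expanding each of your first and third brackets into its two sub-steps, works with an arbitrary $R\in\mathcal{R}$ and specializes to $R=R^*$ at the end, and exploits exactly the same favorable direction of the minus sign in front of the MI term to drop the $\hat{R}_n^\lambda$-indexed MI approximation defect while retaining $\Delta_2$ at $R^*$ (and the mirror image for the push-forward part, which produces $\Delta_1$). Your accounting of where the factors of $2$ on $\Delta_3,\Delta_4$ arise, and of why $\Delta_1$ needs the expectation but $\Delta_2$ does not, matches the paper's reasoning precisely.
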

The proof of Lemma \ref{Error Decomposition} is given in
Section \ref{appB} in
the supplementary material.
The error decomposition in  Lemma  \ref{Error Decomposition} is a key step in the proof of Theorem \ref{Weak_Consistency_Results}, as we only need to bound each term on the right-hand side of the error decomposition inequality in Lemma \ref{Error Decomposition}.

The terms $\Delta_1,\Delta_2$ and $\Delta_5$ are the approximation errors, which can be 
controlled
using the approximation results with deep neural networks \citep{Shen2020CICP,jiao2021deep}.
The terms $\Delta_3$ and $\Delta_4$ are stochastic errors.
Specifically, $\Delta_3$  involves  empirical processes and  can be bounded through some classic empirical process results such as Dudley's inequality; see
Lemma \ref{Classic Dudleys Chaining} in
Section \ref{appC} in
the supplementary material.
However, the term $\Delta_4$ involves an order-two U-process, and the classic Dudley's inequality is no longer
applicable. We will derive an inequality to bound this term below.

The second basic lemma bounds the $L_1$ distances of density functions in terms of the excess risk.
\begin{lemma}
	\label{pinsker}
	It holds that
	\begin{align}
		\mathbb{L}(R;\lambda)-\mathbb{L}(R_0;\lambda)	\ge \frac{1}{2}(E_{X}\|p_{Y|X}-p_{Y|R}\|_{L_{1}}^2
		+\lambda\|p_{R}-1\|_{L_1}^2). \label{Pinsker_induced}
	\end{align}
\end{lemma}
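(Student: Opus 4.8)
The plan is to recall the definition of $\mathbb{L}(R;\lambda)$ in (\ref{Lobja}) and split the excess risk into the mutual-information part and the push-forward part, handling each by an information-theoretic identity followed by Pinsker's inequality. For the push-forward term, $\mathbb{D}_{\textup{KL}}(P_{R_0}\,\|\,\gamma_U)=0$ by (\ref{underlying_Representation}), so $\lambda\mathbb{D}_{\textup{KL}}(P_R\,\|\,\gamma_U)-\lambda\mathbb{D}_{\textup{KL}}(P_{R_0}\,\|\,\gamma_U)=\lambda\mathbb{D}_{\textup{KL}}(P_R\,\|\,\gamma_U)$, and since $\gamma_U$ has density $1$ on $[0,1]^{d_0}$, Pinsker's inequality gives $\mathbb{D}_{\textup{KL}}(P_R\,\|\,\gamma_U)\ge \tfrac{1}{2}\|p_R-1\|_{L_1}^2$. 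That disposes of the second summand on the right-hand side of (\ref{Pinsker_induced}).

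For the mutual-information part, first I would rewrite $-\mathbb{I}_{\textup{KL}}(Y;R(X))+\mathbb{I}_{\textup{KL}}(Y;R_0(X))$ using the chain-rule-type identity $\mathbb{I}_{\textup{KL}}(Y;R(X)) = \mathbb{I}_{\textup{KL}}(Y;X) - E_{X}\mathbb{D}_{\textup{KL}}\!\left(p_{Y|X}(\cdot|X)\,\|\,p_{Y|R}(\cdot|R(X))\right)$, which holds whenever $R$ is a function of $X$ (and similarly for $R_0$); here the key point is that by sufficiency of $R_0$ (the conditional-independence characterization in (\ref{sufficiency_characterization})) the corresponding correction term for $R_0$ vanishes, i.e. $p_{Y|R_0}(y|R_0(x))=p_{Y|X}(y|x)$ for $P_X$-a.e.\ $x$. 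Consequently
\[
-\mathbb{I}_{\textup{KL}}(Y;R(X))+\mathbb{I}_{\textup{KL}}(Y;R_0(X)) = E_{X}\,\mathbb{D}_{\textup{KL}}\!\left(p_{Y|X}(\cdot|X)\,\|\,p_{Y|R}(\cdot|R(X))\right),
\]
and applying Pinsker's inequality pointwise in $x$ followed by Jensen's inequality ($E_X[\|\cdot\|_{L_1}^2]\ge (E_X\|\cdot\|_{L_1})^2$ is in the wrong direction, so I would instead keep $E_X\|\cdot\|_{L_1}^2$ — note the statement has $E_X\|p_{Y|X}-p_{Y|R}\|_{L_1}^2$, the expectation of the square, so no Jensen step is needed) yields $E_X\,\mathbb{D}_{\textup{KL}}(\cdots)\ge \tfrac12 E_X\|p_{Y|X}-p_{Y|R}\|_{L_1}^2$. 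Adding the two bounds gives (\ref{Pinsker_induced}).

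The main obstacle, and the step deserving the most care, is justifying the decomposition $\mathbb{I}_{\textup{KL}}(Y;R(X)) = \mathbb{I}_{\textup{KL}}(Y;X) - E_{X}\mathbb{D}_{\textup{KL}}(p_{Y|X}\,\|\,p_{Y|R})$ rigorously: one must verify that all the densities $p_{Y|X}$, $p_{Y|R}$, $p_X$, $p_R$ exist and are positive (guaranteed under Assumption \ref{Weak_Assumption_on_TrueR}/\ref{Weak_Assumption_on_NetworkR}, which bound the conditional densities away from $0$ and $\infty$), and that the manipulations $p_{XY}(x,y)=p_X(x)p_{Y|X}(y|x)$ and $p_{R(X),Y}(r,y)=p_R(r)p_{Y|R}(y|r)$ together with Fubini's theorem are valid. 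Since $R=g(X)$, the identity is really the statement $\mathbb{I}(Y;X)-\mathbb{I}(Y;g(X))=E_X\mathbb{D}_{\textup{KL}}(P_{Y|X}\,\|\,P_{Y|g(X)})$, which is exactly the ``information loss'' quantity appearing in the proof of Lemma \ref{f_Data_Processing_Inequality} specialized to $f(t)=t\log t$; so I would cite that computation. Everything else — the two invocations of Pinsker's inequality and the observation that $R_0$ contributes zero to both correction terms — is routine.
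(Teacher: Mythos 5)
Your proposal is correct and takes essentially the same route as the paper's proof, but with an extra, useful level of explicitness on the mutual-information term. The paper writes $\mathbb{L}(R;\lambda)-\mathbb{L}(R_0;\lambda)=\mathbb{I}_{\textup{KL}}(Y;X\mid R(X))+\lambda\mathbb{D}_{\textup{KL}}(P_R\,\|\,\gamma_U)$ by invoking the chain rule for conditional mutual information, and then says the conclusion ``follows directly from Pinsker's inequality.'' To actually land on $E_X\|p_{Y|X}-p_{Y|R}\|_{L_1}^2$ rather than, say, $E_R\|p_{XY|R}-p_{X|R}p_{Y|R}\|_{L_1}^2$, one must first express $\mathbb{I}_{\textup{KL}}(Y;X\mid R(X))$ as $E_X\mathbb{D}_{\textup{KL}}\!\bigl(p_{Y|X}(\cdot|X)\,\|\,p_{Y|R}(\cdot|R(X))\bigr)$ using that $R(X)$ is a function of $X$ (so $p_{Y|X,R}=p_{Y|X}$). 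That is precisely the identity you supply, so you and the paper are doing the same thing; you just spell out the intermediate step the paper leaves implicit. The push-forward term is handled identically in both.

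One small correction to your final remark: the proof of Lemma \ref{f_Data_Processing_Inequality} establishes the inequality via Jensen's inequality but never explicitly computes the size of the gap, so ``citing that computation'' would not literally deliver $\mathbb{I}(Y;X)-\mathbb{I}(Y;g(X))=E_X\mathbb{D}_{\textup{KL}}(P_{Y|X}\,\|\,P_{Y|g(X)})$. The cleaner route is the one you sketch yourself: write $\mathbb{I}_{\textup{KL}}(Y;X)-\mathbb{I}_{\textup{KL}}(Y;R(X))=E_{X,Y}\bigl[\log\bigl(p_{Y|X}(Y|X)/p_{Y|R}(Y|R(X))\bigr)\bigr]$, which is legal because the laws of $(R(X),Y)$ and $(X,Y)$ are linked by pushforward under the measurable map $x\mapsto R(x)$, then recognize the right-hand side as the asserted expected KL.
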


Lemma \ref{pinsker} can be easily verified as follows.
By the definition of the objective function $\mathbb{L}$ given in (\ref{Lobja}),
for any $R\in\mathcal{R}$,
\begin{align*}
	\mathbb{L}(R;\lambda)-\mathbb{L}(R_0;\lambda)
	=\mathbb{I}_{\textup{KL}}(Y;X)-\mathbb{I}_{\textup{KL}}(Y;R(X)) + \lambda \mathbb{D}_{\textup{KL}}(P_{R}~||~\gamma_{U}).
\end{align*}
Now since $\mathbb{I}_{\textup{KL}}(Y;X) =\mathbb{I}_{\textup{KL}}(Y;(X,R(X))),$
we have
\begin{eqnarray}
	\mathbb{L}(R;\lambda)-\mathbb{L}(R_0;\lambda)\nonumber
	&=&\mathbb{I}_{\textup{KL}}(Y;(X,R(X)))-\mathbb{I}_{\textup{KL}}(Y;R(X)) + \lambda \mathbb{D}_{\textup{KL}}(P_{R}~||~\gamma_{U})\nonumber\\
	&=&\mathbb{I}_{\textup{KL}}(Y;X|R(X))+\lambda \mathbb{D}_{\textup{KL}}(P_{R}~||~\gamma_{U}), \label{con_MI_equality}
\end{eqnarray}
where  (\ref{con_MI_equality}) follows from the chain rule for conditional mutual information \citep[Theorem 2.5.2 in][]{cover2006elements}, i.e.,  $\mathbb{I}_{\textup{KL}}(Y;(X,R(X)))
=\mathbb{I}_{\textup{KL}}(Y;R(X))+\mathbb{I}_{\textup{KL}}(Y;X|R(X))$.  Therefore, Lemma \ref{pinsker} follows directly from  Pinsker's inequality \citep{npe2008}.

In view of (\ref{Pinsker_induced}), to prove (\ref{weak_Results}), it suffices to show
\[
E\{\mathbb{L}(\hat{R}_n^{\lambda};\lambda)-\mathbb{L}(R_0;\lambda)\}\to0,\text{ as } n\to\infty.
\]

\subsection{A maximal inequality for U-process of order  2}
To bound $\Delta_4$, we derive
a generalized Dudley's inequality for an order-two U-process using Rademacher chaos. This inequality stated in the next lemma may be of independent interest.

\begin{lemma}[
Maximal inequality for U-Process of Order 2]\label{Symmetrization}
	Let $\mathcal{H}$ be a class of symmetric and real-valued functions on $\mathcal{Z}^2$ and assume that $\sup\limits_{h\in\mathcal{H}}\|h\|_{\infty}\le b$ for some constant $b$. Define
	\[
	\mathcal{U}_n^{\textup{Deg}}(h)=\frac{1}{n(n-1)}\sum_{1\le i\neq j\le n}\left\{h(Z_i,Z_j)-\tilde{h}(Z_i)-\tilde{h}(Z_j)+Eh(Z_i,Z_j)\right\},
	\]
	where $\tilde{h}(z)=Eh(z,Z)$ for any $z\in\mathcal{Z}$, then there exists a universal constant $C$ such that
	\begin{equation}\label{Dudley_Rad_Chaos}
		E\sup\limits_{h\in\mathcal{H}}\mathcal{U}_n^{\textup{Deg}}(h)\le E\mathcal{S}_n^{\textup{Deg}}(h)\le\inf\limits_{0\le\delta\le b}\left(\delta+\frac{24C\int_{\delta/4}^{b}\log(\mathcal{N}_{2n(n-1)}(u,\|\cdot\|_{\infty},\mathcal{H}))du}{n}\right),
	\end{equation}
	where
	\[
	\mathcal{S}_n^{\textup{Deg}}(\mathcal{H})=E_{\epsilon}\sup\limits_{h\in\mathcal{H}}\frac{1}{n(n-1)}\sum_{1\le i\neq j\le n}\epsilon_i\epsilon_j\left\{h(Z_i,Z_j)-h(Z'_i,Z_j)-h(Z_i,Z'_j)+h(Z'_i,Z'_j)\right\}
	\]
	and $\mathcal{N}_{2n(n-1)}(u,\|\cdot\|_{\infty},\mathcal{H})$ is the uniform covering number of $\mathcal{H}$ \citep{anthony_bartlett_1999}.
See also
Section \ref{appB} in the supplementary material.
\end{lemma}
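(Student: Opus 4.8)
The plan is to establish the two inequalities in (\ref{Dudley_Rad_Chaos}) in turn: a \emph{symmetrization step} that reduces the degenerate $U$-process to a Rademacher chaos of order two, followed by a \emph{chaining step} that bounds the chaos by a Dudley-type entropy integral. For the symmetrization step, let $\pi_2 h$ denote the bracketed summand defining $\mathcal U_n^{\textup{Deg}}$, i.e. $\pi_2 h(z_1,z_2)=h(z_1,z_2)-\tilde h(z_1)-\tilde h(z_2)+Eh$; since $h$ is symmetric this is the second Hoeffding projection, which is canonical, $E[\pi_2 h(z,Z)]=0$ for every $z$. Introduce an independent ghost sample $Z_1',\dots,Z_n'$. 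Canonicity gives, for each fixed pair $i\neq j$, $\pi_2 h(Z_i,Z_j)=E_{Z'}[\Phi_{ij}(h)]$ with $\Phi_{ij}(h):=h(Z_i,Z_j)-h(Z_i',Z_j)-h(Z_i,Z_j')+h(Z_i',Z_j')$, where one uses that passing from $h$ to $\pi_2 h$ leaves this fully alternating four-term difference unchanged. Hence Jensen's inequality (pulling $\sup_h$ outside the conditional expectation over $Z'$) yields $E\sup_h\mathcal U_n^{\textup{Deg}}(h)\le E\sup_h\frac1{n(n-1)}\sum_{i\neq j}\Phi_{ij}(h)$. Next, for any sign pattern $\eta\in\{\pm1\}^n$, applying to the sample the measure-preserving map that swaps $Z_k\leftrightarrow Z_k'$ exactly for the indices $k$ with $\eta_k=-1$ multiplies each block $\Phi_{ij}(h)$ by $\eta_i\eta_j$, simultaneously for all $h$; thus $\big(\sum_{i\neq j}\Phi_{ij}(h)\big)_h\overset{d}{=}\big(\sum_{i\neq j}\eta_i\eta_j\Phi_{ij}(h)\big)_h$. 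Averaging over $\eta=\epsilon$ a Rademacher vector independent of the data gives exactly $E\sup_h\mathcal U_n^{\textup{Deg}}(h)\le E\mathcal S_n^{\textup{Deg}}(\mathcal H)$.

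For the chaining step, condition on $(Z,Z')$, so that $X_h:=\frac1{n(n-1)}\sum_{i\neq j}\epsilon_i\epsilon_j\Phi_{ij}(h)=\langle A(h)\epsilon,\epsilon\rangle$ is a homogeneous Rademacher chaos of degree two with symmetric hollow coefficient matrix $A(h)=(\Phi_{ij}(h)/(n(n-1)))$. Hypercontractivity of Rademacher chaos gives $\|X_h-X_{h'}\|_p\le(p-1)\|X_h-X_{h'}\|_2\lesssim p\,\|A(h)-A(h')\|_{\mathrm{HS}}$ for $p\ge1$. If $u$ denotes the $\|\cdot\|_\infty$-distance between $h$ and $h'$ over the (at most $2n(n-1)$) distinct sample pairs occurring in the $\Phi_{ij}$'s, then $|\Phi_{ij}(h)-\Phi_{ij}(h')|\le 4u$, so $\|A(h)-A(h')\|_{\mathrm{HS}}\le 4u/\sqrt{n(n-1)}$, and therefore $X_h-X_{h'}$ is sub-exponential with $\|X_h-X_{h'}\|_{\psi_1}\lesssim u/n$. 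Feeding this increment bound into a truncated Dudley-type chaining bound for $\psi_1$ (sub-exponential) processes — the $\psi_1$ analogue of the classical Dudley inequality, in which the entropy integrand is $\log\mathcal N$ rather than $\sqrt{\log\mathcal N}$ — controls $E_\epsilon\sup_h X_h$ by $\delta+\frac{24C}{n}\int_{\delta/4}^{b}\log\mathcal N(u)\,du$ for any $0\le\delta\le b$, where the $\delta$ term handles the unchained residual using $\sup_h\|h\|_\infty\le b$ and $\mathcal N(u)$ is the covering number of $\mathcal H$ at resolution $u$ over the sample pairs. Taking expectation over $(Z,Z')$ and bounding these random covering numbers by the uniform one $\mathcal N_{2n(n-1)}(u,\|\cdot\|_\infty,\mathcal H)$ produces the right-hand side of (\ref{Dudley_Rad_Chaos}).

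The crux is the chaining step: one must genuinely exploit the order-two chaos structure to get the degenerate scaling — the factor $1/n$ instead of $1/\sqrt n$, and the entropy $\log\mathcal N$ instead of $\sqrt{\log\mathcal N}$. This forces us to (i) invoke the sharp chaos moment inequality (hypercontractivity, equivalently a Hanson--Wright-type bound) with the degree-$2$ factor $p$, and (ii) correctly translate a uniform increment $\|h-h'\|_\infty$ at sample size $\asymp n^2$ into the Hilbert--Schmidt norm of the coefficient matrix; treating $X_h$ as an ordinary empirical process (Dudley over a sub-Gaussian metric) would lose the rate. A secondary, more routine point is the degeneracy/symmetry bookkeeping in the first step — symmetrizing the kernel, identifying the canonical projection $\pi_2 h$, and checking the four-term difference is insensitive to the passage from $h$ to $\pi_2 h$ — which is precisely what makes the single-Rademacher-sequence chaos $\sum_{i\neq j}\epsilon_i\epsilon_j\{\,\cdots\,\}$ appearing in $\mathcal S_n^{\textup{Deg}}$ emerge naturally.
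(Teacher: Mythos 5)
Your proposal is correct and follows essentially the same two-stage route as the paper: (i) use canonicity of the second Hoeffding projection to write the degenerate kernel as a conditional expectation of the fully alternating four-term difference, apply Jensen, and then randomize signs by the swap $Z_k\leftrightarrow Z_k'$ (this is precisely the Sherman symmetrization the paper cites, and your identity $\Phi_{ij}\mapsto\eta_i\eta_j\Phi_{ij}$ is exactly why it works for order-two degenerate kernels); (ii) control the resulting Rademacher chaos by chaining with the Hilbert--Schmidt/empirical metric on coefficient matrices, exploiting that the increments are sub-exponential to get the $\log\mathcal N$ entropy integrand and the $1/n$ (rather than $1/\sqrt n$) scaling.

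The one place you and the paper differ is in how step (ii) is packaged. The paper proves a self-contained finite-class chaos maximum bound $E_\epsilon\max_{A\in\mathcal A}\sum_{i\neq j}\epsilon_i\epsilon_j a_{ij}\lesssim\max_A\|A\|_F\log|\mathcal A|$ from scratch (decoupling, comparison to Gaussian chaos, Hanson--Wright-type MGF), introduces the metric $d_{U,n}$ with the relation $N(\delta,\mathcal H,d_{U,n})\le\mathcal N_{2n(n-1)}(\delta/2,\|\cdot\|_\infty,\mathcal H)$ that produces the $\delta/4$ lower limit, and then runs a dyadic chaining by hand. You instead invoke hypercontractivity of degree-two Rademacher chaos to get a $\psi_1$ increment bound $\|X_h-X_{h'}\|_{\psi_1}\lesssim\|A(h)-A(h')\|_{\mathrm{HS}}$, and then cite a truncated $\psi_1$-Dudley inequality. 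These are the same argument at the core (both deliver $\sigma\log N$ at each scale and $\int\log\mathcal N$ after summation), so there is no mathematical gap; the paper's version just unrolls the black boxes and carries explicit constants. Two small points worth tightening if you write this up in full: hypercontractivity as you state it holds for $p\ge2$, not $p\ge1$; and the $\delta$ residual term is obtained by Cauchy--Schwarz on the chaos (giving $d_{U,n}(h,\hat h_T)\le\alpha_T$), not directly from $\sup_h\|h\|_\infty\le b$, which instead fixes the diameter and hence the upper limit $b$ of the entropy integral.
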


We note that some similar entropy bounds but with different metrics
have been obtained in the existing literature,
see Lemma 8 of \cite{FukumizuNIPS2009} and Theorem 2 of \cite{Ying:Campbell:2010}.  These entropy bounds are similar to a special case for the inequality (\ref{Dudley_Rad_Chaos}), which is
\begin{equation}\label{Dudley_integral_bound}
	E\sup\limits_{h\in\mathcal{H}}\mathcal{U}_n^{\textup{Deg}}(h)\le E\mathcal{S}_n^{\textup{Deg}}(h)\le C\int_{0}^{b}\log(\mathcal{N}_{2n(n-1)}(u,\|\cdot\|_{\infty},\mathcal{H}))du/n,
\end{equation}
where $C$ is a universal constant.  Obviously, the integral upper bound in (\ref{Dudley_Rad_Chaos}) can be tighter than that in (\ref{Dudley_integral_bound}).

A distinctive feature in (\ref{Dudley_Rad_Chaos}) is the uniform covering number we use, which is naturally related to  neural networks .
By Theorem 12.2 of \cite{anthony_bartlett_1999}, we have for $n~\textgreater~\Pdim(\mathcal{H})$,
\[
\mathcal{N}_{2n(n-1)}(u,\|\cdot\|_{\infty},\mathcal{H})\le\left(\frac{4bn(n-1)}{u\Pdim(\mathcal{H})}\right)^{\Pdim(\mathcal{H})},
\]
where $\Pdim(\mathcal{H})$ is the pseudo dimension of $\mathcal{H}$ \citep{bartlett2019nearly}, see also Section \ref{appB} in the supplementary material.
By Theorem 6 in \citet{bartlett2019nearly}, if $\mathcal{H}$ is a ReLU network with depth $\mathcal{L}$ and size $\mathcal{S}$, there exists a universal constant $C_1$, such that $\Pdim(\mathcal{H})\le C_1\mathcal{S}\mathcal{L}\log \mathcal{S}$.  Using these results and our inequality (\ref{Dudley_Rad_Chaos}), if $\mathcal{H}=\mathcal{D}$ satisfying (NS\ref{Strong_Structure_on_NetworkD}), we have
\[
E\sup\limits_{h\in\mathcal{H}}\mathcal{U}_n^{\textup{Deg}}(h)\le E\mathcal{S}_n^{\textup{Deg}}(h)\lesssim \frac{\Pdim(\mathcal{H})\log n}{n}\lesssim n^{-\frac{2\beta}{2\beta+d_Y+d_0}}.
\]

\section{Extensions to categorical response variables}
\label{Extensions}
We focus on continuous response and predictors in
the previous sections and the theoretical analysis also relies on  the continuous-distributed assumption. When there are categorical variables,  we can easily modify our method to accommodate this case.   We first assume only the response $Y$ is  categorical variable with finite categories.  Namely,  $Y\in\{1,2,\ldots,K\}$ for some $K\in\mathbb{N}^+$.  Note that
\begin{align*}
	\mathbb{I}_{\textup{KL}}(Y;X)&=\int_{\mathcal{X}\times\mathcal{Y}}p_{XY}(x,y)\log \left(\frac{p_{XY}(x,y)}{p_X(x)p_Y(y)}\right)dxdy\\
	&=\sum_{k=1}^{K}p_k\int_{\mathcal{X}}p_{X|Y}(x|Y=k)\log \left(\frac{p_{X|Y}(x|Y=k)}{p_X(x)}\right)dx\\
	&=\sum_{k=1}^{K}p_k \mathbb{D}_{\textup{KL}}(P_{X|Y=k}~||~P_X),
\end{align*}
where $p_k=P(Y=k)$ for $k=1,2,\ldots,K$.  In view of this,  we consider the following empirical objective function
\begin{eqnarray}
	\mathbb{L}^{\textup{net}}_{n,c}(R;\lambda)&=&-\sum_{k=1}^{K}\hat{p}_k\sup\limits_{D_k\in\mathcal{D}_k}\left\{\frac{1}{n_k}\sum_{i:Y_i=k}^{n}D_k(R(X_i))-\frac{1}{n}\sum_{i=1}^ne^{D_k(R(X_i))}\right\}\nonumber\\
	&&\hskip 3cm +\lambda\sup\limits_{Q\in\mathcal{Q}}\left\{\frac{1}{n}\sum_{i=1}^{n}Q(R(X_i))-\frac{1}{n}\sum_{i=1}^ne^{Q(U_i)}\right\},\label{categorical_loss}
\end{eqnarray}
and its minimizer $\hat{R}_n^{\lambda}\in\argmin_{R\in \mathcal{R}}\mathbb{L}^{\textup{net}}_{n,c}(R;\lambda)$ is defined as a USR estimate, where $\hat{p}_k=n_k/n,n_k = \sum_{i=1}^n\textup{I}\{Y_i=k\}$ for $k=1,2,\ldots,K$, and $\mathcal{R}$, $\mathcal{Q}$, and $\mathcal{D}_k$,  $k=1,2,\ldots,K$, are some neural network function classes.

Let $\hat{p}=(\hat{p}_1,\hat{p}_2,\ldots,\hat{p}_K)^\top$ and let $\tilde{Y}_i\in\mathbb{R}^K$ be the one-hot vector with the $k$th entry being $1$ when $Y_i=k$. And $\mathcal{D}$ is a $K$-output neural network function class but each output does not share any weights or bias.  Namely,  $D=(D_1,D_2,\ldots,D_K)^\top$ for any $D\in\mathcal{D}$ and $D_1,D_2,\ldots,D_K$ do not share any parameter. Then,  (\ref{categorical_loss}) can be recast as
\begin{eqnarray*}
	\mathbb{L}^{\textup{net}}_{n,c}(R;\lambda)&=&-\sup\limits_{D\in\mathcal{D}}\left\{\frac{1}{n}\sum_{i=1}^{n}D(R(X_i))^\top \tilde{Y}_i-\frac{1}{n}\sum_{i=1}^n \hat{p}^\top e^{D(R(X_i))}\right\}\\
	&&\hskip 3cm +\lambda\sup\limits_{Q\in\mathcal{Q}}\left\{\frac{1}{n}\sum_{i=1}^{n}Q(R(X_i))-\frac{1}{n}\sum_{i=1}^ne^{Q(U_i)}\right\},
\end{eqnarray*}
where $e^{D(\cdot)}=(e^{D_1(\cdot)},e^{D_2(\cdot)},\ldots,e^{D_K(\cdot)})^\top$.  This formulation is in terms of one single neural network and allows direct application of the MSRL training algorithm \ref{Training_Algorithm}.

Note that we can further extend the above approach to allow more general categorical cases. For instance, we assume $Y=(Y_{D},Y_{C})$ and $X=(X_{D},X_{C})$, where the subscripts $D$ and $C$ indicate the categorical and continuous components of a random vector, respectively.  Moreover, it is assumed that $Y_{D}\in\{1,2,\ldots,K\}$ and $X_{D}\in\{1,2,\ldots,J\}$ for some $J\in\mathbb{N}^+$.
Our goal is to learn a representation $R$ such that $Y\perp \!\!\! \perp X|(X_{D},R(X_{C}))$.
Analogous to  (\ref{categorical_loss}),  we consider the following empirical objective function
\begin{eqnarray*}
	&&\mathbb{L}^{\textup{net}}_{n,c}(R;\lambda)\\
	&=&-\sum_{j=1}^{J}\sum_{k=1}^{K}\hat{p}_{jk}\sup\limits_{D_{jk}\in\mathcal{D}_{jk}}\left\{\frac{1}{n_{jk}}\sum_{i:Y_{D,i}=k,X_{D,i}=j}^{n}D_{jk}(Y_{C,i},R(X_{C,i}))-\frac{1}{n}\sum_{i=1}^ne^{D_{jk}(Y_{C,i},R(X_{C,i}))}\right\}\nonumber\\
	&&+\lambda\sup\limits_{Q\in\mathcal{Q}}\left\{\frac{1}{n}\sum_{i=1}^{n}Q(R(X_{C,i}))-\frac{1}{n}\sum_{i=1}^ne^{Q(U_i)}\right\},
\end{eqnarray*}
where $\hat{p}_{jk}=n_{jk}/n,n_{jk} = \sum_{i=1}^n\textup{I}\{Y_{D,i}=k,X_{D,i}=j\}$, $j=1,2,\ldots,J$, $k=1,2,\ldots,K$, and $\mathcal{R}$, $\mathcal{Q}$, and $\mathcal{D}_{jk}$ for $j=1,2,\ldots,J$, and $k=1,2,\ldots,K$, are some neural network function classes.

Our theoretical analysis in section \ref{Theoretical_Results} can be applied to study the convergence properties of the MSRL for the aforementioned categorical cases without further difficulties. We omit the details.

\section{Implementation}\label{Implementation} In this section,
we present a MSRL training algorithm and discuss some practical issues.  We rewrite $R,D,Q$ as $R_{\boldsymbol{\theta}},D_{\boldsymbol{\phi}},Q_{\boldsymbol{\psi}}$ to underline their weight and bias parameters $\boldsymbol{\theta},\boldsymbol{\phi}$ and $\boldsymbol{\psi}$, respectively.  The training procedure is summarized in Algorithm \ref{Training_Algorithm}.

When the batch size $m$ is relatively large, such as $m=256$ or $512$, to reduce the computational burden, we approximate the U-process term in Algorithm \ref{Training_Algorithm} using permuted samples.  The early-stopping (ES) criterion is applied to avoid the over-fitting phenomena, and the distance covariance or distance correlation \citep{szekely2007measuring} is used as the early-stopping criterion.

\subsection{Determination of the intrinsic dimension}\label{d_Determination}
The determination of the intrinsic dimension $d_0$ is crucial in real implementation.  In practice, if taking different downstream  tasks into account, different determination methods may be adopted. For example, when  the predictive performance is of more concern, we may determine $d_0$ by the cross-validation in terms of APE.  In sections \ref{rd1}-\ref{rd2}, we determine $d_0$ as 5 and  $d_0$ as 10  respectively by this approach.

In this section, we propose a dichotomy-based cross-validation method to  determine the intrinsic dimension $d_0$.  For $k=1,2,\ldots,d_X$, let
\begin{equation}\label{ture_Rk}
	R_{0}^{(k)}\in\argmax_{R~:~\textup{dim}(R)=k}\mathbb{I}_{\textup{KL}}(Y;R(X)).
\end{equation}
Then we have
\begin{equation}\label{MI_inequality1}
	\mathbb{I}_{\textup{KL}}(Y;R_{0}^{(k)}(X))\le \mathbb{I}_{\textup{KL}}(Y;(0, R_{0}^{(k)}(X)))\le \mathbb{I}_{\textup{KL}}(Y;R_{0}^{(k+1)}(X)),
\end{equation}
for $k=1,2,\ldots,d_X-1$, where the first inequality follows from the data-processing inequality (\ref{InformationLossInequality}) and the second inequality is by the definition in (\ref{ture_Rk}).  Notably, when $k\ge d_0$,
\begin{equation}\label{MI_inequality2}
	\mathbb{I}_{\textup{KL}}(Y;R_{0}^{(k)}(X))=\mathbb{I}_{\textup{KL}}(Y;R_0(X)).
\end{equation}
This is because
\[
\mathbb{I}_{\textup{KL}}(Y;R_{0}^{(k)}(X))\le\mathbb{I}_{\textup{KL}}(Y;R_0(X))~\text{for any}~k,
\]
by the definition of USR for $R_0$, and
\[
\mathbb{I}_{\textup{KL}}(Y;R_0(X))=\mathbb{I}_{\textup{KL}}(Y;R_{0}^{(d_0)}(X))\le\mathbb{I}_{\textup{KL}}(Y;R_{0}^{(k)}(X)),~\text{for any}~k\ge d_0.
\]
Applying MSRL with intrinsic dimension input $k$, we obtain the minimizer
\begin{equation}\label{R_train}
	\hat{R}_{k,n}^{\lambda}\in\argmin_{R\in \mathcal{R},  \textup{dim}_{\textup{out}}(\mathcal{R})=k}\mathbb{L}^{\textup{net}}_n(R;\lambda)
\end{equation}
as an estimate of $R_{0}^{(k)}$.  Let
\begin{equation}\label{D_train}
	\hat{D}_{k,n}^{\lambda}\in\argmax_{D\in \mathcal{D}}\left\{\frac{1}{n}\sum_{i=1}^{n}D(Y_i,\hat{R}_{k,n}^{\lambda}(X_i))-\frac{1}{n(n-1)}\sum_{i\neq j}e^{D(Y_i,\hat{R}_{k,n}^{\lambda}(X_j))}\right\}.
\end{equation}
Under some mild conditions, $\mathbb{I}_{\textup{KL}}(Y;R_{0}^{(k)}(X))$ can be consistently estimated by
\begin{equation}\label{MI_est}
	\hat{\mathbb{I}}_{\textup{KL}}^{k,n}=\frac{1}{n}\sum_{i=1}^{n}\hat{D}_{k,n}^{\lambda}(Y_i,\hat{R}_{k,n}^{\lambda}(X_i))-\frac{1}{n(n-1)}\sum_{i\neq j}e^{\hat{D}_{k,n}^{\lambda}(Y_i,\hat{R}_{k,n}^{\lambda}(X_j))}+1.
\end{equation}
Intuitively, $\hat{\mathbb{I}}_{\textup{KL}}^{\lfloor d_X/2\rfloor,n}\ll \hat{\mathbb{I}}_{\textup{KL}}^{d_X,n}$ implies $d_{0}\in (\lfloor d_X/2\rfloor,d_X]$;  otherwise,  $d_{0}\in [1,\lfloor d_X/2\rfloor]$.  This motivates a dichotomy-based cross-validation selection method as follows.  Given the data $\{(Y_i,X_i)\}_{i=1}^n$ and tolerance level $\eta$, we partition the sample into folds $I_t(t = 1,\ldots,T)$ and denote the complement of $I_t$ by $I^c_t$.  For any nonnegative integer $k\le d_X$, we obtain the mutual information estimate $\hat{\mathbb{I}}_{\textup{KL}}^{k,n}=\frac{1}{T}\sum_{t=1}^{T}\hat{\mathbb{I}}_{\textup{KL}}^{k,n,t}$ of $\mathbb{I}_{\textup{KL}}(Y;R_{0}^{(k)}(X))$ as below.  For $t = 1,\ldots,T$,
\[
\hat{\mathbb{I}}_{\textup{KL}}^{k,n,t}=\frac{1}{|I_t|}\sum_{i\in I_t}\hat{D}_{k,n}^{\lambda}(Y_i,\hat{R}_{k,n}^{\lambda}(X_i))-\frac{1}{|I_t|(|I_t|-1)}\sum_{i,j\in I_t,i\neq j}e^{\hat{D}_{k,n}^{\lambda}(Y_i,\hat{R}_{k,n}^{\lambda}(X_j))}+1,
\]
where $\hat{R}_{k,n,t}^{\lambda}$ and $\hat{D}_{k,n,t}^{\lambda}$ are trained based on $I^c_t$ as we have done in (\ref{R_train}) and (\ref{D_train}).

Let $\textup{UD}=d_X$, $\textup{LD}=1$, $\widehat{\textup{MI}}=\hat{\mathbb{I}}_{\textup{KL}}^{\textup{UD},n}$, and $u=\lfloor (\textup{UD}+\textup{LD})/2\rfloor$.  If $|\hat{\mathbb{I}}_{\textup{KL}}^{u,n}-\widehat{\textup{MI}}|/\widehat{\textup{MI}}\le\eta$, we update $\textup{UD}=u,\widehat{\textup{MI}}= \hat{\mathbb{I}}_{\textup{KL}}^{u,n}$ and otherwise update $\textup{LD}=u+1$.  We continue this procedure until $\textup{UD}=\textup{LD}$ and the estimate of the intrinsic dimension is the ultimate $\textup{UD}$.

In practice, instead of starting from $d_X$, one may start from certain upper bound $d_{U}$ for $d_0$.  The proposed method significantly reduces the computational complexity since it only screens $\lceil \log_2 d_U\rceil$  alternative values for $d_0$. We provide a simple simulation example to
illustrate the proposed method in Subsection \ref{Intrinsic_dim_est}.

\begin{algorithm}[t]
	\caption{MSRL Training Procedure}\label{Training_Algorithm}
	\hspace*{0.02in} {\bf Input:}
	Data $\{(Y_i,X_i)\}_{i=1}^n$, tuning parameter $\lambda$, the intrinsic dimension $d_0$ and batch size $m$,\\
	\hspace*{0.02in} {\bf Output:}
	Estimated USR $\hat{R}_n^{\lambda}$, MI discriminator $\hat{D}_n^{\lambda}$, pushing-forward discriminator $\hat{Q}_n^{\lambda}$.
	\begin{center}
		\begin{algorithmic}[1]
			\State Sample i.i.d. $U_1,\ldots,U_n$ from $\textup{Uniform}[0,1]^{d_0}$.
			\While{not converged}
			\State Draw $m$ minibatch samples $\{(Y_{b,j},X_{b,j},U_{b,j})\}_{j=1}^m$ from $\{(Y_i,X_i,U_i)\}_{i=1}^n$.
			\State Fix $R_{\boldsymbol{\theta}}$, update $D_{\boldsymbol{\phi}}$ by ascending its stochastic gradient:
			\[
			\nabla_{\boldsymbol{\phi}}\left\{\frac{1}{m}\sum_{j=1}^{m}D_{\boldsymbol{\phi}}(Y_{b,j},R_{\boldsymbol{\theta}}(X_{b,j}))-\frac{1}{m(m-1)}\sum_{1\le j\neq k\le m}e^{D_{\boldsymbol{\phi}}(Y_{b,j},R_{\boldsymbol{\theta}}(X_{b,k}))}\right\},
			\]
			and update $Q_{\boldsymbol{\psi}}$ by ascending its stochastic gradient:
			\[
			\nabla_{\boldsymbol{\psi}}\left\{\frac{1}{m}\sum_{j=1}^{m}Q_{\boldsymbol{\psi}}(R_{\boldsymbol{\theta}}(X_{b,j}))-\frac{1}{m}\sum_{j=1}^me^{Q_{\boldsymbol{\psi}}(U_{b,j})}\right\}.
			\]
			\State Fix $D_{\boldsymbol{\phi}}$ and $Q_{\boldsymbol{\psi}}$, update $R_{\boldsymbol{\theta}}$ by descending its stochastic gradient
			\begin{eqnarray*}
				&-&\nabla_{\boldsymbol{\theta}}\left\{\frac{1}{m}\sum_{j=1}^{m}D_{\boldsymbol{\phi}}(Y_{b,j},R_{\boldsymbol{\theta}}(X_{b,j}))-\frac{1}{m(m-1)}\sum_{1\le j\neq k\le m}e^{D_{\boldsymbol{\phi}}(Y_{b,j},R_{\boldsymbol{\theta}}(X_{b,k}))}\right\}\\
				&+&\nabla_{\boldsymbol{\theta}}\left\{\frac{1}{m}\sum_{j=1}^{m}Q_{\boldsymbol{\psi}}(R_{\boldsymbol{\theta}}(X_{b,j}))-\frac{1}{m}\sum_{j=1}^me^{Q_{\boldsymbol{\psi}}(U_{b,j})}\right\}.
			\end{eqnarray*}
			\EndWhile
			\State \Return $\hat{R}_n^{\lambda}=R_{\hat{\boldsymbol{\theta}}},\hat{D}_n^{\lambda}=D_{\hat{\boldsymbol{\phi}}}$, and $\hat{Q}_n^{\lambda}=Q_{\hat{\boldsymbol{\psi}}}$, where $\hat{\boldsymbol{\theta}},\hat{\boldsymbol{\phi}}$ and $\hat{\boldsymbol{\psi}}$ are the ultimate weight and bias parameters estimates.
		\end{algorithmic}
	\end{center}
\end{algorithm}

\section{Experiments}\label{Experiments}
We conduct several simulation studies to examine the
performance of the proposed method. For comparison, we compute two commonly-used linear SDR methods: sliced inverse regression (SIR) and sliced
average variance estimation (SAVE), and two nonlinear SDR methods: generalized sliced inverse regression and generalized sliced average variance estimation (GSIR and GSAVE), as the competitors.  In Subsection \ref{Simulation_Study}, we first adopt the conventional practice in many SDR existing literature to assume the intrinsic dimension is known, see such as \cite{Zhu:Miao:Peng:2006} and \cite{Lee2013aos}.

After learning a representation $\hat{R}$ on the training data $\{(Y_{j},X_{j})\}_{j=1}^{n}$, we apply it to the testing data $\{(Y_{n+j},X_{n+j})\}_{j=1}^{m}$ and obtain the transformed data $\{(Y_{n+j},\hat{R}(X_{n+j}))\}_{j=1}^{m}$. Based on $\{(Y_{n+j},\hat{R}(X_{n+j}))\}_{j=1}^{m}$, we then calculate the empirical distance correlation (DC) of $Y$ and $R(X)$.  To assess the predictive power of the learned representation, we fit a linear regression between $Y$ and $R(X)$ by using the transformed training data $\{(Y_{j},\hat{R}(X_{j}))\}_{j=1}^{n}$ and subsequently employ the fitted linear model $\widehat{LR}$ to the testing data $\{(Y_{n+j},\hat{R}(X_{n+j}))\}_{j=1}^{m}$.  The average prediction error (APE) is calculated by $[(1/m)\sum_{j=1}^{m}\{Y_{n+j}-\widehat{LR}(\hat{R}(X_{n+j}))\}^2]^{1/2}$.

\subsection{Simulation Study}\label{Simulation_Study}
In the simulation studies, we generate 6000 data points from each model below:
\[
\begin{cases}
	\textup{\uppercase\expandafter{\romannumeral1}}:&Y = 0.5X_1+X_2+\epsilon; \\
	\textup{\uppercase\expandafter{\romannumeral2}}:&Y = (X^2_1 + X^2_2)^{1/2}\log ((X^2_1 + X^2_2)^{1/2})+\epsilon; \\
	\textup{\uppercase\expandafter{\romannumeral3}}:&Y = (X_1+X_2)^2/(1+\exp(X_1))+\epsilon; \\
	\textup{\uppercase\expandafter{\romannumeral4}}:&Y = \sin(\pi(X_1 + X_2)/10)+X_1^2+\epsilon,
\end{cases}
\]
where $\epsilon\perp \!\!\! \perp X=(X_1,X_2,\ldots,X_p)^\top,\epsilon\sim N(0,0.25)$. Four distributional settings for the predictor $X$ are considered:
\[
\begin{cases}
	\textup{(\romannumeral1)}:&X\sim \textup{Uniform}[-2,2]^{p}; \\
	\textup{(\romannumeral2)}:&X\sim N(\boldsymbol{0},\boldsymbol{\textup{I}}_{p}); \\
	\textup{(\romannumeral3)}:&X\sim\frac{1}{4}N(-\boldsymbol{2}_{p},\boldsymbol{\textup{I}}_{p})+\frac{1}{2}\textup{Uniform}[-2,2]^{p}+\frac{1}{4}N(\boldsymbol{2}_{p},\boldsymbol{\textup{I}}_{p});\\
	\textup{(\romannumeral4)}:&X\sim N(\boldsymbol{0},0.5\cdot\boldsymbol{1}_{p}\cdot\boldsymbol{1}_{p}^\top+0.5\cdot\boldsymbol{\textup{I}}_{p}).
\end{cases}
\]
Notably, model complexity increases from model \textup{\uppercase\expandafter{\romannumeral1}} to model \textup{\uppercase\expandafter{\romannumeral4}}, and from scenario \textup{(\romannumeral1)} to scenario \textup{(\romannumeral4)}.  Full details of neural network architectures and hyperparameter choices for all the methods are given in
Section \ref{Descrip_Implement_Sim}if  the supplementary material.

The simulations are implemented in Python and the results for $p=10$ are summarized in Table \ref{T1} from which we have several observations.  First, our proposed MSRL outperforms all competitors in terms of larger distance correlation in almost all cases, implying that the features extracted by our method are more relevant to the outcome $Y$.  Second, as expected, SIR and SAVE perform  the best for  model \textup{\uppercase\expandafter{\romannumeral1}} which is relative simple model,   but suffer from serious DC and APE loss for nonlinear models. GSIR and GSAVE show the best performance in terms of APE when the underlying distribution of the predictor $X$ is simple. Among all these cases, our MSRL performs  comparably well.  Third,  the performance of our MSRL is superior to  other competitors in terms of APE when  the underlying distributions of $X$ are sophisticated.  Additional simulations results for $p=30$ are provided  in
Section \ref{Additional_Simulation} in the supplementary material.
 Overall, the proposed MSRL could be good supplement to other existing competitors in learning sufficient representation.

\begin{table}[htbp]
	\caption{\label{T1} Distance correlation (DC), average prediction errors (APE), and their standard errors (based on 6-fold validation and best hyperparameters selected by validation method) for $p=10$}
	\centering
	 \resizebox{\textwidth}{!}{
	\begin{tabular}{cccccccccc}
			\hline
			\hline
			&        & \multicolumn{2}{c}{Model \textup{\uppercase\expandafter{\romannumeral1}}} & \multicolumn{2}{c}{Model \textup{\uppercase\expandafter{\romannumeral2}}} & \multicolumn{2}{c}{Model \textup{\uppercase\expandafter{\romannumeral3}}} & \multicolumn{2}{c}{Model \textup{\uppercase\expandafter{\romannumeral4}}} \\
			\hline
			& Method & DC           & APE          & DC           & APE          & DC           & APE          & DC           & APE          \\
			\hline
			\multirow{5}{*}{S(\romannumeral1)} & MSRL  & .96(.00)   & 0.27(.01)   & .88(.01)   & 0.27(.01)   & .95(.01)   & 0.44(.03)   & .95(.00)   & 0.28(.01)   \\
			& SIR    & .96(.00)   & \textbf{0.25}(.01)   & .00(.00)   & 0.79(.01)   & .20(.02)   & 1.70(.05)   & .09(.01)   & 1.22(.02)   \\
			& SAVE   & .96(.00)   & \textbf{0.25}(.01)   & .00(.00)   & 0.79(.01)   & .20(.02)   & 1.70(.05)   & .09(.01)   & 1.22(.02)   \\
			& GSIR   & .96(.00)   & 0.27(.01)   & .89(.01)   & \textbf{0.26}(.01)   & .76(.02)   & \textbf{0.36}(.04)   & .71(.01)   & \textbf{0.27}(.01)   \\
			& GSAVE  & .96(.00)   & 0.26(.01)   & .86(.00)   & 0.30(.00)   & .66(.02)   & 0.71(.02)   & .66(.02)   & \textbf{0.27}(.01)   \\
			\hline
			\multirow{5}{*}{S(\romannumeral2)} & MSRL  & .95(.00)   & 0.26(.01)   & .90(.01)   & 0.28(.01)   & .91(.01)   & 0.49(.05)   & .94(.01)   & 0.35(.01)   \\
			& SIR    & .95(.00)   & \textbf{0.25}(.01)   & .00(.00)   & 0.90(.03)   & .15(.02)   & 1.51(.12)   & .07(.01)   & 1.43(.08)   \\
			& SAVE   & .95(.00)   & \textbf{0.25}(.01)   & .00(.00)   & 0.90(.03)   & .15(.02)   & 1.51(.12)   & .07(.01)   & 1.43(.08)   \\
			& GSIR   & .94(.01)   & 0.28(.01)   & .91(.01)   & \textbf{0.26}(.01)   & .78(.03)   & \textbf{0.38}(.02)   & .79(.04)   & \textbf{0.30}(.01)   \\
			& GSAVE  & .94(.01)   & 0.28(.01)   & .90(.01)   & 0.29(.01)   & .62(.03)   & 0.75(.04)   & .63(.03)   & 0.32(.02)   \\
			\hline
			\multirow{5}{*}{S(\romannumeral3)} & MSRL  & .99(.00)   & 0.27(.01)   & .97(.00)   & \textbf{0.33}(.01)   & .95(.01)   & \textbf{1.89}(.24)   & .98(.00)   & \textbf{0.50}(.06)   \\
			& SIR    & .99(.00)   & \textbf{0.25}(.01)   & .00(.00)   & 2.04(.05)   & .32(.02)   & 6.31(.22)   & .03(.01)   & 3.59(.11)   \\
			& SAVE   & .99(.00)   & \textbf{0.25}(.01)   & .00(.00)   & 2.04(.05)   & .32(.02)   & 6.33(.21)   & .03(.00)   & 3.59(.10)   \\
			& GSIR   & .85(.03)   & 0.94(.10)   & .92(.07)   & 0.55(.20)   & .61(.03)   & 3.31(.24)   & .72(.05)   & 0.71(.27)   \\
			& GSAVE  & .98(.00)   & 0.35(.03)   & .91(.01)   & 0.60(.05)   & .70(.02)   & 2.09(.27)   & .74(.01)   & 1.10(.11)   \\
			\hline
			\multirow{5}{*}{S(\romannumeral4)} & MSRL  & .96(.00)   & 0.26(.00)   & .92(.00)   & \textbf{0.29}(.01)   & .93(.00)   & \textbf{0.77}(.08)   & .95(.01)   & \textbf{0.34}(.02)   \\
			& SIR    & .97(.00)   & \textbf{0.25}(.01)   & .00(.00)   & 1.01(.03)   & .21(.02)   & 2.47(.23)   & .07(.02)   & 1.47(.03)   \\
			& SAVE   & .97(.00)   & \textbf{0.25}(.01)   & .00(.00)   & 1.01(.03)   & .21(.02)   & 2.47(.22)   & .07(.02)   & 1.47(.04)   \\
			& GSIR   & .95(.01)   & 0.31(.02)   & .91(.01)   & 0.30(.01)   & .60(.05)   & 1.22(.37)   & .73(.03)   & 0.37(.07)   \\
			& GSAVE  & .95(.01)   & 0.31(.02)   & .85(.01)   & 0.39(.01)   & .61(.03)   & 1.27(.21)   & .66(.02)   & 0.58(.04)\\
			\hline
		\end{tabular}}
\end{table}

\subsection{The superconductivity dataset}\label{rd1}
In this subsection, we apply MSDL to the superconductivity dataset, which is available at \textit{https://archive.ics.uci.edu/ml/datasets/Superconductivty+Data}.
In this dataset, there are $81$ features of $21,263$ superconductors and the response variable is the critical temperature; see a relevant paper \cite{Hamidieh2018} for the feature explanations. The goal of this analysis is to predict the critical temperature based on these features. The MSRL training procedure in Algorithm \ref{Training_Algorithm} is implemented with $d_0=5,10$ and $20$, respectively, and the reported results are based 5-fold validation. Additional implementation details are given in
Section \ref{Descrip_Implement_Real} in the supplementary material. The distance correlation, average prediction errors, and their standard errors for all methods are reported in Table \ref{Superconductivty}. It can be seen that MSRL outperforms other methods in terms of both empirical DC and prediction accuracy.
In this example, SIR performs better than  GSIR and GSAVE, which may be due to the curse of dimensionality for the kernel-based approaches and the relatively small effective sample size.  The implementation details for GSIR and GSAVE are given in
Section \ref{Descrip_Implement_Real} in the supplementary material.

\begin{table}
	\caption{\label{Superconductivty}Distance correlation (DC), average prediction errors (APE), and their standard errors (based on 5-fold validation and best hyperparameters selected by validation method) on Superconductivity dataset}
	\centering
	\begin{tabular}{c|cc|cc|cc}
		\hline
		\hline
		d        & \multicolumn{2}{c}{5}    & \multicolumn{2}{c}{10}   & \multicolumn{2}{c}{20}   \\
		\hline
		criterion & DC         & APE         & DC         & APE         & DC         & APE         \\
		\hline
		MSRL    & \textbf{0.80}(0.01) & \textbf{14.47}(0.39) & \textbf{0.82}(0.01) & \textbf{14.37}(0.36) & \textbf{0.82}(0.00) & \textbf{14.11}(0.18) \\
		SIR       & 0.33(0.01) & 17.86(0.41) & 0.23(0.01) & 17.67(0.25) & 0.16(0.00) & 17.64(0.26) \\
		SAVE      & 0.00(0.00) & 34.23(0.34) & 0.00(0.00) & 34.18(0.38) & 0.01(0.00) & 34.39(0.50) \\
		GSIR      & 0.45(0.04) & 21.44(2.28) & 0.37(0.03) & 20.87(2.30) & 0.39(0.06) & 20.86(2.23) \\
		GSAVE     & 0.47(0.03) & 29.89(1.44) & 0.45(0.03) & 29.51(1.09) & 0.44(0.03) & 29.54(1.22)\\
		\hline
	\end{tabular}
\end{table}

In Figure \ref{SuperConduct_drawing}, we plot the critical temperature against each component of the learned representation with $d_0=5$ for MSRL, GSIR and GSAVE. The plots for SIR and SAVE are given in Section \ref{extra-plots} in the supplementary material.  We fit a quadratic model to check how the components impact the response. From Figure \ref{SuperConduct_drawing}, we see that the critical temperature is nearly linear in three components of the MSRL estimate  and among other SDR methods, only SIR and GSIR show that the critical temperature is nearly linear in their corresponding first dimension reduction direction.  This may account for why MSRL has the high empirical DC and low prediction error when using a linear model.
\begin{figure}[htbp]
	\centering
	\begin{minipage}[t]{0.9\linewidth}
		\centering
		\rotatebox{90}{\quad\tiny MSRL}
		\includegraphics[width=0.10\textheight]{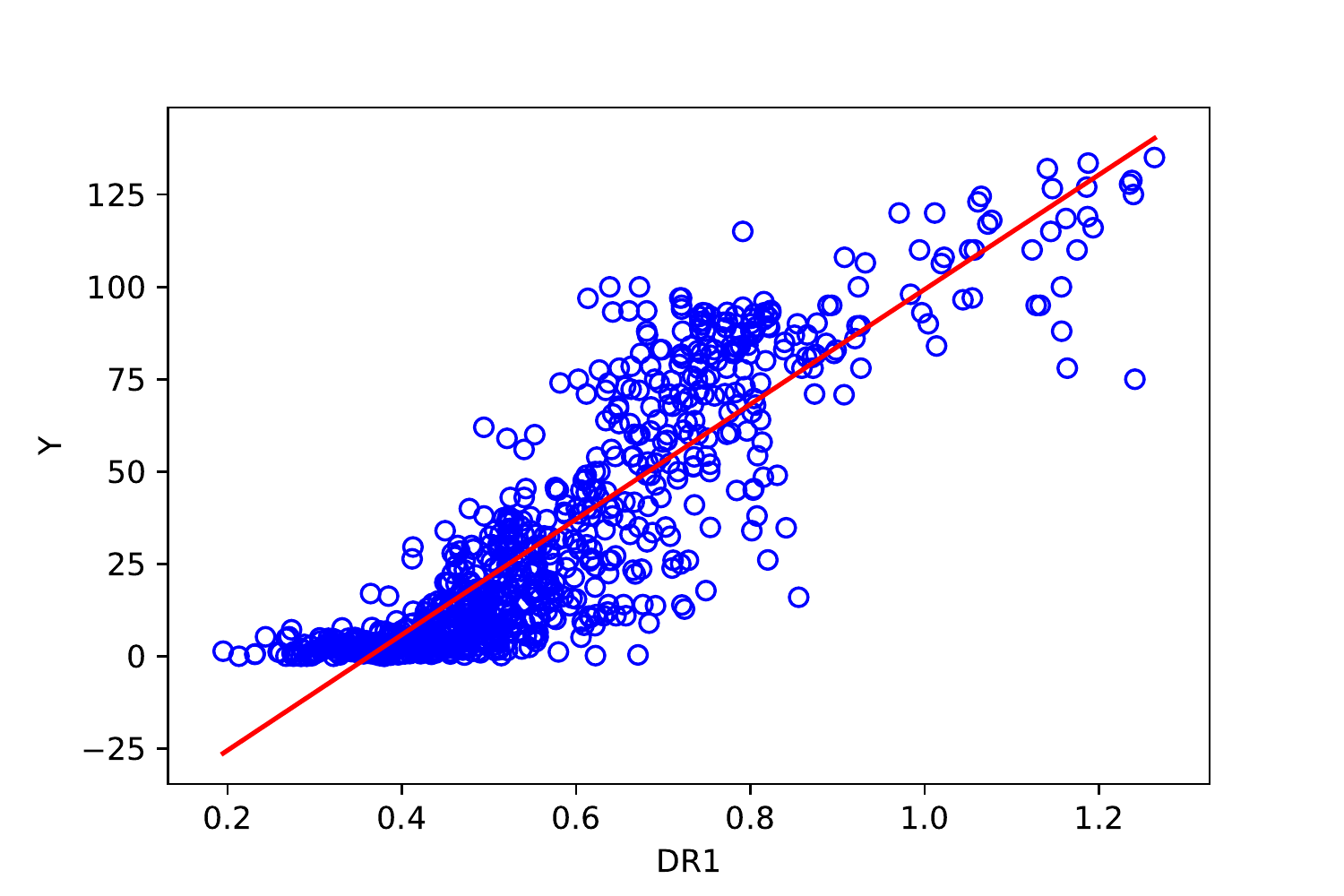} \includegraphics[width=0.10\textheight]{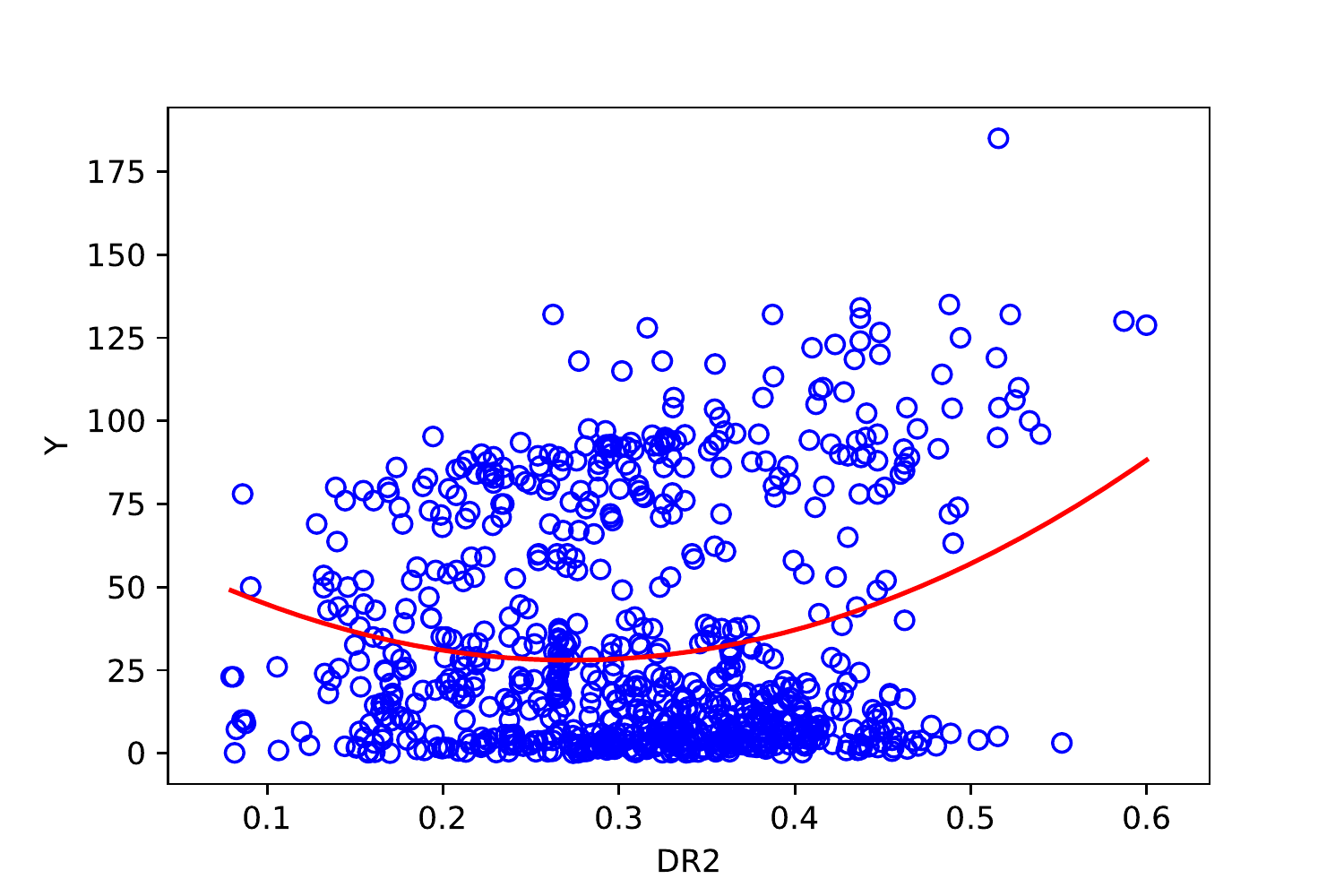}
		\includegraphics[width=0.10\textheight]{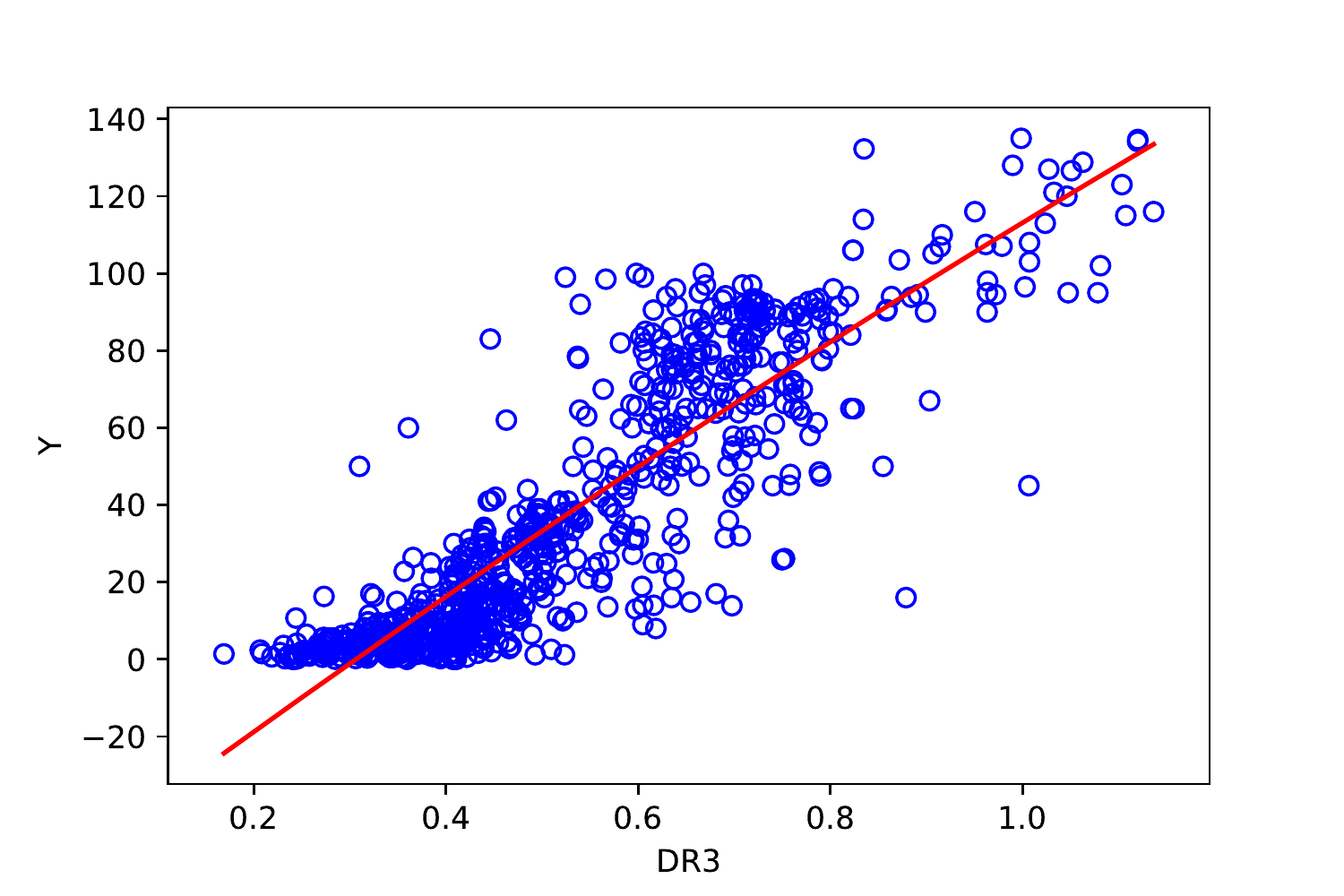} \includegraphics[width=0.10\textheight]{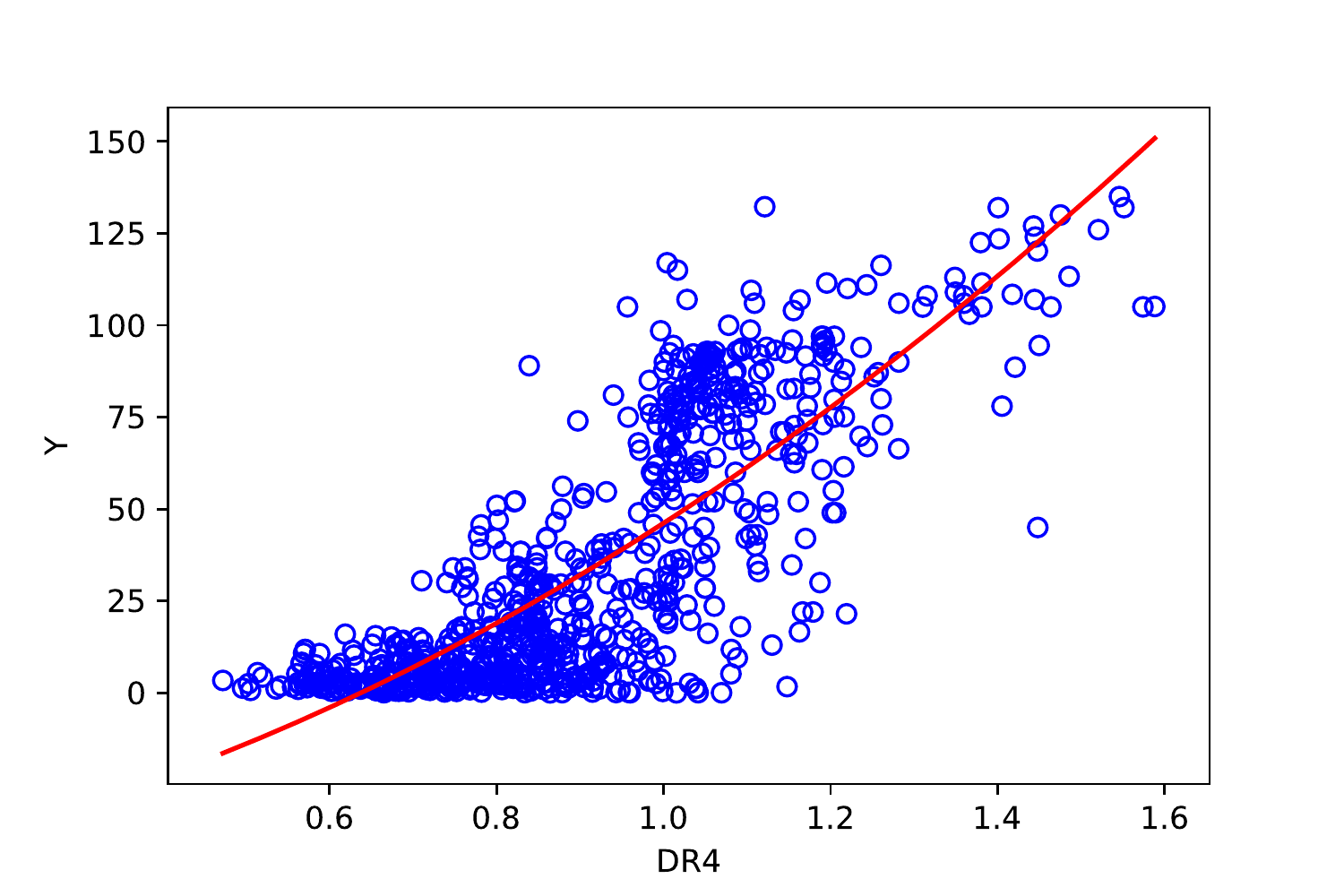}
		\includegraphics[width=0.10\textheight]{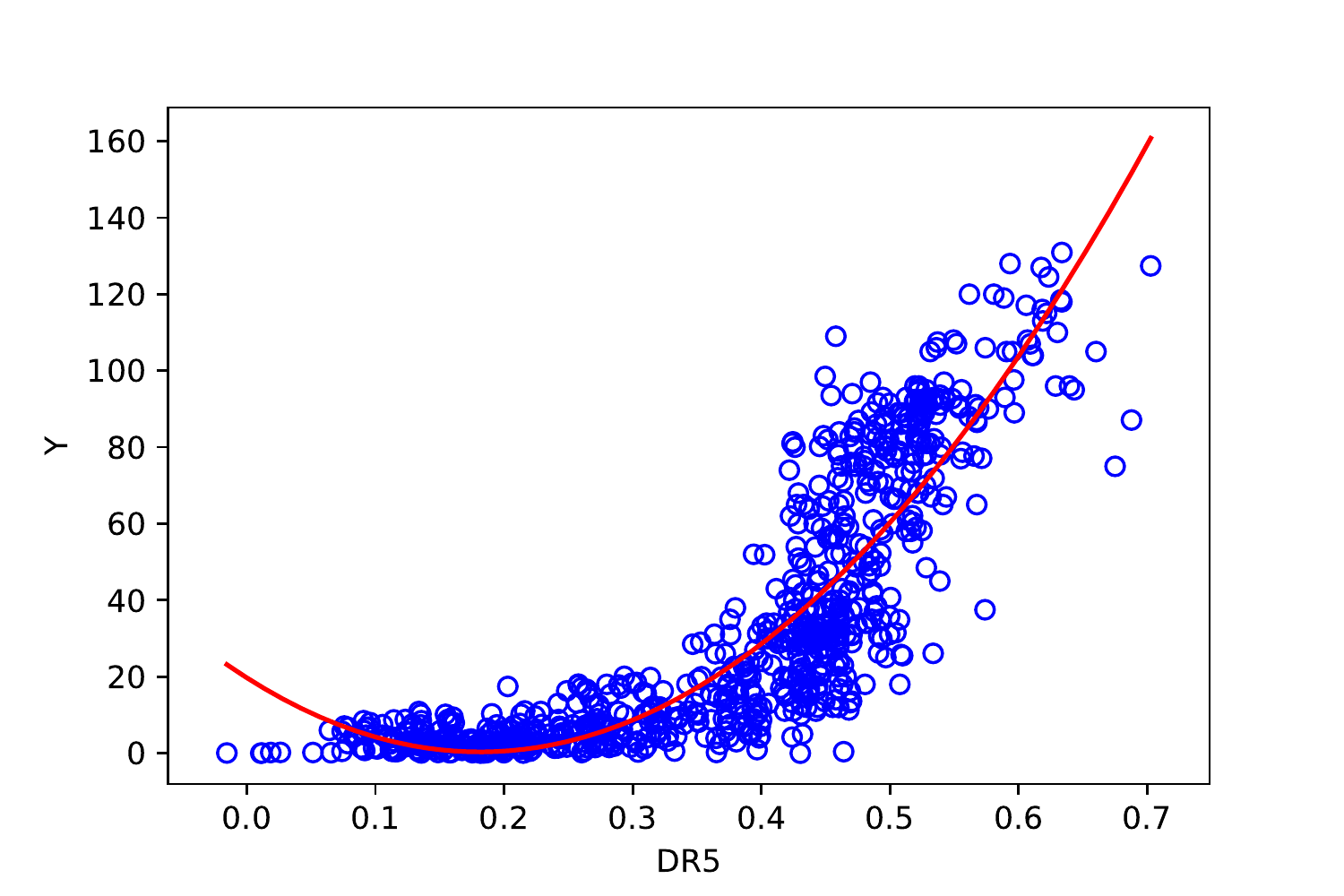}
	\end{minipage}
	\\
	\begin{minipage}[t]{0.9\linewidth}
		\centering
		\rotatebox{90}{\quad\tiny GSIR}
		\includegraphics[width=0.10\textheight]{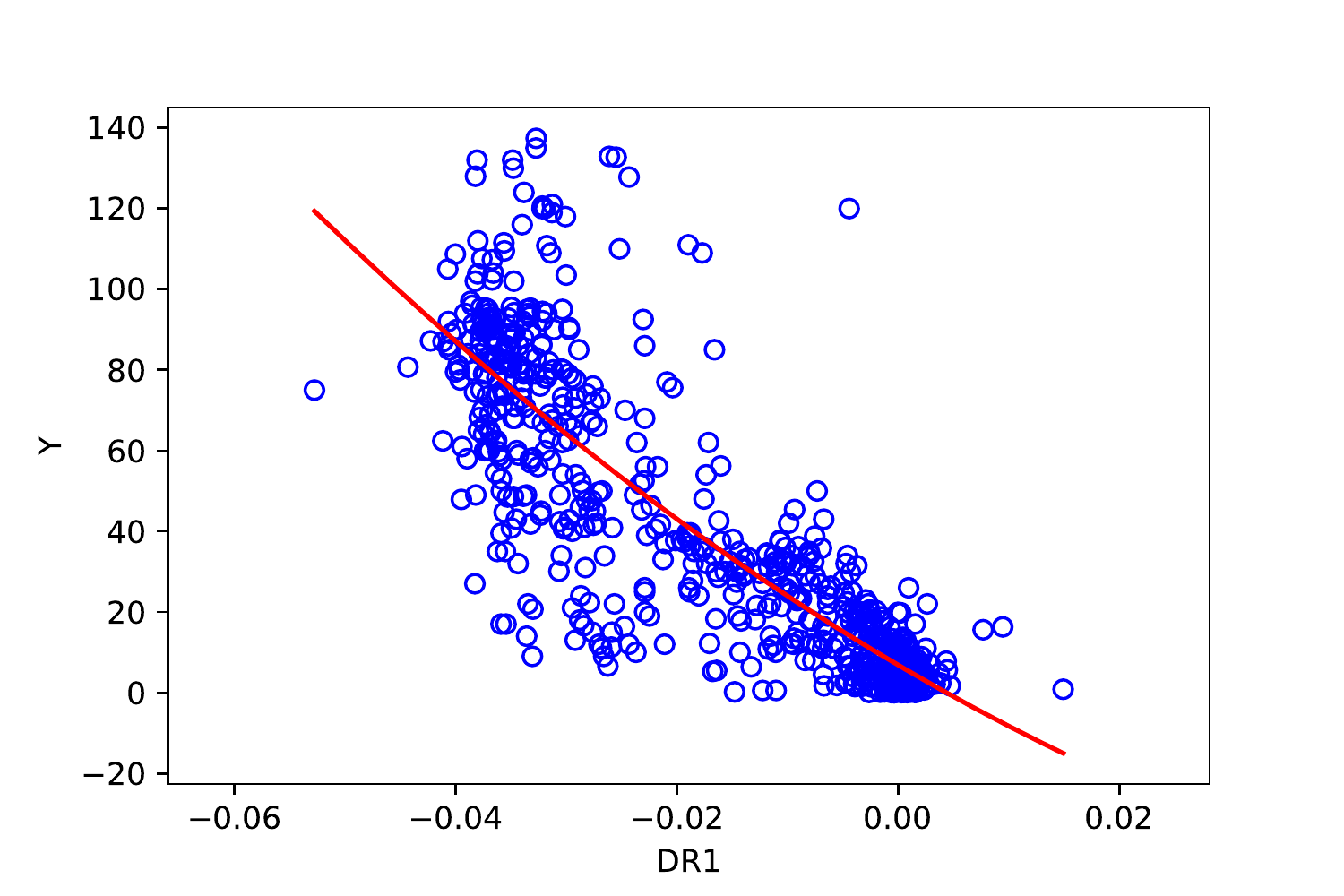} \includegraphics[width=0.10\textheight]{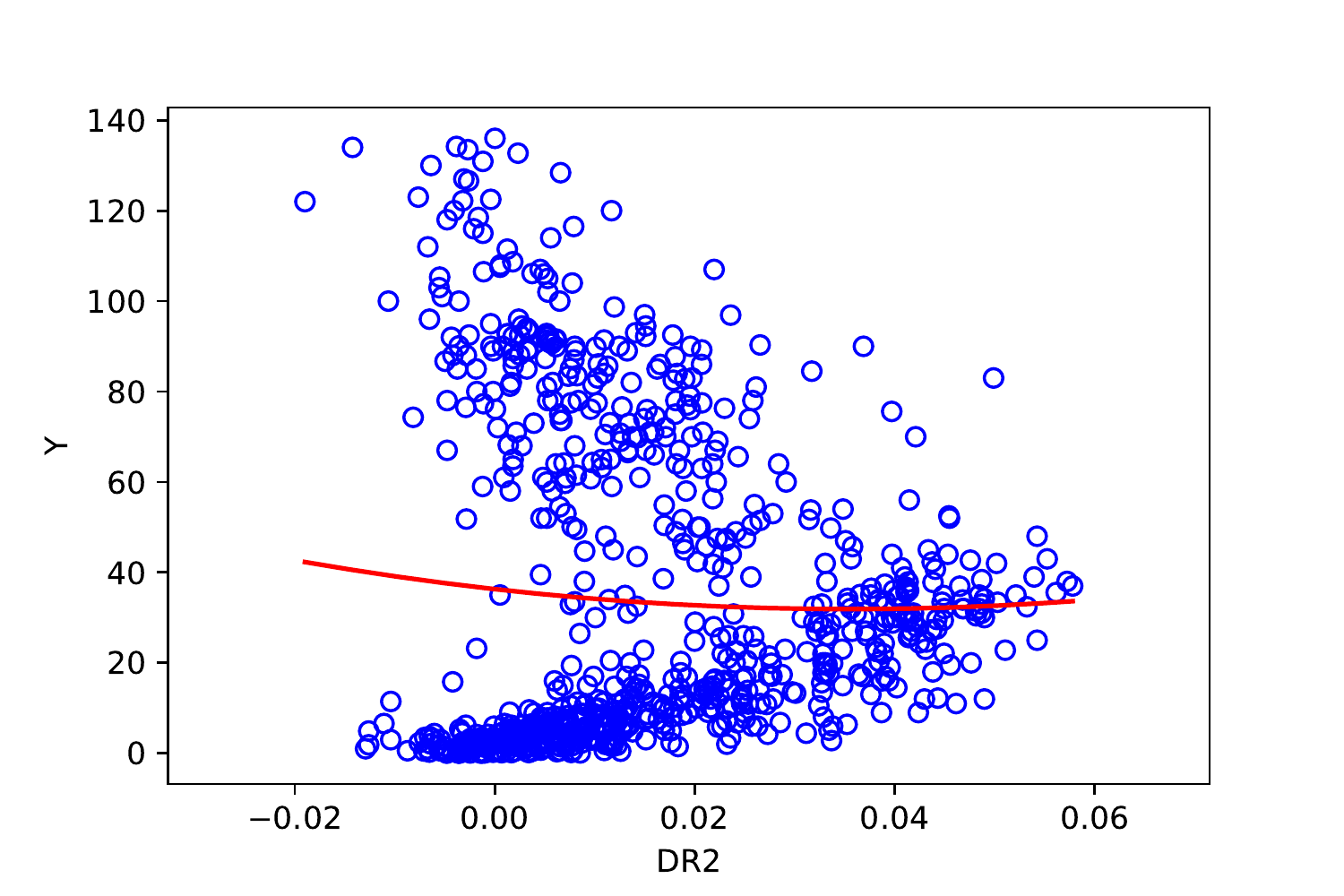}
		\includegraphics[width=0.10\textheight]{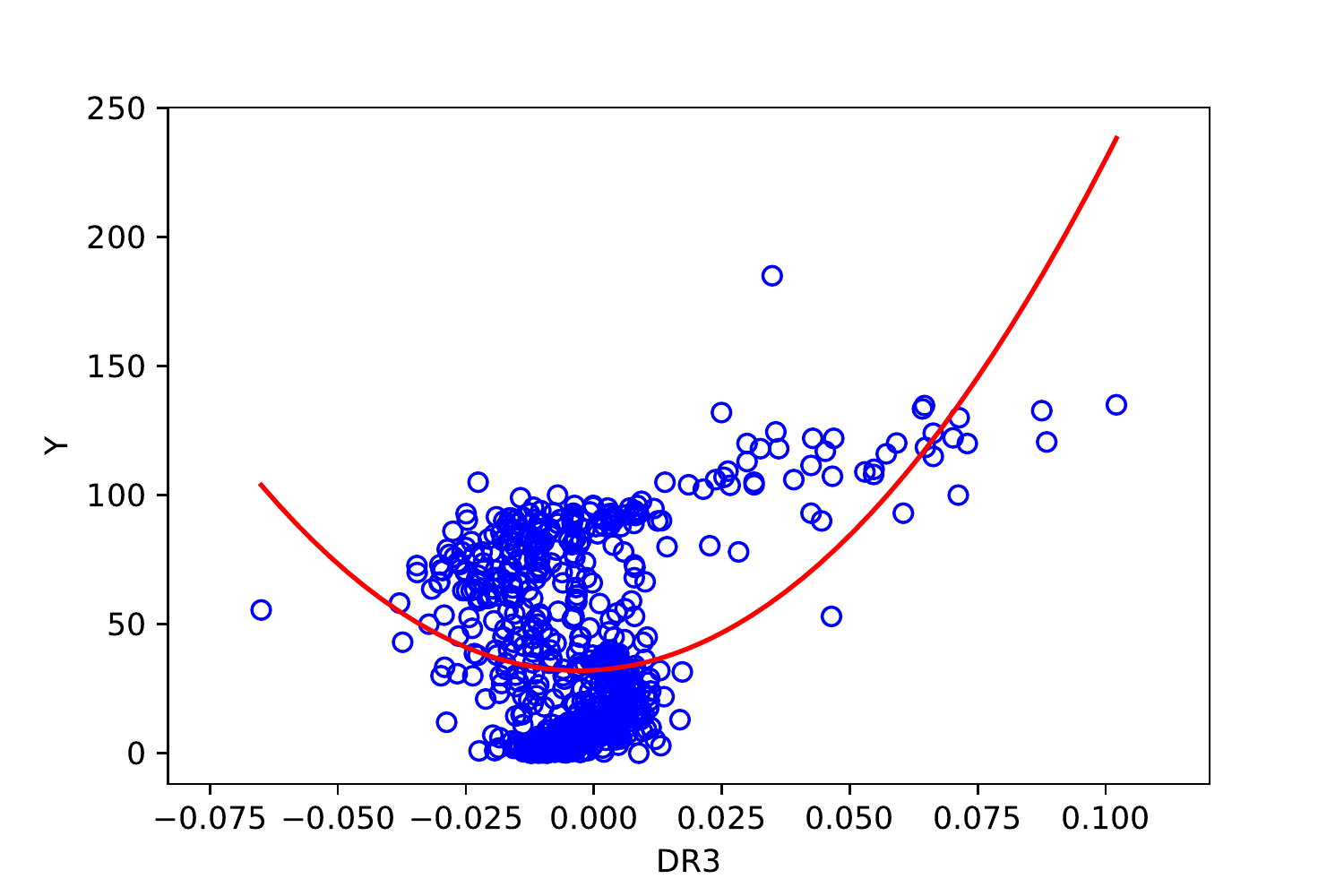} \includegraphics[width=0.10\textheight]{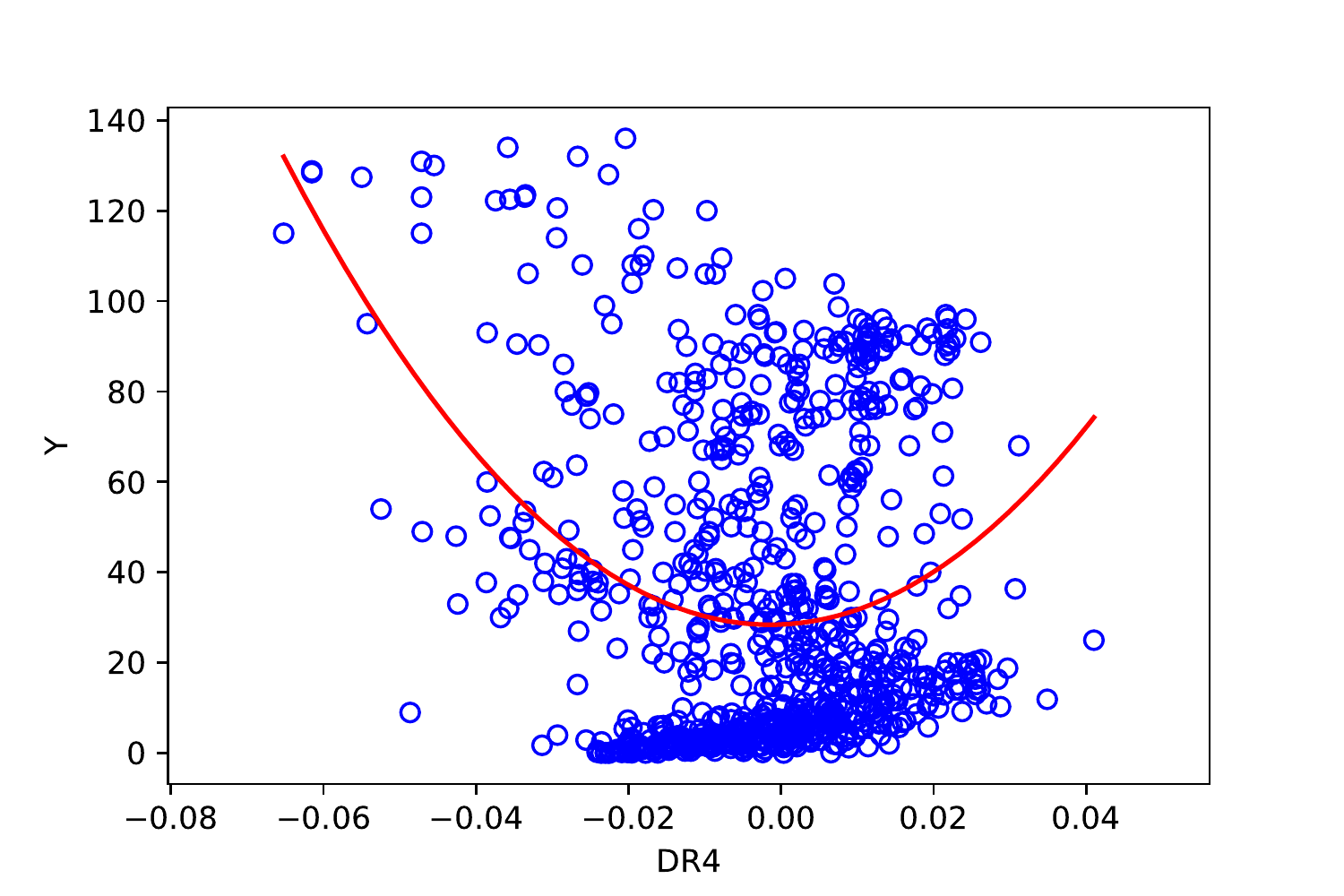}
		\includegraphics[width=0.10\textheight]{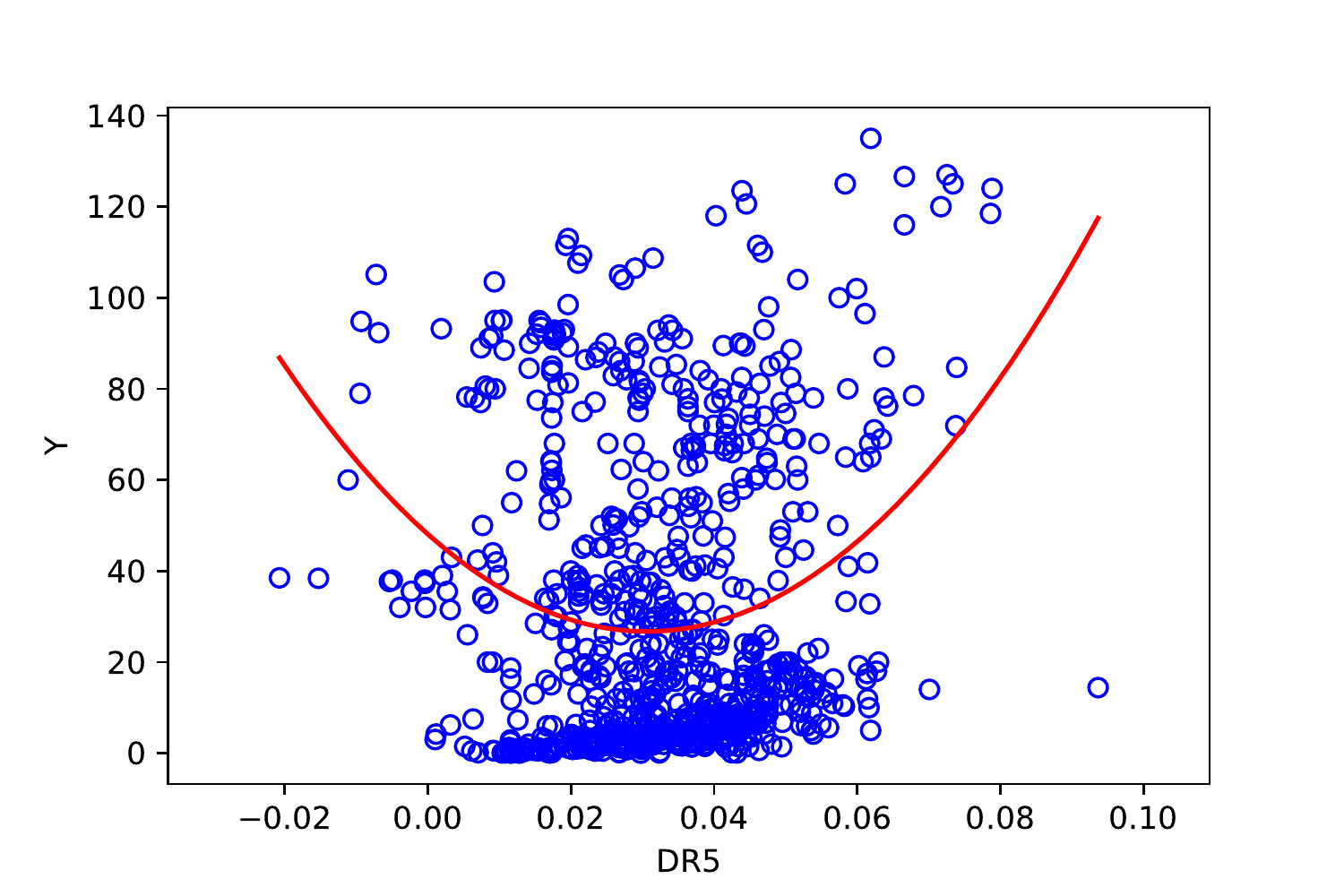}
	\end{minipage}
	\\
	\begin{minipage}[t]{0.9\linewidth}
		\centering
		\rotatebox{90}{\quad\tiny GSAVE}
		\includegraphics[width=0.10\textheight]{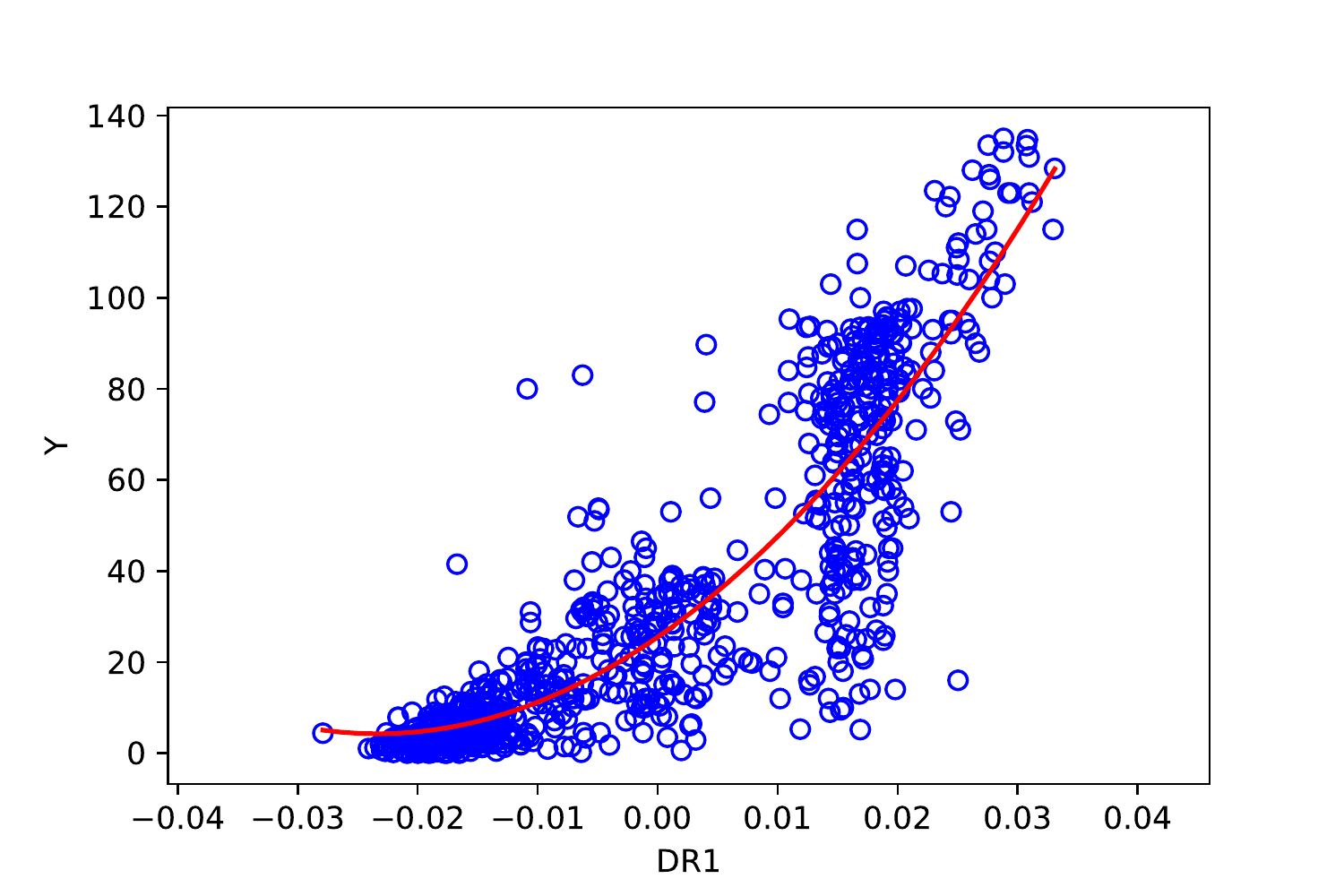} \includegraphics[width=0.10\textheight]{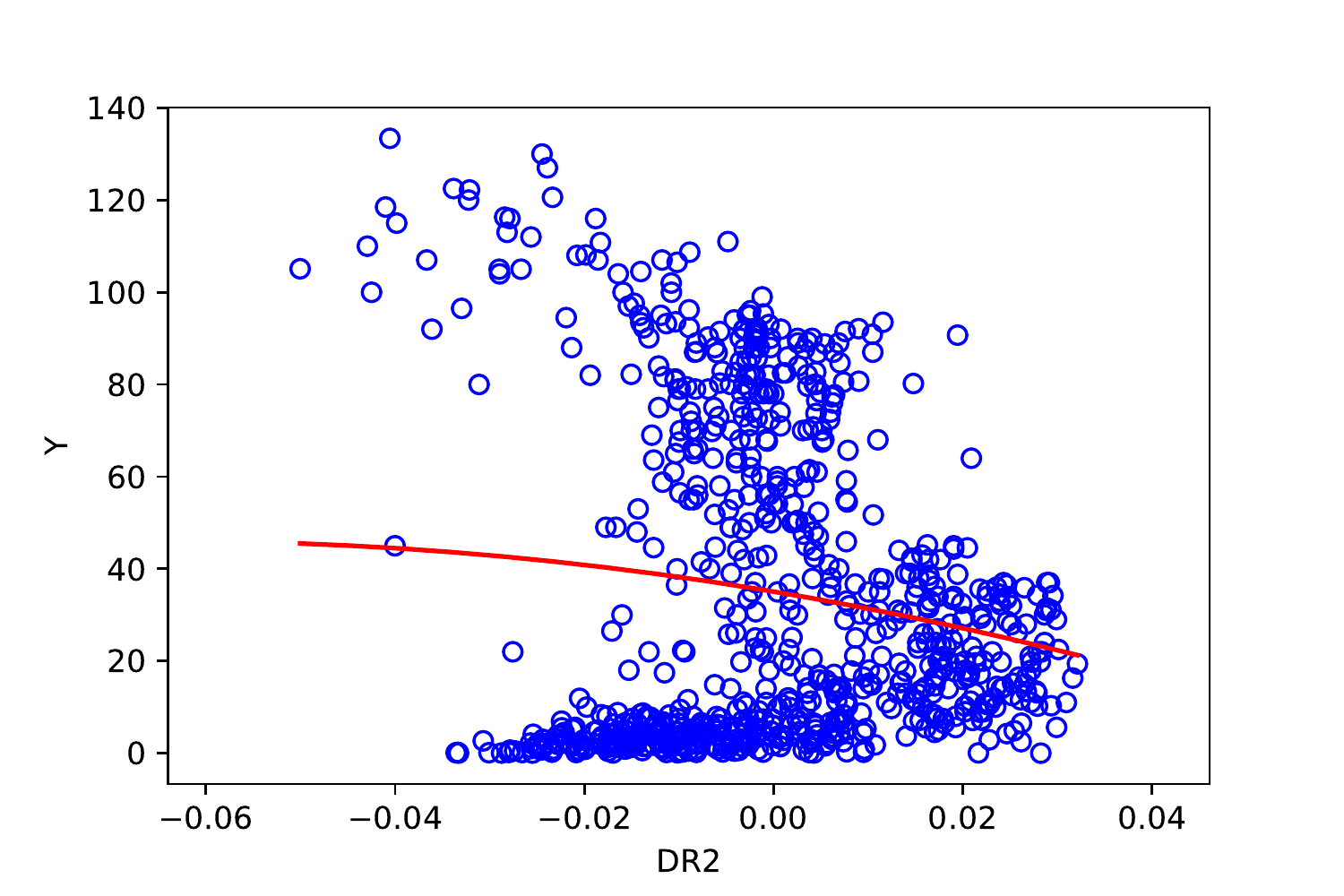}
		\includegraphics[width=0.10\textheight]{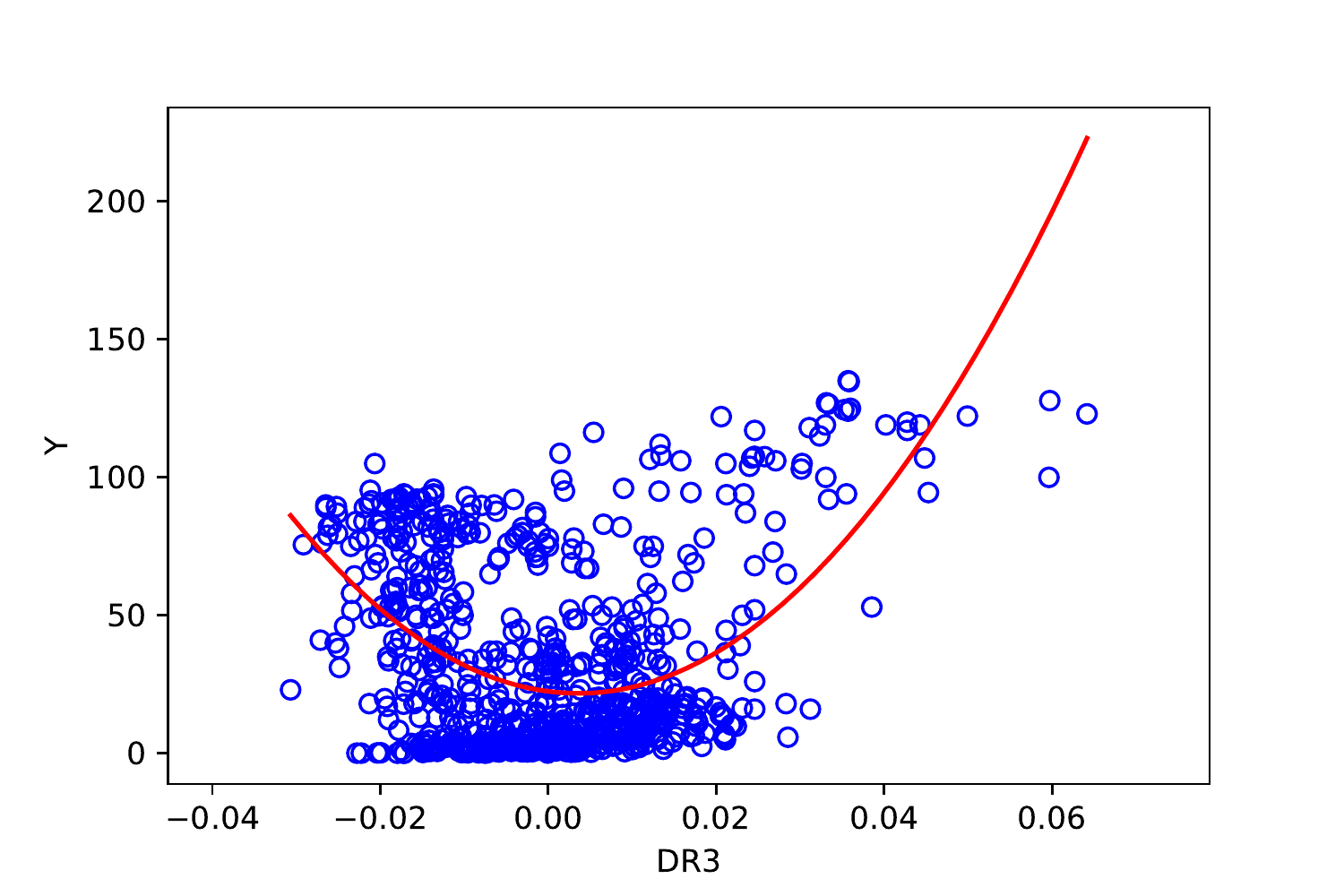} \includegraphics[width=0.10\textheight]{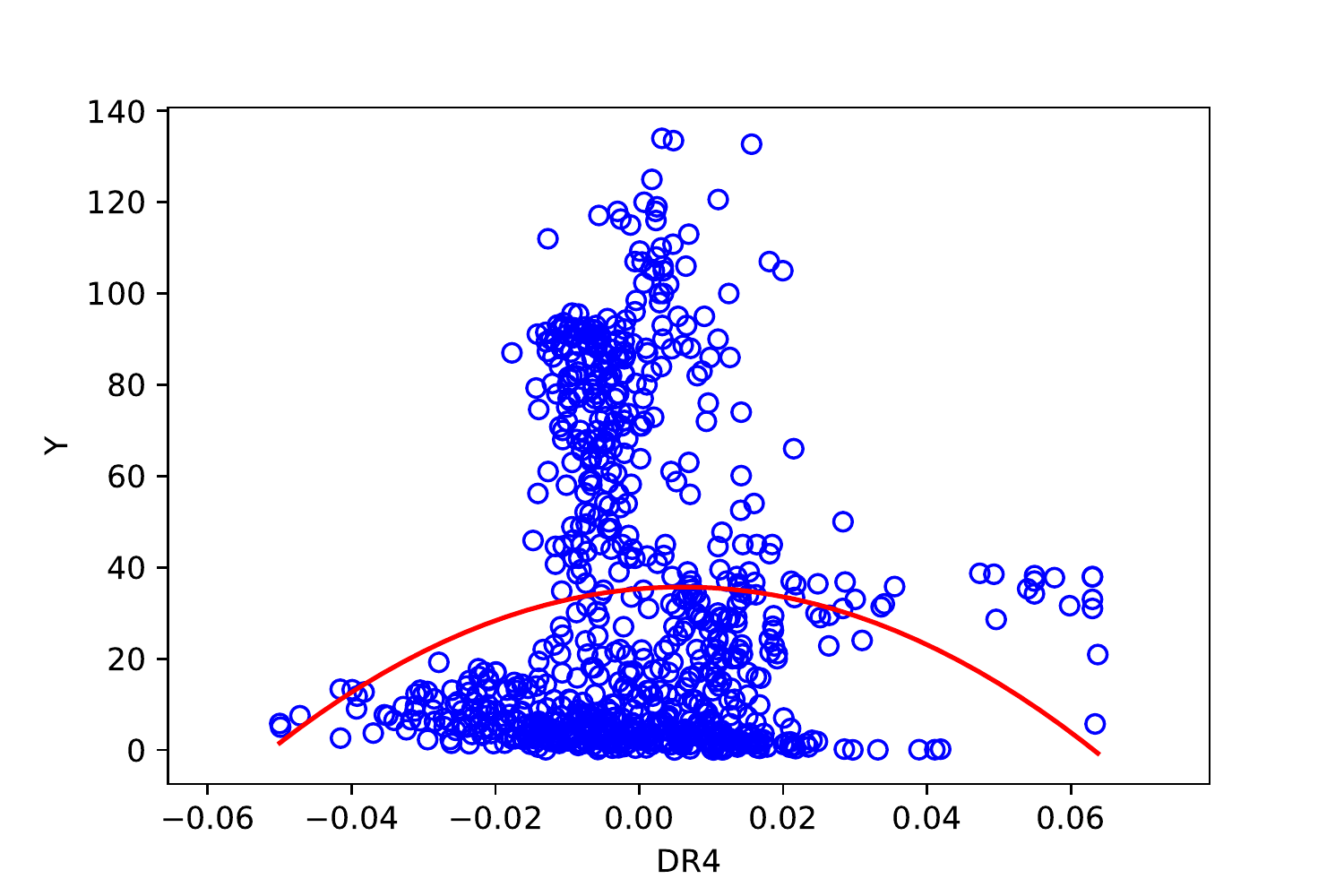}
		\includegraphics[width=0.10\textheight]{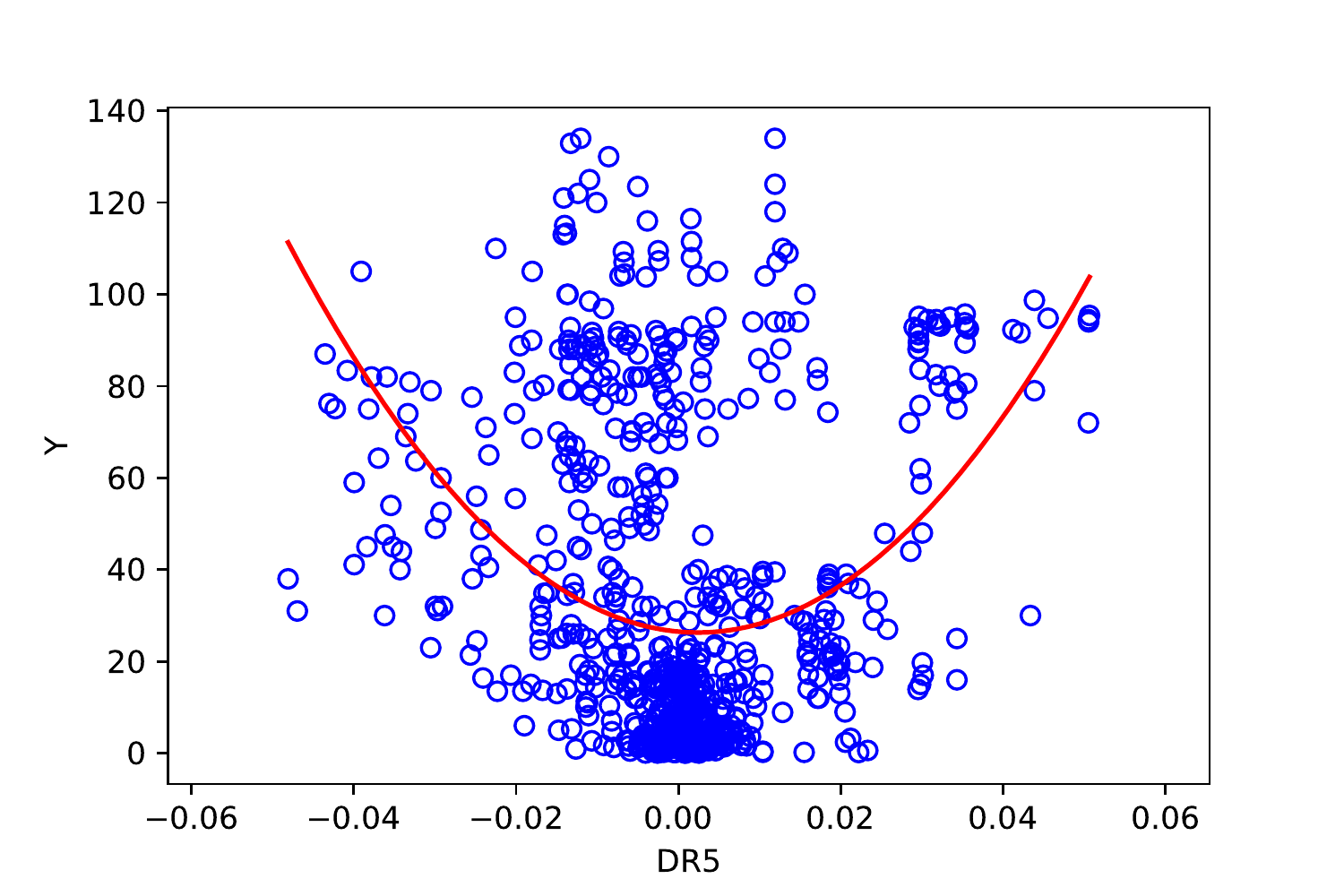}
	\end{minipage}
	\caption{\label{SuperConduct_drawing}The plots of the critical temperature against each component of the learned representation by MSRL, GSIR and GSAVE for the superconductivity dataset.  The red line is a fitting quadratic model.}
\end{figure}

\subsection{The Pole-Telecommunication dataset}\label{rd2}
We next analyze the Pole-Telecommunication dataset  \citep{Weiss1995}, which is available at \textit{https://www.dcc.fc.up.pt/~ltorgo/Regression/DataSets.html}. In this dataset, there are 15,000 instances with $48$ predictors and a continuous distance-based response. Among the $48$ predictors, $22$ of them are non-informative,  meaning that they have the same value for all subjects. So we delete all these non-informative features.  Other implementation details are similar to those in section \ref{rd1}, see Section \ref{Descrip_Implement_Real} in the supplementary material, except that we take $d_0=2,5,10,20$ in this example.  The distance correlation, average prediction errors, and their standard errors for all methods are reported in Table \ref{Pole}.  One can see that MSRL produces better results.  The high empirical DC is supporting evidence of the high quality of the learned representation.  Different from the previous example, in this example, GSIR and GSAVE outperform the linear SDR methods and GSIR performs the best  among the four SDR competitors.

\begin{table}
	\caption{\label{Pole} Distance correlation (DC), average prediction errors (APE), and their standard errors (based on five-fold validation and best hyperparameters selected by validation method) on Pole-Telecommunication dataset}
	\centering
	\resizebox{\textwidth}{!}{
	\begin{tabular}{c|c @{\hspace{0.1cm}}c|c @{\hspace{0.1cm}} c|c @{\hspace{0.1cm}} c|c @{\hspace{0.1cm}} l}
		\hline
		\hline
		d        & \multicolumn{2}{c}{2}  & \multicolumn{2}{c}{5}  & \multicolumn{2}{c}{10} & \multicolumn{2}{c}{20}  \\
		\hline
		criterion & DC       & APE         & DC       & APE         & DC       & APE         & DC       & APE          \\
		\hline
		MSRL    & \textbf{.94}(.02) & \textbf{9.21}(1.87)  & \textbf{.97}(.00) & \textbf{5.43}(1.05)  & \textbf{.96}(.01) & \textbf{4.69}(0.84)  & \textbf{.96}(.01) & \textbf{4.18}(0.79)   \\
		SIR                           & .33(.02) & 30.52(0.26) & .21(.02) & 30.54(0.26) & .15(.01) & 30.53(0.27) & .10(.02) & 30.52(0.26)  \\
		SAVE                          & .00(.00) & 41.75(0.42) & .00(.00) & 41.81(0.41) & .00(.00) & 41.87(0.45) & .10(.01) & 53.77(11.99) \\
		GSIR                          & .61(.03) & 15.24(0.23) & .37(.06) & 15.14(0.20) & .26(.06) & 15.15(0.20) & .26(.06) & 15.15(0.21)  \\
		GSAVE                         & .58(.02) & 19.93(0.30) & .40(.05) & 19.91(0.31) & .25(.02) & 19.91(0.31) & .17(.01) & 19.90(0.32) \\
		\hline
	\end{tabular}}
\end{table}

To visualize the predictive power of the learned representations, we classify the data into two classes: one has the response less than 50 and the rest constitutes another class.  In Figure \ref{Poldata_drawing}, we plot the transformed data based on each pair of the components of the learned representation with $d_0=5$.  From Figure \ref{Poldata_drawing}, we see that each pair of features from the MSRL estimate are informative for this classification task, and among other SDR methods, only the first dimension reduction directions of GSIR and GSAVE are predictive for the classification. The plots for SIR and SAVE are included in the Section \ref{extra-plots} in the supplementary material.
This indicates that in this example the representation learned by MSRL has more predictive power.
\begin{figure}[htbp]
\centering
\begin{minipage}[t]{0.9\linewidth}
	\parbox[c][0.5cm]{0.18cm}{\rotatebox{90}{\tiny MSRL}}
	\begin{minipage}[t]{\linewidth}
		\includegraphics[width=0.10\textheight]{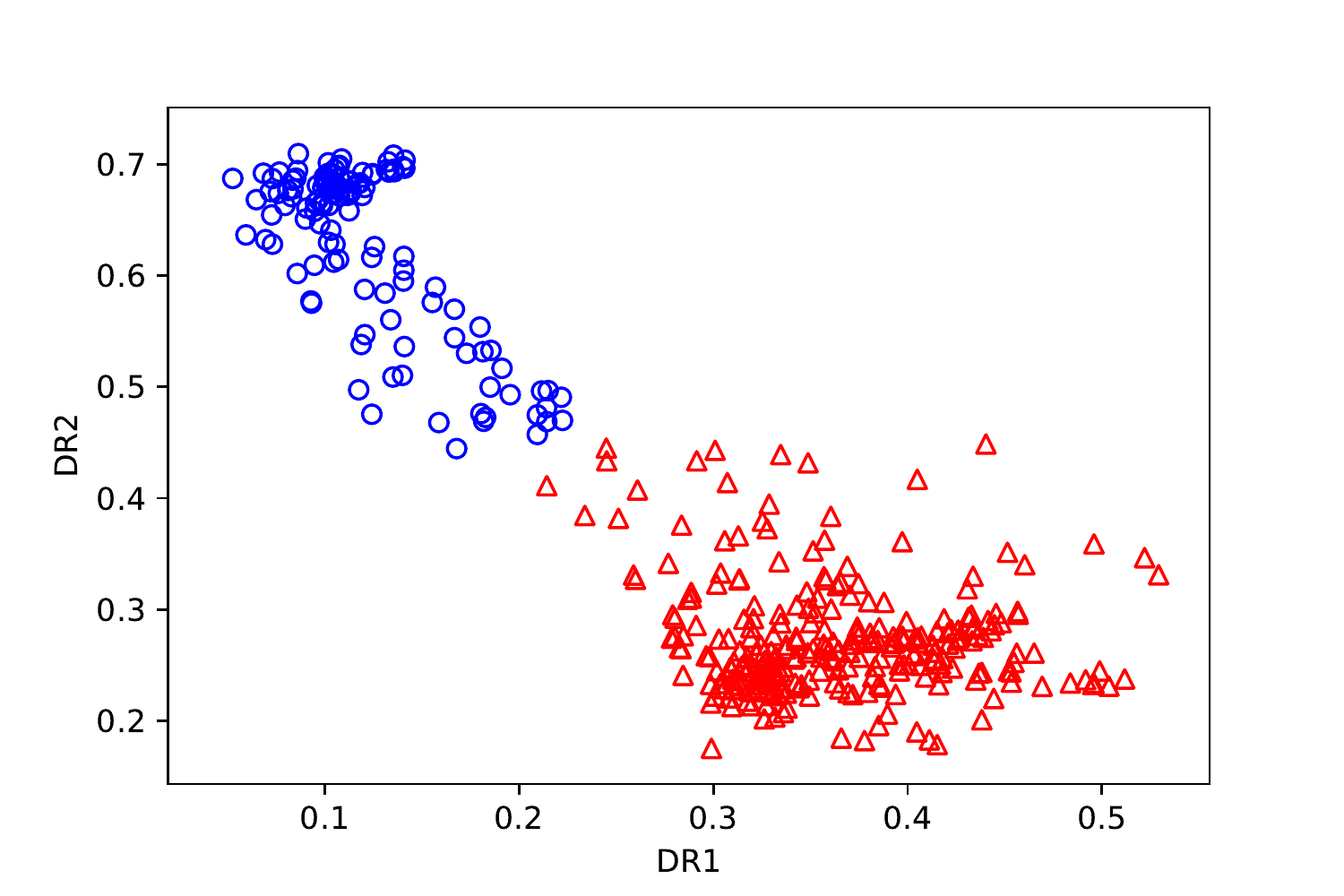}
		\includegraphics[width=0.10\textheight]{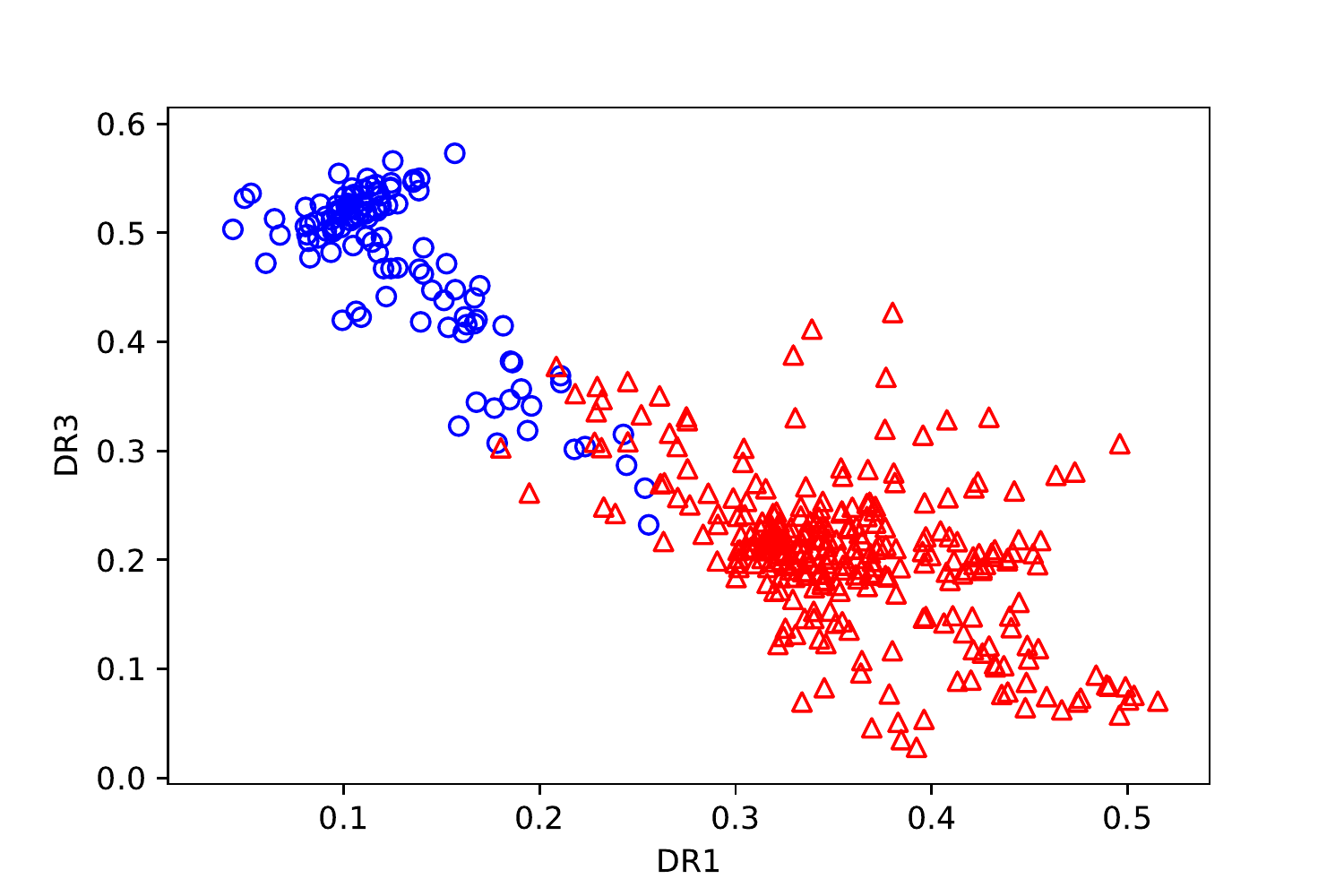}
		\includegraphics[width=0.10\textheight]{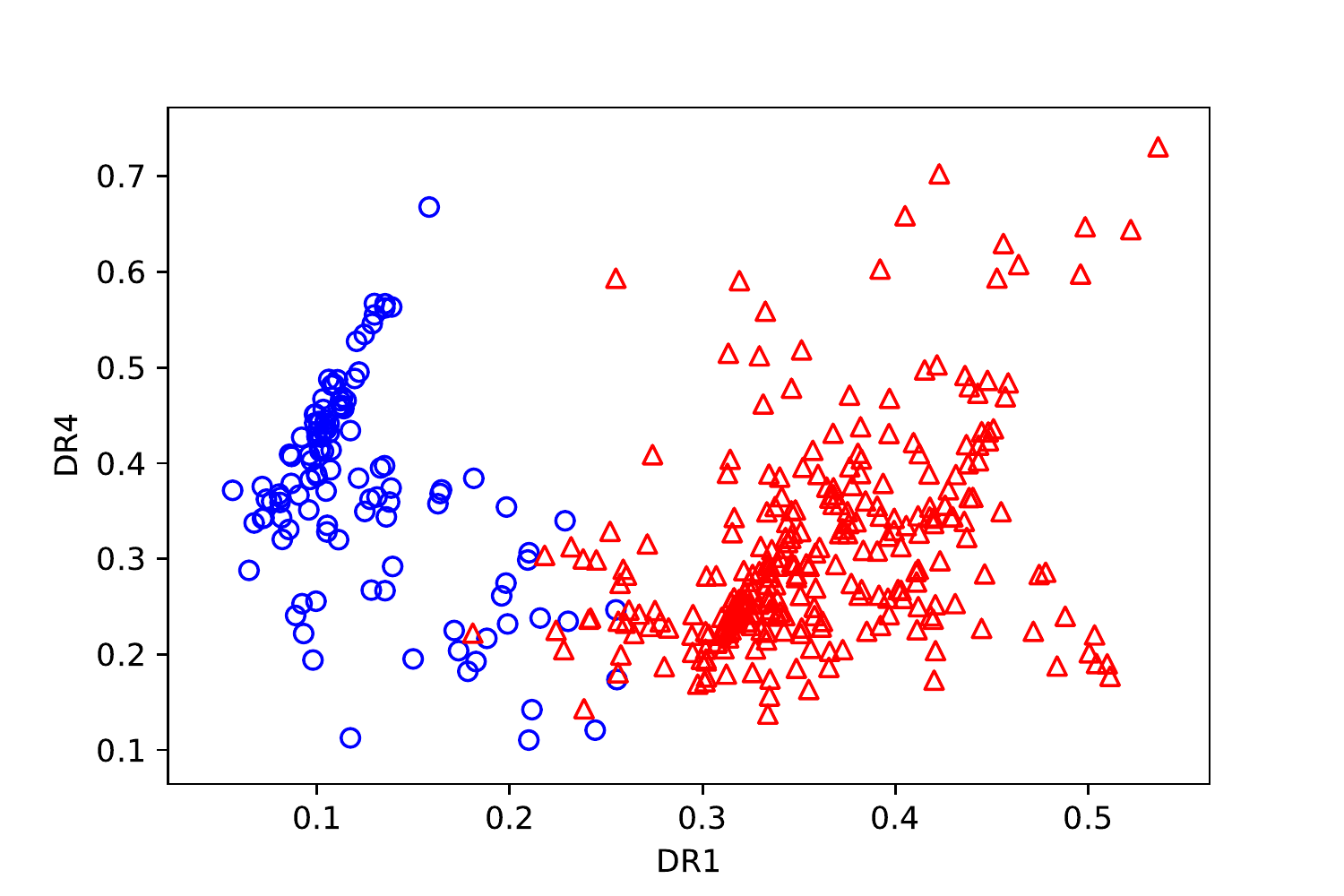}
		\includegraphics[width=0.10\textheight]{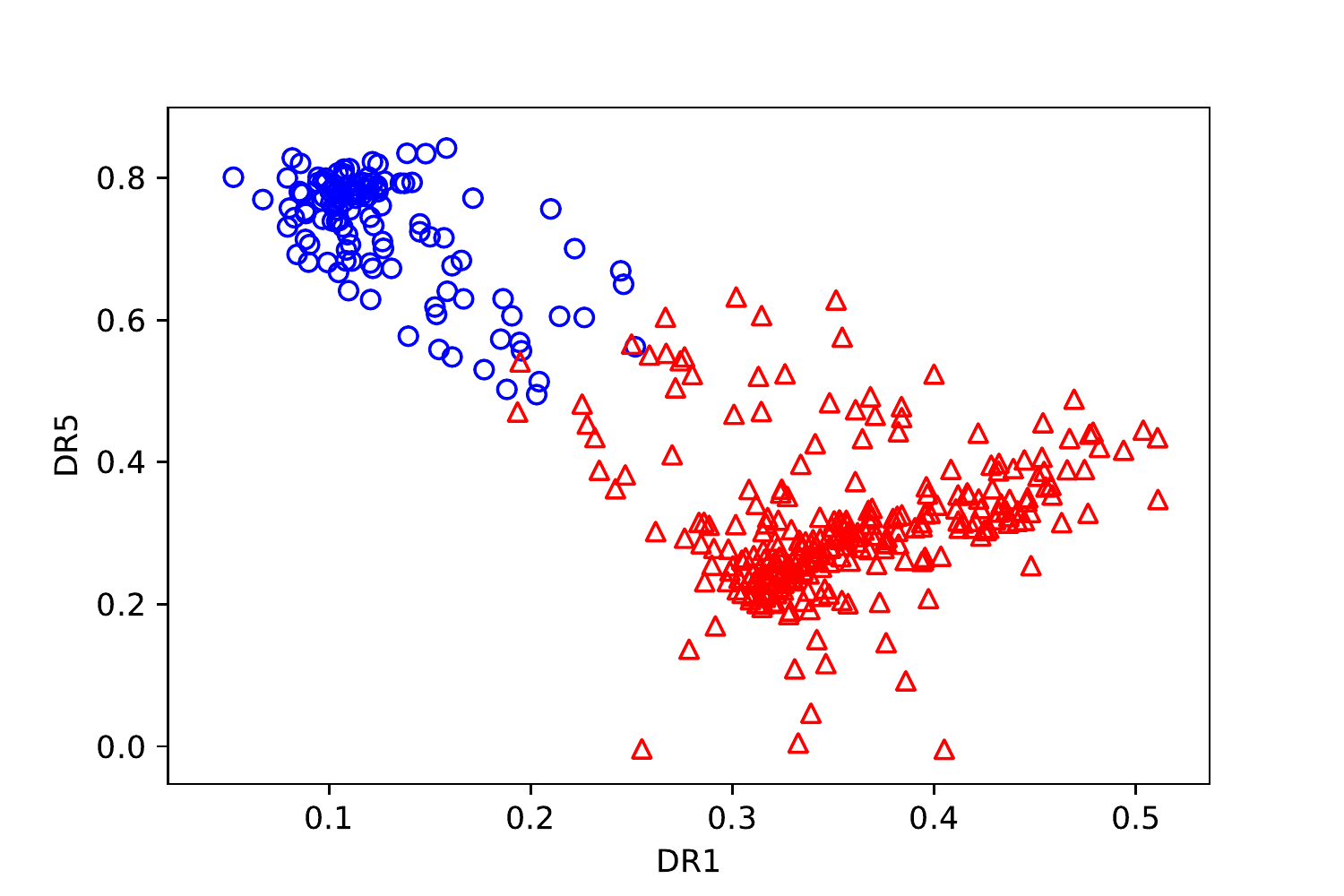}
		\includegraphics[width=0.10\textheight]{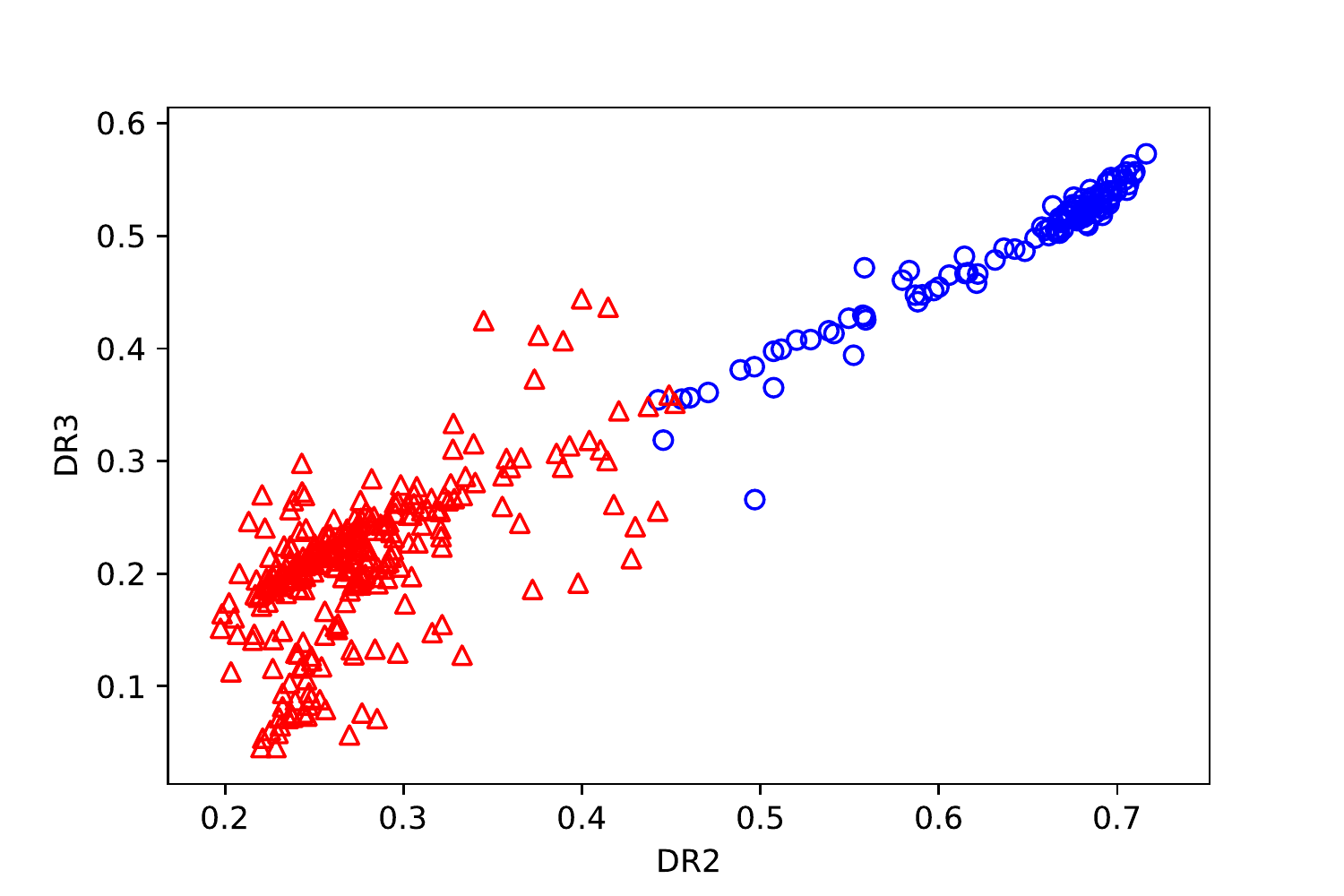}\\
		\includegraphics[width=0.10\textheight]{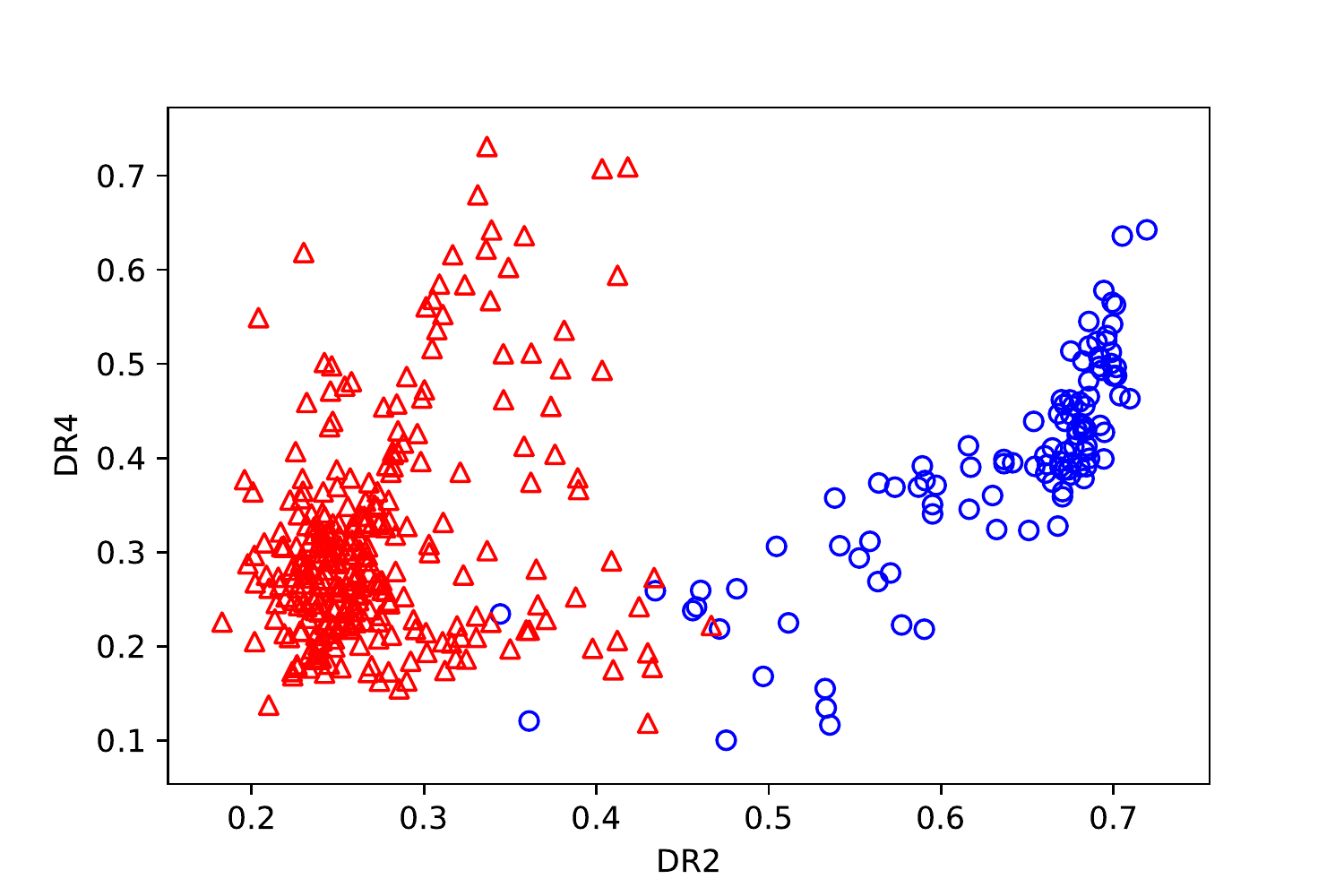}
		\includegraphics[width=0.10\textheight]{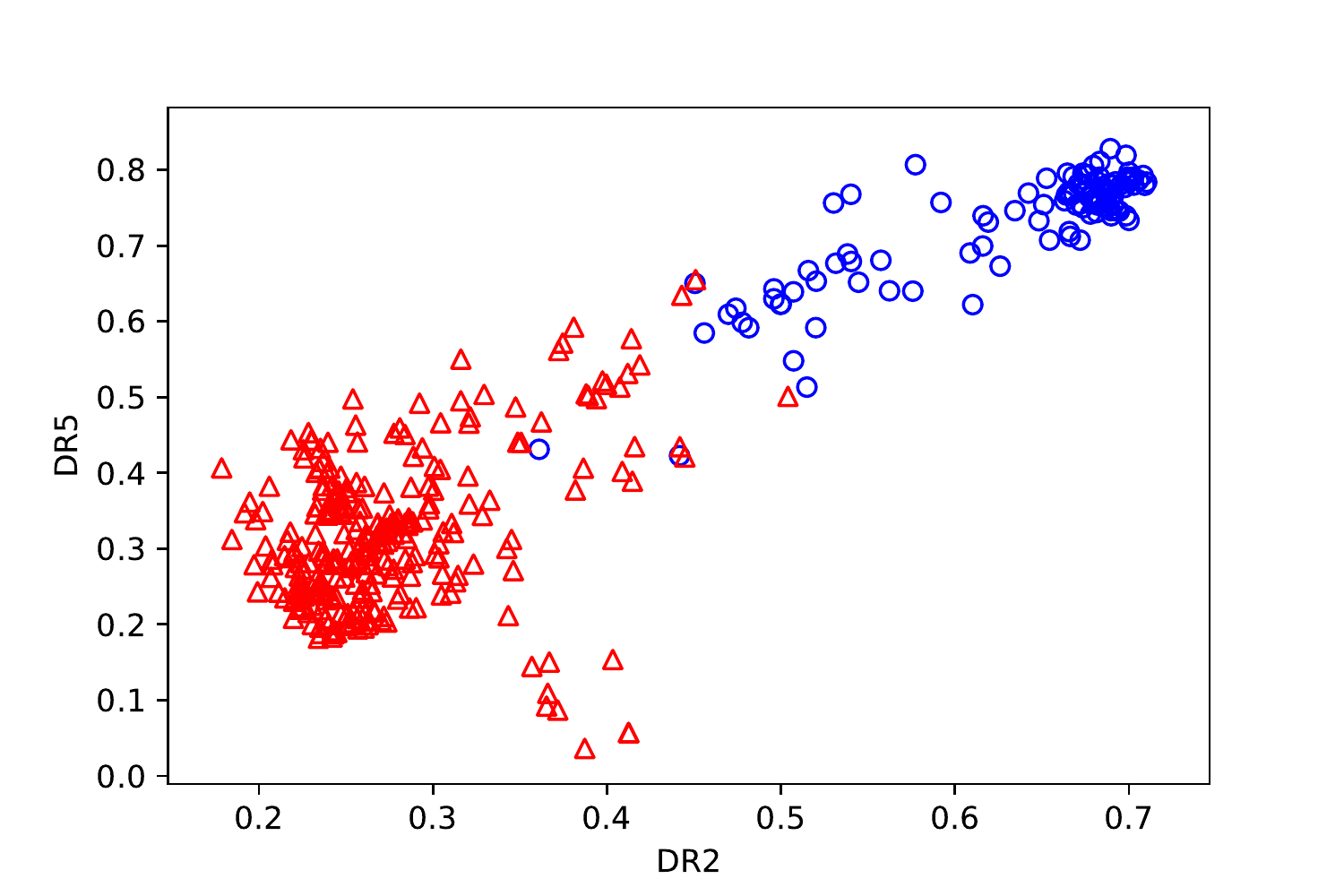}
		\includegraphics[width=0.10\textheight]{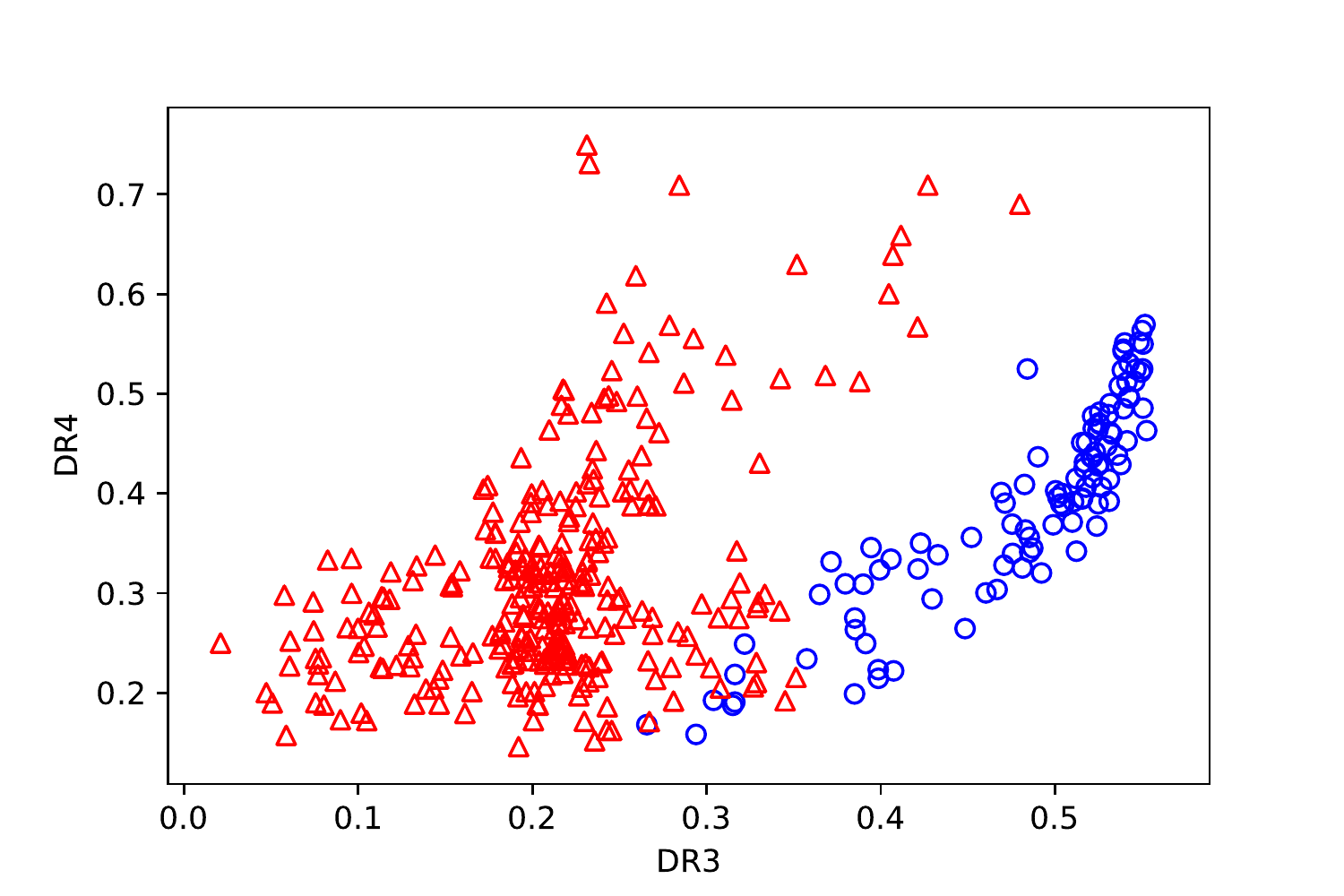}
		\includegraphics[width=0.10\textheight]{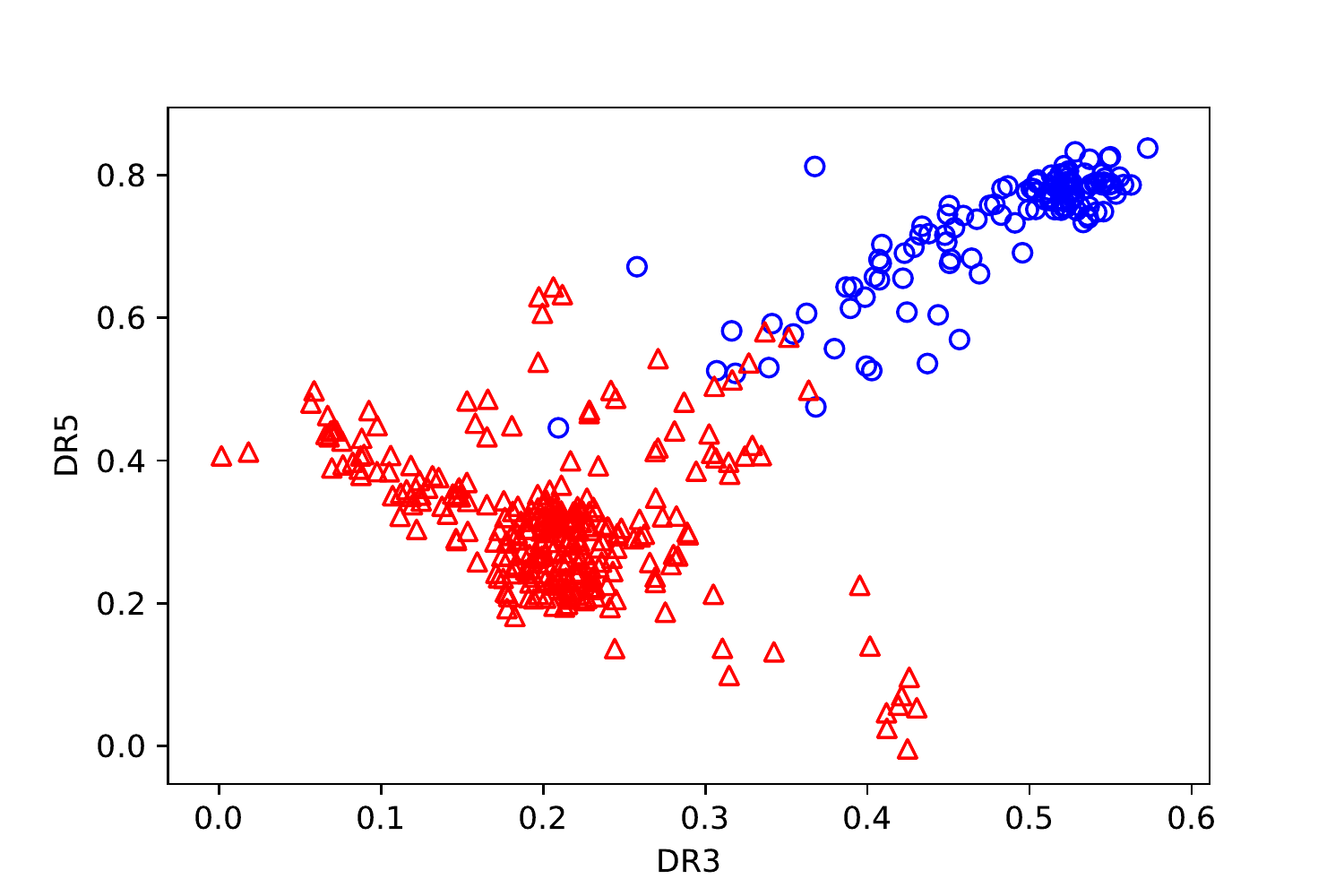}
		\includegraphics[width=0.10\textheight]{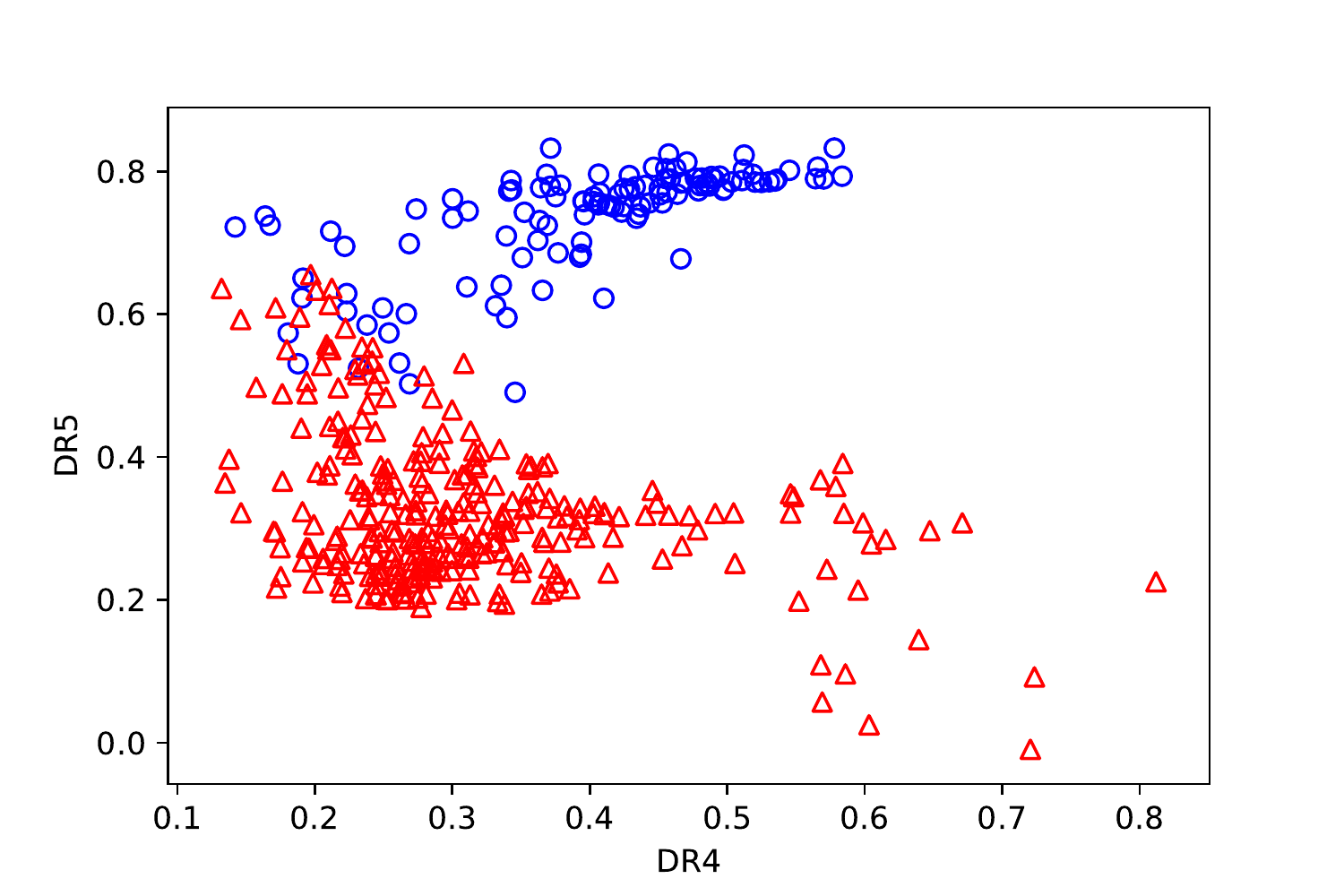}
	\end{minipage}
\end{minipage}	
	\\
	\begin{minipage}[t]{0.9\linewidth}
		\parbox[c][0.5cm]{0.18cm}{\rotatebox{90}{\tiny GSIR}}
		\begin{minipage}[t]{\linewidth}
			\includegraphics[width=0.10\textheight]{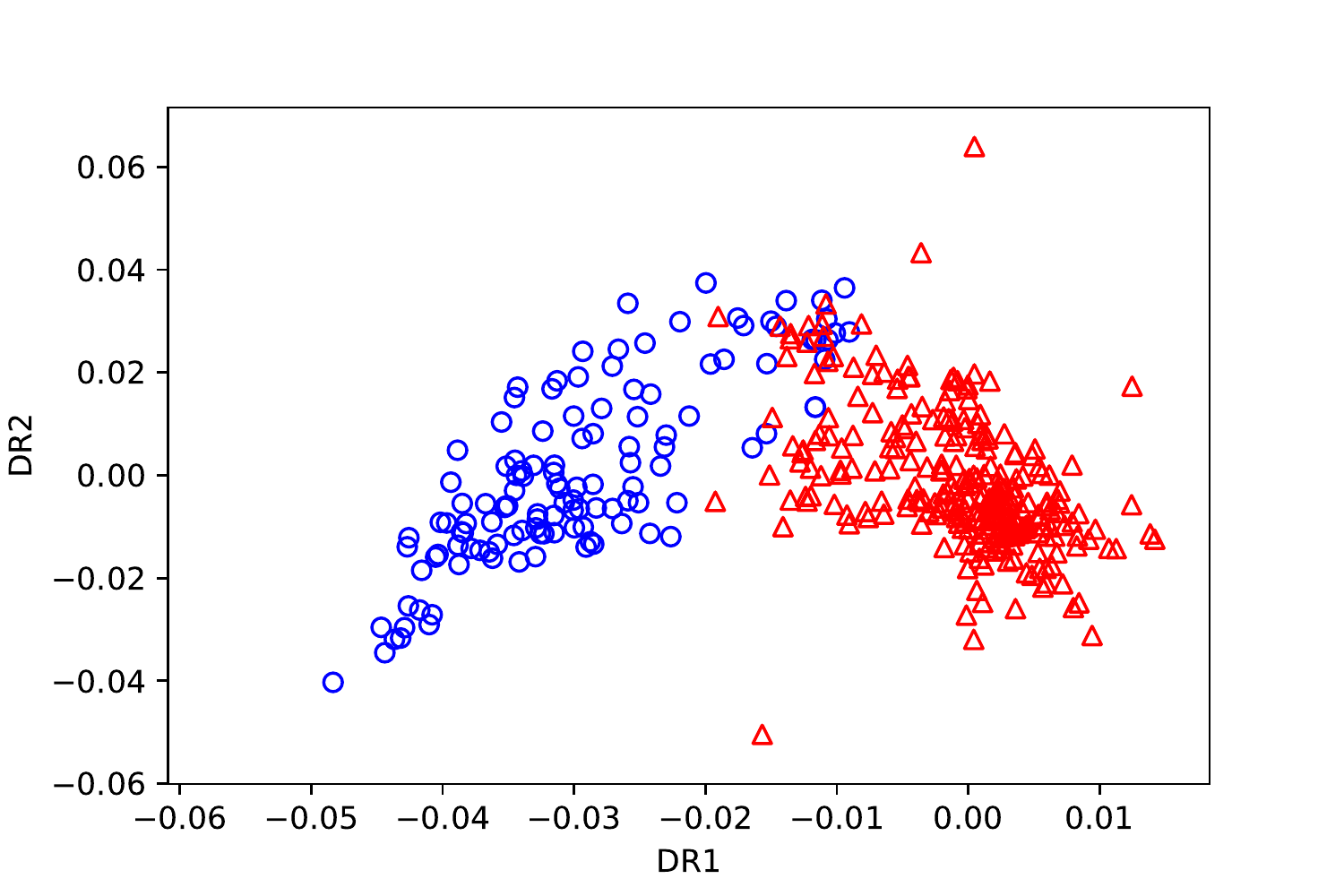}
			\includegraphics[width=0.10\textheight]{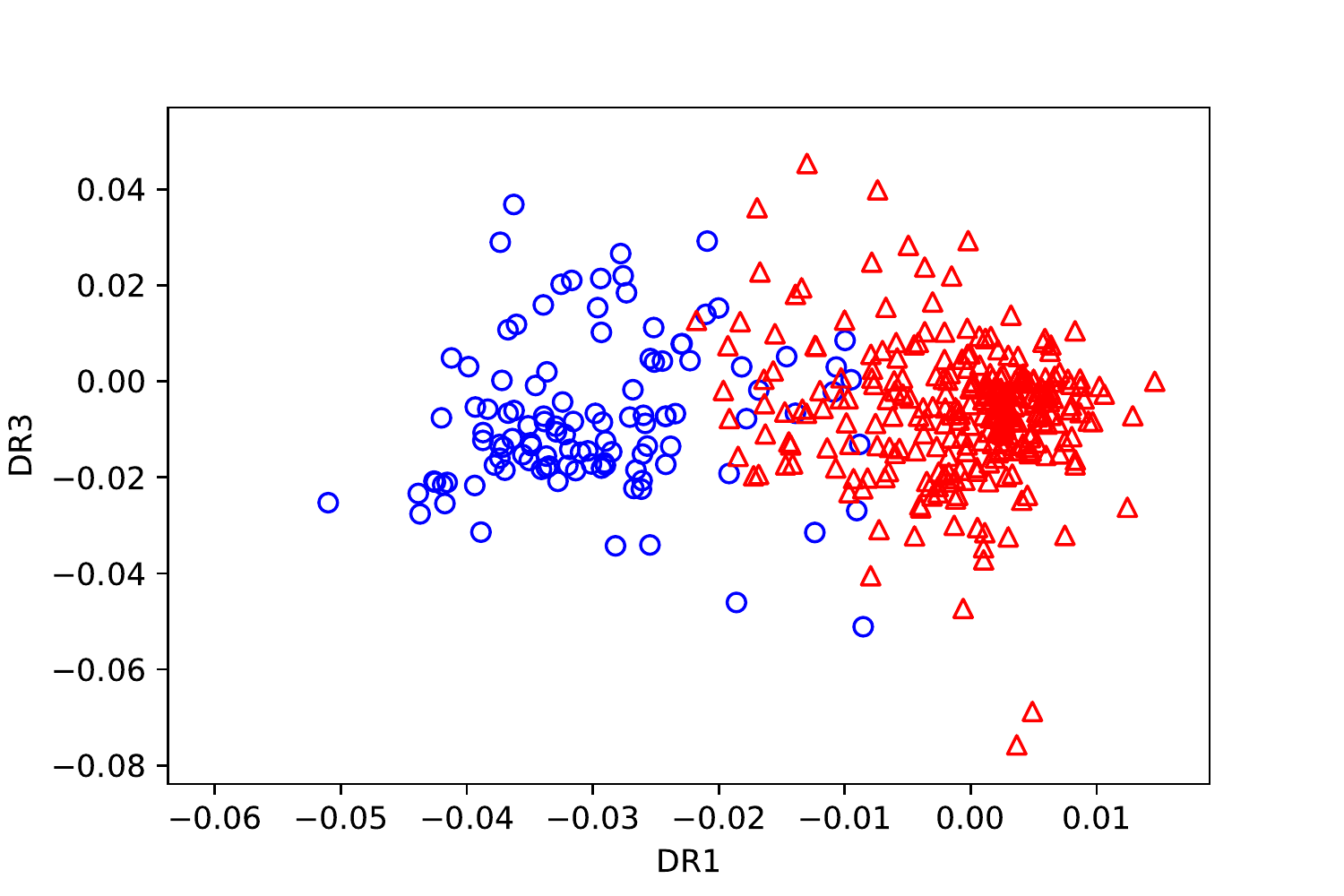}
			\includegraphics[width=0.10\textheight]{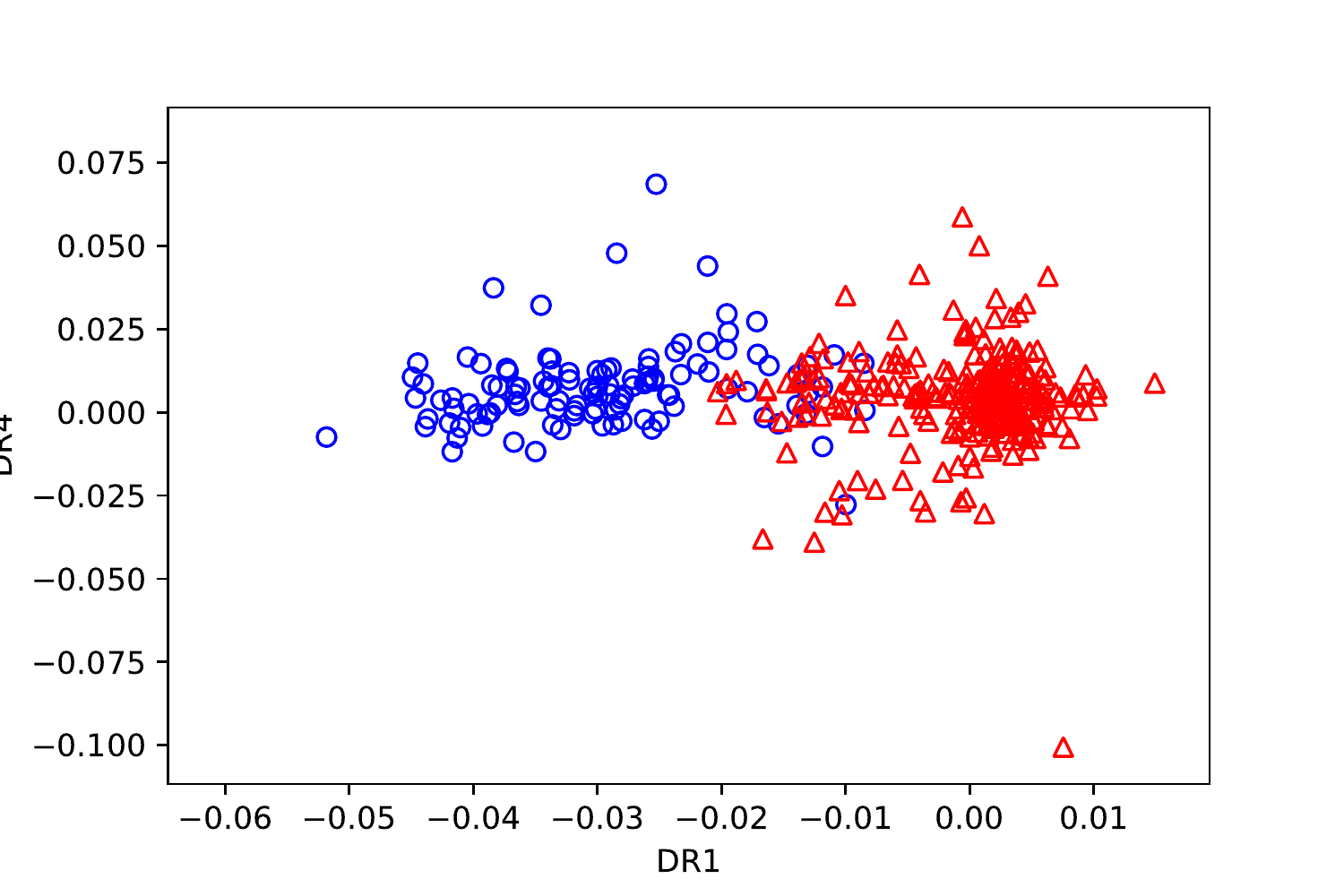}
			\includegraphics[width=0.10\textheight]{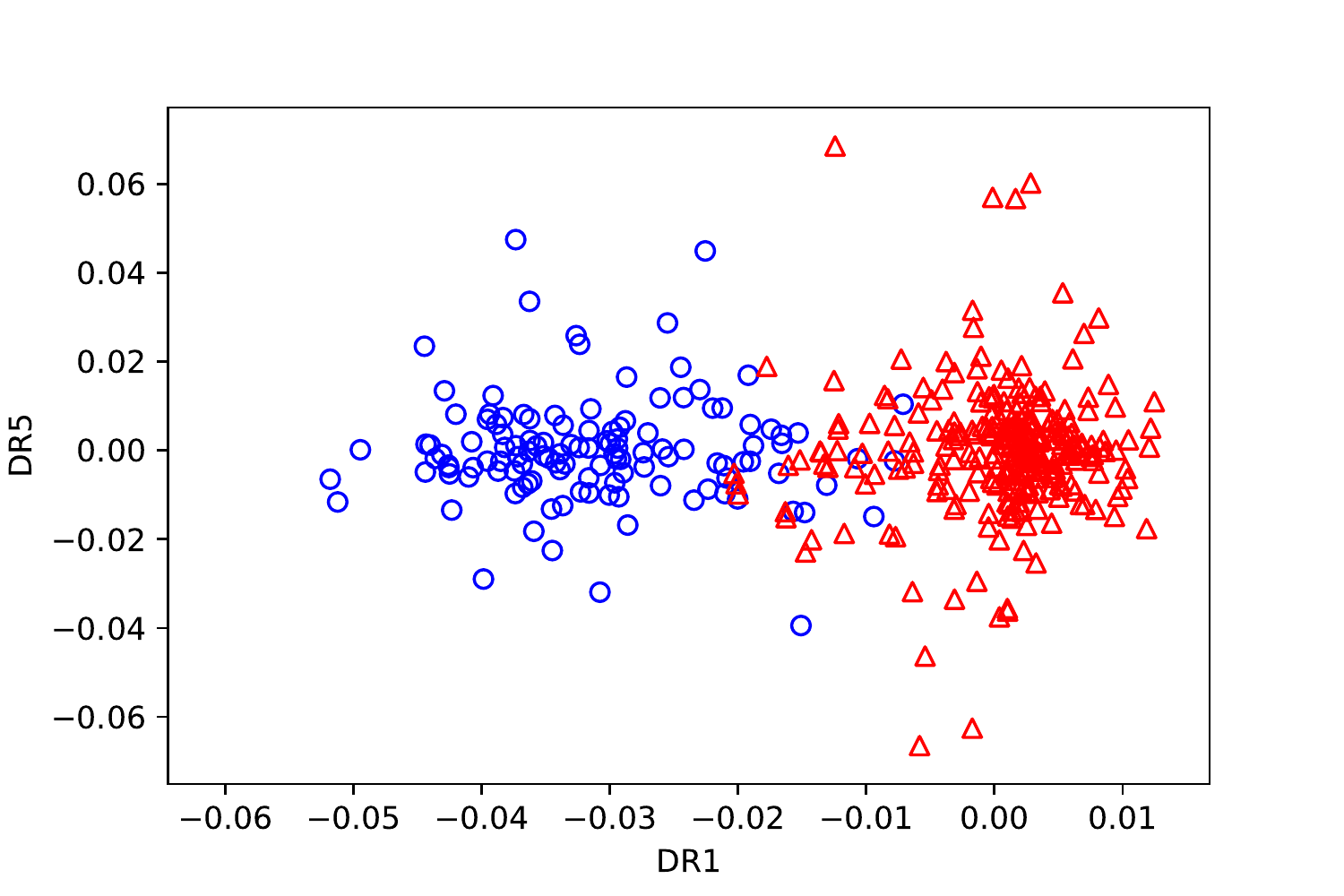}
			\includegraphics[width=0.10\textheight]{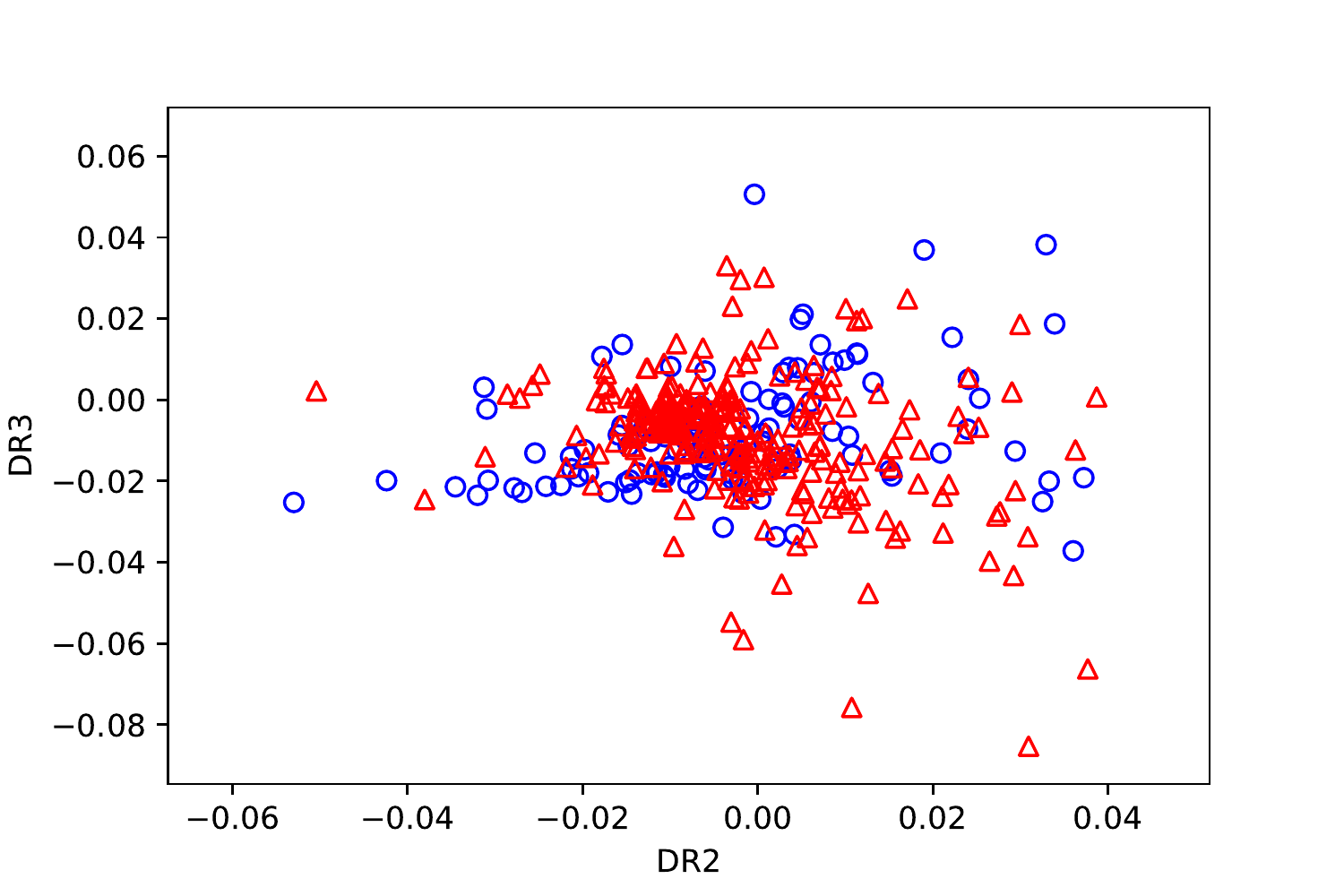}\\
			\includegraphics[width=0.10\textheight]{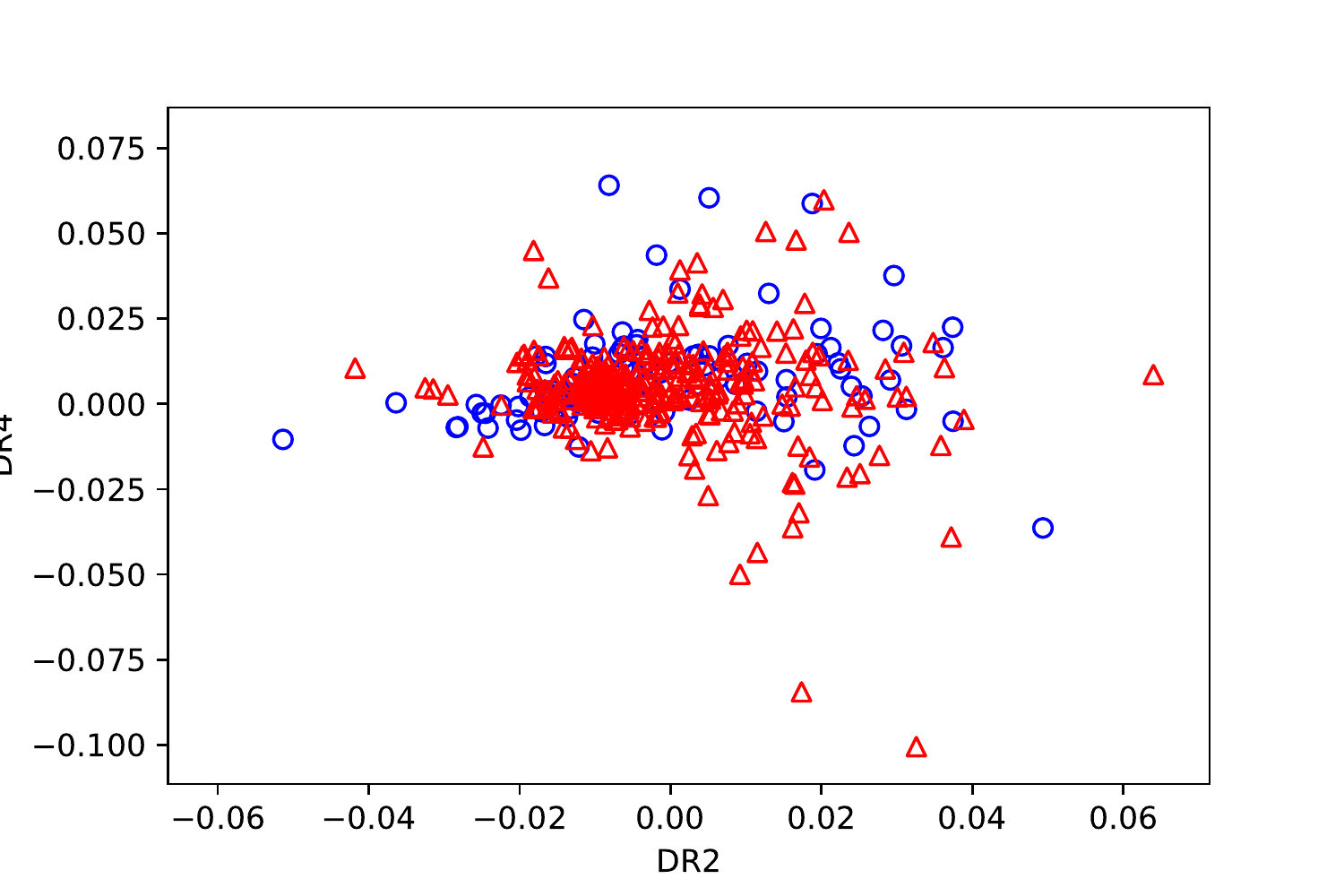}
			\includegraphics[width=0.10\textheight]{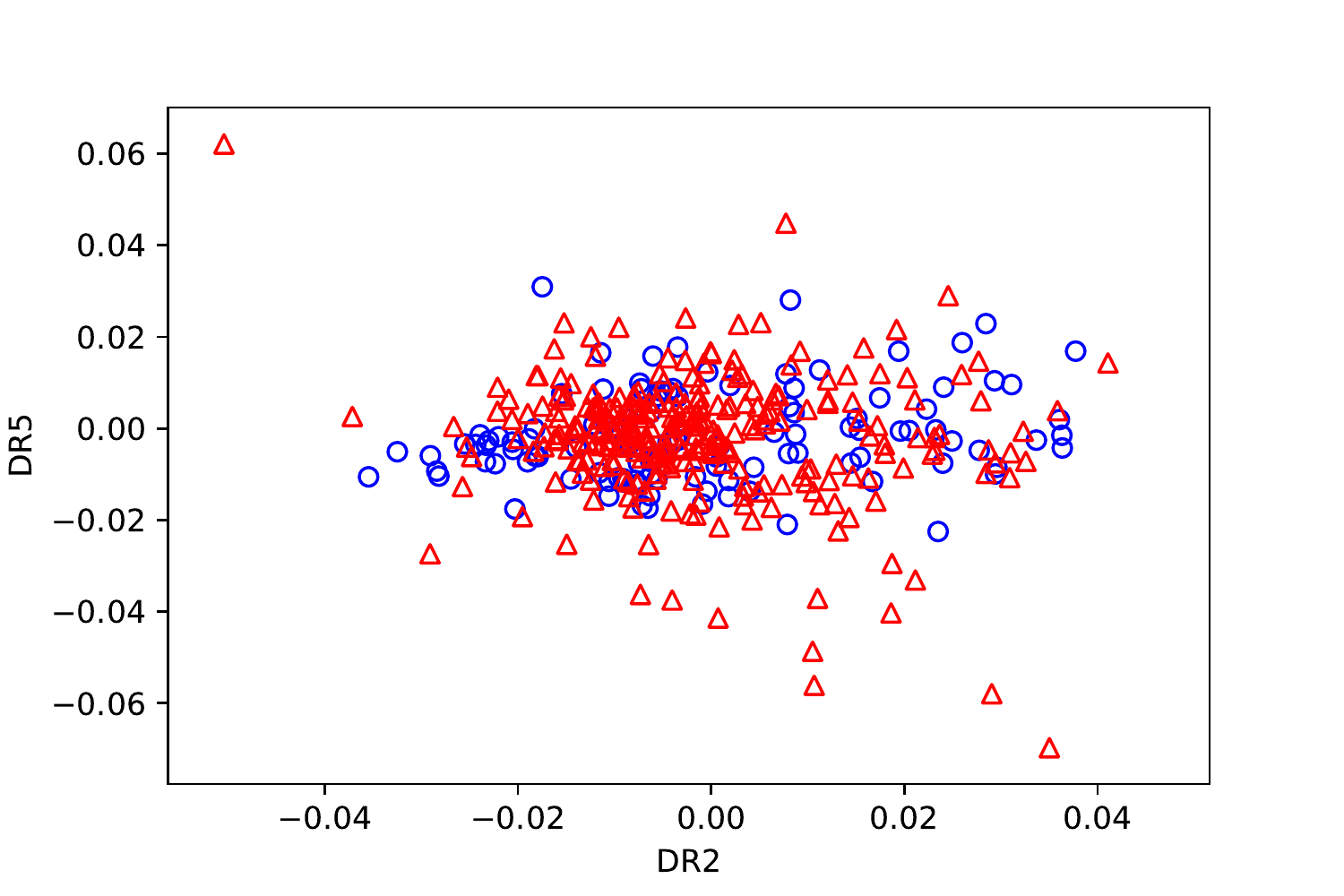}
			\includegraphics[width=0.10\textheight]{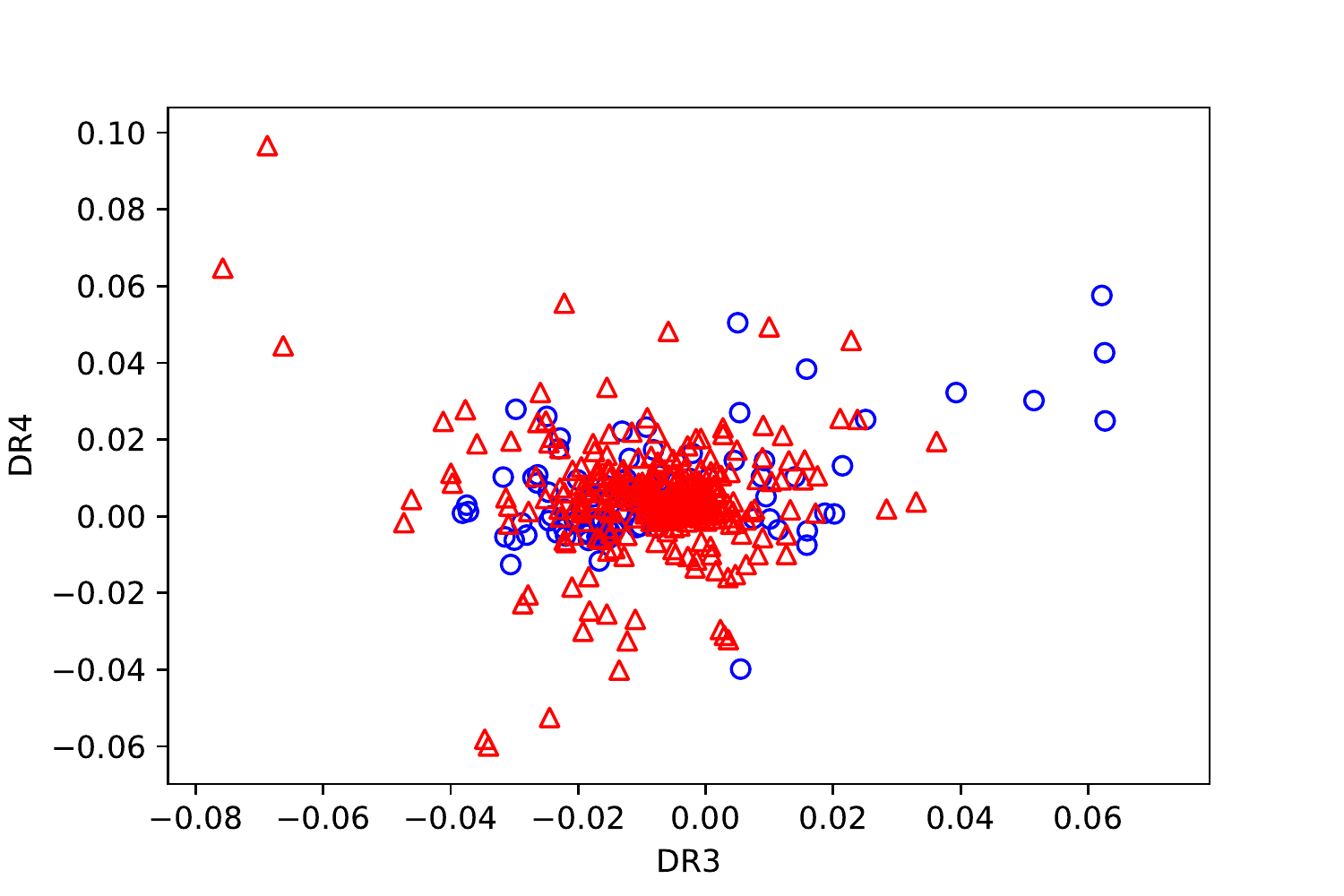}
			\includegraphics[width=0.10\textheight]{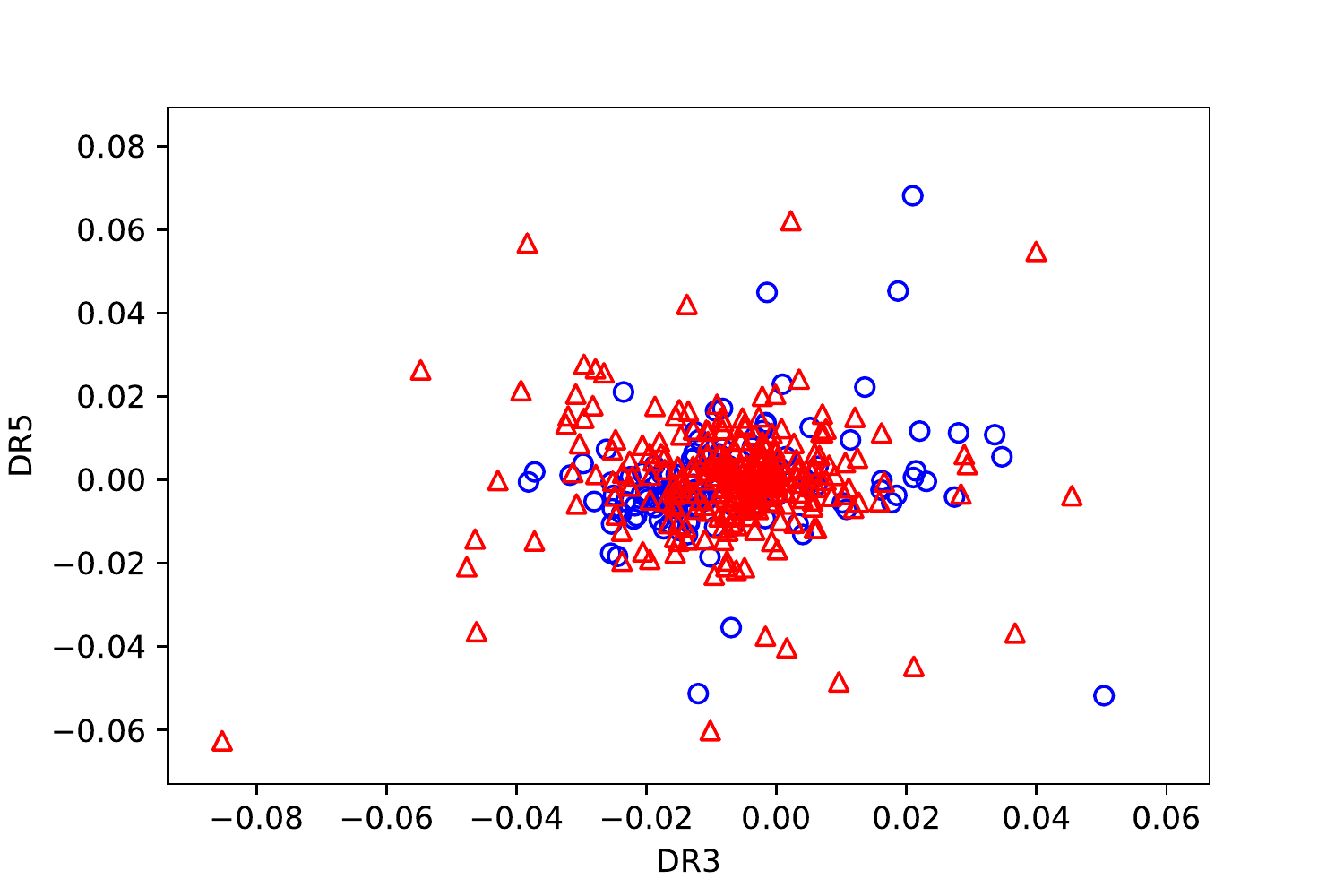}
			\includegraphics[width=0.10\textheight]{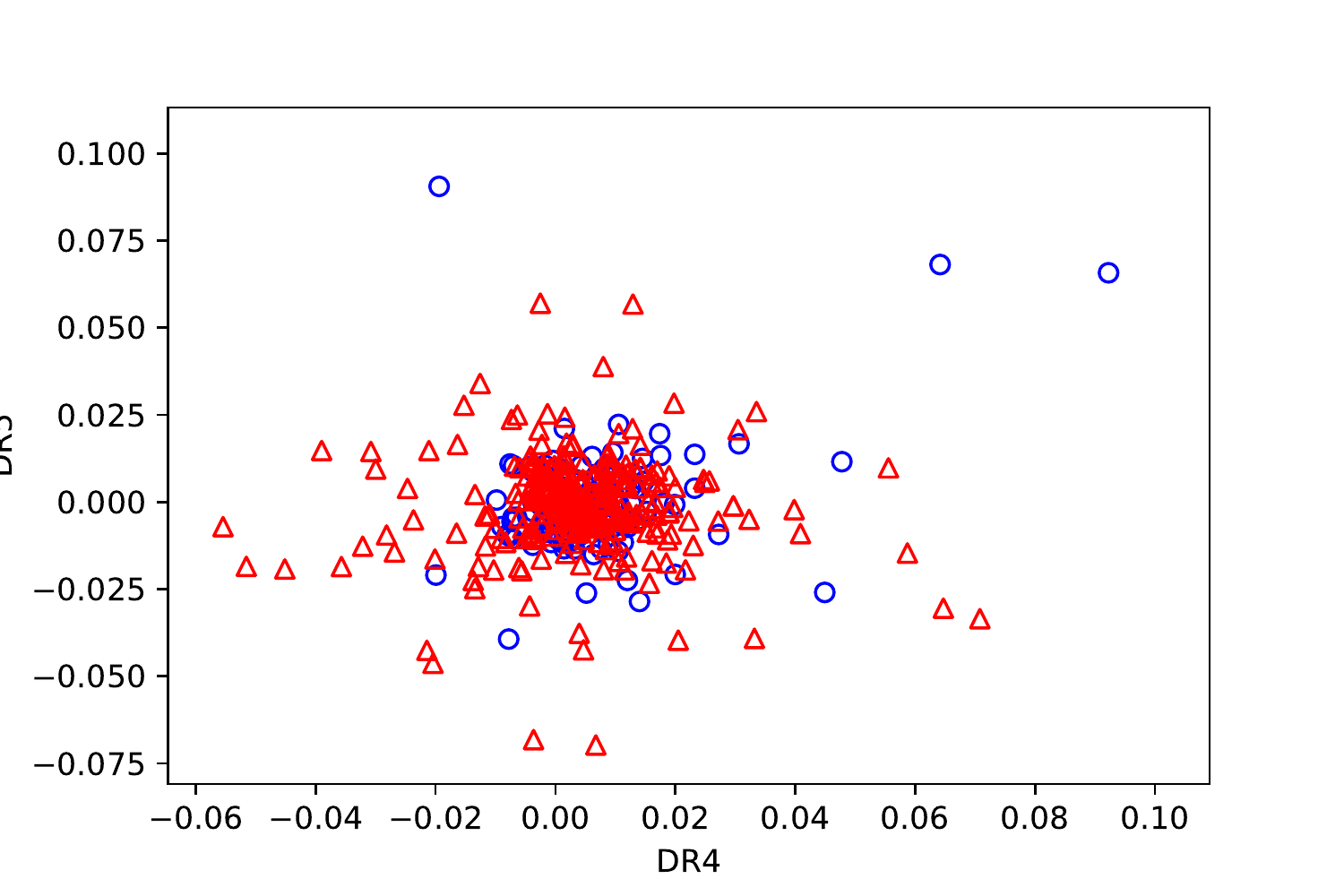}
		\end{minipage}
	\end{minipage}
	\\
	\begin{minipage}[t]{0.9\linewidth}
		\parbox[c][0.5cm]{0.18cm}{\rotatebox{90}{\tiny GSAVE}}
		\begin{minipage}[t]{\linewidth}
			\includegraphics[width=0.10\textheight]{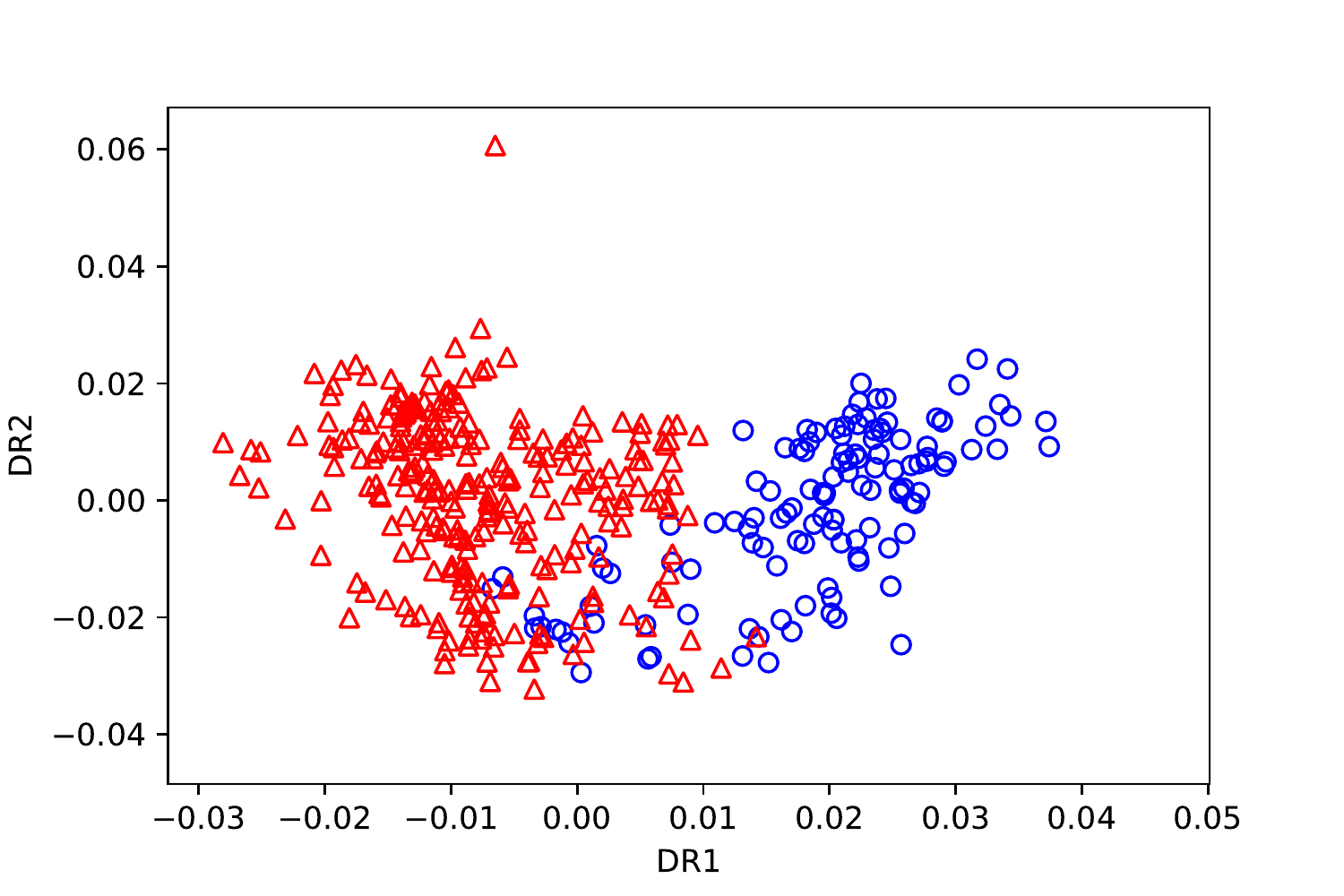}
			\includegraphics[width=0.10\textheight]{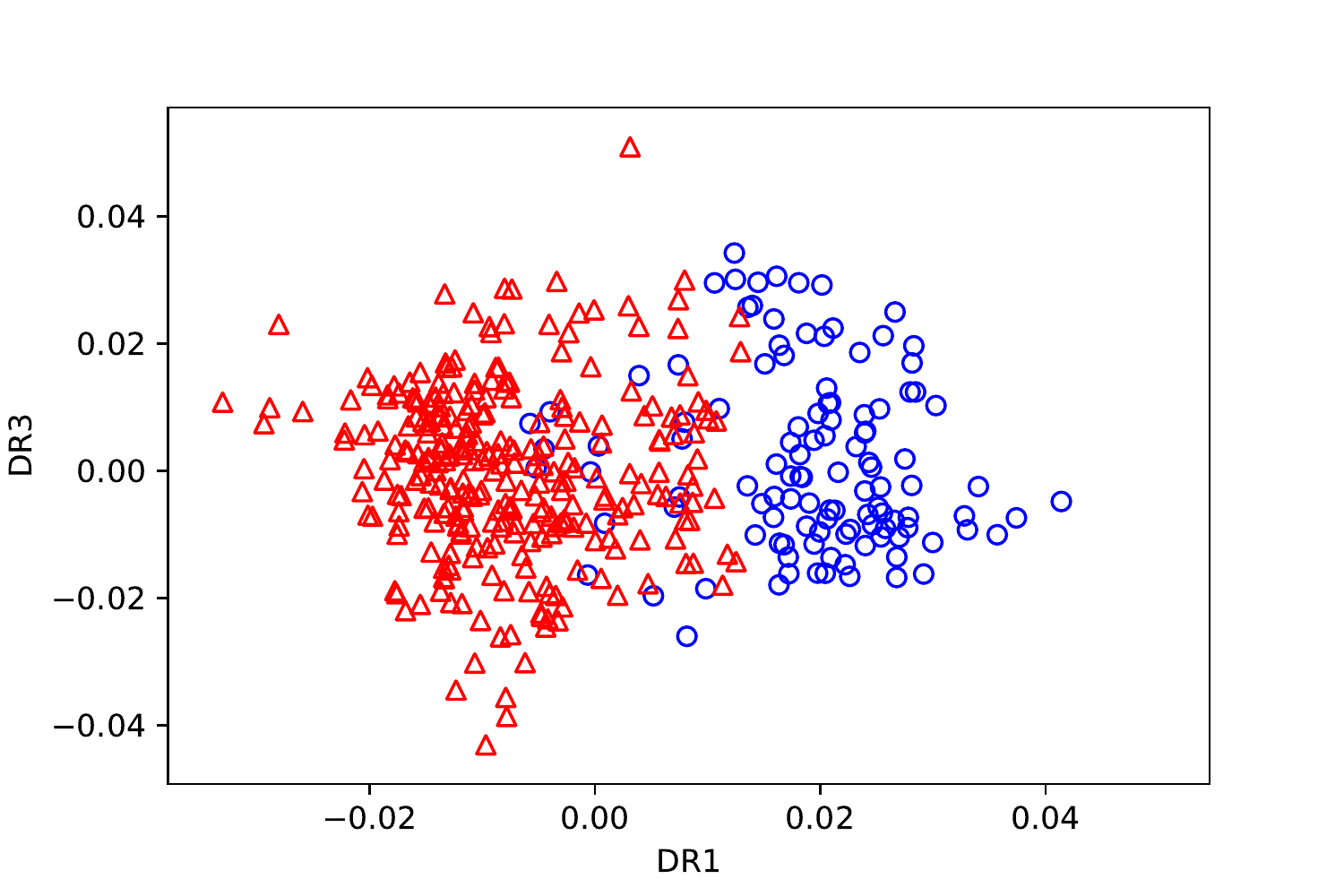}
			\includegraphics[width=0.10\textheight]{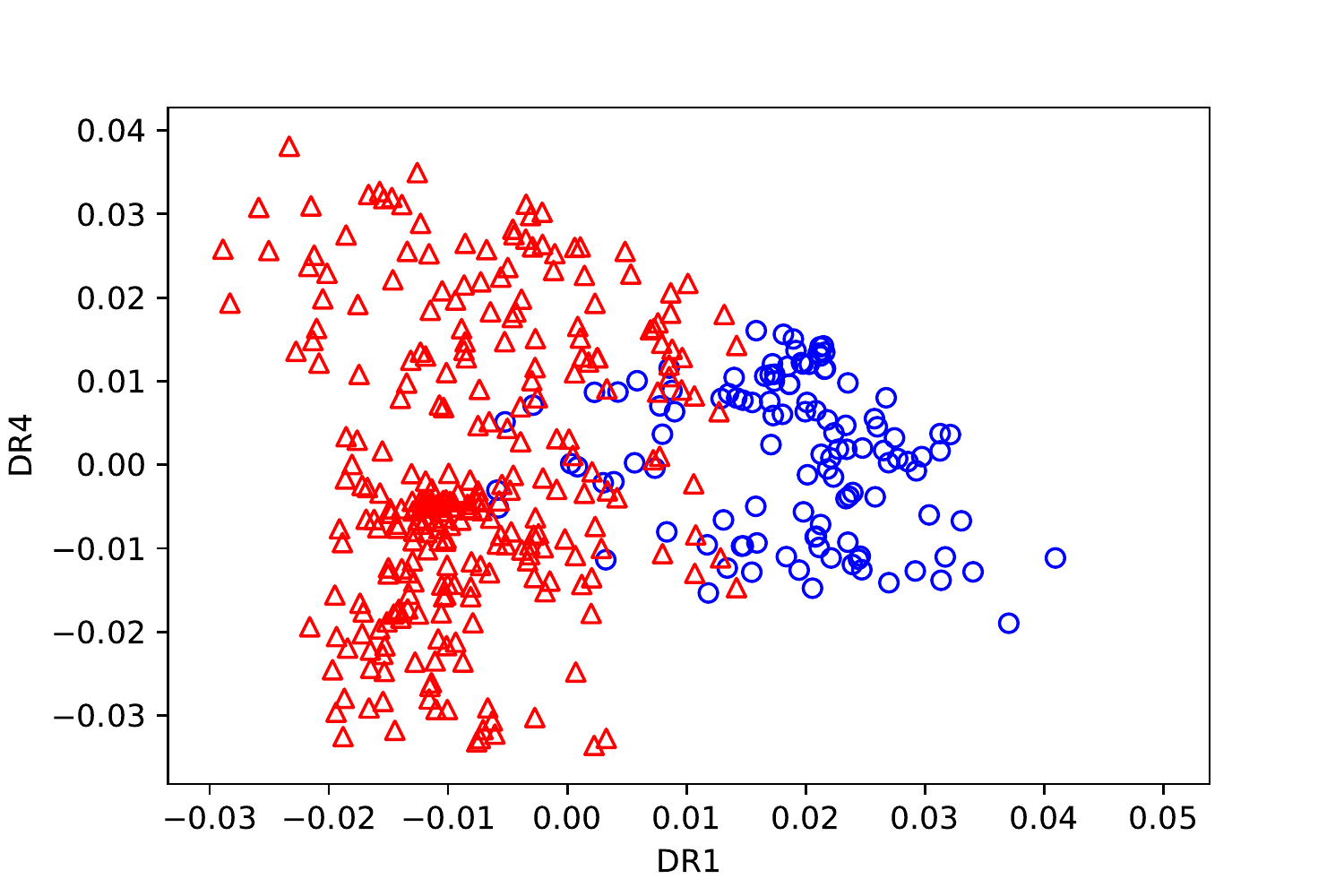}
			\includegraphics[width=0.10\textheight]{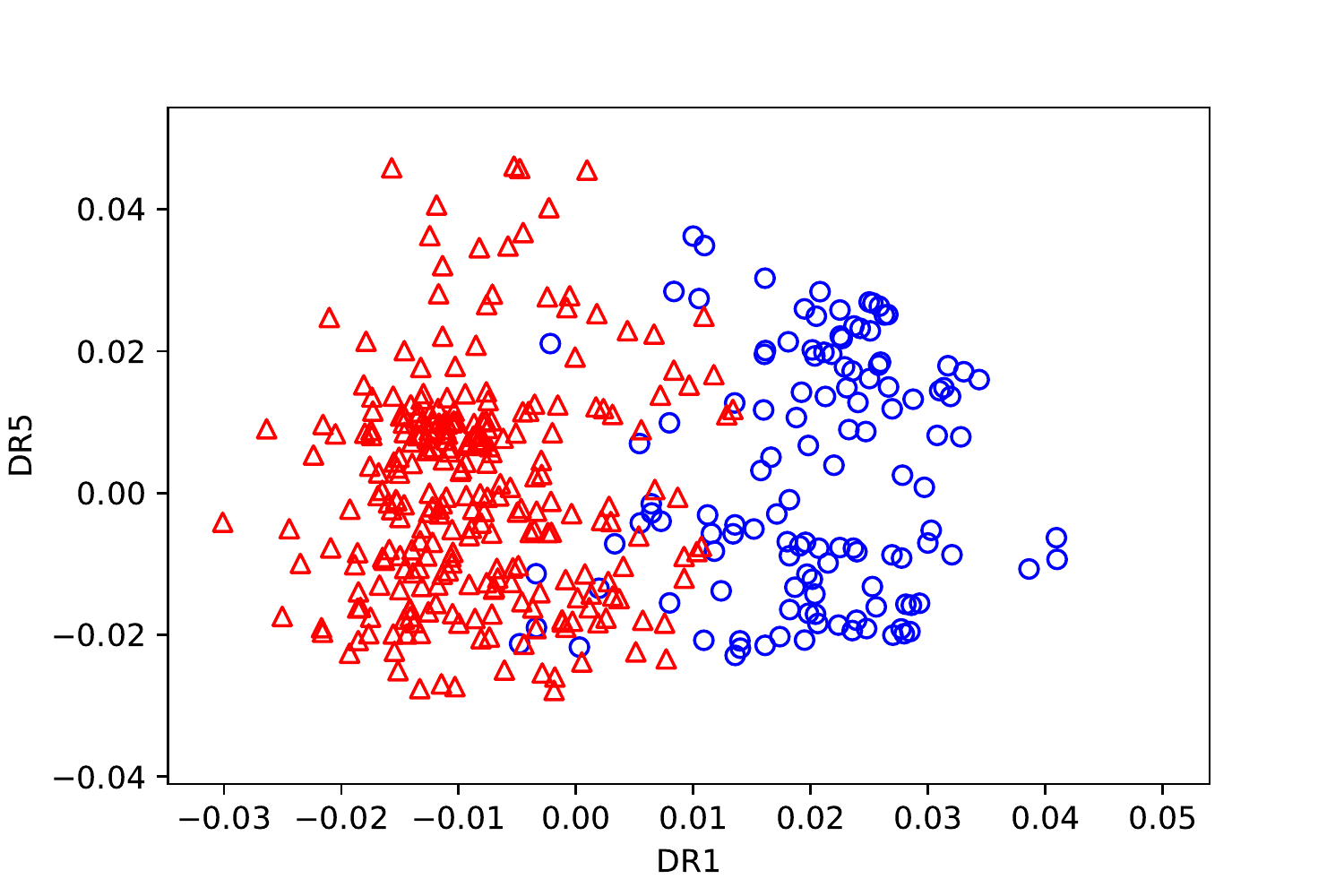}
			\includegraphics[width=0.10\textheight]{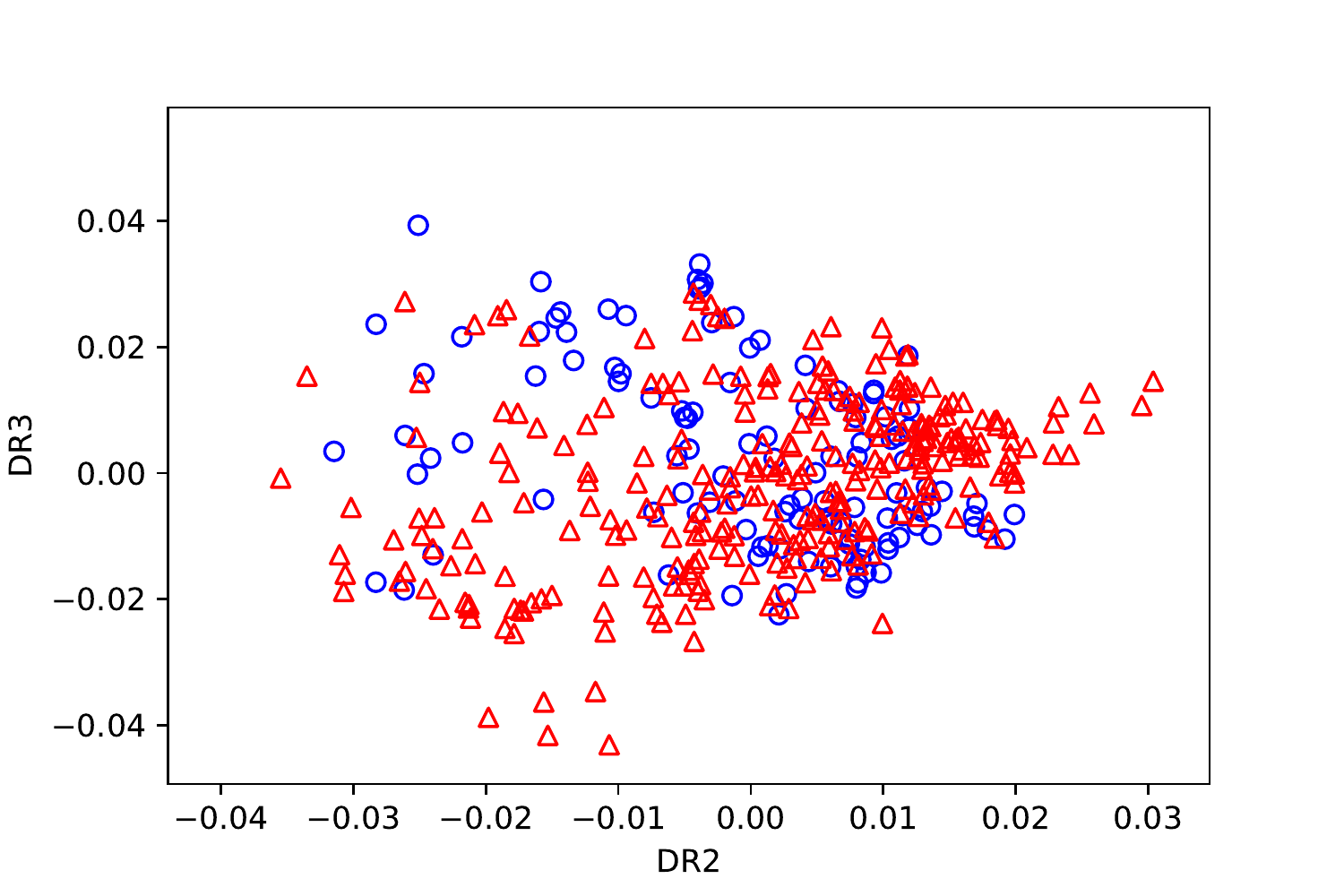}\\
			\includegraphics[width=0.10\textheight]{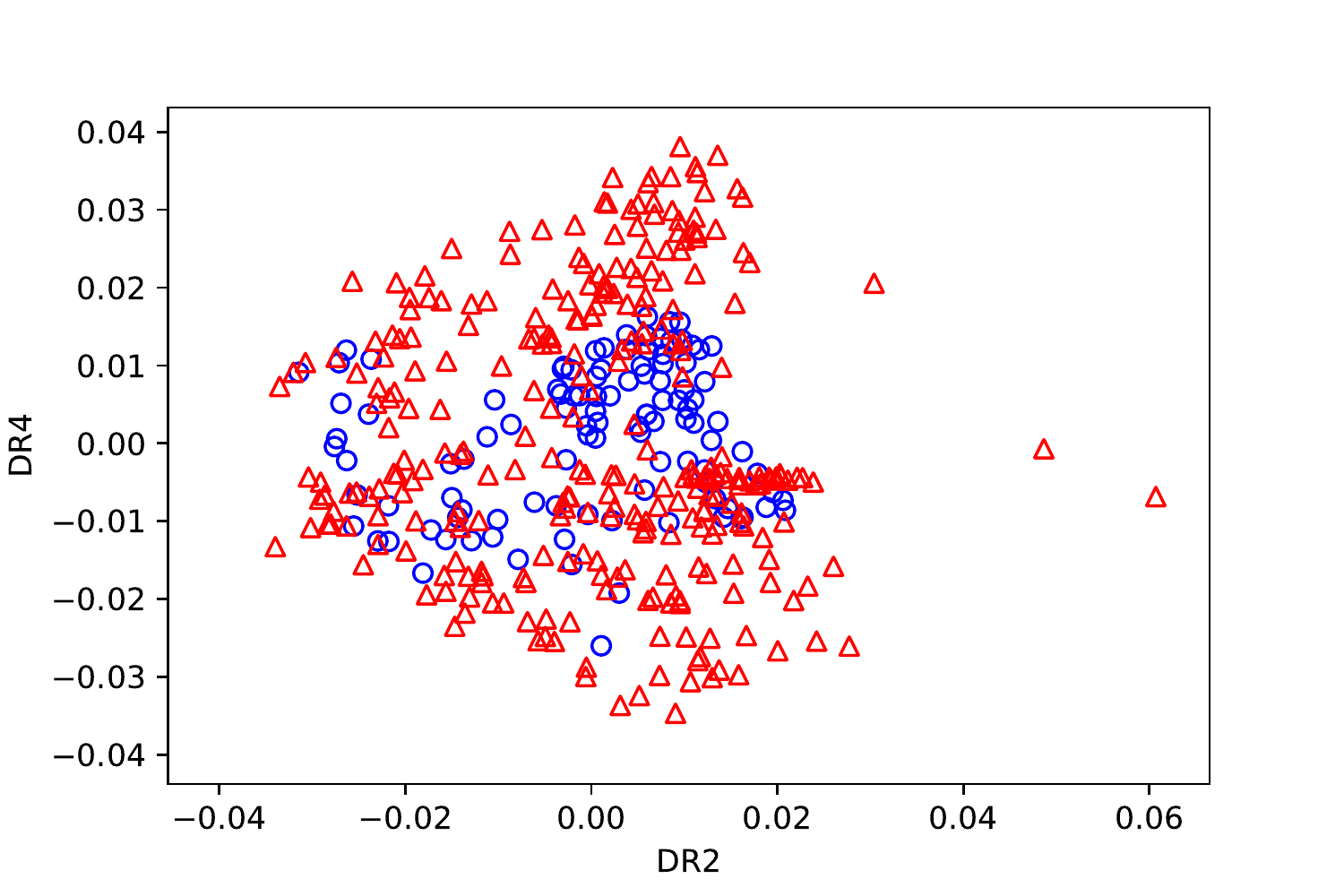}
			\includegraphics[width=0.10\textheight]{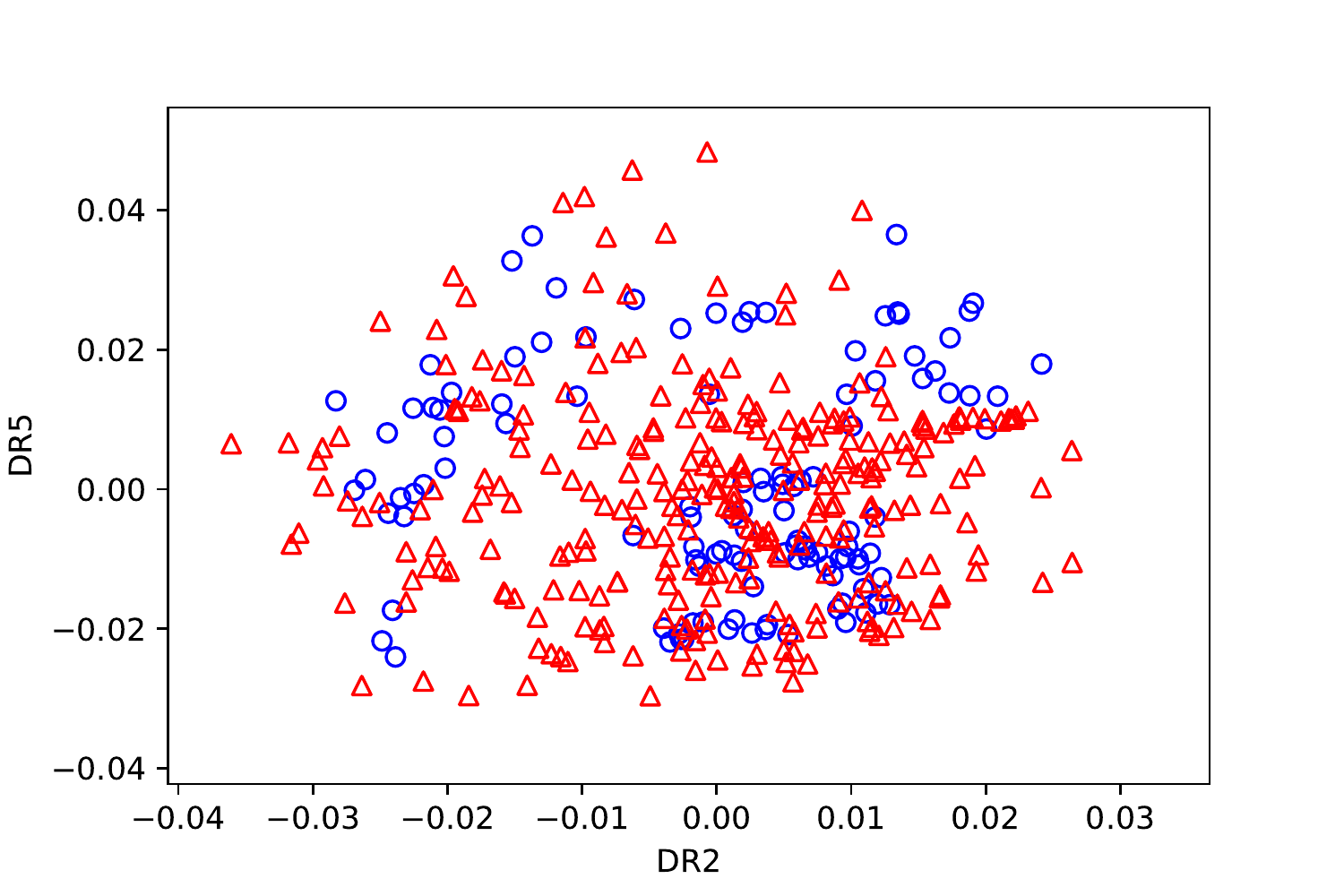}
			\includegraphics[width=0.10\textheight]{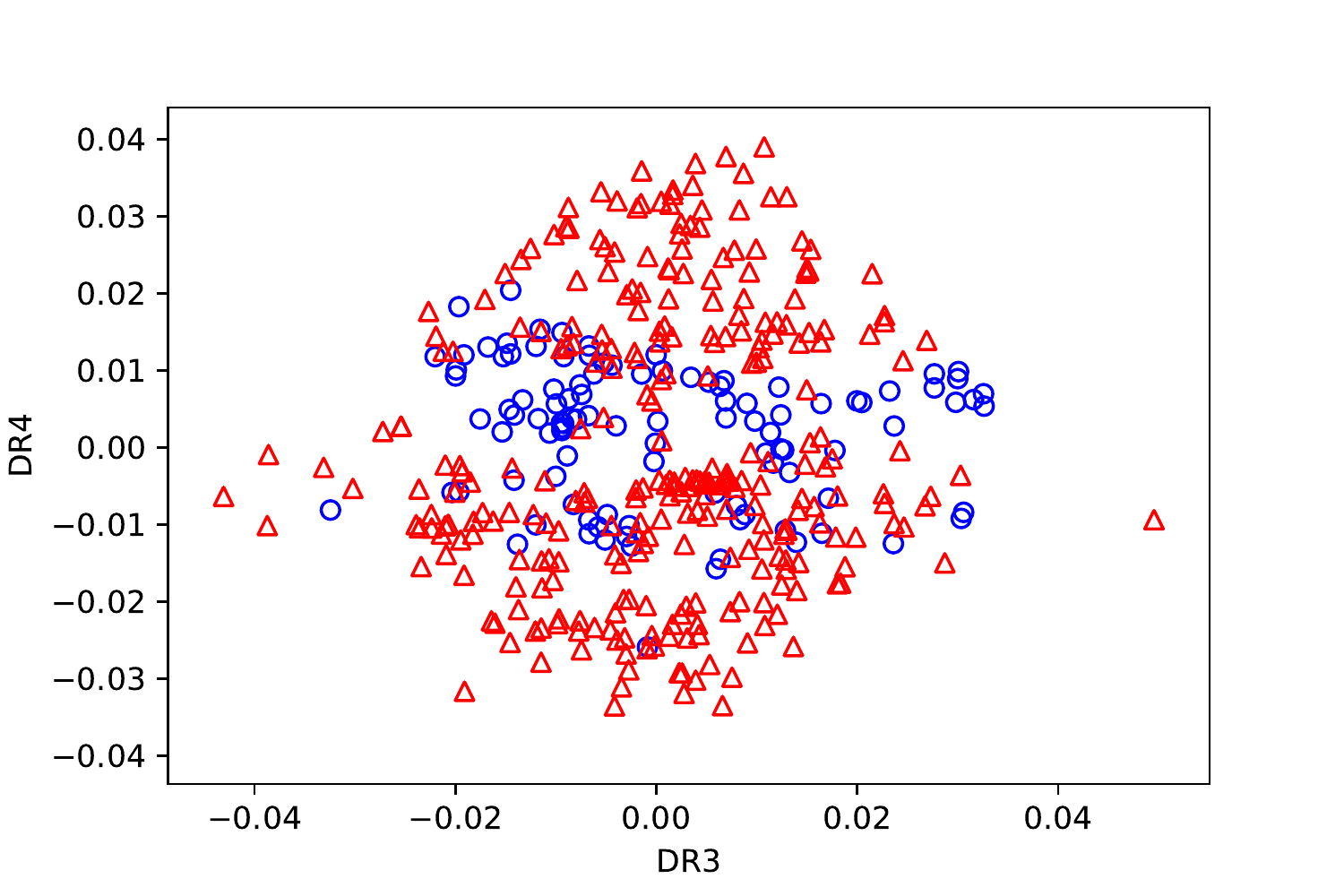}
			\includegraphics[width=0.10\textheight]{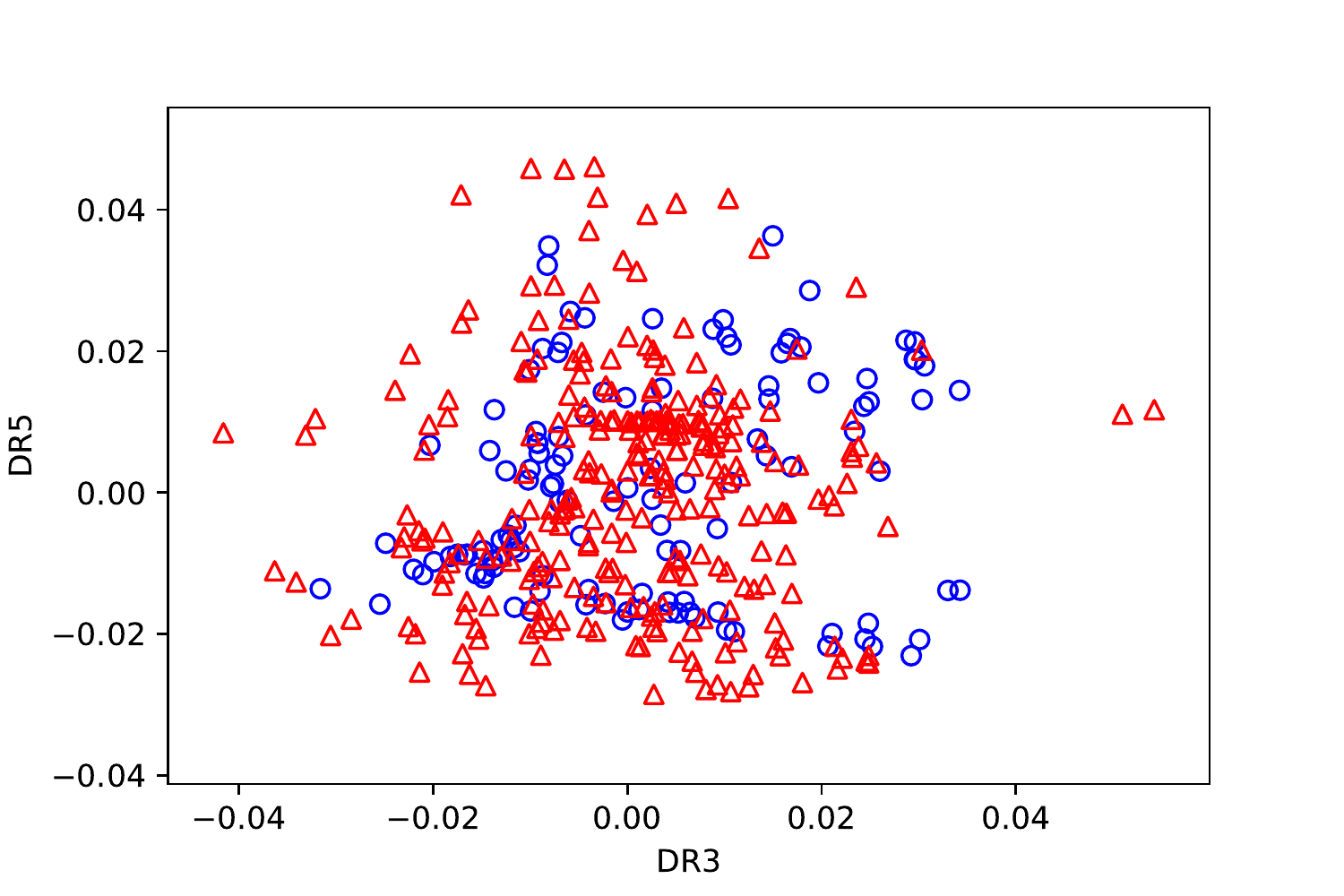}
			\includegraphics[width=0.10\textheight]{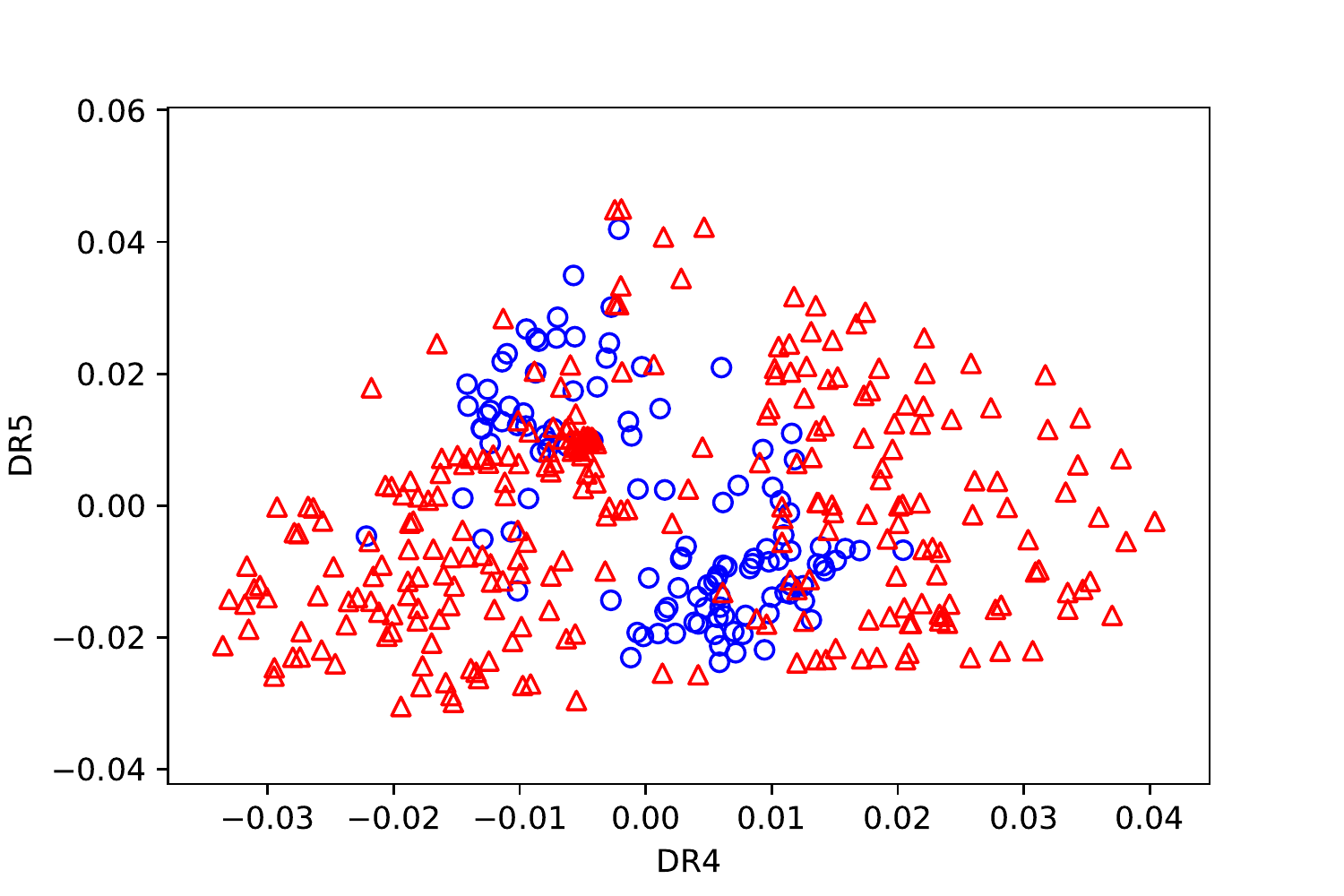}
		\end{minipage}
	\end{minipage}
	\caption{\label{Poldata_drawing}The plots of the transformed data based on each pair of the components of the learned representation by MSRL, GSIR and GSAVE  for the Pole-Telecommunication dataset. Bule '\textup{O}' and red '$\Delta$' means that $Y\ge50$ and $Y\le50$, respectively.}
\end{figure}
\subsection{Intrinsic dimension determination}\label{Intrinsic_dim_est}
To evaluate our proposed intrinsic dimension determination method, we consider a simple model
\begin{equation}\label{Toy_Model_d_est}
	Y=\Phi(X_1)+\Phi(X_2)\epsilon,
\end{equation}
where $\epsilon\perp \!\!\! \perp X=(X_1,X_2,\ldots,X_{10})^\top,\epsilon\sim N(0,1)$ and $X\sim N(\boldsymbol{0},\boldsymbol{\textup{I}}_{10})$. In this example, the true $d_0=2$. The implementation details are given in Section \ref{Descrip_Implement_dest}
in the supplementary material and we calculate the proportions of different estimated values for $d_0$ over 100 replications. The simulation results for two tolerance levels are provided in Table \ref{est_d_proportion}.  It shows that the proposed intrinsic dimension determination method works well for this toy example since it did not estimate $d_0$ as $1$, as $\hat{d}_0=1$ would produce an insufficient representation.

\begin{table}
	\caption{\label{est_d_proportion}The Proportions of Different Estimated Intrinsic Dimensions for Model (\ref{Toy_Model_d_est})}
	\centering
	\begin{tabular}{c|cccccccccc}
			\hline
			\hline
			$\eta$ & 1   & 2    & 3    & 4    & 5   & 6    & 7   & 8   & 9   & 10  \\
			\hline
			0.2   & 0\% & 89\% & 6\%  & 4\%  & 0\% & 1\%  & 0\% & 0\% & 0\% & 0\% \\
			0.1   & 0\% & 21\% & 23\% & 26\% & 5\% & 16\% & 1\% & 2\% & 1\% & 5\%\\
			\hline
		\end{tabular}
\end{table}

\section{Conclusion}\label{Discussion}
In this paper, we have proposed MSRL, a mutual information based deep sufficient representation learning approach.  Under reasonable conditions
on the underlying target representation and the neural network classes, we have established the consistency and derived the non-asymptotic error bounds for MSRL.  Extensive numerical experiments demonstrate that our method outperforms several existing SDR in the simulation models considered.  We have also discussed the extensions to the case of categorical response variables and proposed a dichotomy-based cross-validation algorithm to determine the intrinsic dimension.

Based on the results from our numerical experiments,  MSRL outperforms
the existing nonlinear SDR methods GSIR and GSAVE when the sample size is relatively large.  With relatively small sample sizes such as $n=300$ or $500$, MSRL performs similarly to GSIR and GSAVE.  Moreover,  with big sample sizes, the kernel-based nonlinear SDR methods are computationally prohibitive due to an $n\times n$ Gram matrix involved in the estimation procedure.  
MSRL is still computationally feasible and performs well.

We have only considered the mutual information criterion
corresponding to the Kullback-Leibler divergence in this work.
In the future, it would be interesting to study the properties of using other mutual information criterions corresponding to divergencies such as Jensen-Shanon and $\chi^2$-divergences.
It would also be interesting to
extend the proposed method to the multi-dataset setting and apply it to problems arising from transfer learning
and domain generalization.

\section*{Acknowledgements}
The work of Y. Lin is supported by the Hong Kong Research Grants Council (Grant No.
14306219 and 14306620), the National Natural Science Foundation of China (Grant No.
11961028) and Direct Grants for Research, The Chinese University of Hong Kong.
The work of J. Huang is supported by the grant P0042888 from The Hong Kong Polytechnic University.

\newpage
\appendix

\setcounter{equation}{0}  
\renewcommand{\theequation}{S.\arabic{equation}}
\setcounter{table}{0}
\renewcommand{\thetable}{S.\arabic{table}}
\setcounter{figure}{0}
\renewcommand{\thefigure}{S.\arabic{figure}}
\setcounter{equation}{0}  
\setcounter{lemma}{0}
     \renewcommand{\thelemma}{\Alph{section}.\arabic{lemma}}
\setcounter{theorem}{0}
     \renewcommand{\thetheorem}{\Alph{section}.\arabic{theorem}}

\begin{center}
\textbf{\Large Supplementary Material}
\end{center}

In this supplementary material, we provide technical details and proofs of the lemmas and the theorems in the paper.  We also include additional numerical results and details for the experiments.


\section{Proof of Lemma \ref{f_Data_Processing_Inequality}}
\label{appA}
\begin{proof}[Proof of Lemma \ref{f_Data_Processing_Inequality}]
	By the fact that $Y\to X\to g(X)$ is a Markov chain, Lemma \ref{f_Data_Processing_Inequality} is a direct corollary of Lemma \ref{General_f_Data_Processing_Inequality} below.
\end{proof}
\begin{lemma}\label{Invariant_CD_Y_under_X}
	Suppose that distributions $P_{XY}$ and $Q_{XY}$ have p.d.f's $p_X(x)p_{Y|X}(y|x)$ and $q_X(x)p_{Y|X}(y|x)$, respectively, then
	\[
	\mathbb{D}_f(P_{XY}~||~Q_{XY})=\mathbb{D}_f(P_{X}~||~Q_{X}).
	\]
\end{lemma}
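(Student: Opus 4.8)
The plan is to evaluate $\mathbb{D}_f(P_{XY}\,\|\,Q_{XY})$ straight from the definition and observe that the common conditional factor $p_{Y|X}(y\mid x)$ drops out, collapsing the double integral over $\mathcal{X}\times\mathcal{Y}$ into the single integral that defines $\mathbb{D}_f(P_X\,\|\,Q_X)$. Concretely, with $p_1(x,y)=p_X(x)p_{Y|X}(y\mid x)$ playing the role of the numerator density and $p_2(x,y)=q_X(x)p_{Y|X}(y\mid x)$ the reference density, the definition of the $f$-divergence gives
\[
\mathbb{D}_f(P_{XY}\,\|\,Q_{XY})=\int_{\mathcal{X}\times\mathcal{Y}} q_X(x)\,p_{Y|X}(y\mid x)\, f\!\left(\frac{p_X(x)\,p_{Y|X}(y\mid x)}{q_X(x)\,p_{Y|X}(y\mid x)}\right)dx\,dy .
\]

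First I would cancel $p_{Y|X}(y\mid x)$ inside the argument of $f$ wherever it is positive, so the integrand becomes $q_X(x)\,p_{Y|X}(y\mid x)\,f\!\big(p_X(x)/q_X(x)\big)$, a product of a function of $x$ alone with the conditional density in $y$. By Tonelli's theorem (the reduced integrand is of constant sign once one works on $\{q_X>0\}$, or one splits off the absolutely continuous part), I would integrate out $y$ using $\int_{\mathcal{Y}} p_{Y|X}(y\mid x)\,dy=1$, arriving at $\int_{\mathcal{X}} q_X(x)\, f\!\big(p_X(x)/q_X(x)\big)\,dx=\mathbb{D}_f(P_X\,\|\,Q_X)$, which is the claim.

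The only real obstacle is the measure-theoretic bookkeeping on null sets, and it is routine. Points $x$ with $q_X(x)=0$ are handled with the standard $f$-divergence convention $0\cdot f(a/0):=a\lim_{t\to\infty}f(t)/t$; since $\mathbb{D}_f(P_{XY}\,\|\,Q_{XY})<\infty$ forces $P_X\ll Q_X$, one may restrict both sides to $\{q_X>0\}$ without changing either, and for each fixed $x$ the $y$-set where $p_{Y|X}(y\mid x)=0$ contributes nothing to the inner integral on both sides. Under the blanket positivity assumption on all relevant densities used elsewhere in the paper these null sets are empty and the computation above is literal; otherwise the same conclusion follows after this truncation argument. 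No further idea is needed — this lemma is just the analytic form of the statement that $Y\to X\to g(X)$ is a Markov chain, and it is exactly what powers the data-processing inequality in Lemma \ref{f_Data_Processing_Inequality}.
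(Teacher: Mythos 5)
Your proposal is correct and follows essentially the same route as the paper's proof: cancel the common conditional factor inside $f$, then integrate out $y$ using $\int_{\mathcal{Y}} p_{Y|X}(y\mid x)\,dy = 1$. The extra measure-theoretic remarks about $\{q_X=0\}$ and the $f$-divergence convention are a careful supplement but do not change the argument, which the paper carries out under the standing positivity assumptions.
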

\begin{proof}[Proof of Lemma \ref{Invariant_CD_Y_under_X}]
	By the definition of $f$-divergence,
	\begin{eqnarray*}
		\mathbb{D}_f(P_{XY}~||~Q_{XY})&=&\int_{\mathcal{X}\times\mathcal{Y}}q_X(x)p_{Y|X}(y|x)f\left(\frac{p_{X}(x)p_{Y|X}(y|x)}{q_X(x)p_{Y|X}(y|x)}\right)dxdy\\
		&=&\int_{\mathcal{X}\times\mathcal{Y}}q_X(x)p_{Y|X}(y|x)f\left(\frac{p_{X}(x)}{q_X(x)}\right)dxdy\\
		&=&\int_{\mathcal{X}}q_X(x)f\left(\frac{p_{X}(x)}{q_X(x)}\right)\left\{\int_{\mathcal{Y}}p_{Y|X}(y|x)dy\right\}dx\\
		&=&\int_{\mathcal{X}}q_X(x)f\left(\frac{p_{X}(x)}{q_X(x)}\right)dx\\
		&=&\mathbb{D}_f(P_{X}~||~Q_{X}).
	\end{eqnarray*}
	This completes the proof of Lemma \ref{Invariant_CD_Y_under_X}.
\end{proof}
\begin{lemma}\label{InformationLossOfMarginals}
	For two distributions $P_{XY}$ and $Q_{XY}$, then
	\begin{equation}\label{InformationLossInqOfMarginals}
		\mathbb{D}_f(P_{XY}~||~Q_{XY})\ge \mathbb{D}_f(P_{X}~||~Q_{X}).
	\end{equation}
	If all relevant p.d.f's and conditional p.d.f's are positive, when $f$ is strictly convex, the above equality holds if and only if
	\begin{equation}\label{EqualCDs}
		p_{Y|X}(y|x)=q_{Y|X}(y|x).
	\end{equation}
\end{lemma}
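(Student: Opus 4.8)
The plan is to reduce the inequality to a fiberwise application of Jensen's inequality, essentially mimicking the computation in the proof of Lemma \ref{Invariant_CD_Y_under_X} but now allowing the conditional densities of $Y$ given $X$ to differ. First I would factor the joint densities as $p_{XY}(x,y)=p_X(x)p_{Y|X}(y|x)$ and $q_{XY}(x,y)=q_X(x)q_{Y|X}(y|x)$ and write
\begin{equation*}
\mathbb{D}_f(P_{XY}\,\|\,Q_{XY})=\int_{\mathcal{X}}q_X(x)\left\{\int_{\mathcal{Y}}q_{Y|X}(y|x)\,f\!\left(\frac{p_X(x)}{q_X(x)}\cdot\frac{p_{Y|X}(y|x)}{q_{Y|X}(y|x)}\right)dy\right\}dx.
\end{equation*}
For each fixed $x$, the inner integral is the expectation of $f$ applied to the nonnegative random variable $y\mapsto \frac{p_X(x)}{q_X(x)}\frac{p_{Y|X}(y|x)}{q_{Y|X}(y|x)}$ under the probability measure $q_{Y|X}(\cdot|x)\,dy$; since $f$ is convex, Jensen's inequality bounds it below by $f$ of the mean of that random variable, and the mean equals $\frac{p_X(x)}{q_X(x)}\int_{\mathcal{Y}}p_{Y|X}(y|x)\,dy=\frac{p_X(x)}{q_X(x)}$. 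Integrating the resulting pointwise bound $q_X(x)f(p_X(x)/q_X(x))$ over $x\in\mathcal{X}$ then produces $\mathbb{D}_f(P_X\,\|\,Q_X)$, which is (\ref{InformationLossInqOfMarginals}).

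For the equality characterization, the ``if'' direction is immediate from Lemma \ref{Invariant_CD_Y_under_X}: if $p_{Y|X}(y|x)=q_{Y|X}(y|x)$, the two joint laws share a common conditional density of $Y$ given $X$, and that lemma already gives equality. For the ``only if'' direction I would argue that equality in (\ref{InformationLossInqOfMarginals}) forces equality in the fiberwise Jensen step for $P_X$-almost every $x$ --- this is the point where the positivity hypotheses on all densities and conditional densities are used, to guarantee that the relevant dominating measures have full support so that no null-set degeneracies can hide a strict gap. Strict convexity of $f$ then makes Jensen's inequality an equality only when the random variable $y\mapsto\frac{p_X(x)}{q_X(x)}\frac{p_{Y|X}(y|x)}{q_{Y|X}(y|x)}$ is constant in $y$; integrating this constant against $q_{Y|X}(\cdot|x)\,dy$ and using that both $p_{Y|X}(\cdot|x)$ and $q_{Y|X}(\cdot|x)$ integrate to one identifies the constant as $p_X(x)/q_X(x)$, so $p_{Y|X}(y|x)/q_{Y|X}(y|x)\equiv 1$, i.e. (\ref{EqualCDs}) holds.

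I expect the main obstacle to be the equality case rather than the inequality itself: one must carefully pass from equality of the two integrated quantities to pointwise-in-$x$ equality almost everywhere, and then invoke the strict-convexity equality condition in its correct form (constancy of the \emph{argument} of $f$, not of $f$ evaluated at that argument), taking care with the almost-everywhere caveats that the positivity assumptions are precisely designed to eliminate. The inequality part is routine once the factorization and the single application of Jensen's inequality are in place.
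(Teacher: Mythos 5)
Your proof is correct and follows essentially the same route as the paper's: factor both joint densities through the conditional of $Y$ given $X$, rewrite $\mathbb{D}_f(P_{XY}\,\|\,Q_{XY})$ as an iterated integral, apply Jensen's inequality to the inner $y$-integral for each fixed $x$, and then integrate the pointwise bound to recover $\mathbb{D}_f(P_X\,\|\,Q_X)$; the equality case is handled exactly as in the paper via the strict-convexity condition that the argument of $f$ must be $q_{Y|X}(\cdot|x)$-a.s.\ constant, which after normalizing identifies the constant as $p_X(x)/q_X(x)$ and forces $p_{Y|X}=q_{Y|X}$. Your added remarks --- that the ``if'' direction is already Lemma~\ref{Invariant_CD_Y_under_X}, and that the delicate point is constancy of the argument of $f$ rather than of $f$ of the argument --- are accurate and in fact slightly more explicit than the paper, which leaves those details implicit.
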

\begin{proof}[Proof of Lemma \ref{InformationLossOfMarginals}]
	\begin{eqnarray}
		\mathbb{D}_f(P_{XY}~||~Q_{XY})&=&\int_{\mathcal{X}\times\mathcal{Y}}q_X(x)q_{Y|X}(y|x)f\left(\frac{p_{X}(x)p_{Y|X}(y|x)}{q_X(x)q_{Y|X}(y|x)}\right)dxdy\nonumber\\
		&=&\int_{\mathcal{X}}q_X(x)\left\{\int_{\mathcal{Y}}q_{Y|X}(y|x)f\left(\frac{p_{X}(x)p_{Y|X}(y|x)}{q_X(x)q_{Y|X}(y|x)}\right)dy\right\}dx\nonumber\\
		&\ge&\int_{\mathcal{X}}q_X(x)f\left(\int_{\mathcal{Y}}q_{Y|X}(y|x)\frac{p_{X}(x)p_{Y|X}(y|x)}{q_X(x)q_{Y|X}(y|x)}dy\right)dx\label{JensenResults}\\
		&=&\int_{\mathcal{X}}q_X(x)f\left(\frac{p_{X}(x)}{q_X(x)}\right)dx\nonumber\\
		&=&\mathbb{D}_f(P_{X}~||~Q_{X})\nonumber,
	\end{eqnarray}
	where the inequality (\ref{JensenResults}) follows from Jensen's inequality. This completes the proof of (\ref{InformationLossInqOfMarginals}). When all relevant densities and conditional densities are positive, by the strict convexity of $f$, the inequality (\ref{JensenResults}) becomes an equality if and only if
	\[
	\frac{p_{X}(x)p_{Y|X}(y|x)}{q_X(x)q_{Y|X}(y|x)}\equiv\frac{p_{X}(x)}{q_X(x)}.
	\]
	This implies the result of (\ref{EqualCDs}).
\end{proof}
\begin{lemma}\label{General_f_Data_Processing_Inequality}
	For a Markov chain $Y\to X\to U$, it holds that
	\[
	\mathbb{I}_f(X;Y)\ge \mathbb{I}_f(U;Y).
	\]
	If all relevant densities and conditional densities are positive, when $f$ is strictly convex, the above equality holds if and only if
	\[
	Y\perp \!\!\! \perp X~|~U.
	\]
\end{lemma}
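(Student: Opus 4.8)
The plan is to lift the Markov structure $Y\to X\to U$ to a joint distribution on $(X,Y,U)$ and then invoke the two preceding lemmas: Lemma \ref{Invariant_CD_Y_under_X}, which says that adjoining a common conditional kernel to two distributions leaves their $f$-divergence unchanged, and Lemma \ref{InformationLossOfMarginals}, which says that passing to a marginal can only decrease $f$-divergence (with an equality criterion). Chaining these two facts along the auxiliary joint law will produce both the inequality and the characterization of equality.

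First I would introduce two distributions of $(X,Y,U)$: let $P_{XYU}$ have density $p_{XY}(x,y)\,p_{U|X}(u|x)$ and let $Q_{XYU}$ have density $p_X(x)\,p_Y(y)\,p_{U|X}(u|x)$. Since $Y\to X\to U$ is a Markov chain, $p_{U|X,Y}(u|x,y)=p_{U|X}(u|x)$, so the $(X,Y)$-marginal of $P_{XYU}$ is $P_{XY}$ and the conditional of $U$ given $(X,Y)$ is $p_{U|X}$; by construction the same conditional is built into $Q_{XYU}$, whose $(X,Y)$-marginal is $P_XP_Y$. Applying Lemma \ref{Invariant_CD_Y_under_X} with the block $(X,Y)$ in the role of the conditioning variable and $U$ in the role of the conditioned variable gives
\[
\mathbb{D}_f(P_{XYU}\,\|\,Q_{XYU})=\mathbb{D}_f(P_{XY}\,\|\,P_XP_Y)=\mathbb{I}_f(X;Y).
\]
Next I would marginalize out $X$: the $(U,Y)$-marginal of $P_{XYU}$ is the true joint law $P_{UY}$, while that of $Q_{XYU}$ is $p_Y(y)\int p_X(x)p_{U|X}(u|x)\,dx=p_U(u)p_Y(y)$, i.e. $P_UP_Y$. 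Hence Lemma \ref{InformationLossOfMarginals}, used with $(U,Y)$ as the retained block and $X$ as the integrated-out variable, yields
\[
\mathbb{I}_f(X;Y)=\mathbb{D}_f(P_{XYU}\,\|\,Q_{XYU})\ \ge\ \mathbb{D}_f(P_{UY}\,\|\,P_UP_Y)=\mathbb{I}_f(U;Y),
\]
which is the asserted inequality.

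For the equality case, under positivity of all relevant densities and strict convexity of $f$, Lemma \ref{InformationLossOfMarginals} says equality holds iff $p_{X|U,Y}(x|u,y)=q_{X|U,Y}(x|u,y)$ for all $(x,y,u)$ of positive density, where $q$ denotes the conditional under $Q_{XYU}$. I would then write $p_{X|U,Y}(x|u,y)=p_X(x)p_{Y|X}(y|x)p_{U|X}(u|x)/p_{UY}(u,y)$ and $q_{X|U,Y}(x|u,y)=p_X(x)p_{U|X}(u|x)/p_U(u)$, cancel the common positive factor $p_X(x)p_{U|X}(u|x)$, and reduce the condition to $p_{Y|X}(y|x)=p_{Y|U}(y|u)$ whenever $p_{XYU}(x,y,u)>0$. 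Finally, the Markov property $Y\perp \!\!\! \perp U\mid X$ gives $p_{Y|X,U}(y|x,u)=p_{Y|X}(y|x)$, so this is precisely $p_{Y|X,U}(y|x,u)=p_{Y|U}(y|u)$ on the support, i.e. $Y\perp \!\!\! \perp X\mid U$.

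I expect the only delicate point to be the bookkeeping: choosing the auxiliary law $Q_{XYU}$ so that both preceding lemmas apply with the correct identification of which block plays the role of the ``conditioning'' variable and which the ``conditioned'' one, and then translating the Jensen-equality condition supplied by Lemma \ref{InformationLossOfMarginals} into the conditional-independence statement using the Markov property. The analytic substance is entirely carried by the two lemmas, so no further estimates are needed.
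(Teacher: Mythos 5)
Your proposal is correct and follows essentially the same route as the paper's proof: construct the auxiliary joint law $Q_{XYU}$ by replacing $p_{XY}$ with $p_Xp_Y$ while keeping the kernel $p_{U|X,Y}=p_{U|X}$, apply Lemma \ref{Invariant_CD_Y_under_X} to identify $\mathbb{D}_f(P_{XYU}\|Q_{XYU})$ with $\mathbb{I}_f(X;Y)$, marginalize out $X$ via Lemma \ref{InformationLossOfMarginals} to obtain $\mathbb{I}_f(U;Y)$, and translate the Jensen-equality criterion into $Y\perp\!\!\!\perp X\mid U$. Your algebra for the equality case (cancelling $p_X(x)p_{U|X}(u|x)$ to reduce to $p_{Y|X}=p_{Y|U}$ and invoking the Markov property) is a slightly more streamlined version of the paper's chain of equivalences but is substantively identical.
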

\begin{proof}[Proof of Lemma \ref{General_f_Data_Processing_Inequality}]
	Note that $(U,X,Y)$ has the joint distribution density $p_{U,X,Y}$ satisfying $p_{U,X,Y}(u,x,y)=p_{U|X,Y}(u|x,y)p_{X,Y}(x,y)$ and $p_{U|X,Y}(u|x,y)=p_{U|X}(u|x)$ by the Markov property. Let $Q_{U,X,Y}$ be the distribution with a joint density
	\begin{equation}\label{Construction_Q}
		q_{U,X,Y}(u,x,y)=p_{U|X,Y}(u|x,y)p_{X}(x)p_{Y}(y).
	\end{equation}
	Then by the result of Lemma \ref{Invariant_CD_Y_under_X}, we have
	\[
	\mathbb{D}_f(P_{U,X,Y}~||~Q_{U,X,Y})=\mathbb{D}_f(P_{X,Y}~||~P_XP_Y)=\mathbb{I}_f(X;Y).
	\]
	By the result of (\ref{InformationLossInqOfMarginals}) in Lemma \ref{InformationLossOfMarginals},
	\[
	\mathbb{D}_f(P_{U,X,Y}~||~Q_{U,X,Y})\ge\mathbb{D}_f(P_{U,Y}~||~Q_{U,Y})=\mathbb{I}_f(U;Y),
	\]
	where the last equality follows from
	\begin{eqnarray*}
		q_{U,Y}(u,y)&=&\int_{\mathcal{X}}q_{U,X,Y}(u,x,y)dx\\
		&=&\int_{\mathcal{X}}p_{U|X,Y}(u|x,y)p_{X}(x)p_{Y}(y)dx\\
		&=&\int_{\mathcal{X}}p_{U|X}(u|x)p_{X}(x)p_{Y}(y)dx\\
		&=&\int_{\mathcal{X}}p_{U,X}(u,x)p_{Y}(y)dx\\
		&=&p_{U}(u)p_{Y}(y).
	\end{eqnarray*}
	By  (\ref{EqualCDs}) in Lemma \ref{InformationLossOfMarginals}, if all relevant densities and conditional densities are positive, when $f$ is strictly convex, then the equality holds if and only if
	\[
	p_{X|U,Y}(x|u,y)=q_{X|U,Y}(x|u,y).
	\]
	As
	\begin{eqnarray}
		&&p_{X|U,Y}(x|u,y)=q_{X|U,Y}(x|u,y)\nonumber\\
		&\Leftrightarrow&\frac{p_{U,X,Y}(u,x,y)}{p_{U,Y}(u,y)}=\frac{q_{U,X,Y}(u,x,y)}{q_{U,Y}(u,y)}\nonumber\\
		&\Leftrightarrow&\frac{p_{U,X,Y}(u,x,y)}{p_{U,Y}(u,y)}=\frac{p_{U|X,Y}(u|x,y)p_{X}(x)p_{Y}(y)}{p_{U}(u)p_{Y}(y)}\label{q_joint_definition}\\
		&\Leftrightarrow&\frac{p_{U,X,Y}(u,x,y)}{p_{U,Y}(u,y)}=\frac{p_{U|X}(u|x)p_{X}(x)}{p_{U}(u)}\label{Con_Independence}\\
		&\Leftrightarrow&\frac{p_{X,Y|U}(x,y|u)}{p_{Y|U}(y|u)}=p_{X|U}(x|u)\nonumber\\
		&\Leftrightarrow&p_{X,Y|U}(x,y|u)=p_{X|U}(x|u)p_{Y|U}(y|u)\nonumber\\
		&\Leftrightarrow&Y\perp \!\!\! \perp X|U,\nonumber
	\end{eqnarray}
	where (\ref{q_joint_definition}) follows from the construction of $q_{U,X,Y}(u,x,y)$ in (\ref{Construction_Q}), and (\ref{Con_Independence}) is due to the Markov property, which means $p_{U|X,Y}(u|x,y)=p_{U|X}(u|x)$, we have under the assumed conditions that all relevant densities and conditional densities are positive, and $f$ is strictly convex,
	\[
	\mathbb{I}_f(X;Y)=\mathbb{I}_f(U;Y)\Leftrightarrow Y\perp \!\!\! \perp X|U.
	\]
	This completes the proof of Lemma \ref{General_f_Data_Processing_Inequality}.
\end{proof}

\section{Proof of Main Results}\label{appB}
In Appendices \ref{appB} and \ref{appC}, let $Z_1,Z_2,\ldots,Z_n$ be i.i.d random vectors in a  measurable space $(\mathcal{Z},\mathbb{P})$, for example, $Z_i=(X_i,Y_i),i=1,2,\ldots,n$,   $\mathcal{Z}=\mathcal{X}\times\mathcal{Y}$. Let $\{Z'_i\}_{i=1}^n$ be an independent copy of $\{Z_i\}_{i=1}^n$.
Let $\epsilon=(\epsilon_1,\epsilon_2,\ldots,\epsilon_n)$ be a Rademacher sequence independent of $\{Z_i\}_{i=1}^n$ and $\{Z'_i\}_{i=1}^n$, and let $\epsilon'$ be an independent copy of $\epsilon$.  We use $E_{Z'}[\cdot]$ and $E_{\epsilon}[\cdot]$ to denote the expectation w.r.t $\{Z'_i\}_{i=1}^n$ and $\{\epsilon_i\}_{i=1}^n$, respectively.

For a class $\mathcal{F}$ of real-valued functions on $\mathcal{Z}$ and a given sequence $z=(z_1,z_2,\cdots,z_m)\in\mathcal{Z}^m$, we define
\[
\mathcal{F}_{|z}=\{(f(z_1),f(z_2),\cdots,f(z_m)):f\in\mathcal{F}\}.
\]
For a positive number $\delta$, let $\mathcal{N}(\delta,\|\cdot\|_{\infty},\mathcal{F}_{|z})$ be the covering number of
$\mathcal{F}_{|z}$ under the norm $\|\cdot\|_{\infty}$ with radius $\delta$. The uniform covering number $\mathcal{N}_m(\delta,\|\cdot\|_{\infty},\mathcal{F})$
is defined by
\[
\mathcal{N}_m(\delta,\|\cdot\|_{\infty},\mathcal{F})=\max\left\{\mathcal{N}(\delta,\|\cdot\|_{\infty},\mathcal{F}_{|z}):z\in\mathcal{Z}^m\right\}.
\]
Thus there exists a $\delta$-uniform covering of $\mathcal{H}$ with centers $h_k,k=1,2,\cdots,\mathcal{N}_{n}$, where $\mathcal{N}_{n}=\mathcal{N}_{2n(n-1)}(\delta,\|\cdot\|_{\infty},\mathcal{H})$, such that for any $h\in\mathcal{H}$, there exists a $k^*$ satisfying $|h(\bar{Z})-h_{k^*}(\bar{Z})|\le \delta$ for any $\bar{Z}\in \{(Z_i,Z_j):i,j=1,2,\cdots,n\text{ and }i\textless j\}\cup \{(Z'_i,Z_j):i,j=1,2,\cdots,n\text{ and }i\neq j\}\cup \{(Z'_i,Z'_j):i,j=1,2,\cdots,n\text{ and }i\textless j\}$. For any $h,\bar{h}\in\mathcal{H}$, we define
\begin{eqnarray*}
	d_{U,n}^2(h,\bar{h})&=&\frac{1}{n(n-1)}\sum_{1\le i\neq j\le n}\big[\left\{h(Z_i,Z_j)-h(Z_i,Z'_j)-h(Z'_i,Z_j)+h(Z'_i,Z'_j)\right\}\\
	&&~~~~~~~~~~~~~~~~~~~~~~~~~~-\left\{\bar{h}(Z_i,Z_j)-\bar{h}(Z_i,Z'_j)-\bar{h}(Z'_i,Z_j)+\bar{h}(Z'_i,Z'_j)\right\}\big]^2.
\end{eqnarray*}
and $N(\delta,\mathcal{H}, d_{U,n})$ is the covering number of $\mathcal{H}$ w.r.t $d_{U,n}$. Some simple calculations shows that
\begin{equation}\label{Covering_Relationship}
	N(\delta, \mathcal{H}, d_{U,n})\le \mathcal{N}_{2n(n-1)}(\delta/2,\|\cdot\|_{\infty},\mathcal{H}).
\end{equation}

In addition, the pseudo dimension $\Pdim(\mathcal{F})$ of a function class $\mathcal{F}$, is the largest integer $N$ satisfying that
there exists $(x_1,x_2,\ldots,x_B,y_1,y_2,\ldots,y_N)\in \mathcal{Z}^N\times \mathbb{R}^N$ such that for any $(r_1,r_2,\ldots,r_N)\in\{0,1\}^N$, there exists an $f\in\mathcal{F}$  satisfying for any $i\in\{1,2,\ldots,N\}:$ $f(x_i)> y_i\Leftrightarrow r_i=1$ \citep{anthony_bartlett_1999, bartlett2019nearly}.

\begin{proof}[Proof of Theorem \ref{Representation_Learnable_Theorem}]
	Since $R_0$ is a $d_0$-dimensional USR, that is,
	\[
	Y \perp \!\!\! \perp X~|~R_0(X),~~R_0(X)\sim \textup{Uniform}[0,1]^{d_0},
	\]
	we have
	\[
	\mathbb{I}_{\textup{KL}}(R_0(X);Y)=\mathbb{I}_{\textup{KL}}(X;Y)\text{ and }\mathbb{D}_{\textup{KL}}(P_{R_0}~||~\gamma_{U})=0.
	\]
	For any $R$ with $\textup{dim}(R)=d_0$, by the data processing inequality (\ref{InformationLossInequality}) and the nonnegativity of KL divergence, we know
	\[
	\mathbb{I}_{\textup{KL}}(X;Y)\ge \mathbb{I}_{\textup{KL}}(R(X);Y)\text{ and }\mathbb{D}_{\textup{KL}}(P_{R}~||~\gamma_{U})\ge 0.
	\]
	Hence, for any $\lambda\ge0$, we have
	\begin{eqnarray*}
		\mathbb{L}(R_0;\lambda)&=&-\mathbb{I}_{\textup{KL}}(Y;R_0(X)) + \lambda \mathbb{D}_{\textup{KL}}(P_{R_0}~||~\gamma_{U})\\
		&\le&-\mathbb{I}_{\textup{KL}}(Y;R(X)) + \lambda \mathbb{D}_{\textup{KL}}(P_{R}~||~\gamma_{U})\\
		&\le&\mathbb{L}(R;\lambda).
	\end{eqnarray*}
	It follows that
	\[
	R_0\in\argmin_{R~:~\textup{dim}(R)=d_0}\{\mathbb{L}(R;\lambda)\}.
	\]
	This completes the proof of Theorem \ref{Representation_Learnable_Theorem}.
	
\end{proof}
\begin{proof}[Proof of Lemma \ref{Error Decomposition}]
	For any $R\in\mathcal{R}$, we have
	\begin{eqnarray*}
		&&\mathbb{L}(\hat{R}_n^{\lambda};\lambda)-\mathbb{L}(R_0;\lambda)\nonumber\\
		&=&\{\lambda\sup\limits_{Q}\mathbb{L}^{\textup{Push}}(Q,\hat{R}_n^{\lambda})-\sup\limits_{D}\mathbb{L}^{\textup{MI}}(D,\hat{R}_n^{\lambda})\}-\{\lambda\sup\limits_{Q}\mathbb{L}^{\textup{Push}}(Q,R_0)-\sup\limits_{D}\mathbb{L}^{\textup{MI}}(D,R_0)\}\nonumber\\
		&=&\{\lambda\sup\limits_{Q}\mathbb{L}^{\textup{Push}}(Q,\hat{R}_n^{\lambda})-\sup\limits_{D}\mathbb{L}^{\textup{MI}}(D,\hat{R}_n^{\lambda})\}-\{\lambda\sup\limits_{Q\in\mathcal{Q}}\mathbb{L}^{\textup{Push}}(Q,\hat{R}_n^{\lambda})-\sup\limits_{D\in\mathcal{D}}\mathbb{L}^{\textup{MI}}(D,\hat{R}_n^{\lambda})\}\\
		&+&\{\lambda\sup\limits_{Q\in\mathcal{Q}}\mathbb{L}^{\textup{Push}}(Q,\hat{R}_n^{\lambda})-\sup\limits_{D\in\mathcal{D}}\mathbb{L}^{\textup{MI}}(D,\hat{R}_n^{\lambda})\}-\{\lambda\sup\limits_{Q\in\mathcal{Q}}\mathbb{L}_n^{\textup{Push}}(Q,\hat{R}_n^{\lambda})-\sup\limits_{D\in\mathcal{D}}\mathbb{L}_n^{\textup{MI}}(D,\hat{R}_n^{\lambda})\}\nonumber\\
		&+&\{\lambda\sup\limits_{Q\in\mathcal{Q}}\mathbb{L}_n^{\textup{Push}}(Q,\hat{R}_n^{\lambda})-\sup\limits_{D\in\mathcal{D}}\mathbb{L}_n^{\textup{MI}}(D,\hat{R}_n^{\lambda})\}-\{\lambda\sup\limits_{Q\in\mathcal{Q}}\mathbb{L}_n^{\textup{Push}}(Q,R)-\sup\limits_{D\in\mathcal{D}}\mathbb{L}_n^{\textup{MI}}(D,R)\}\\
		&+&\{\lambda\sup\limits_{Q\in\mathcal{Q}}\mathbb{L}_n^{\textup{Push}}(Q,R)-\sup\limits_{D\in\mathcal{D}}\mathbb{L}_n^{\textup{MI}}(D,R)\}-\{\lambda\sup\limits_{Q\in\mathcal{Q}}\mathbb{L}^{\textup{Push}}(Q,R)-\sup\limits_{D\in\mathcal{D}}\mathbb{L}^{\textup{MI}}(D,R)\}\nonumber\\
		&+&\{\lambda\sup\limits_{Q\in\mathcal{Q}}\mathbb{L}^{\textup{Push}}(Q,R)-\sup\limits_{D\in\mathcal{D}}\mathbb{L}^{\textup{MI}}(D,R)\}-\{\lambda\sup\limits_{Q}\mathbb{L}^{\textup{Push}}(Q,R)-\sup\limits_{D}\mathbb{L}^{\textup{MI}}(D,R)\}\\
		&+&\{\lambda\sup\limits_{Q}\mathbb{L}^{\textup{Push}}(Q,R)-\sup\limits_{D}\mathbb{L}^{\textup{MI}}(D,R)\}-\{\lambda\sup\limits_{Q}\mathbb{L}^{\textup{Push}}(Q,R_0)-\sup\limits_{D}\mathbb{L}^{\textup{MI}}(D,R_0)\}\nonumber\\
		&\le&\lambda\{\sup\limits_{Q}\mathbb{L}^{\textup{Push}}(Q,\hat{R}_n^{\lambda})-\sup\limits_{Q\in\mathcal{Q}}\mathbb{L}^{\textup{Push}}(Q,\hat{R}_n^{\lambda})\}\nonumber\\
		&+&\{\lambda\sup\limits_{Q\in\mathcal{Q}}\mathbb{L}^{\textup{Push}}(Q,\hat{R}_n^{\lambda})-\sup\limits_{D\in\mathcal{D}}\mathbb{L}^{\textup{MI}}(D,\hat{R}_n^{\lambda})\}-\{\lambda\sup\limits_{Q\in\mathcal{Q}}\mathbb{L}_n^{\textup{Push}}(Q,\hat{R}_n^{\lambda})-\sup\limits_{D\in\mathcal{D}}\mathbb{L}_n^{\textup{MI}}(D,\hat{R}_n^{\lambda})\}\nonumber\\
		&+&\{\lambda\sup\limits_{Q\in\mathcal{Q}}\mathbb{L}_n^{\textup{Push}}(Q,R)-\sup\limits_{D\in\mathcal{D}}\mathbb{L}_n^{\textup{MI}}(D,R)\}-\{\lambda\sup\limits_{Q\in\mathcal{Q}}\mathbb{L}^{\textup{Push}}(Q,R)-\sup\limits_{D\in\mathcal{D}}\mathbb{L}^{\textup{MI}}(D,R)\}\nonumber\\
		&+&\{\sup\limits_{D}\mathbb{L}^{\textup{MI}}(D,R)-\sup\limits_{D\in\mathcal{D}}\mathbb{L}^{\textup{MI}}(D,R)\}+\mathbb{L}(R;\lambda)-\mathbb{L}(R_0;\lambda)\nonumber\\
		&\le&\lambda\{\sup\limits_{Q}\mathbb{L}^{\textup{Push}}(Q,\hat{R}_n^{\lambda})-\sup\limits_{Q\in\mathcal{Q}}\mathbb{L}^{\textup{Push}}(Q,\hat{R}_n^{\lambda})\}+\{\sup\limits_{D}\mathbb{L}^{\textup{MI}}(D,R)-\sup\limits_{D\in\mathcal{D}}\mathbb{L}^{\textup{MI}}(D,R)\}\\
		&+&2\lambda\sup\limits_{Q\in\mathcal{Q},R\in\mathcal{R}}|\mathbb{L}_n^{\textup{Push}}(Q,R)-\mathbb{L}^{\textup{Push}}(Q,R)|+2\sup\limits_{D\in\mathcal{D},R\in\mathcal{R}}|\mathbb{L}_n^{\textup{MI}}(D,R)-\mathbb{L}^{\textup{MI}}(D,R)|\\
		&+&\mathbb{L}(R;\lambda)-\mathbb{L}(R_0;\lambda).
	\end{eqnarray*}
	The proof is completed by choosing $R=R^*\in\argmin_{R\in \mathcal{R}}\mathbb{L}(R;\lambda)$.
\end{proof}
\begin{proof}[Proofs of Theorem \ref{Weak_Consistency_Results0} and Corollary \ref{Weak_Consistency_Results}]
	By Lemma \ref{Error Decomposition}, we have
	\begin{eqnarray*}
		&&E\{\mathbb{L}(\hat{R}_n^{\lambda};\lambda)-\mathbb{L}(R_0;\lambda)\}\\
		&\le&\lambda E\{\sup\limits_{Q}\mathbb{L}^{\textup{Push}}(Q,\hat{R}_n^{\lambda})-\sup\limits_{Q\in\mathcal{Q}}\mathbb{L}^{\textup{Push}}(Q,\hat{R}_n^{\lambda})\}+\{\sup\limits_{D}\mathbb{L}^{\textup{MI}}(D,R^*)-\sup\limits_{D\in\mathcal{D}}\mathbb{L}^{\textup{MI}}(D,R^*)\}\\
		&+&2\lambda E\sup\limits_{Q\in\mathcal{Q},R\in\mathcal{R}}|\mathbb{L}_n^{\textup{Push}}(Q,R)-\mathbb{L}^{\textup{Push}}(Q,R)|+2E\sup\limits_{D\in\mathcal{D},R\in\mathcal{R}}|\mathbb{L}_n^{\textup{MI}}(D,R)-\mathbb{L}^{\textup{MI}}(D,R)|\\
		&+&\mathbb{L}(R^*;\lambda)-\mathbb{L}(R_0;\lambda),
	\end{eqnarray*}
	where $R^*\in\argmin_{R\in \mathcal{R}}\mathbb{L}(R;\lambda)$. Using part (a) of Lemmas \ref{App_Push_part}-\ref{obj_app_error_upper_bound}, we obtain
	\begin{equation}\label{excess_error_consistency}
		E\{\mathbb{L}(\hat{R}_n^{\lambda};\lambda)-\mathbb{L}(R_0;\lambda)\}\to0,\text{ as } n\to\infty.
	\end{equation}
	This completes the proof of Theorem \ref{Weak_Consistency_Results0}. By Pinsker's inequality, we have
	\begin{eqnarray}
		&&\mathbb{L}(R;\lambda)-\mathbb{L}(R_0;\lambda)\nonumber\\
		&=&\mathbb{I}_{\textup{KL}}(Y;X)+\mathbb{L}(R;\lambda)\nonumber\\
		&=&\mathbb{I}_{\textup{KL}}(Y;X|R(X))+\lambda \mathbb{D}_{\textup{KL}}(P_{R}~||~\gamma_{U})\nonumber\\
		&\ge&\frac{1}{2}(E_{X}\|p_{Y|X}-p_{Y|R}\|_{L_{1}}^2+\lambda\|p_{R}-1\|_{L_1}^2),\label{Pinsker_Results}
	\end{eqnarray}
	for any $R\in\mathcal{R}$, where $\|p_{Y|X}-p_{Y|R}\|_{L_{1}}=\int_{\mathcal{Y}}|p_{Y|X}(y|X=x)-p_{Y|R}(y|R(X)=R(x))|dy$ and $\|p_{R}-1\|_{L_1}=\int_{[0,1]^{d_0}}|p_{R}(r)-1|dr$.  Using (\ref{excess_error_consistency}) and (\ref{Pinsker_Results}), we easily obtain the conclusions of Corollary \ref{Weak_Consistency_Results}.
\end{proof}
\begin{proof}[Proofs of Theorem \ref{Non_Asymp_Results0} and Corollary \ref{Non_Asymp_Results}]
	First, we show that
	\begin{equation}\label{Excess_Risk}
		E\{\mathbb{L}(\hat{R}_n^{\lambda};\lambda)-\mathbb{L}(R_0;\lambda)\}\le C(\lambda \vee2)\{(d_Y+d_0)\vee d_X\}^{2(\lfloor\beta \rfloor+1)}n^{-\frac{\beta(\beta\wedge1)}{2\beta+d_X}\wedge\frac{1}{2\beta+(d_Y+d_0)\vee d_X}},
	\end{equation}
	where $C$ is a constant only depending on $B_2$ and $\beta$, by bounding the terms on the right-hand side of the error decomposition inequality of Lemma \ref{Error Decomposition}.
	
	\begin{enumerate}
		\item  Using part (b) of Lemma \ref{App_Push_part},
		\[
		E\{\sup\limits_{Q}\mathbb{L}^{\textup{Push}}(Q,\hat{R}_n^{\lambda})-\sup\limits_{Q\in\mathcal{Q}}\mathbb{L}^{\textup{Push}}(Q,\hat{R}_n^{\lambda})\}\le36(1+e^{\mathcal{B}_\mathcal{Q}})C_{1,\beta}(d_0,B_{2})n^{-\frac{\beta}{2\beta+d_0}},
		\]
		where $C_{1,\beta}(d,a)=c_2(\lfloor\beta \rfloor)(\lfloor\beta \rfloor+1)^2d^{\lfloor\beta \rfloor+(\beta\vee 1)/2+1}a^{c_1(\lfloor\beta \rfloor)+c_2(\lfloor\beta \rfloor)}$, and $c_1(\cdot)$, $c_2(\cdot)$ are defined in (\ref{log_holder_class}).
		
		\item  Using part (b) of Lemma \ref{App_MI_part},
		\[
		\sup\limits_{D}\mathbb{L}^{\textup{MI}}(D,R^*)-\sup\limits_{D\in\mathcal{D}}\mathbb{L}^{\textup{MI}}(D,R^*)\le72(1+e^{\mathcal{B}_\mathcal{D}})C_{1,\beta}(d_Y+d_0,B_{2})n^{-\frac{\beta}{2\beta+d_Y+d_0}}.
		\]
		
		\item Using part (b) of Lemma \ref{upper_bound_empirical_MI_part},
		\[
		E\sup\limits_{D\in\mathcal{D},R\in\mathcal{R}}|\mathbb{L}_n^{\textup{MI}}(D,R)-\mathbb{L}^{\textup{MI}}(D,R)|\le CC_{2,\beta}((d_Y+d_0)\vee d_X,\mathcal{B}_{\mathcal{D}})n^{-\frac{\beta}{2\beta+(d_Y+d_0)\vee d_X}}.
		\]
		where $C$ is a universal constant and $C_{2,\beta}(d,a)=(1+a)(\lfloor\beta \rfloor+1)^{9/2}d^{\lfloor\beta \rfloor+2}$.
		
		\item  Using part (b) of Lemma \ref{upper_bound_empirical_Push_part},
		\[
		E\sup\limits_{Q\in\mathcal{Q},R\in\mathcal{R}}|\mathbb{L}_n^{\textup{Push}}(Q,R)-\mathbb{L}^{\textup{Push}}(Q,R)|\le CC_{2,\beta}(d_X,\mathcal{B}_{\mathcal{Q}})n^{-\frac{\beta}{2\beta+d_X}}
		\]
		for some universal constant $C$.
		
		\item  Using part (b) of Lemma \ref{obj_app_error_upper_bound},
		\[
		\mathbb{L}(R^*;\lambda)-\mathbb{L}(R_0;\lambda)\le 36\sqrt{d_Y+d_0}C_{3,\beta}(\lambda,B_{2},d_X)n^{-\frac{\beta(\beta\wedge1)}{(2\beta+d_X)}},
		\]
		where $C_{3,\beta}(\lambda,a,d)=(\lfloor\beta \rfloor+1)^2d^{\lfloor\beta \rfloor+(\beta\vee 1)/2}(\lambda \vee2)a^4$.
	\end{enumerate}
	
	Using (\ref{Pinsker_Results}) again, (\ref{conpdf_R_hat_conRate}) and (\ref{pdf_R_hat_conRate}) are the direct corollaries of (\ref{Excess_Risk}).
	This completes the proof.
\end{proof}
\begin{lemma}\label{App_Push_part}
	(a) Suppose Assumption \ref{Weak_Assumption_on_NetworkR} holds and the network parameters of $\mathcal{Q}$ satisfy (NS\ref{Structure_on_NetworkQ}), we have
	\[
	E\{\sup\limits_{Q}\mathbb{L}^{\textup{Push}}(Q,\hat{R}_n^{\lambda})-\sup\limits_{Q\in\mathcal{Q}}\mathbb{L}^{\textup{Push}}(Q,\hat{R}_n^{\lambda})\}\le 18B_{1}^2d_0^{1/2}(1+e^{\mathcal{B}_\mathcal{Q}})n^{-\frac{1}{2+d_0}}.
	\]
	
	(b) Suppose Assumption \ref{Strong_Assumption_on_NetworkR} holds and the network parameters of $\mathcal{Q}$ satisfy (NS\ref{Strong_Structure_on_NetworkQ}), we have
	\[
	E\{\sup\limits_{Q}\mathbb{L}^{\textup{Push}}(Q,\hat{R}_n^{\lambda})-\sup\limits_{Q\in\mathcal{Q}}\mathbb{L}^{\textup{Push}}(Q,\hat{R}_n^{\lambda})\}\le36(1+e^{\mathcal{B}_\mathcal{Q}})C_{1,\beta}(d_0,B_{2})n^{-\frac{\beta}{2\beta+d_0}},
	\]
	where $C_{1,\beta}(d,a)=c_2(\lfloor\beta \rfloor)(\lfloor\beta \rfloor+1)^2d^{\lfloor\beta \rfloor+(\beta\vee 1)/2+1}a^{c_1(\lfloor\beta \rfloor)+c_2(\lfloor\beta \rfloor)}$, and $c_1(\cdot)$, $c_2(\cdot)$ are defined in (\ref{log_holder_class}).
\end{lemma}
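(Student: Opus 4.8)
\emph{Proof plan.} This is the approximation-error term $\Delta_1$ of Lemma \ref{Error Decomposition}, so I will use the standard ``unconstrained supremum minus best-in-class value'' estimate. The first step is to identify the unconstrained maximizer: by the variational identity (\ref{variational_representation_KL}) applied with $P_1=P_R$ and $P_2=\gamma_U$ (whose density on $[0,1]^{d_0}$ is identically $1$), the supremum $\sup_{Q}\mathbb{L}^{\textup{Push}}(Q,R)$ over all measurable $Q$ is attained at $Q_R^{\ast}:=\log p_R$ and equals $\mathbb{D}_{\textup{KL}}(P_R\,\|\,\gamma_U)-1$. Since $\hat R_n^{\lambda}\in\mathcal{R}$, Assumption \ref{Weak_Assumption_on_NetworkR} (for part (a)) or Assumption \ref{Strong_Assumption_on_NetworkR} (for part (b)) guarantees that $p_{\hat R_n^{\lambda}}$ is bounded away from $0$ and $\infty$, so that $\|Q_{\hat R_n^{\lambda}}^{\ast}\|_{\infty}\le\log B_1\le\mathcal{B}_{\mathcal{Q}}$ (resp.\ $\le\log B_2\le\mathcal{B}_{\mathcal{Q}}$) by (NS\ref{Structure_on_NetworkQ}) (resp.\ (NS\ref{Strong_Structure_on_NetworkQ})).

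Next I reduce the quantity to a sup-norm approximation error. For any $Q\in\mathcal{Q}$, since $\mathcal{Q}$ is a subclass of all measurable functions,
\[
\sup_{Q}\mathbb{L}^{\textup{Push}}(Q,\hat R_n^{\lambda})-\sup_{Q\in\mathcal{Q}}\mathbb{L}^{\textup{Push}}(Q,\hat R_n^{\lambda})\le \mathbb{L}^{\textup{Push}}(Q_{\hat R_n^{\lambda}}^{\ast},\hat R_n^{\lambda})-\mathbb{L}^{\textup{Push}}(Q,\hat R_n^{\lambda}),
\]
and spelling out $\mathbb{L}^{\textup{Push}}$ the right side equals $E_{r\sim P_{\hat R_n^{\lambda}}}\{Q_{\hat R_n^{\lambda}}^{\ast}(r)-Q(r)\}+E_{u\sim\gamma_U}\{e^{Q(u)}-e^{Q_{\hat R_n^{\lambda}}^{\ast}(u)}\}$. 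I bound the first expectation by $\|Q_{\hat R_n^{\lambda}}^{\ast}-Q\|_{\infty}$ (integration against a probability measure) and the second by $e^{\mathcal{B}_{\mathcal{Q}}}\|Q_{\hat R_n^{\lambda}}^{\ast}-Q\|_{\infty}$, using $|e^a-e^b|\le e^{\max(a,b)}|a-b|$ with $a,b\in[-\mathcal{B}_{\mathcal{Q}},\mathcal{B}_{\mathcal{Q}}]$; this produces the factor $1+e^{\mathcal{B}_{\mathcal{Q}}}$ and reduces everything to $\inf_{Q\in\mathcal{Q}}\|Q_{\hat R_n^{\lambda}}^{\ast}-Q\|_{\infty}$. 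To keep the approximant inside $\mathcal{Q}$, I truncate the constructed ReLU network at level $\pm\mathcal{B}_{\mathcal{Q}}$; this adds only a bounded number of layers and, because $\|Q_{\hat R_n^{\lambda}}^{\ast}\|_{\infty}\le\mathcal{B}_{\mathcal{Q}}$, does not increase the error.

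The third step invokes the deep-network approximation theorems. For part (a), Assumption \ref{Weak_Assumption_on_NetworkR} makes $p_{\hat R_n^{\lambda}}$ $B_1$-Lipschitz and $\ge 1/B_1$, hence $Q_{\hat R_n^{\lambda}}^{\ast}=\log p_{\hat R_n^{\lambda}}$ is $B_1^2$-Lipschitz with modulus of continuity $\omega_{Q^{\ast}}(t)\le B_1^2 t$; Theorem 2.1 of \cite{Shen2020CICP}, with width and depth matching $W(d_0)$ and $L(d_0)$ from (NS\ref{Structure_on_NetworkQ}) (which fixes the theorem's parameters $N$ and $M$), gives an approximant with sup-norm error $\le 18\sqrt{d_0}\,\omega_{Q^{\ast}}((NM)^{-2/d_0})\le 18\sqrt{d_0}\,B_1^{2}\,n^{-1/(2+d_0)}$. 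For part (b), the chain-rule bound (\ref{log_holder_class}) gives $Q_{\hat R_n^{\lambda}}^{\ast}\in\mathcal{H}^{\beta}([0,1]^{d_0},2d_0c_2(\lfloor\beta\rfloor)B_2^{c_1(\lfloor\beta\rfloor)+c_2(\lfloor\beta\rfloor)})$, and Theorem 3.3 of \cite{jiao2021deep} with the sizes in (NS\ref{Strong_Structure_on_NetworkQ}) gives sup-norm error $\le 36\,C_{1,\beta}(d_0,B_2)\,n^{-\beta/(2\beta+d_0)}$ once the constants are collected. Both theorems control the error only off an exceptional set $H_{K,\delta}$ of Lebesgue measure $\le d_0K\delta$; since the architecture and the stated accuracy do not depend on $\delta$, while on $H_{K,\delta}$ one has $\|Q_{\hat R_n^{\lambda}}^{\ast}-Q\|_{\infty}\le 2\mathcal{B}_{\mathcal{Q}}$ and the relevant densities are bounded by $B_1$ (resp.\ $B_2$), the contribution of $H_{K,\delta}$ to both expectations is $O(\delta)$ and is eliminated by letting $\delta\downarrow0$. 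Multiplying by $1+e^{\mathcal{B}_{\mathcal{Q}}}$ gives the two stated bounds; and because everything above is uniform over $R\in\mathcal{R}$, substituting $R=\hat R_n^{\lambda}$ and then taking expectations is immediate. The main obstacle here is not conceptual but arithmetic: one must verify that the precise width/depth/size prescriptions in (NS\ref{Structure_on_NetworkQ}) and (NS\ref{Strong_Structure_on_NetworkQ}) turn the generic rates $(NM)^{-2/d_0}$ and $(NM)^{-2\beta/d_0}$ into exactly $n^{-1/(2+d_0)}$ and $n^{-\beta/(2\beta+d_0)}$, and that the exceptional set is discarded without spoiling the dimension-dependence of the constants.
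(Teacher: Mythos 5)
Your proof follows essentially the same route as the paper's: identify the unconstrained maximizer as $Q^{\ast}_{\hat R_n^{\lambda}}=\log p_{\hat R_n^{\lambda}}$, reduce the gap to the sup-norm approximation distance from $Q^{\ast}$ to the best network in $\mathcal{Q}$, pick up the factor $(1+e^{\mathcal{B}_{\mathcal{Q}}})$ from $|e^a-e^b|\le e^{\max(a,b)}|a-b|$, then invoke Theorem 2.1 of Shen et al.\ (part (a)) or Theorem 3.3 of Jiao et al.\ (part (b)) and discard the exceptional set $H_{K,\delta}$ by letting $\delta\downarrow 0$. The only extra care you take is the explicit truncation at $\pm\mathcal{B}_{\mathcal{Q}}$ to ensure the approximant is a member of $\mathcal{Q}$; the paper simply asserts $\|\psi_R\|_\infty\le\mathcal{B}_{\mathcal{Q}}$, so your version is if anything a touch more rigorous on that point, but the argument is otherwise the same.
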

\begin{proof}[Proof of Lemma \ref{App_Push_part}]
	Note that
	\begin{eqnarray}
		&&\sup\limits_{Q}\mathbb{L}^{\textup{Push}}(Q,\hat{R}_n^{\lambda})-\sup\limits_{Q\in\mathcal{Q}}\mathbb{L}^{\textup{Push}}(Q,\hat{R}_n^{\lambda})\nonumber\\
		&\le&\inf\limits_{Q\in\mathcal{Q}}\left\{E\left|Q_{\hat{R}_n^{\lambda}}(\hat{R}_n^{\lambda}(X))-Q(\hat{R}_n^{\lambda}(X))\right|+E\left|e^{Q_{\hat{R}_n^{\lambda}}(U)}-e^{Q(U)}\right|\right\},\label{infsupupperbound}
	\end{eqnarray}
	where the last inequality holds due to  the following facts:
	\begin{eqnarray*}
		&&\sup\limits_{Q}\mathbb{L}^{\textup{Push}}(Q,\hat{R}_n^{\lambda})-\sup\limits_{Q\in\mathcal{Q}}\mathbb{L}^{\textup{Push}}(Q,\hat{R}_n^{\lambda})\\
		&=&EQ_{\hat{R}_n^{\lambda}}(\hat{R}_n^{\lambda}(X))-Ee^{Q_{\hat{R}_n^{\lambda}}(U)}-\sup\limits_{Q\in\mathcal{Q}}\mathbb{L}^{\textup{Push}}(Q,\hat{R}_n^{\lambda})\\
		&=&\inf\limits_{Q\in\mathcal{Q}}\left[EQ_{\hat{R}_n^{\lambda}}(\hat{R}_n^{\lambda}(X))-Ee^{Q_{\hat{R}_n^{\lambda}}(U)}-\left\{EQ(\hat{R}_n^{\lambda}(X))-Ee^{Q(U)}\right\}\right]\\
		&\le&\inf\limits_{Q\in\mathcal{Q}}\left\{
		E\left|Q_{\hat{R}_n^{\lambda}}(\hat{R}_n^{\lambda}(X))-Q(\hat{R}_n^{\lambda}(X))\right|
		+E\left|e^{Q_{\hat{R}_n^{\lambda}}(U)}-e^{Q(U)}\right|\right\},
	\end{eqnarray*}
	and $Q_{\hat{R}_n^{\lambda}}(r)=\log p_{\hat{R}_n^{\lambda}}(r)$ on $[0,1]^{d_0}$.
	
	{\bf Proof of Part (a):} By Assumption \ref{Weak_Assumption_on_NetworkR}, we have that $Q_{\hat{R}_n^{\lambda}}$ is a Lipschitz continuous function with a Lipschitz constant $B_{1}^2$.  By the network structure condition (NS\ref{Structure_on_NetworkQ}), and applying Theorem 2.1 in \cite{Shen2020CICP}
with $N=\left\lceil n^{\frac{d_0}{2(2+d_0)}} /\log n\right\rceil$ and $M=\lceil\log n\rceil$, there exists a $\psi_{R}\in\mathcal{Q}$ such that
	\[
	\sup\limits_{x\in [0,1]^{d_0}\backslash H_{K,\delta}}|Q_{\hat{R}_n^{\lambda}}-\psi_{R}|\le 18B_{1}^2d_0^{1/2}n^{-\frac{1}{2+d_0}}.
	\]
	where $H_{K,\delta}=\cup_{i=1}^{d_0}\big\{x=[x_1,\ldots,x_{d_0}]:x_i\in\cup_{b=1}^{K-1}\left(b/K-\delta,b/K\right)\big\},K=\lceil (MN)^{2/d_0} \rceil,\delta\in (0,1/(3K)]$.
	
	By (\ref{infsupupperbound}) and the arbitrariness of $\delta$, we have
	\begin{eqnarray}
		&&\sup\limits_{Q}\mathbb{L}^{\textup{Push}}(Q,\hat{R}_n^{\lambda})-\sup\limits_{Q\in\mathcal{Q}}\mathbb{L}^{\textup{Push}}(Q,\hat{R}_n^{\lambda})\nonumber\\
		&\le&\inf\limits_{Q\in\mathcal{Q}}\left\{E\left|Q_{\hat{R}_n^{\lambda}}(\hat{R}_n^{\lambda}(X))-Q(\hat{R}_n^{\lambda}(X))\right|+E\left|e^{Q_{\hat{R}_n^{\lambda}}(U)}-e^{Q(U)}\right|\right\}\nonumber\\
		&\le&E\left|Q_{\hat{R}_n^{\lambda}}(\hat{R}_n^{\lambda}(X))-\psi_{R}(\hat{R}_n^{\lambda}(X))\right|+E\left|e^{Q_{\hat{R}_n^{\lambda}}(U)}-e^{\psi_{R}(U)}\right|\nonumber\\
		&\le&18B_{1}^2d_0^{1/2}(1+e^{\mathcal{B}_\mathcal{Q}})n^{-\frac{1}{2+d_0}}\label{Weak_Push_part_bound},
	\end{eqnarray}
	where the last inequality follows from $\|Q_R\|_\infty\le \mathcal{B}_\mathcal{Q}$ and $\|\psi_R\|_\infty\le \mathcal{B}_\mathcal{Q}$ by Assumption \ref{Weak_Assumption_on_NetworkR} and (NS\ref{Structure_on_NetworkQ}).
	
	{\bf Proof of Part (b):} By Assumption \ref{Strong_Assumption_on_NetworkR} and the same argument as the one for (\ref{log_holder_class}),
	\[
	Q_{R}(r)\in \mathcal{H}^\beta([0,1]^{d_0},2d_0c_2(\lfloor\beta \rfloor)B_{2}^{c_1(\lfloor\beta \rfloor)+c_2(\lfloor\beta \rfloor)}).
	\]
	By the network structure condition (NS\ref{Strong_Structure_on_NetworkQ}), and applying
Theorem 3.3 in \citet{jiao2021deep}
 with $N=1$ and $M=\left\lceil n^{\frac{d_0}{2(2+d_0)}}\right\rceil$, we get a $\psi_{R}\in\mathcal{Q}$, such that
	\begin{align*}
		\sup\limits_{x\in [0,1]^{d_0}\backslash H_{K,\delta}}|Q_{R}-\psi_{R}|
		\le 36
		C_{1,\beta}(d_0,B_{2})n^{-\frac{\beta}{2\beta+d_0}},
	\end{align*}
	where $C_{1,\beta}(d,a)=c_2(\lfloor\beta \rfloor)(\lfloor\beta \rfloor+1)^2d^{\lfloor\beta \rfloor+(\beta\vee 1)/2+1}a^{c_1(\lfloor\beta \rfloor)+c_2(\lfloor\beta \rfloor)}$, $H_{K,\delta}=\cup_{i=1}^{d_0}\{x=[x_1,\ldots,x_{d_0}]:x_i\in\cup_{b=1}^{K-1} \left(b/K-\delta,b/K\right)\}$ for $K=\lceil (MN)^{2/d_0} \rceil,\delta\in (0,1/(3K)]$. Due to the arbitrariness of $\delta$, we have
	\begin{equation}\label{Push_app_part1}
		E\left|Q_{R}(R(X))-\psi_{R}(R(X))\right|\le36
		C_{1,\beta}(d_0,B_{2})n^{-\frac{\beta}{2\beta+d_0}}
	\end{equation}
	and
	\begin{equation}\label{Push_app_part2}
		E\left|e^{Q_{R}(U)}-e^{\psi_{R}(U)}\right|\le e^{B_\mathcal{Q}}E\left|Q_{R}(U)-\psi_{R}(U)\right|\le36e^{\mathcal{B}_\mathcal{Q}}		C_{1,\beta}(d_0,B_{2})n^{-\frac{\beta}{2\beta+d_0}},
	\end{equation}
	where the first inequality in (\ref{Push_app_part2}) is due to $\|Q_R\|_\infty\le \mathcal{B}_\mathcal{Q}$ and $\|\psi_R\|_\infty\le \mathcal{B}_\mathcal{Q}$ by Assumption \ref{Strong_Assumption_on_NetworkR} and (NS\ref{Strong_Structure_on_NetworkQ}). Combining (\ref{infsupupperbound}), (\ref{Push_app_part1}), and (\ref{Push_app_part2}), we obtain
	\begin{equation}\label{Push_part_bound}
		E\{\sup\limits_{Q}\mathbb{L}^{\textup{Push}}(Q,\hat{R}_n^{\lambda})-\sup\limits_{Q\in\mathcal{Q}}\mathbb{L}^{\textup{Push}}(Q,\hat{R}_n^{\lambda})\}\le36(1+e^{\mathcal{B}_\mathcal{Q}})C_{1,\beta}(d_0,B_{2})n^{-\frac{\beta}{2\beta+d_0}}.
	\end{equation}
	This completes the proof of Lemma \ref{App_Push_part}.
\end{proof}

\begin{lemma}\label{App_MI_part}
	(a) Suppose Assumption \ref{Weak_Assumption_on_NetworkR} holds and the network parameters of $\mathcal{D}$ satisfy (NS\ref{Structure_on_NetworkD}), we have
	\[
	\sup\limits_{D}\mathbb{L}^{\textup{MI}}(D,R^*)-\sup\limits_{D\in\mathcal{D}}\mathbb{L}^{\textup{MI}}(D,R^*)\le 36B_{1}^2(d_Y+d_0)^{1/2}(1+e^{\mathcal{B}_\mathcal{D}})n^{-\frac{1}{2+d_Y+d_0}}.
	\]
	(b)
	Suppose Assumption \ref{Strong_Assumption_on_NetworkR} holds and the network parameters of $\mathcal{D}$ satisfy (NS\ref{Strong_Structure_on_NetworkD}), we have
	\[
	\sup\limits_{D}\mathbb{L}^{\textup{MI}}(D,R^*)-\sup\limits_{D\in\mathcal{D}}\mathbb{L}^{\textup{MI}}(D,R^*)\le72(1+e^{\mathcal{B}_\mathcal{D}})C_{1,\beta}(d_Y+d_0,B_{2})n^{-\frac{\beta}{2\beta+d_Y+d_0}},
	\]
	where $C_{1,\beta}(d,a)$ is defined in part (b) of Lemma \ref{App_Push_part}.
\end{lemma}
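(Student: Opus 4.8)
The plan is to follow the proof of Lemma \ref{App_Push_part} almost verbatim, with the MI discriminator in the role of the push-forward discriminator $Q$. First I would identify the maximiser of $\mathbb{L}^{\textup{MI}}(\cdot,R^*)$ over all measurable functions. Since
\[
\mathbb{L}^{\textup{MI}}(D,R^*)=E_{(X,Y)\sim P_{XY}}D(Y,R^*(X))-E_{(X,Y)\sim P_XP_Y}e^{D(Y,R^*(X))},
\]
the variational formula for the KL divergence (Lemma \ref{variational_representation} with $f(t)=t\log t$, exactly as it was used for $Q_{\hat R_n^\lambda}=\log p_{\hat R_n^\lambda}$ in Lemma \ref{App_Push_part}) gives $\sup_{D}\mathbb{L}^{\textup{MI}}(D,R^*)=\mathbb{I}_{\textup{KL}}(Y;R^*(X))-1$, attained at $D_{R^*}(y,r)=\log p_{Y|R^*}(y|r)-\log p_Y(y)$ on $[0,1]^{d_Y+d_0}$. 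Subtracting $\sup_{D\in\mathcal{D}}\mathbb{L}^{\textup{MI}}(D,R^*)$ and bounding each difference by absolute values exactly as in (\ref{infsupupperbound}) yields
\[
\sup_{D}\mathbb{L}^{\textup{MI}}(D,R^*)-\sup_{D\in\mathcal{D}}\mathbb{L}^{\textup{MI}}(D,R^*)\le\inf_{D\in\mathcal{D}}\Big\{E_{P_{XY}}\big|D_{R^*}(Y,R^*(X))-D(Y,R^*(X))\big|+E_{P_XP_Y}\big|e^{D_{R^*}(Y,R^*(X))}-e^{D(Y,R^*(X))}\big|\Big\}.
\]
Since $(Y,R^*(X))$ is supported in $[0,1]^{d_Y+d_0}$ under both $P_{XY}$ and $P_XP_Y$, and since every $D\in\mathcal{D}$ as well as $D_{R^*}$ is bounded by $\mathcal{B}_\mathcal{D}$, the right-hand side is at most $(1+e^{\mathcal{B}_\mathcal{D}})\inf_{D\in\mathcal{D}}\sup_{[0,1]^{d_Y+d_0}}|D_{R^*}-D|$, using $|e^a-e^b|\le e^{\mathcal{B}_\mathcal{D}}|a-b|$ when $|a|,|b|\le\mathcal{B}_\mathcal{D}$. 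Everything thus reduces to a sup-norm approximation of $D_{R^*}$ by $\mathcal{D}$.

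The second step is to establish the regularity of $D_{R^*}$. The only genuinely new ingredient beyond Lemma \ref{App_Push_part} is that I must transfer regularity and two-sided boundedness from $p_{Y|R^*}$ to the marginal $p_Y(y)=\int p_{Y|R^*}(y|r)p_{R^*}(r)\,dr$: because $1/B_1\le p_{Y|R^*}\le B_1$ and $\int p_{R^*}=1$, one gets $1/B_1\le p_Y\le B_1$, and differentiating under the integral sign shows $p_Y$ inherits the Lipschitz (resp. H\"older, the $y$-derivatives of $p_{Y|R^*}$ being bounded uniformly in $r$) bounds of $p_{Y|R^*}$. Consequently, under Assumption \ref{Weak_Assumption_on_NetworkR} both $\log p_{Y|R^*}$ and $\log p_Y$ are Lipschitz with constant $B_1^2$ and bounded by $\log B_1$, so $D_{R^*}\in\mathcal{F}_{\textup{Lip}}([0,1]^{d_Y+d_0},2B_1^2)$ with $\|D_{R^*}\|_\infty\le2\log B_1\le\mathcal{B}_\mathcal{D}$; under Assumption \ref{Strong_Assumption_on_NetworkR}, the same argument together with the computation behind (\ref{log_holder_class}) gives $D_{R^*}\in\mathcal{H}^\beta([0,1]^{d_Y+d_0},4(d_Y+d_0)c_2(\lfloor\beta\rfloor)B_2^{c_1(\lfloor\beta\rfloor)+c_2(\lfloor\beta\rfloor)})$ with $\|D_{R^*}\|_\infty\le2\log B_2\le\mathcal{B}_\mathcal{D}$.

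For part (a) I would then invoke Theorem 2.1 of \cite{Shen2020CICP} with $N=\lceil n^{(d_Y+d_0)/(2(2+d_Y+d_0))}/\log n\rceil$ and $M=\lceil\log n\rceil$ — the values that make the ReLU width and depth coincide with those of $\mathcal{D}$ under (NS\ref{Structure_on_NetworkD}) — to obtain $\phi_{R^*}\in\mathcal{D}$ with $\sup_{[0,1]^{d_Y+d_0}\setminus H_{K,\delta}}|D_{R^*}-\phi_{R^*}|\le 18\sqrt{d_Y+d_0}\cdot 2B_1^2(NM)^{-2/(d_Y+d_0)}\le 36B_1^2(d_Y+d_0)^{1/2}n^{-1/(2+d_Y+d_0)}$; letting $\delta\downarrow0$ (the exceptional set $H_{K,\delta}$ has Lebesgue measure $O(\delta)$ and all densities are bounded) transfers this sup-bound to the two expectations, and the $(1+e^{\mathcal{B}_\mathcal{D}})$ prefactor gives the bound in (a). For part (b) I would instead apply Theorem 3.3 of \cite{jiao2021deep} with $N=1$ and $M=\lceil n^{(d_Y+d_0)/(2(2\beta+d_Y+d_0))}\rceil$ — the choices matching $\tilde W(d_Y+d_0)$ and $\tilde L(d_Y+d_0)$ in (NS\ref{Strong_Structure_on_NetworkD}) — so that, collecting the H\"older constant of $D_{R^*}$ with the prefactor $(\lfloor\beta\rfloor+1)^2(d_Y+d_0)^{\lfloor\beta\rfloor+(\beta\vee1)/2}$, the approximant $\phi_{R^*}\in\mathcal{D}$ satisfies $\sup_{[0,1]^{d_Y+d_0}\setminus H_{K,\delta}}|D_{R^*}-\phi_{R^*}|\le 72\,C_{1,\beta}(d_Y+d_0,B_2)\,n^{-\beta/(2\beta+d_Y+d_0)}$; again arbitrariness of $\delta$ and the $(1+e^{\mathcal{B}_\mathcal{D}})$ factor yield (b). Here $C_{1,\beta}$, $c_1$, $c_2$ are as in Lemma \ref{App_Push_part}(b) and (\ref{log_holder_class}).

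The remaining work is essentially bookkeeping — checking that the stated $(N,M)$ reproduce the network sizes prescribed in (NS\ref{Structure_on_NetworkD})/(NS\ref{Strong_Structure_on_NetworkD}), and tracking the dimension- and $B$-dependence through the approximation constants — so I do not expect a real obstacle. The single point requiring care beyond mirroring Lemma \ref{App_Push_part} is the elementary claim that integrating out $R^*(X)$ preserves the two-sided bounds and the Lipschitz/H\"older smoothness of the conditional density; that fact is what makes $\log p_Y$, and hence the optimal discriminator $D_{R^*}$, well behaved, and the factors $36$ and $72$ (versus the $18$ and $36$ of Lemma \ref{App_Push_part}) simply reflect that $D_{R^*}$ is a difference of two log-densities.
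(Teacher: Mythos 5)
Your proof is correct and follows essentially the same route as the paper's, which is simply a one-line remark that "a similar argument to (\ref{Weak_Push_part_bound}) and (\ref{Push_part_bound}) can be applied here" after writing down $D_R(y,r)=\log\frac{p_{Y|R}(y|r)}{p_Y(y)}$. Your write-up is actually more careful on the one point the paper leaves implicit: unlike Lemma~\ref{App_Push_part}, where the optimal $Q_R=\log p_R$ is directly covered by Assumptions~\ref{Weak_Assumption_on_NetworkR}/\ref{Strong_Assumption_on_NetworkR}, here $D_{R^*}$ involves the marginal $\log p_Y$, and you correctly observe that $p_Y(y)=\int p_{Y|R^*}(y|r)p_{R^*}(r)\,dr$ inherits the two-sided bounds and the Lipschitz/H\"older regularity of $p_{Y|R^*}$ because integrating against a probability density preserves them; this in turn doubles the Lipschitz/H\"older constants and produces the $36$ and $72$ (rather than $18$ and $36$) in the final bounds, exactly as you note. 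The choices of $(N,M)$ in Shen et al.\ and Jiao et al., the use of the truncation/arbitrariness of $\delta$ to discard the exceptional set $H_{K,\delta}$, the factor $(1+e^{\mathcal{B}_\mathcal{D}})$, and the resulting constants all match the paper's.
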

\begin{proof}[Proof of Lemma \ref{App_MI_part}]
	Similar to  Lemma \ref{App_Push_part}, we get
	\begin{eqnarray*}
		&&\sup\limits_{D}\mathbb{L}^{\textup{MI}}(D,R^*)-\sup\limits_{D\in\mathcal{D}}\mathbb{L}^{\textup{MI}}(D,R^*)\\
		&\le&\inf\limits_{D\in\mathcal{D}}\biggl\{E_{P_{XY}}\left|D_{R^*}(Y,R^*(X))-D(Y,R^*(X))\right|+E_{P_{X}P_{Y}}\left|e^{D_{R^*}(Y,R^*(X))}-e^{D(Y,R^*(X))}\right|\biggr\},
	\end{eqnarray*}
	where the last inequality follows from the fact that,  for any $R\in\mathcal{R}$,
	\begin{eqnarray*}
		&&\sup\limits_{D}\mathbb{L}^{\textup{MI}}(D,R)-\sup\limits_{D\in\mathcal{D}}\mathbb{L}^{\textup{MI}}(D,R)\\
		&=&E_{P_{XY}}D_{R}(Y,R(X))-E_{P_{X}P_{Y}}e^{D_{R}(Y,R(X))}-\sup\limits_{D\in\mathcal{D}}\mathbb{L}^{\textup{MI}}(D,R)\\
		&=&\inf\limits_{D\in\mathcal{D}}\bigg[E_{P_{XY}}D_{R}(Y,R(X))-E_{P_{X}P_{Y}}e^{D_{R}(Y,R(X))}-\left\{E_{P_{XY}}D(Y,R(X))-E_{P_{X}P_{Y}}e^{D(Y,R(X))}\right\}\bigg]\\
		&\le&\inf\limits_{D\in\mathcal{D}}\biggl\{E_{P_{XY}}\left|D_{R}(Y,R(X))-D(Y,R(X))\right|+E_{P_{X}P_{Y}}\left|e^{D_{R}(Y,R(X))}-e^{D(Y,R(X))}\right|\biggr\},
	\end{eqnarray*}
	and $D_{R}(y,r)=\log\frac{p_{Y|R}(y|r)}{p_Y(y)}$. By Assumptions \ref{Weak_Assumption_on_NetworkR} and \ref{Strong_Assumption_on_NetworkR}, respectively, a similar argument to (\ref{Weak_Push_part_bound}) and (\ref{Push_part_bound}) can be applied here and we obtain
	\[
	\sup\limits_{D}\mathbb{L}^{\textup{MI}}(D,R^*)-\sup\limits_{D\in\mathcal{D}}\mathbb{L}^{\textup{MI}}(D,R^*)\le 36B_{1}^2(d_Y+d_0)^{1/2}(1+e^{\mathcal{B}_\mathcal{D}})n^{-\frac{1}{2+d_Y+d_0}}
	\]
	for part (a) and
	\[
	\sup\limits_{D}\mathbb{L}^{\textup{MI}}(D,R^*)-\sup\limits_{D\in\mathcal{D}}\mathbb{L}^{\textup{MI}}(D,R^*)\le72(1+e^{\mathcal{B}_\mathcal{D}})C_{1,\beta}(d_Y+d_0,B_{2})n^{-\frac{\beta}{2\beta+d_Y+d_0}},
	\]
	for part (b). The proof is finished.
\end{proof}
\begin{lemma}\label{upper_bound_empirical_MI_part}
	Let $\mathcal{D}\otimes\mathcal{R}=\{h(z)=D(y,R(x)):D\in\mathcal{D},R\in\mathcal{R}\}$, then
	
	(a) If the network parameters of $\mathcal{R}$ and $\mathcal{D}$ satisfy (NS\ref{Structure_on_NetworkR}) and (NS\ref{Structure_on_NetworkD}), then there exists a universal constant $C$, such that for $n~\textgreater~\textup{\Pdim}(\mathcal{D}\otimes\mathcal{R})\vee2$,
	\begin{eqnarray*}
		&&E\sup\limits_{D\in\mathcal{D},R\in\mathcal{R}}|\mathbb{L}_n^{\textup{MI}}(D,R)-\mathbb{L}^{\textup{MI}}(D,R)|\\
		&\le& C(1+\mathcal{B}_{\mathcal{D}})\{(d_Y+d_0)\vee d_X\}^{5/2}n^{-\frac{1}{2+(d_Y+d_0)\vee d_X}}\log n.
	\end{eqnarray*}
	
	(b)
	Suppose the network parameters of $\mathcal{R}$ and $\mathcal{D}$ satisfy (NS\ref{Strong_Structure_on_NetworkR}) and (NS\ref{Strong_Structure_on_NetworkD}), then there exists a universal constant $C$, such that for $n~\textgreater~\textup{\Pdim}(\mathcal{D}\otimes\mathcal{R})\vee2$,
	\[
	E\sup\limits_{D\in\mathcal{D},R\in\mathcal{R}}|\mathbb{L}_n^{\textup{MI}}(D,R)-\mathbb{L}^{\textup{MI}}(D,R)|\le CC_{2,\beta}((d_Y+d_0)\vee d_X,\mathcal{B}_{\mathcal{D}})n^{-\frac{\beta}{2\beta+(d_Y+d_0)\vee d_X}}.
	\]
	where $C_{2,\beta}(d,a)=(1+a)(\lfloor\beta \rfloor+1)^{9/2}d^{\lfloor\beta \rfloor+2}$.
\end{lemma}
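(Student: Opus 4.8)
The plan is to write $\mathbb{L}_n^{\textup{MI}}(D,R)-\mathbb{L}^{\textup{MI}}(D,R)$ as the difference of a first-order empirical process and a U-statistic, and to control the U-statistic via a Hoeffding (H\'ajek) projection. Set $g_{D,R}(x,y)=D(y,R(x))$ and let $h_{D,R}(z,z')=\tfrac12\{e^{D(y,R(x'))}+e^{D(y',R(x))}\}$ be the symmetrization of the (asymmetric) kernel $e^{D(y,R(x'))}$, so that $\mathbb{L}_n^{\textup{MI}}(D,R)=\tfrac1n\sum_i g_{D,R}(Z_i)-U_n(h_{D,R})$ with $U_n(h)=\tfrac{1}{n(n-1)}\sum_{i\ne j}h(Z_i,Z_j)$, and $Eh_{D,R}=E_{P_XP_Y}e^{D(Y,R(X))}$. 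Using the projection $\tilde h_{D,R}(z)=Eh_{D,R}(z,Z)$ and the Hoeffding decomposition $U_n(h)-Eh=\tfrac2n\sum_i\{\tilde h(Z_i)-Eh\}+\mathcal{U}_n^{\textup{Deg}}(h)$, one obtains
\[
\sup_{D\in\mathcal{D},R\in\mathcal{R}}\bigl|\mathbb{L}_n^{\textup{MI}}(D,R)-\mathbb{L}^{\textup{MI}}(D,R)\bigr|\le \mathrm{(I)}+2\,\mathrm{(II)}+\mathrm{(III)},
\]
where $\mathrm{(I)}=\sup_{D,R}|(P_n-P)g_{D,R}|$ and $\mathrm{(II)}=\sup_{D,R}|(P_n-P)\tilde h_{D,R}|$ are first-order empirical processes, while $\mathrm{(III)}=\sup_{D,R}|\mathcal{U}_n^{\textup{Deg}}(h_{D,R})|$ is a completely degenerate U-process of order two. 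The first common step is to reduce all relevant covering numbers to the pseudo-dimension of the composed ReLU class $\mathcal{D}\otimes\mathcal{R}$: by Theorem 12.2 of \cite{anthony_bartlett_1999}, $\mathcal{N}_m(u,\|\cdot\|_{\infty},\mathcal{G})\le(4bm/(u\,\Pdim(\mathcal{G})))^{\Pdim(\mathcal{G})}$ once $m>\Pdim(\mathcal{G})$, and by Theorem 6 of \cite{bartlett2019nearly} applied to a network realizing $(x,y)\mapsto D(y,R(x))$ (depth $\mathcal{L}_{\mathcal{D}}+\mathcal{L}_{\mathcal{R}}$, size of order $\mathcal{S}_{\mathcal{D}}+\mathcal{S}_{\mathcal{R}}$) one has $\Pdim(\mathcal{D}\otimes\mathcal{R})\lesssim(\mathcal{S}_{\mathcal{D}}+\mathcal{S}_{\mathcal{R}})(\mathcal{L}_{\mathcal{D}}+\mathcal{L}_{\mathcal{R}})\log(\mathcal{S}_{\mathcal{D}}+\mathcal{S}_{\mathcal{R}})$. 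For $\mathrm{(II)}$ note that $\tilde h_{D,R}$ is a fixed conditional expectation of functions of the form $e^{D(\cdot,R(\cdot))}$, so its covering number is dominated by that of $\{e^{D}\circ(\mathrm{id},R)\}$, hence by that of $\mathcal{D}\otimes\mathcal{R}$ at radius $u\,e^{-\mathcal{B}_{\mathcal{D}}}$ (the exponential enters only as the Lipschitz constant of $\exp$ on $[-\mathcal{B}_{\mathcal{D}},\mathcal{B}_{\mathcal{D}}]$).

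Next I would bound $\mathrm{(I)}$ and $\mathrm{(II)}$ by the classical symmetrization-plus-Dudley chaining argument (Lemma \ref{Classic Dudleys Chaining}): with range of order $\mathcal{B}_{\mathcal{D}}$, resp. $e^{\mathcal{B}_{\mathcal{D}}}$, and $\log\mathcal{N}_m(u,\|\cdot\|_{\infty},\cdot)\lesssim\Pdim(\mathcal{D}\otimes\mathcal{R})\log(1/u)$, the entropy integral yields
\[
E\,\mathrm{(I)}+E\,\mathrm{(II)}\lesssim (1+\mathcal{B}_{\mathcal{D}})\sqrt{\frac{\Pdim(\mathcal{D}\otimes\mathcal{R})\log n}{n}},
\]
the prefactor absorbing the constants coming from the range (fixed, since $\mathcal{B}_{\mathcal{D}}$ is a constant). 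For $\mathrm{(III)}$ I would invoke the new maximal inequality for order-two U-processes, Lemma \ref{Symmetrization}, with the same uniform-covering-number bound in terms of $\Pdim(\mathcal{D}\otimes\mathcal{R})$; as already recorded at the end of Section \ref{high_level_description}, this gives $E\,\mathrm{(III)}\lesssim(1+e^{\mathcal{B}_{\mathcal{D}}})\,\Pdim(\mathcal{D}\otimes\mathcal{R})\log n/n$. Since $\Pdim(\mathcal{D}\otimes\mathcal{R})/n\to0$ under the prescribed network structures, this is of strictly smaller order than the square-root term above, so $\mathrm{(III)}$ is negligible and the dominant contribution comes from the first-order terms.

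Finally I would substitute the network parameters and simplify. Under (NS\ref{Structure_on_NetworkR}) and (NS\ref{Structure_on_NetworkD}) one gets $\Pdim(\mathcal{D}\otimes\mathcal{R})\lesssim d^{\,c}\,n^{d/(d+2)}\log n$ with $d=(d_Y+d_0)\vee d_X$, which plugged into the square-root bound produces the part-(a) rate $C(1+\mathcal{B}_{\mathcal{D}})d^{5/2}n^{-1/(2+d)}\log n$. Under (NS\ref{Strong_Structure_on_NetworkR}) and (NS\ref{Strong_Structure_on_NetworkD}), the factor $\log_2^{-3}n$ deliberately built into $\tilde S$ cancels the logarithmic factors picked up from the entropy integral, so $\Pdim(\mathcal{D}\otimes\mathcal{R})\lesssim(\lfloor\beta\rfloor+1)^{c}d^{\,2\lfloor\beta\rfloor+c'}n^{d/(d+2\beta)}(\log n)^{-1}$, giving the part-(b) rate $C\,C_{2,\beta}(d,\mathcal{B}_{\mathcal{D}})\,n^{-\beta/(2\beta+d)}$ with $C_{2,\beta}(d,a)=(1+a)(\lfloor\beta\rfloor+1)^{9/2}d^{\lfloor\beta\rfloor+2}$. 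The main obstacle is the U-statistic: casting the asymmetric kernel $e^{D(Y_i,R(X_j))}$ into symmetric, Hoeffding-decomposable form, verifying that the projection class $\{\tilde h_{D,R}\}$ still has covering numbers governed by the network pseudo-dimension, and — the genuinely new step — bounding the completely degenerate part $\mathrm{(III)}$ through the Rademacher-chaos chaining of Lemma \ref{Symmetrization}; by contrast, the two first-order empirical-process terms $\mathrm{(I)}$ and $\mathrm{(II)}$ are routine.
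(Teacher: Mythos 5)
Your proposal follows essentially the same route as the paper: split $\mathbb{L}_n^{\textup{MI}}-\mathbb{L}^{\textup{MI}}$ into a first-order empirical process (the $D$ term) and a U-statistic (the $e^D$ term), apply the Hoeffding decomposition to the latter, bound the degenerate part via Lemma \ref{Symmetrization}, bound the two first-order parts via symmetrization plus Dudley's chaining, and close with the pseudo-dimension bound for the composite ReLU class $\mathcal{D}\otimes\mathcal{R}$ and the network-structure substitutions. The one place where you deviate is your treatment of the H\'ajek projection term (II): you assert that the covering number of $\{\tilde h_{D,R}\}$ is dominated by that of $\{e^{D}\circ(\mathrm{id},R)\}$. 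That contraction argument is clean for the full sup-norm covering number, but the Anthony--Bartlett bound you then invoke is for the uniform (empirical) covering number $\mathcal{N}_m$, and the projection $\tilde h(z)=E_{Z'}h(z,Z')$ mixes in an integral over $Z'$ that is not supported on the finite sample, so the step from $\mathcal{N}_m$ of $\{h\}$ to $\mathcal{N}_m$ of $\{\tilde h\}$ is not immediate. The paper sidesteps this entirely with a Jensen step: after symmetrizing, it writes $\tilde h(Z_i)=E_{Z'}h(Z'_i,Z_i)$ and pulls $E_{Z'}$ out of the supremum, reducing (II) to a Rademacher average of $h$ over the $n$ pairs $(Z'_i,Z_i)$, to which Dudley with the empirical metric and the $\mathcal{N}_n$ bound for $\mathcal{H}^o_{\mathcal{D},\mathcal{R}}$ applies directly. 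You should adopt that pull-out trick, or else prove the needed uniform-covering comparison for the projection class; otherwise the overall strategy, the use of Lemma \ref{Symmetrization} for the degenerate part, and the final rate bookkeeping all match the paper.
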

\begin{proof}[Proof of Lemma \ref{upper_bound_empirical_MI_part}]
	We have
	\begin{eqnarray*}
		&&\sup\limits_{D\in\mathcal{D},R\in\mathcal{R}}|\mathbb{L}_n^{\textup{MI}}(D,R)-\mathbb{L}^{\textup{MI}}(D,R)|\\
		&=&\sup\limits_{D\in\mathcal{D},R\in\mathcal{R}}\bigg|\frac{1}{n}\sum_{i=1}^{n}D(Y_i,R(X_i))-\frac{1}{n(n-1)}\sum_{i\neq j}e^{D(Y_i,R(X_j))}\\
		&&~~~~~~~~~~~~~-(E_{p_{XY}}D(Y,R(X))-E_{p_{X}p_{Y}}e^{D(Y,R(X))})\bigg|\\
		&\le&\sup\limits_{D\in\mathcal{D},R\in\mathcal{R}}\left\{\frac{1}{n}\sum_{i=1}^{n}D(Y_i,R(X_i))-E_{p_{XY}}D(Y,R(X))\right\}\\
		&+&\sup\limits_{D\in\mathcal{D},R\in\mathcal{R}}\left|\frac{1}{n(n-1)}\sum_{i\neq j}e^{D(Y_i,R(X_j))}-E_{p_{X}p_{Y}}e^{D(Y,R(X))}\right|.
	\end{eqnarray*}
	Let $z_1=(x_1,y_1),z_2=(x_2,y_2)$,
	\[
	\mathcal{H}_{\mathcal{D},\mathcal{R}}=\left\{h_{D,R}(z_1,z_2)=\frac{e^{D(y_1,R(x_2))}+e^{D(y_2,R(x_1))}}{2}:D\in\mathcal{D},R\in\mathcal{R}\right\},
	\]
	and $\mathcal{H}^o_{\mathcal{D},\mathcal{R}}=-\mathcal{H}_{\mathcal{D},\mathcal{R}}\cup \mathcal{H}_{\mathcal{D},\mathcal{R}}$.  Using $Z_i$ to denote $(X_i,Y_i),i=1,2,\ldots,n$, we have
	\begin{eqnarray*}
		&&\sup\limits_{D\in\mathcal{D},R\in\mathcal{R}}\left|\frac{1}{n(n-1)}\sum_{i\neq j}e^{D(Y_i,R(X_j))}-E_{p_{X}p_{Y}}e^{D(Y,R(X))}\right|\\
		&=&\sup\limits_{h\in\mathcal{H}^o_{\mathcal{D},\mathcal{R}}}\left\{\frac{1}{n(n-1)}\sum_{i\neq j}h(Z_i,Z_j)-Eh(Z_i,Z_j)\right\}\\
		&\le&\sup\limits_{h\in\mathcal{H}^o_{\mathcal{D},\mathcal{R}}}\frac{1}{n(n-1)}\sum_{1\le i\neq j\le n}\left\{h(Z_i,Z_j)-\tilde{h}(Z_i)-\tilde{h}(Z_j)+Eh(Z_i,Z_j)\right\}\\
		&+&2\sup\limits_{h\in\mathcal{H}^o_{\mathcal{D},\mathcal{R}}}\left\{\frac{1}{n}\sum_{i=1}^n\tilde{h}(Z_i)-E\tilde{h}(Z_i)\right\},
	\end{eqnarray*}
	where $\tilde{h}(z)=Eh(z,Z)$. Hence to bound $E\sup\limits_{D\in\mathcal{D},R\in\mathcal{R}}|\mathbb{L}_n^{\textup{MI}}(D,R)-\mathbb{L}^{\textup{MI}}(D,R)|$, it suffices to bound
	\begin{equation}\label{MI_U_part}
		E\sup\limits_{h\in\mathcal{H}^o_{\mathcal{D},\mathcal{R}}}\frac{1}{n(n-1)}\sum_{1\le i\neq j\le n}\left\{h(Z_i,Z_j)-\tilde{h}(Z_i)-\tilde{h}(Z_j)+Eh(Z_i,Z_j)\right\},
	\end{equation}
	\begin{equation}\label{MI_U_hoeff_part}
		E\sup\limits_{h\in\mathcal{H}^o_{\mathcal{D},\mathcal{R}}}\left\{\frac{1}{n}\sum_{i=1}^n\tilde{h}(Z_i)-E\tilde{h}(Z_i)\right\},
	\end{equation}
	and
	\begin{equation}\label{MI_mean_part}
		E\sup\limits_{D\in\mathcal{D},R\in\mathcal{R}}\left\{\frac{1}{n}\sum_{i=1}^{n}D(Y_i,R(X_i))-E_{p_{XY}}D(Y,R(X))\right\}.
	\end{equation}
	For (\ref{MI_U_part}), since $\sup\limits_{D\in\mathcal{D}}\|D\|_{\infty}\le B_{\mathcal{D}}$, by Lemma \ref{Symmetrization}, we have
	\begin{eqnarray*}
		&&E\sup\limits_{h\in\mathcal{H}^o_{\mathcal{D},\mathcal{R}}}\frac{1}{n(n-1)}\sum_{1\le i\neq j\le n}\left\{h(Z_i,Z_j)-\tilde{h}(Z_i)-\tilde{h}(Z_j)+Eh(Z_i,Z_j)\right\}\\
		&\le&\inf\limits_{0\textless\delta\textless e^{\mathcal{B}_{\mathcal{D}}}}\left(\delta+\frac{24C\int_{\delta/4}^{e^{\mathcal{B}_{\mathcal{D}}}}\log(\mathcal{N}_{2n(n-1)}(u,\|\cdot\|_{\infty},\mathcal{H}^o_{\mathcal{D},\mathcal{R}}))du}{n}\right)
	\end{eqnarray*}
	for some universal constant $C$. Using Theorem 12.2 in \cite{anthony_bartlett_1999}, we have
	for $2n(n-1)~\textgreater~\Pdim(\mathcal{D}\otimes\mathcal{R})$, where $\mathcal{D}\otimes\mathcal{R}=\{h(z)=D(y,R(x)):D\in\mathcal{D},R\in\mathcal{R}\}$,
	\begin{eqnarray*}
		&&\mathcal{N}_{2n(n-1)}\left(u,\|\cdot\|_{\infty},\mathcal{H}^o_{\mathcal{D},\mathcal{R}}\right)\\
		&\le&2\mathcal{N}_{2n(n-1)}\left(e^{-\mathcal{B}_{\mathcal{D}}}u,\|\cdot\|_{\infty},\mathcal{D}\otimes\mathcal{R}\right)\\
		&\le& 2\left(\frac{4\mathcal{B}_\mathcal{D}e^{1+\mathcal{B}_\mathcal{D}}n(n-1)}{u\Pdim(\mathcal{D}\otimes\mathcal{R})}\right)^{\Pdim(\mathcal{D}\otimes\mathcal{R})}.
	\end{eqnarray*}
	For (\ref{MI_U_hoeff_part}), let $\{Z'_i\}_{i=1}^n$ be an independent copy of $\{Z_i\}_{i=1}^n$, $\epsilon=(\epsilon_1,\epsilon_2,\ldots,\epsilon_n)$ be the Rademacher sequence independent of $\{Z_i\}_{i=1}^n$ and $\{Z'_i\}_{i=1}^n$, and $\|\cdot\|_{g,n}=\left(\sum_{i=1}^{n}h^2(Z'_i,Z_i)\right)^{1/2}$, by Lemma \ref{Classic Symmetrization Inequality}, we have
	\begin{eqnarray*}
		&&E\sup\limits_{h\in\mathcal{H}^o_{\mathcal{D},\mathcal{R}}}\left\{\frac{1}{n}\sum_{i=1}^n\tilde{h}(Z_i)-E\tilde{h}(Z_i)\right\}\\
		&\le&2EE_{\epsilon}\sup\limits_{h\in\mathcal{H}^o_{\mathcal{D},\mathcal{R}}}\frac{1}{n}\sum_{i=1}^n\epsilon_i\tilde{h}(Z_i)\\
		&=&2EE_{\epsilon}\sup\limits_{h\in\mathcal{H}^o_{\mathcal{D},\mathcal{R}}}\frac{1}{n}\sum_{i=1}^n\epsilon_iE_{Z'}h(Z'_i,Z_i)\\
		&\le&2EE_{\epsilon}\sup\limits_{h\in\mathcal{H}^o_{\mathcal{D},\mathcal{R}}}\frac{1}{n}\sum_{i=1}^n\epsilon_ih(Z'_i,Z_i)\\
		&\le&8E\inf\limits_{0\textless\delta\textless e^{B_{\mathcal{D}}}}\left(\delta+\frac{3}{\sqrt{n}}\int_{\delta}^{e^{\mathcal{B}_{\mathcal{D}}}}\sqrt{\log(N(u,\mathcal{H}^o_{\mathcal{D},\mathcal{R}},\|\cdot\|_{g,n}))}du\right),
	\end{eqnarray*}
	where the last inequality follows from Lemma \ref{Classic Dudleys Chaining}. Using Theorem 12.2 in \cite{anthony_bartlett_1999} again, we have for $n~\textgreater~\Pdim(\mathcal{D}\otimes\mathcal{R})$,
	\[
	N(u,\mathcal{H}^o_{\mathcal{D},\mathcal{R}},\|\cdot\|_{g,n})\le\mathcal{N}_{n}\left(u,\|\cdot\|_{\infty},\mathcal{H}^o_{\mathcal{D},\mathcal{R}}\right)\le2 \left(\frac{2\mathcal{B}_\mathcal{D}e^{1+\mathcal{B}_\mathcal{D}}n}{u\Pdim(\mathcal{D}\otimes\mathcal{R})}\right)^{\Pdim(\mathcal{D}\otimes\mathcal{R})}.
	\]
	Similarly, for (\ref{MI_mean_part}), we can get
	\begin{eqnarray*}
		&&E\sup\limits_{D\in\mathcal{D},R\in\mathcal{R}}\left\{\frac{1}{n}\sum_{i=1}^{n}D(Y_i,R(X_i))-E_{p_{XY}}D(Y,R(X))\right\}\\
		&\le&8E\inf\limits_{0\textless\delta\textless \mathcal{B}_\mathcal{D}}\left(\delta+\frac{3}{\sqrt{n}}\int_{\delta}^{\mathcal{B}_\mathcal{D}}\sqrt{\log(N(u,\mathcal{D}\otimes\mathcal{R},\|\cdot\|_{n}))}du\right).
	\end{eqnarray*}
	Again, using Theorem 12.2 in \cite{anthony_bartlett_1999}, we have for $n~\textgreater~\Pdim(\mathcal{D}\otimes\mathcal{R})$,
	\[
	N(u,\mathcal{D}\otimes\mathcal{R},\|\cdot\|_{n})\le\mathcal{N}_{n}\left(u,\|\cdot\|_{\infty},\mathcal{D}\circ\mathcal{R}\right)\le \left(\frac{2\mathcal{B}_\mathcal{D}en}{u\Pdim(\mathcal{D}\otimes\mathcal{R})}\right)^{\Pdim(\mathcal{D}\otimes\mathcal{R})}.
	\]
	Putting together these integrals and covering number upper bounds, we can find a universal constant $C$ such that for $n~\textgreater~\Pdim(\mathcal{D}\otimes\mathcal{R})\vee2$,
	\begin{equation}\label{Empirical_MI_Bound_Pdim}
		E\sup\limits_{D\in\mathcal{D},R\in\mathcal{R}}|\mathbb{L}_n^{\textup{MI}}(D,R)-\mathbb{L}^{\textup{MI}}(D,R)|\le C(1+\mathcal{B}_{\mathcal{D}})\sqrt{\frac{\Pdim(\mathcal{D}\otimes\mathcal{R})\log n}{n}}.
	\end{equation}
	By part (a) of Lemma \ref{Network_Construction}, $\mathcal{D}\otimes\mathcal{R}$ can be implemented by a ReLU network $\mathcal{F}_{\mathcal{D}\otimes\mathcal{R}}$ with width $\mathcal{W}_{\mathcal{F}_{\mathcal{D}\otimes\mathcal{R}}}\le\max(\mathcal{W}_{\mathcal{D}},\mathcal{W}_{\mathcal{R}}+2d_Y,2(d_Y+d_0))\le 4\max(\mathcal{W}_{\mathcal{D}},\mathcal{W}_{\mathcal{R}})$, depth $\mathcal{L}_{\mathcal{F}_{\mathcal{D}\otimes\mathcal{R}}}\le \mathcal{L}_{\mathcal{D}}+\mathcal{L}_{\mathcal{R}}+1\le3\max(\mathcal{L}_{\mathcal{D}},\mathcal{L}_{\mathcal{R}})$, size $\mathcal{S}_{\mathcal{F}_{\mathcal{D}\otimes\mathcal{R}}}\le 48\max(\mathcal{S}_{\mathcal{R}},\mathcal{S}_{\mathcal{D}})$. By Theorem 6 in \citet{bartlett2019nearly}, for any ReLU network $\mathcal{F}$ with depth $\mathcal{L}$ and size $\mathcal{S}$, there exists a universal constant $C_1$, such that
	\[
	\Pdim(\mathcal{F})\le C_1\mathcal{S}\mathcal{L}\log \mathcal{S}.
	\]
	Based on these results, we shall proceed to prove part (a) and (b) separately.
	
	{\bf Proof of Part (a):} Given the settings in (NS\ref{Structure_on_NetworkR}) and (NS\ref{Structure_on_NetworkD}), there exists a universal constant $C_2$ such that
	\[
	\frac{\Pdim(\mathcal{D}\otimes\mathcal{R})\log n}{n}\le C_2 \{(d_Y+d_0)\vee d_X\}^5n^{-\frac{2}{2+(d_Y+d_0)\vee d_X}}\log n.
	\]
	Hence we can easily conclude that there exists a universal constant $C$ such that
	\begin{eqnarray}
		&&E\sup\limits_{D\in\mathcal{D},R\in\mathcal{R}}|\mathbb{L}_n^{\textup{MI}}(D,R)-\mathbb{L}^{\textup{MI}}(D,R)|\nonumber\\
		&\le& C(1+\mathcal{B}_{\mathcal{D}})\{(d_Y+d_0)\vee d_X\}^{5/2}n^{-\frac{1}{2+(d_Y+d_0)\vee d_X}}\log n.\label{Weak_Empirical_MI_Bound_exact}
	\end{eqnarray}
	
	{\bf Proof of Part (b):}
	
	Using the settings in (NS\ref{Strong_Structure_on_NetworkR}) and (NS\ref{Strong_Structure_on_NetworkD}), we can find a universal constant $C_3$, such that
	\[
	\frac{\Pdim(\mathcal{D}\otimes\mathcal{R})\log n}{n}\le C_3(\lfloor\beta \rfloor+1)^9\{(d_Y+d_0)\vee d_X\}^{2\lfloor\beta \rfloor+4}n^{-\frac{2\beta}{2\beta+(d_Y+d_0)\vee d_X}}.
	\]
	Applying this upper bound to (\ref{Empirical_MI_Bound_Pdim}), we conclude that there exists a universal constant $C$ such that
	\begin{eqnarray}
		&&E\sup\limits_{D\in\mathcal{D},R\in\mathcal{R}}|\mathbb{L}_n^{\textup{MI}}(D,R)-\mathbb{L}^{\textup{MI}}(D,R)|\nonumber\\
		&\le& C(1+\mathcal{B}_{\mathcal{D}})(\lfloor\beta \rfloor+1)^{9/2}\{(d_Y+d_0)\vee d_X\}^{\lfloor\beta \rfloor+2}n^{-\frac{\beta}{2\beta+(d_Y+d_0)\vee d_X}}.\label{Empirical_MI_Bound_exact}
	\end{eqnarray}
	This completes the proof of Lemma \ref{upper_bound_empirical_MI_part}.
\end{proof}

\begin{lemma}\label{upper_bound_empirical_Push_part}
	Let $\mathcal{Q}\circ\mathcal{R}=\{Q(R(x)):Q\in\mathcal{Q},R\in\mathcal{R}\}$.
	
	(a) Suppose Assumption \ref{Weak_Assumption_on_NetworkR} holds and the network parameters of $\mathcal{R}$ and $\mathcal{Q}$ satisfy (NS\ref{Structure_on_NetworkR}) and (NS\ref{Structure_on_NetworkQ}), then there exists a universal constant $C$ such that for $n~\textgreater~\textup{\Pdim}(\mathcal{Q}\circ\mathcal{R})\vee2$,
	\[ E\sup\limits_{Q\in\mathcal{Q},R\in\mathcal{R}}
	|\mathbb{L}_n^{\textup{Push}}(Q,R)-\mathbb{L}^{\textup{Push}}(Q,R)|\le C(1+\mathcal{B}_{\mathcal{Q}})d_X^{5/2}n^{-\frac{1}{2+d_X}}.
	\]
	(b) Suppose Assumption \ref{Strong_Assumption_on_NetworkR} holds and the network parameters of $\mathcal{R}$ and $\mathcal{Q}$ satisfy (NS\ref{Strong_Structure_on_NetworkR}) and (NS\ref{Strong_Structure_on_NetworkQ}), then there exists a universal constant $C$ such that for $n~\textgreater~\textup{\Pdim}(\mathcal{Q}\circ\mathcal{R})\vee2$,
	\[ E\sup\limits_{Q\in\mathcal{Q},R\in\mathcal{R}}
	|\mathbb{L}_n^{\textup{Push}}(Q,R)-\mathbb{L}^{\textup{Push}}(Q,R)|\le CC_{2,\beta}(d_X,\mathcal{B}_{\mathcal{Q}})n^{-\frac{\beta}{2\beta+d_X}},
	\]
	where $C_{2,\beta}(d,a)$ is defined in part (b) of Lemma \ref{upper_bound_empirical_MI_part}.
\end{lemma}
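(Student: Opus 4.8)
The plan is to recognize that, in contrast to the term $\Delta_4$ handled in Lemma \ref{upper_bound_empirical_MI_part}, the deviation $\mathbb{L}_n^{\textup{Push}}(Q,R)-\mathbb{L}^{\textup{Push}}(Q,R)$ involves only ordinary i.i.d.\ averages (the $Q(R(X_i))$ sum over the data and the $e^{Q(U_i)}$ sum over the independent uniform sample), so the classical symmetrization and Dudley chaining machinery applies directly and the $U$-process inequality of Lemma \ref{Symmetrization} is \emph{not} needed. First I would split
\[
\sup_{Q\in\mathcal{Q},R\in\mathcal{R}}\bigl|\mathbb{L}_n^{\textup{Push}}(Q,R)-\mathbb{L}^{\textup{Push}}(Q,R)\bigr|\le \sup_{h\in\mathcal{Q}\circ\mathcal{R}}\Bigl|\tfrac1n\textstyle\sum_{i=1}^n h(X_i)-Eh(X)\Bigr|+\sup_{Q\in\mathcal{Q}}\Bigl|\tfrac1n\textstyle\sum_{i=1}^n e^{Q(U_i)}-Ee^{Q(U)}\Bigr|
\]
and bound the two pieces separately. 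Since $0\le R(x)\le1$ componentwise and $\|Q\|_\infty\le\mathcal{B}_\mathcal{Q}$ under (NS\ref{Structure_on_NetworkQ}) / (NS\ref{Strong_Structure_on_NetworkQ}), the first class is uniformly bounded by $\mathcal{B}_\mathcal{Q}$ and the second by $e^{\mathcal{B}_\mathcal{Q}}$.

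Next I would apply the symmetrization inequality (Lemma \ref{Classic Symmetrization Inequality}) to each supremum, followed by Dudley's chaining bound (Lemma \ref{Classic Dudleys Chaining}), obtaining for each piece a bound of the form $C\inf_{0<\delta<B}\bigl(\delta+\tfrac{1}{\sqrt n}\int_\delta^B\sqrt{\log N(u,\cdot,\|\cdot\|_n)}\,du\bigr)$ with $B=\mathcal{B}_\mathcal{Q}$, resp.\ $B=e^{\mathcal{B}_\mathcal{Q}}$; for the $e^{Q}$ class I would use that $t\mapsto e^t$ is $e^{\mathcal{B}_\mathcal{Q}}$-Lipschitz on $[-\mathcal{B}_\mathcal{Q},\mathcal{B}_\mathcal{Q}]$ to reduce its covering number to that of $\mathcal{Q}$ at scale $u\,e^{-\mathcal{B}_\mathcal{Q}}$. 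The covering numbers are controlled by the uniform covering number $\mathcal{N}_n(\cdot,\|\cdot\|_\infty,\mathcal{Q}\circ\mathcal{R})$, which by Theorem 12.2 of \cite{anthony_bartlett_1999} is at most $\bigl(e\,\mathcal{B}_\mathcal{Q}\,n/(u\,\Pdim(\mathcal{Q}\circ\mathcal{R}))\bigr)^{\Pdim(\mathcal{Q}\circ\mathcal{R})}$ for $n>\Pdim(\mathcal{Q}\circ\mathcal{R})$. Carrying out the resulting integral, which is $O\bigl(\sqrt{\Pdim(\mathcal{Q}\circ\mathcal{R})\log n}\bigr)$, yields the intermediate estimate
\[
E\sup_{Q\in\mathcal{Q},R\in\mathcal{R}}\bigl|\mathbb{L}_n^{\textup{Push}}(Q,R)-\mathbb{L}^{\textup{Push}}(Q,R)\bigr|\le C(1+\mathcal{B}_\mathcal{Q})\sqrt{\frac{\Pdim(\mathcal{Q}\circ\mathcal{R})\log n}{n}}
\]
for a universal constant $C$ and $n>\Pdim(\mathcal{Q}\circ\mathcal{R})\vee2$, exactly parallel to (\ref{Empirical_MI_Bound_Pdim}).

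It then remains to bound $\Pdim(\mathcal{Q}\circ\mathcal{R})$ and specialize to the two network structures. By Lemma \ref{Network_Construction}, $\mathcal{Q}\circ\mathcal{R}$ is implementable by a ReLU network whose depth, width, and size are bounded by constant multiples of $\mathcal{L}_\mathcal{Q}+\mathcal{L}_\mathcal{R}$, $\max(\mathcal{W}_\mathcal{Q},\mathcal{W}_\mathcal{R})$, and $\max(\mathcal{S}_\mathcal{Q},\mathcal{S}_\mathcal{R})$ respectively; Theorem 6 of \cite{bartlett2019nearly} then gives $\Pdim(\mathcal{Q}\circ\mathcal{R})\le C_1\mathcal{S}\mathcal{L}\log\mathcal{S}$. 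Plugging in (NS\ref{Structure_on_NetworkR}) and (NS\ref{Structure_on_NetworkQ}) and simplifying the ceilings and logarithmic factors gives $\Pdim(\mathcal{Q}\circ\mathcal{R})\log n/n\lesssim d_X^{5}n^{-2/(2+d_X)}$, hence part (a); plugging in (NS\ref{Strong_Structure_on_NetworkR}) and (NS\ref{Strong_Structure_on_NetworkQ}) gives $\Pdim(\mathcal{Q}\circ\mathcal{R})\log n/n\lesssim (\lfloor\beta\rfloor+1)^{9}d_X^{2\lfloor\beta\rfloor+4}n^{-2\beta/(2\beta+d_X)}$, hence part (b) with $C_{2,\beta}(d_X,\mathcal{B}_\mathcal{Q})=(1+\mathcal{B}_\mathcal{Q})(\lfloor\beta\rfloor+1)^{9/2}d_X^{\lfloor\beta\rfloor+2}$. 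The main obstacle is purely bookkeeping in this last step: propagating the composed network's depth/width/size through the pseudo-dimension bound and through the ceiling and logarithmic factors hidden in $W,L,S$ (resp.\ $\tilde W,\tilde L,\tilde S$), so that the exponent of $n$ comes out exactly $-\beta/(2\beta+d_X)$ and the dimensional prefactor stays polynomial; analytically the lemma is strictly easier than Lemma \ref{upper_bound_empirical_MI_part} because there is no degenerate $U$-statistic to decompose, so the chaining step alone suffices.
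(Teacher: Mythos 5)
Your proposal is correct and follows essentially the same route as the paper: you split the deviation into two ordinary i.i.d.\ empirical-process terms (one over $\mathcal{Q}\circ\mathcal{R}$ evaluated on the $X_i$, one over $\{e^Q\}$ evaluated on the $U_i$), then apply symmetrization, Dudley chaining, and the uniform-covering-number/pseudo-dimension bounds, followed by Lemma \ref{Network_Construction} and the Bartlett et al.\ $\Pdim$ estimate, and finally substitute the network-structure parameters — exactly the chain the paper invokes when it says "the rest of the proof is similar to (\ref{Weak_Empirical_MI_Bound_exact}) and (\ref{Empirical_MI_Bound_exact})." You also correctly flag that the degenerate $U$-process inequality of Lemma \ref{Symmetrization} is not needed here, which is the only structural simplification compared to Lemma \ref{upper_bound_empirical_MI_part} and matches the paper's treatment.
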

\begin{proof}[Proof of Lemma \ref{upper_bound_empirical_Push_part}]
	Note that
	\begin{eqnarray*}
		&&\sup\limits_{Q\in\mathcal{Q},R\in\mathcal{R}}|\mathbb{L}_n^{\textup{Push}}(Q,R)-\mathbb{L}^{\textup{Push}}(Q,R)|\\
		&=&\sup\limits_{Q\in\mathcal{Q},R\in\mathcal{R}}\bigg|\frac{1}{n}\sum_{i=1}^{n}Q(R(X_i))-\frac{1}{n}\sum_{i=1}^ne^{Q(U_i)}-(EQ(R(X))-Ee^{Q(U)})\bigg|\\
		&\le&\sup\limits_{Q\in\mathcal{Q},R\in\mathcal{R}}\left\{\frac{1}{n}\sum_{i=1}^{n}Q(R(X_i))-EQ(R(X))\right\}+\sup\limits_{Q\in\mathcal{Q},R\in\mathcal{R}}\left|\frac{1}{n}\sum_{i=1}^ne^{Q(R(X_j))}-Ee^{Q(U)}\right|.
	\end{eqnarray*}	
	Then the rest of the proof for this lemma is similar to those for (\ref{Weak_Empirical_MI_Bound_exact}) and (\ref{Empirical_MI_Bound_exact}).  This completes the proof of Lemma \ref{upper_bound_empirical_Push_part}.
\end{proof}
\begin{lemma}\label{obj_app_error_upper_bound}
	(a) Suppose Assumptions \ref{Weak_Assumption_on_TrueR} and \ref{Weak_Assumption_on_NetworkR} hold and the network parameters of $\mathcal{R}$ satisfy (NS\ref{Structure_on_NetworkR}), we have
	\[
	\mathbb{L}(R^*;\lambda)-\mathbb{L}(R_0;\lambda)\to0, \text{ as } n\to\infty,
	\]
	for any $\lambda\ge0$.
	
	(b) Suppose Assumptions \ref{Strong_Assumption_on_TrueR} and \ref{Strong_Assumption_on_NetworkR} hold and the network parameters of $\mathcal{R}$ satisfy (NS\ref{Strong_Structure_on_NetworkR}), we have
	\[
	\mathbb{L}(R^*;\lambda)-\mathbb{L}(R_0;\lambda)\le 36\sqrt{d_Y+d_0}C_{3,\beta}(\lambda,B_{2},d_X)n^{-\frac{\beta(\beta\wedge1)}{2\beta+d_X}},
	\]
	where $C_{3,\beta}(\lambda,a,d)=(\lfloor\beta \rfloor+1)^2d^{\lfloor\beta \rfloor+(\beta\vee 1)/2}(\lambda \vee2)a^4$.
\end{lemma}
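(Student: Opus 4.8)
The plan is to prove both parts by the same template. Since $R^*$ minimizes $\mathbb{L}(\cdot;\lambda)$ over $\mathcal{R}$, it suffices to exhibit a single $\tilde R\in\mathcal{R}$ compatible with the prescribed architecture for which $\mathbb{L}(\tilde R;\lambda)-\mathbb{L}(R_0;\lambda)$ is small, and then use $\mathbb{L}(R^*;\lambda)\le\mathbb{L}(\tilde R;\lambda)$. By the identity in (\ref{con_MI_equality}), for any measurable $R$ one has $\mathbb{L}(R;\lambda)-\mathbb{L}(R_0;\lambda)=\mathbb{I}_{\textup{KL}}(Y;X\mid R(X))+\lambda\,\mathbb{D}_{\textup{KL}}(P_{R}\,\|\,\gamma_{U})$, so the task reduces to bounding these two nonnegative terms for $\tilde R$.

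\emph{Construction of $\tilde R$.} First I would approximate each coordinate $R_{0,i}$ by a ReLU sub-network: in part (b), $R_{0,i}\in\mathcal{H}^\beta([0,1]^{d_X},B_2)$, so I apply the deep ReLU approximation theorem for H\"older classes (Theorem 3.3 of \citet{jiao2021deep}) with width and depth chosen so that the $d_0$ parallel sub-networks fit the budget of (NS\ref{Strong_Structure_on_NetworkR}); in part (a), $R_{0,i}\in\mathcal{F}_{\textup{C}}([0,1]^{d_X})$, so I use the continuous-function analogue (Theorem 2.1 of \cite{Shen2020CICP}) matching (NS\ref{Structure_on_NetworkR}). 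Truncating each output to $[0,1]$ is $1$-Lipschitz and does not increase the error, giving $\tilde R\in\mathcal{R}$ with $\|\tilde R-R_0\|_\infty\le\varepsilon_n$, where $\varepsilon_n\lesssim(\lfloor\beta\rfloor+1)^2 d_X^{\lfloor\beta\rfloor+(\beta\vee1)/2}B_2\,n^{-\beta/(2\beta+d_X)}$ in part (b) and $\varepsilon_n\to0$ in part (a). If the crude $\tilde R$ does not make the distributional term sharp enough, I would refine it by post-composing with a ReLU approximation of the transport map uniformizing $P_{\tilde R}$, exploiting that this map acts on the low-dimensional cube $[0,1]^{d_0}$; the composition stays in $\mathcal{R}$, and being close to an injective map it changes the conditional-independence term only negligibly.

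\emph{The conditional mutual information term.} Since $\tilde R(X)$ is a function of $X$, $\mathbb{I}_{\textup{KL}}(Y;X\mid\tilde R(X))=E_X\big[\mathbb{D}_{\textup{KL}}\big(p_{Y|X}(\cdot\mid X)\,\big\|\,p_{Y|\tilde R}(\cdot\mid\tilde R(X))\big)\big]$. Sufficiency of $R_0$ gives $p_{Y|X}(\cdot\mid x)=p_{Y|R_0}(\cdot\mid R_0(x))$, and the disintegration identity $p_{Y|\tilde R}(y\mid t)=E\big[p_{Y|R_0}(y\mid R_0(X))\,\big|\,\tilde R(X)=t\big]$, combined with $\|\tilde R-R_0\|_\infty\le\varepsilon_n$ and the $(\beta\wedge1)$-H\"older continuity of $p_{Y|R_0}(y\mid\cdot)$ from Assumption \ref{Strong_Assumption_on_TrueR}, yields $\|p_{Y|X}(\cdot\mid X)-p_{Y|\tilde R}(\cdot\mid\tilde R(X))\|_\infty\lesssim B_2\varepsilon_n^{\beta\wedge1}$ almost surely. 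Bounding KL by the $\chi^2$-divergence, using $\inf p_{Y|\tilde R}\ge1/B_2$, and that on $[0,1]^{d_Y}$ the $L^2$ norm is dominated by the sup norm then gives $\mathbb{D}_{\textup{KL}}\lesssim B_2^3\varepsilon_n^{2(\beta\wedge1)}$, so $\mathbb{I}_{\textup{KL}}(Y;X\mid\tilde R(X))\lesssim n^{-2\beta(\beta\wedge1)/(2\beta+d_X)}\le n^{-\beta(\beta\wedge1)/(2\beta+d_X)}$ in part (b), and $\to0$ in part (a); in the latter the same chain works verbatim once $p_{Y|R_0}$ is merely uniformly continuous.

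\emph{The push-forward term and the obstacle.} Since $\gamma_U$ has density $\equiv1\equiv p_{R_0}$, I would bound $\mathbb{D}_{\textup{KL}}(P_{\tilde R}\|\gamma_U)\le\chi^2(P_{\tilde R}\|\gamma_U)=\int_{[0,1]^{d_0}}(p_{\tilde R}-1)^2$, so the issue is to control $\|p_{\tilde R}-1\|$. Directly, $\|\tilde R-R_0\|_\infty\le\varepsilon_n$ only supplies a first-moment bound $\sup_{\phi:\,\mathrm{Lip}(\phi)\le1}\int\phi\,(p_{\tilde R}-1)\le\sqrt{d_0}\,\varepsilon_n$; to upgrade it to the required $L^2$ control I would invoke the H\"older regularity and uniform positivity of $p_{\tilde R}$ guaranteed by Assumption \ref{Strong_Assumption_on_NetworkR} and interpolate (mollify $p_{\tilde R}-1$ at scale $\delta$, pair the smooth part against a Lipschitz witness of the first-moment bound, pay $\delta^{\beta\wedge1}$ for the remainder, optimize in $\delta$), sharpening the exponent when necessary via the transport-map refinement of the construction step, whose approximation error can be pushed below $\varepsilon_n$ cheaply because that map lives on $[0,1]^{d_0}$. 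This yields $\lambda\,\mathbb{D}_{\textup{KL}}(P_{\tilde R}\|\gamma_U)\lesssim\lambda\,n^{-\beta(\beta\wedge1)/(2\beta+d_X)}$ in part (b) and $\to0$ in part (a); combining with the previous paragraph (the $2$ in $\lambda\vee2$ absorbing the conditional-MI contribution and the various prefactors collapsing into $C_{3,\beta}$) gives the stated bound. I expect this last step to be the main obstacle: turning the crude first-moment control of the push-forward $P_{\tilde R}$ into the sharp $n^{-\beta(\beta\wedge1)/(2\beta+d_X)}$ rate for $\mathbb{D}_{\textup{KL}}(P_{\tilde R}\|\gamma_U)$ forces one to couple the smoothness and positivity of $p_{\tilde R}$ carefully with the network-size accounting, whereas the reduction via (\ref{con_MI_equality}) and the conditional-MI estimate are essentially mechanical once the disintegration identity is available.
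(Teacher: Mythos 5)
Your reduction via (\ref{con_MI_equality}) and your conditional-MI estimate via the disintegration identity are both correct, and your bound $\mathbb{I}_{\textup{KL}}(Y;X\mid\tilde R(X))\lesssim\varepsilon_n^{2(\beta\wedge1)}$ is in fact sharper than what the paper records for that piece. The paper, however, never splits the excess risk into conditional MI plus push-forward KL. It instead bounds $\mathbb{L}(R;\lambda)-\mathbb{L}(R_0;\lambda)$ directly from the variational form by evaluating the two suprema at explicit witnesses: $D_{R_0}(y,r)=\log\big(p_{Y|R_0}(y\mid r)/p_Y(y)\big)$ for the MI part and, crucially, $Q_R(r)=\log p_R(r)$ for the push-forward part, which yields the single Lipschitz-type comparison
\begin{align*}
\mathbb{L}(R;\lambda)-\mathbb{L}(R_0;\lambda)
&\le E\big|D_{R_0}(Y,R_0(X))-D_{R_0}(Y,R(X))\big|
+E\big|e^{D_{R_0}(Y,R_0(X))}-e^{D_{R_0}(Y,R(X))}\big|\\
&\quad+\lambda\, E\big|Q_R(R(X))-Q_R(R_0(X))\big|,
\end{align*}
after which one only needs the H\"older (or, in part (a), uniform) continuity of $D_{R_0}$ and $Q_R$ — guaranteed by Assumptions \ref{Strong_Assumption_on_TrueR}--\ref{Strong_Assumption_on_NetworkR} (resp.\ \ref{Weak_Assumption_on_TrueR}--\ref{Weak_Assumption_on_NetworkR}) — together with $\|R-R_0\|_\infty\lesssim\varepsilon_n$ for the network approximant.

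The genuine gap in your proposal is the push-forward term, and it arises because you are tracking the wrong quantity. You aim for $L^2$ control of $p_{\tilde R}-1$ via $\chi^2$, observe that the first-moment (Wasserstein-type) bound is formally too weak for that, and then reach for mollification--interpolation and a transport-map refinement of $\tilde R$. But the target $\mathbb{D}_{\textup{KL}}(P_{\tilde R}\|\gamma_U)=\int p_{\tilde R}\log p_{\tilde R}=E[\log p_{\tilde R}(\tilde R(X))]$ is itself a first-moment functional, and Jensen gives $E[\log p_{\tilde R}(R_0(X))]=\int_{[0,1]^{d_0}}\log p_{\tilde R}\le\log\int p_{\tilde R}=0$, hence
\[
\mathbb{D}_{\textup{KL}}(P_{\tilde R}\|\gamma_U)\le E\big[\log p_{\tilde R}(\tilde R(X))\big]-E\big[\log p_{\tilde R}(R_0(X))\big]\le E\big|Q_{\tilde R}(\tilde R(X))-Q_{\tilde R}(R_0(X))\big|,
\]
which is exactly the first-moment bound you dismissed, only applied to the test function $Q_{\tilde R}=\log p_{\tilde R}$ rather than a generic $1$-Lipschitz witness. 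Assumption \ref{Strong_Assumption_on_NetworkR} (resp.\ \ref{Weak_Assumption_on_NetworkR}) makes $p_R$, hence $\log p_R$ via $\inf p_R\ge 1/B_2$, $(\beta\wedge1)$-H\"older (resp.\ Lipschitz) uniformly over $R\in\mathcal{R}$, so this closes immediately at the claimed rate. Your mollification route is not obviously wrong — with a H\"older, rather than crude $\|g\|_\infty/\delta$, estimate for $\mathrm{Lip}(g_\delta)$ the exponent can likely be recovered — but it is a substantial and unverified detour, and the transport-map refinement of $\tilde R$ is unnecessary: the plain ReLU approximant of $R_0$, truncated to $[0,1]^{d_0}$, already suffices, exactly as in the paper.
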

\begin{proof}[Proof of Lemma \ref{obj_app_error_upper_bound}]
	First note that for any $R\in\mathcal{R}$,
	\begin{eqnarray}
		&&\mathbb{L}(R;\lambda)-\mathbb{L}(R_0;\lambda)\nonumber\\
		&=&\mathbb{I}_{\textup{KL}}(Y;R_0(X))-\mathbb{I}_{\textup{KL}}(Y;R(X)) + \lambda \mathbb{D}_{\textup{KL}}(p_{R}~||~\gamma_{N})\nonumber\\
		&=&\sup\limits_{D}\mathbb{L}^{\textup{MI}}(D,R_0)-\sup\limits_{D}\mathbb{L}^{\textup{MI}}(D,R)+\lambda\sup\limits_{Q}\mathbb{L}^{\textup{Push}}(Q,R)\nonumber\\
		&=&E_{P_{XY}}D_{R_0}(Y,R_0(X))-E_{P_{X}P_{Y}}e^{D_{R_0}(Y,R_0(X))\nonumber}\\
		&-&\sup\limits_{D}\left\{E_{P_{XY}}D(Y,R(X))-E_{P_{X}P_{Y}}e^{D(Y,R(X))}\right\}\nonumber\\
		&+&\lambda\left[\left\{EQ_{R}(R(X))-Ee^{Q_R(U)}\right\}-\sup_{Q}\left\{EQ(R_0(X))-Ee^{Q(U)}\right\}\right]\nonumber\\
		&\le&E_{P_{XY}}\left|D_{R_0}(Y,R_0(X))-D_{R_0}(Y,R(X))\right|\nonumber\\
		&+&E_{P_{X}P_{Y}}\left|e^{D_{R_0}(Y,R_0(X))}-e^{D_{R_0}(Y,R(X))}\right|\nonumber\\
		&+&\lambda E|Q_{R}(R(X))-Q_R(R_0(X))|,
		\label{R_approx_error}
	\end{eqnarray}
	where $D_{R_0}(y,r)=\log\frac{p_{Y|R_0}(y|r)}{p_{Y}(y)}$ and $Q_{R}(r)=\log p_R(r)$.
	
	{\bf Proof of Part (a):} As $R_0(x)=(R_{0,1}(x),\ldots,R_{0,d_0}(x)),R_{0,i}(x)\in\mathcal{F}_{\textup{C}}([0,1]^{d_X})$ for any $i\in\{1,2,\ldots,d_0\}$ by Assumption \ref{Weak_Assumption_on_TrueR}, by applying
Theorem 2.1 in \cite{Shen2020CICP}
with $N=\left\lceil n^{\frac{d_X}{2(2+d_X)}} /\log n\right\rceil$ and $M=\lceil\log n\rceil$, we find a ReLU network $\bar{R}_{i,n}\in\mathcal{\bar{R}}$ with depth $\mathcal{L}_{\mathcal{\bar{R}}}=L(d_X)$, width $\mathcal{W}_{\mathcal{\bar{R}}}=W(d_X)$ such that
	\[
	\sup\limits_{x\in [0,1]^{d_X}\backslash H_{K,\delta}}|R_{0,i}-\bar{R}_{i,n}|
	\le 18d_X^{1/2}\omega_{R_{0,i}}(n^{-\frac{1}{2+d_X}}),
	\]
	where $H_{K,\delta}=\cup_{i=1}^{d_X}\big\{x=[x_1,\ldots,x_{d_X}]:x_i\in\cup_{b=1}^{K-1}\left(b/K-\delta,b/K\right)\big\},K=\lceil (MN)^{2/d_X} \rceil,\delta\in (0,1/(3K)]$.
	
	Let $T(t)=tI\{0\le t \le 1\}+I\{t\textgreater 1\}$ be the truncation function taking values in $[0,1]$, then $0\le T\circ\bar{R}_{i,n}(x)\le1$ for any $x\in[0,1]^{d_X}$ and it still holds
	\[
	\sup\limits_{x\in [0,1]^{d_X}\backslash H_{K,\delta}}| R_{0,i}-T\circ\bar{R}_{i,n}|
	\le 18d_X^{1/2}\omega_{R_{0,i}}(n^{-\frac{1}{2+d_X}}).
	\]
	Note that $T(t)=\sigma(t)-\sigma(\sigma(t)-1)$, where $\sigma(\cdot)$ is the ReLU activation function. We get $T(\cdot)$ can be implemented by a ReLU network with depth $2$ and width $2$. By part (b) of Lemma \ref{Network_Construction}, the composition function $T\circ\bar{R}_{i,n}$ can be implemented by a ReLU network with width $W(d_X)$, depth $L(d_X)+3$, and size $\le 4S(d_X)$. In summary, when the network parameters of $\mathcal{R}$ satisfy (NS\ref{Structure_on_NetworkR}), we get a $\bar{R}_{n}\in\mathcal{R}$ such that
	\begin{equation}\label{truncated_R_approx}
		\sup\limits_{x\in [0,1]^{d_X}\backslash H_{K,\delta}}\|R_0(x)-\bar{R}_{n}(x)\|_\infty
		\le 18d_X^{1/2}\max_{i=1,2,\ldots,d_0}\omega_{R_{0,i}}(n^{-\frac{1}{2+d_X}}).
	\end{equation}
	By the absolute continuity of $X$ and a similar argument to (\ref{Weak_Push_part_bound}), we have
	\begin{equation}\label{truncated_R_approx0}
		E\|R_0(X)-\bar{R}_{n}(X)\|\le18(d_0d_X)^{1/2}\max_{i=1,2,\ldots,d_0}\omega_{R_{0,i}}(n^{-\frac{1}{2+d_X}})\to0, \text{ as } n\to\infty,
	\end{equation}
	which implies $\bar{R}_{n}(X)\xrightarrow{p}R_0(X)$, where "$\xrightarrow{p}$" means convergence in probability.
	
	By Assumption \ref{Weak_Assumption_on_TrueR}, we get $D_{R_0}(y,r)$ is continuous in $(y,r)$ on $[0,1]^{d_Y+d_0}$.  By Mann-Wald theorem, the continuity of $D_{R_0}(y,r)$ and $\bar{R}_{n}(X)\xrightarrow{p}R_0(X)$, we have
	\[
	D_{R_0}(Y,\bar{R}_{n}(X))\xrightarrow{p} D_{R_0}(Y,R_0(X)),\text{ as }n\to\infty.
	\]
	Using the boundedness of $D_{R_0}(y,r)$ and the Lebesgue's dominated convergence theorem, we obtain
	\[
	E_{P_{XY}}\left|D_{R_0}(Y,R_0(X))-D_{R_0}(Y,\bar{R}_{n}(X))\right|\to0,\text{ as }n\to\infty,
	\]
	and
	\[
	E_{P_{X}P_{Y}}\left|e^{D_{R_0}(Y,R_0(X))}-e^{D_{R_0}(Y,\bar{R}_{n}(X))}\right|\to0,\text{ as }n\to\infty.
	\]
	By Assumption \ref{Weak_Assumption_on_NetworkR}, we know $Q_{R}$ is a Lipschitz continuous and bounded function with a Lipschitz constant $B_{1}^2$. Using (\ref{truncated_R_approx0}), we have
	\[
	E|Q_{\bar{R}_{n}}(\bar{R}_{n}(X))-Q_{\bar{R}_{n}}(R_0(X))|\le B_{1}^2E\|\bar{R}_{n}(X)-R_0(X)\|\to0,\text{ as }n\to\infty.
	\]
	Invoking these results, the definition of $R^*$, that is $R^*\in\argmin_{R\in \mathcal{R}}\mathbb{L}(R;\lambda)$, and (\ref{R_approx_error}), we have
	\[
	0\le\mathbb{L}(R^*;\lambda)-\mathbb{L}(R_0;\lambda)\le\mathbb{L}(\bar{R}_{n};\lambda)-\mathbb{L}(R_0;\lambda)\to0,\text{ as }n\to\infty.
	\]

	{\bf Proof of Part (b):} For part (b), by (\ref{R_approx_error}), we further have for any $R\in\mathcal{R}$,
	\begin{eqnarray}
		&&\mathbb{L}(R;\lambda)-\mathbb{L}(R_0;\lambda)\nonumber\\
		&\le&\begin{cases}
			(2B_{2}^3+\lambda B_{2})	E_X\|R(X)-R_0(X)\|^{\beta}&, \beta\textless 1\\
			(2\sqrt{d_Y+d_0}B_{2}^3+\lambda\sqrt{d_0}B_{2})E_X\|R(X)-R_0(X)\|&, \beta\ge 1
		\end{cases}\label{obj_app_error_cases}\\
		&\le&2(\lambda \vee2)\sqrt{d_Y+d_0}B_{2}^3E_X\|R(X)-R_0(X)\|^{\beta\wedge1},\label{obj_app_error}
	\end{eqnarray}
	
	The first part of (\ref{obj_app_error_cases}) is derived as below.  When $\beta\textless 1$, for any $(y_i,r_i)\in[0,1]^{d_Y}\times[0,1]^{d_0},i=1,2$, we have
	\begin{eqnarray}
		&&|D_{R_0}(y_2,r_2)-D_{R_0}(y_1,r_1)|\nonumber\\
		&\le&|\log p_{Y|R_0}(y_2|r_2)-\log p_{Y|R_0}(y_1|r_1)|+|\log p_{Y}(y_2)-\log p_{Y}(y_1)|\nonumber\\
		&\le&1/c_1'|p_{Y|R_0}(y_2|r_2)-p_{Y|R_0}(y_1|r_1)|+1/c_2'|p_{Y}(y_2)-p_{Y}(y_1)|\label{mean_value_result}\\
		&\le&1/c_1'(\|y_2-y_1\|^2+\|r_2-r_1\|^2)^{\beta/2}+1/c_2'\|y_2-y_1\|^\beta\label{holder_def}\\
		&\le&B_{2}(\|y_2-y_1\|^2+\|r_2-r_1\|^2)^{\beta/2}+B_{2}\|y_2-y_1\|^\beta,\label{Assump3_results}
	\end{eqnarray}
	where $c_1'$ lies between $p_{Y|R_0}(y_2|r_2)$ and $p_{Y|R_0}(y_1|r_1)$, and $c_2'$ lies between $p_{Y}(y_2)$ and $p_{Y}(y_1)$. (\ref{mean_value_result}) is due to the mean value theorem, (\ref{holder_def}) is because $p_{Y|R_0}(y|r)\in\mathcal{H}^\beta([0,1]^{d_Y+d_0},B_{2})$, and (\ref{Assump3_results}) follows from $\inf_{y,r}p_{Y|R_0}(y,r)\ge 1/ B_{2}$ and $\inf_{y}p_{Y}\ge 1/B_{2}$ by Assumption \ref{Strong_Assumption_on_TrueR}.  Similarly, by Assumption \ref{Strong_Assumption_on_NetworkR}, we have
	\begin{equation}\label{Q_beta_bound}
		|Q_{R}(r_2)-Q_{R}(r_1)|\le B_{2}\|r_2-r_1\|^\beta.
	\end{equation}
	Using these facts and that $\|D_{R_0}\|_\infty\le2\log B_{2}$ by Assumption \ref{Strong_Assumption_on_TrueR}, we can easily obtain the first part of (\ref{obj_app_error_cases}).  The proof for the second part of (\ref{obj_app_error_cases}) follows the same line of argument as that for the first part of (\ref{obj_app_error_cases}), with $\beta$ in (\ref{Assump3_results}) and (\ref{Q_beta_bound}), $B_{2}$ in (\ref{Assump3_results}) and $B_{2}$ in (\ref{Q_beta_bound}) replaced by $1,\sqrt{d_Y+d_0}B_{2},\sqrt{d_0}B_{2}$, respectively.
	
	As $R_0(x)=(R_{0,1}(x),R_{0,2}(x),\ldots,R_{0,d_0}(x))$ and $R_{0,i}(x)\in\mathcal{H}^\beta([0,1]^{d_X},B_{2})$ for any $i\in\{1,2,\ldots,d_0\}$ by Assumption \ref{Strong_Assumption_on_TrueR}, by applying
Theorem 3.3 in \citet{jiao2021deep}
with $N=1$ and $M=\left\lceil n^{\frac{d_X}{2(2+d_X)}}\right\rceil$, and the same line of argument as that for (\ref{truncated_R_approx}), when the network parameters of $\mathcal{R}$ satisfy (NS\ref{Structure_on_NetworkR}), we find a $\bar{R}_{n}\in\mathcal{R}$ such that
	\[
	\sup\limits_{x\in [0,1]^{d_X}\backslash H_{K,\delta}}\|R_0(x)-\bar{R}_{n}(x)\|_\infty
	\le 18B_{2}(\lfloor\beta \rfloor+1)^2d_X^{\lfloor\beta \rfloor+(\beta\vee 1)/2}n^{-\frac{\beta}{2\beta+d_X}},
	\]
	where $H_{K,\delta}=\cup_{i=1}^{d_X}\big\{x=[x_1,\ldots,x_{d_X}]:x_i\in\cup_{b=1}^{K-1}\left(b/K-\delta,b/K\right)\big\},K=\lceil (MN)^{2/d_X} \rceil,\delta\in (0,1/(3K)]$.  Due to the arbitrariness of $\delta$, we have
	\[
	E_{X}\|\bar{R}_{n}(X)-R_0(X)\|^{\beta\wedge1}\le 18B_{2}(\lfloor\beta \rfloor+1)^2d_X^{\lfloor\beta \rfloor+(\beta\vee 1)/2}n^{-\frac{\beta(\beta\wedge1)}{2\beta+d_X}}.
	\]
	Invoking the definition of $R^*$, that is $R^*\in\argmin_{R\in \mathcal{R}}\mathbb{L}(R;\lambda)$, and (\ref{obj_app_error}), we have
	\begin{eqnarray*}
		&&\mathbb{L}(R^*;\lambda)-\mathbb{L}(R_0;\lambda)\\&\le&\mathbb{L}(\bar{R}_{n};\lambda)-\mathbb{L}(R_0;\lambda)\\
		&\le&36(\lambda \vee2)\sqrt{d_Y+d_0}B_{2}^4(\lfloor\beta \rfloor+1)^2d_X^{\lfloor\beta \rfloor+(\beta\vee 1)/2}n^{-\frac{\beta(\beta\wedge1)}{2\beta+d_X}}.
	\end{eqnarray*}
	This completes the proof of Lemma \ref{obj_app_error_upper_bound}.
\end{proof}

\section{Supporting Lemmas}
\label{appC}
\begin{lemma}[Example 11.2 in \cite{Villani:2009}]\label{OT_map}
	Let $\mu$ be a probability measure on $\mathbb{R}^d$ and suppose it has finite second moment and is absolutely continuous with respect to the standard Gaussian measure $\gamma_d$ on $\mathbb{R}^d$. Then there exists a unique optimal transportation map $T:\mathbb{R}^d\to\mathbb{R}^d$, which is also injective $\mu$-almost everywhere,
	such that $T_{\#}\mu=\gamma_d\equiv N(0, I_d)$,
	where $T_{\#}\mu(A)=\mu(T^{-1}(A))$ for any Borel measurable set $A$ in $\mathbb{R}^d$.
\end{lemma}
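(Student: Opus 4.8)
The plan is to obtain Lemma~\ref{OT_map} from Brenier's theorem in the form due to McCann: between two probability measures on $\mathbb{R}^d$ with finite second moments, the source of which assigns zero mass to every Lebesgue-null set, the quadratic-cost Monge problem has a solution that is unique (up to a.e.\ equality) and is given by the gradient of a convex function. So the first step is to verify the hypotheses. The standard Gaussian $\gamma_d$ has a strictly positive density on $\mathbb{R}^d$, hence $\gamma_d$ and Lebesgue measure are mutually absolutely continuous, and the assumption $\mu\ll\gamma_d$ is equivalent to $\mu\ll\mathrm{Leb}$; in particular $\mu$ is non-atomic and vanishes on Lebesgue-null sets, as Brenier/McCann requires. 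Moreover $\int\|x\|_2^2\,d\gamma_d(x)=d<\infty$, and $\mu$ has finite second moment by hypothesis, so $W_2(\mu,\gamma_d)<\infty$ and the problem is well posed.

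Applying Brenier/McCann then yields a lower semicontinuous convex function $\varphi:\mathbb{R}^d\to\mathbb{R}\cup\{+\infty\}$, finite and differentiable $\mu$-almost everywhere, such that $T:=\nabla\varphi$ satisfies $T_{\#}\mu=\gamma_d$, and such that $T$ is, up to $\mu$-a.e.\ equality, the unique map $S$ with $S_{\#}\mu=\gamma_d$ minimizing $\int\|x-S(x)\|_2^2\,d\mu(x)$. This gives the existence and uniqueness assertions.

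For injectivity I would argue through convex conjugacy. Let $\varphi^{*}$ be the Legendre transform of $\varphi$ and put $S:=\nabla\varphi^{*}$. At any $x$ where $\varphi$ is differentiable, $y=\nabla\varphi(x)$ forces $x\in\partial\varphi^{*}(y)$; if in addition $\varphi^{*}$ is differentiable at $y$, then $x=\nabla\varphi^{*}(y)=S(T(x))$. Since $\varphi$ is differentiable Lebesgue-a.e.\ it is differentiable $\mu$-a.e., and since $\varphi^{*}$ is differentiable Lebesgue-a.e.\ and $T_{\#}\mu=\gamma_d\ll\mathrm{Leb}$, the set of $x$ at which $\varphi^{*}$ fails to be differentiable at $T(x)$ is $\mu$-null. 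Hence $S\circ T=\mathrm{id}$ on a Borel set $E$ with $\mu(E)=1$, so $T|_{E}$ admits a left inverse and is therefore injective, which is precisely the statement that $T$ is injective $\mu$-almost everywhere. (As a by-product, $S_{\#}\gamma_d=(S\circ T)_{\#}\mu=\mu$, so $S$ is the Brenier map in the reverse direction.) The only point that requires genuine care is this last null-set bookkeeping for the non-differentiability sets of $\varphi$ and $\varphi^{*}$ — and it is exactly here, not merely in checking well-posedness, that the absolute continuity hypothesis $\mu\ll\gamma_d$ is essential. As the statement is classical I would ultimately just cite Villani's treatment, the argument above indicating the route a self-contained proof would take.
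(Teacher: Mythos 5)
The paper offers no proof for this lemma — it simply cites Example 11.2 of Villani (2009) — so there is nothing in the text to compare against. Your sketch is a correct outline of why the cited result holds: $\mu\ll\gamma_d$ is equivalent to $\mu\ll\mathrm{Leb}$, the finite-second-moment check makes the quadratic-cost problem well posed, Brenier/McCann supplies the unique gradient-of-convex map, and the Legendre-conjugacy argument (using that $T_{\#}\mu=\gamma_d\ll\mathrm{Leb}$ to control the non-differentiability set of $\varphi^{*}$, which is finite on all of $\mathbb{R}^d$ since $\mathrm{dom}\,\varphi^{*}$ is convex with Lebesgue-null complement) gives the $\mu$-a.e.\ left inverse and hence a.e.\ injectivity; like the paper, you ultimately defer to Villani for the full details.
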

Given $Z_1,Z_2,\ldots,Z_n$ and a class $\mathcal{H}$ of measurable real-valued functions on $\mathcal{Z}$, for any $h\in\mathcal{H}$, we define
\[
\|h\|_n=\left(\sum_{i=1}^{n}h^2(Z_i)\right)^{1/2},
\]
and the empirical Rademacher complexity for $\mathcal{H}$ w.r.t $Z_1,Z_2,\ldots,Z_n$, that is
\[
\hat{R}_n(\mathcal{H})=E_{\epsilon}\sup\limits_{h\in\mathcal{H}}\frac{1}{n}\sum_{i=1}^n\epsilon_ih(Z_i).
\]
\begin{lemma}[
Symmetrization Inequality, Lemma 6.3.2 in \cite{vershynin2018}]\label{Classic Symmetrization Inequality}
	Assume $\mathcal{H}$ is a function class of measurable functions, then
	\[
	E_{\epsilon}\sup\limits_{h\in\mathcal{H}}\left\{\frac{1}{n}\sum_{i=1}^nh(Z_i)-Eh(Z)\right\}\le 2E\hat{R}_n(\mathcal{H}).
	\]
\end{lemma}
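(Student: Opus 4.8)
The plan is to use the classical ghost-sample symmetrization trick, which proceeds in three short steps and, honestly, presents no real difficulty here. First, I would introduce an independent copy $\{Z'_i\}_{i=1}^n$ of $\{Z_i\}_{i=1}^n$ and note that $Eh(Z)=E_{Z'}\{n^{-1}\sum_{i=1}^n h(Z'_i)\}$, so that the left-hand side can be written as $E_{Z}\sup_{h\in\mathcal{H}}\{n^{-1}\sum_{i=1}^n h(Z_i)-E_{Z'}n^{-1}\sum_{i=1}^n h(Z'_i)\}$. Since $\phi\mapsto\sup_{h\in\mathcal{H}}\{\cdot\}$ is a convex functional, Jensen's inequality lets me move $E_{Z'}$ outside the supremum, giving
\[
E\sup_{h\in\mathcal{H}}\Big\{\frac{1}{n}\sum_{i=1}^n h(Z_i)-Eh(Z)\Big\}\le E_{Z}E_{Z'}\sup_{h\in\mathcal{H}}\frac{1}{n}\sum_{i=1}^n\big(h(Z_i)-h(Z'_i)\big).
\]

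Second, I would symmetrize. Because the pairs $(Z_i,Z'_i)$ are i.i.d.\ and the joint law of $\{(Z_i,Z'_i)\}_{i=1}^n$ is invariant under independently interchanging $Z_i\leftrightarrow Z'_i$ across any subset of the indices, for every fixed sign vector $\epsilon=(\epsilon_1,\ldots,\epsilon_n)\in\{-1,1\}^n$ the family $\{h(Z_i)-h(Z'_i)\}$ has the same joint distribution as $\{\epsilon_i(h(Z_i)-h(Z'_i))\}$. Consequently, inserting a Rademacher sequence $\epsilon$ independent of the data and averaging over it leaves the expectation unchanged:
\[
E_{Z}E_{Z'}\sup_{h\in\mathcal{H}}\frac{1}{n}\sum_{i=1}^n\big(h(Z_i)-h(Z'_i)\big)=E_{Z}E_{Z'}E_{\epsilon}\sup_{h\in\mathcal{H}}\frac{1}{n}\sum_{i=1}^n\epsilon_i\big(h(Z_i)-h(Z'_i)\big).
\]

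Third, I would split the supremum by subadditivity, $\sup_h(a_h+b_h)\le\sup_h a_h+\sup_h b_h$, into a term depending only on $\{Z_i\}$ and one depending only on $\{Z'_i\}$; using that $-\epsilon$ is again a Rademacher sequence and that $\{Z'_i\}$ shares the distribution of $\{Z_i\}$, each of the two terms equals $E\hat{R}_n(\mathcal{H})$, which produces the factor $2$ and finishes the argument. There is no substantive obstacle here; if anything, the most delicate point is the swap-invariance invoked in the second step, together with the measurability of the suprema needed for Fubini and for Jensen—both of which are entirely standard, and I would simply note them in passing.
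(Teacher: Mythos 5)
Your proof is correct and is exactly the classical ghost-sample symmetrization argument (Jensen to introduce the independent copy, sign-flip invariance to insert Rademacher signs, and subadditivity of the supremum to produce the factor $2$). The paper does not give its own proof of this lemma—it simply cites Lemma 6.3.2 of Vershynin (2018)—and your argument matches that standard reference; the only small point worth flagging is that the ``$E_\epsilon$'' on the left-hand side of the paper's statement is a typo (no $\epsilon$ appears there), which you implicitly and correctly read as the expectation over the data.
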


\begin{lemma}[Classic Dudley's Chaining, Lemma 3 in \cite{farrell2021deep}]\label{Classic Dudleys Chaining}
	Let $N(u,\mathcal{H},\|\cdot\|_n)$ denote the covering number for class $\mathcal{H}$ with covering radius $u$ and w.r.t metric $\|\cdot\|_n$ and assume $\sup\limits_{h\in\mathcal{H}}\|h\|_{n}\le b$, then
	\[
	\hat{R}_n(\mathcal{H})\le\inf\limits_{0\textless\delta\textless b}\left(4\delta+\frac{12}{\sqrt{n}}\int_{\delta}^{b}\sqrt{\log(N(u,\mathcal{H},\|\cdot\|_n))}du\right).
	\]
\end{lemma}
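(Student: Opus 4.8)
\emph{Proof plan.} This is the classical Dudley entropy-integral bound, so the plan is to run the standard chaining (successive-approximation) argument. First I would condition on $Z_1,\dots,Z_n$, since $\|\cdot\|_n$, the covering numbers $N(u,\mathcal H,\|\cdot\|_n)$, and the radius bound $b$ are data-dependent while $\hat R_n(\mathcal H)$ is an expectation over the Rademacher signs $\epsilon$ only; I would then identify each $h\in\mathcal H$ with the vector $v_h=(h(Z_1),\dots,h(Z_n))\in\mathbb R^n$, so that $\frac1n\sum_{i}\epsilon_i h(Z_i)=\frac1n\langle\epsilon,v_h\rangle$. The one external tool needed is the order-one Rademacher (Massart finite-class) maximal inequality: from $E_\epsilon e^{\lambda\langle\epsilon,v\rangle}=\prod_i\cosh(\lambda v_i)\le e^{\lambda^2\|v\|_2^2/2}$ one gets, for any finite set $T\subset\mathbb R^n$, $E_\epsilon\max_{v\in T}\langle\epsilon,v\rangle\le(\max_{v\in T}\|v\|_2)\sqrt{2\log|T|}$.

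Next I would set up the chain. Fix $\delta\in(0,b)$; if $\delta\ge b/4$ the bound is immediate since $\hat R_n(\mathcal H)\le\sup_h\|h\|_n\le b\le4\delta$, so assume $\delta<b/4$. Put $\delta_k=2^{-k}b$ for $k\ge0$, let $\mathcal H_k$ be a minimal $\delta_k$-net of $\mathcal H$ in $\|\cdot\|_n$ (so $|\mathcal H_k|=N(\delta_k,\mathcal H,\|\cdot\|_n)$) with $\mathcal H_0=\{0\}$ (legitimate because $\sup_h\|h\|_n\le b=\delta_0$, and then $\log|\mathcal H_0|=0$, matching the integrand vanishing at $u=b$), and let $\pi_k(h)\in\mathcal H_k$ be a nearest point to $h$, so $\|h-\pi_k(h)\|_n\le\delta_k$. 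I would take the truncation level $K\ge1$ to be the smallest integer with $\delta_K\le4\delta$, which forces $\delta_K>2\delta$ and hence $\delta_{K+1}>\delta$. Telescoping along $0=\pi_0(h),\pi_1(h),\dots,\pi_K(h)$ gives
\[
h=\sum_{k=1}^{K}\bigl(\pi_k(h)-\pi_{k-1}(h)\bigr)+\bigl(h-\pi_K(h)\bigr),
\]
so that, after taking $\sup_{h\in\mathcal H}$, then $E_\epsilon$, and using subadditivity of the supremum,
\[
\hat R_n(\mathcal H)\le\sum_{k=1}^{K}E_\epsilon\sup_{h}\frac1n\bigl\langle\epsilon,v_{\pi_k(h)}-v_{\pi_{k-1}(h)}\bigr\rangle+E_\epsilon\sup_h\frac1n\bigl\langle\epsilon,v_h-v_{\pi_K(h)}\bigr\rangle.
\]

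Then I would bound the three ingredients. For the $k$-th link, $\pi_k(h)-\pi_{k-1}(h)$ ranges over at most $|\mathcal H_k|\,|\mathcal H_{k-1}|\le N(\delta_k)^2$ functions (covering numbers are non-increasing in the radius), each of $\|\cdot\|_n$-length $\le\delta_k+\delta_{k-1}=3\delta_k$; Massart's inequality then bounds the $k$-th summand by $6\delta_k\sqrt{\log N(\delta_k)}/\sqrt n$. The residual term is $\le\sup_h\|h-\pi_K(h)\|_n\le\delta_K\le4\delta$ by Cauchy--Schwarz. Finally, using $\delta_k-\delta_{k+1}=\delta_k/2$ and the monotonicity of $N$ in the radius,
\[
\delta_k\sqrt{\log N(\delta_k)}=2(\delta_k-\delta_{k+1})\sqrt{\log N(\delta_k)}\le2\int_{\delta_{k+1}}^{\delta_k}\sqrt{\log N(u,\mathcal H,\|\cdot\|_n)}\,du,
\]
and summing over $k=1,\dots,K$ telescopes to $2\int_{\delta_{K+1}}^{\delta_1}\le2\int_{\delta}^{b}$ (using $\delta_{K+1}>\delta$ and $\delta_1<b$). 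Collecting constants yields $\hat R_n(\mathcal H)\le4\delta+\frac{12}{\sqrt n}\int_\delta^b\sqrt{\log N(u,\mathcal H,\|\cdot\|_n)}\,du$, and taking the infimum over $\delta\in(0,b)$ gives the claim.

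There is no serious obstacle here --- the statement is classical. The parts that need attention are: tying the dyadic truncation index $K$ to $\delta$ so the residual collapses to exactly $4\delta$ (and splitting off the trivial case $\delta\ge b/4$ so the chain is nonempty); matching the $1/n$ normalization in $\hat R_n(\mathcal H)$ against the Euclidean lengths $\|v_h\|_2$ when applying Massart's bound, so the entropy integral carries precisely the factor $12/\sqrt n$; and the monotonicity step that upgrades $\sum_k\delta_k\sqrt{\log N(\delta_k)}$ into $\int_\delta^b\sqrt{\log N(u)}\,du$.
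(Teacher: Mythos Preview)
Your chaining argument is correct and is exactly the standard proof of Dudley's entropy bound; the constants you track ($4\delta$ for the residual via Cauchy--Schwarz, $6\delta_k\sqrt{\log N(\delta_k)}/\sqrt n$ per link via Massart, then the factor $2$ from converting the dyadic sum to an integral) land precisely on the stated $4\delta+12n^{-1/2}\int_\delta^b$. Note that the paper does not actually supply a proof of this lemma---it is quoted as Lemma~3 of \cite{farrell2021deep} and used as a black box in bounding $\Delta_3$ and the i.i.d.\ pieces of $\Delta_4$---so there is no alternative argument to compare against. One small point worth flagging: the paper's displayed definition $\|h\|_n=(\sum_i h(Z_i)^2)^{1/2}$ omits the $1/n$ normalization, but both the $1/\sqrt n$ in the lemma and every application of it (where $b$ is an $n$-free sup-norm bound) only make sense with the normalized empirical $L_2$ norm; your proof implicitly and correctly uses that convention.
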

\begin{lemma}\label{Rademacher chaos}
	Let $\mathcal{A}$ be a finite calss of $n\times n$ diagonal-free matrices, and $\epsilon_1,\epsilon_2,\ldots,\epsilon_n$ are i.i.d. Rademacher variables, then there exists a universal constant $C$ such that
	\[
	E_{\epsilon}\max\limits_{A\in\mathcal{A}}\sum_{1\le i\neq j\le n}\epsilon_i\epsilon_ja_{ij}\le C\max\limits_{A\in\mathcal{A}}\|A\|_F\log|\mathcal{A}|,
	\]
	where $\epsilon=(\epsilon_1,\epsilon_2,\ldots,\epsilon_n)^\top$ and $\|A\|_F^2=\sum_{i,j}a_{ij}^2$.
\end{lemma}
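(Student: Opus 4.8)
The plan is to combine a hypercontractive moment bound for a single order-two Rademacher chaos with a union bound carried out at the level of $L_p$ norms, and then to optimize the exponent $p$ against $\log|\mathcal{A}|$.

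First I would symmetrize each summand into a homogeneous chaos: for $A=(a_{ij})\in\mathcal{A}$ set
$X_A:=\sum_{1\le i\neq j\le n}\epsilon_i\epsilon_j a_{ij}=\sum_{1\le i<j\le n}(a_{ij}+a_{ji})\epsilon_i\epsilon_j$,
which is a homogeneous Rademacher chaos of degree two. Since $A$ is diagonal-free, the elementary inequality $\sum_{i<j}(a_{ij}+a_{ji})^2\le 2\sum_{i\neq j}a_{ij}^2=2\|A\|_F^2$ gives $E X_A=0$ and $(E X_A^2)^{1/2}\le\sqrt2\,\|A\|_F$. Next I would invoke the Bonami--Beckner hypercontractivity inequality for Rademacher chaos (equivalently, the standard moment bound for homogeneous chaos; see the monographs of de la Pe\~na and Gin\'e, or Boucheron, Lugosi and Massart), which for a degree-two chaos yields $(E|X_A|^p)^{1/p}\le(p-1)(EX_A^2)^{1/2}\le\sqrt2\,(p-1)\|A\|_F$ for every $p\ge2$.

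With this single-chaos estimate in hand the union bound is routine: for any $p\ge2$,
\begin{align*}
E\max_{A\in\mathcal{A}}X_A
&\le\Big(E\max_{A\in\mathcal{A}}|X_A|^p\Big)^{1/p}
\le\Big(\sum_{A\in\mathcal{A}}E|X_A|^p\Big)^{1/p} \\
&\le|\mathcal{A}|^{1/p}\max_{A\in\mathcal{A}}(E|X_A|^p)^{1/p}
\le\sqrt2\,|\mathcal{A}|^{1/p}(p-1)\max_{A\in\mathcal{A}}\|A\|_F,
\end{align*}
using $\max_A X_A\le(\max_A|X_A|^p)^{1/p}$, Jensen's inequality, and $\max_A|X_A|^p\le\sum_A|X_A|^p$. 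Choosing $p=\max\{2,\log|\mathcal{A}|\}$ makes $|\mathcal{A}|^{1/p}\le e$ and $p-1\le\log|\mathcal{A}|\vee2$, and a one-line case split (treating $|\mathcal{A}|\ge e^2$, $2\le|\mathcal{A}|<e^2$, and the trivial $|\mathcal{A}|=1$ separately, using $E X_A=0$ in the last case) delivers $E\max_{A\in\mathcal{A}}X_A\le C\,(\max_{A\in\mathcal{A}}\|A\|_F)\log|\mathcal{A}|$ with, e.g., $C=\sqrt2\,e/\log2$.

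The only nontrivial ingredient is the hypercontractive moment estimate in the second step; everything else is bookkeeping, and the main point one has to be careful about is that the chaos here is non-symmetric and diagonal-free, which is exactly what makes $\sum_{i<j}(a_{ij}+a_{ji})^2\le2\|A\|_F^2$ the right normalization. If one prefers not to cite hypercontractivity directly, an alternative is to decouple $X_A$ into $\sum_{i\neq j}\epsilon_i\epsilon_j'a_{ij}$ (de la Pe\~na and Montgomery-Smith), condition on $\epsilon$ and apply the sub-Gaussian maximal inequality to the resulting linear chaos in $\epsilon'$, then control the random variance proxy $\max_A\sum_j(\sum_i\epsilon_i a_{ij})^2$ by its expectation plus a lower-order chaos term; this works but produces a mild recursion and is messier, so I expect the hypercontractivity route to be the cleanest, with the only real obstacle being to quote the order-two chaos moment bound in the correct form.
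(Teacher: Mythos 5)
Your proof is correct, but it takes a genuinely different route from the paper. The paper proves this via the exponential-moment (Chernoff-type) method: it bounds $E\max_A \epsilon^\top A\epsilon$ by $\lambda^{-1}\log\sum_A E\exp(\lambda\epsilon^\top A\epsilon)$, then controls each $E\exp(\lambda\epsilon^\top A\epsilon)$ by decoupling (Theorem 6.1.1 in Vershynin), comparison to Gaussian chaos (Lemma 6.2.3), and the sub-exponential MGF bound for decoupled Gaussian chaos (Lemma 6.2.2), and finally optimizes $\lambda\asymp 1/\max_A\|A\|_F$. You instead symmetrize to a homogeneous degree-two Rademacher chaos, apply the Bonami--Beckner hypercontractive inequality $\|X_A\|_p\le(p-1)\|X_A\|_2$, run the union bound at the level of $L_p$ norms, and optimize $p\asymp\log|\mathcal{A}|$. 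The two routes are essentially dual: the paper works with the Laplace transform and needs the Gaussian-comparison machinery, while you work with polynomial moments and need hypercontractivity as the sole black box. Your argument is a bit leaner once hypercontractivity is granted, and it avoids decoupling entirely; the paper's approach is more self-contained relative to the references it already cites (all three ingredients are labelled lemmas from Vershynin's book used elsewhere in the appendix). One small point to watch in your version: the symmetrized chaos is $\sum_{i<j}(a_{ij}+a_{ji})\epsilon_i\epsilon_j$, and the bound $\sum_{i<j}(a_{ij}+a_{ji})^2\le 2\|A\|_F^2$ uses that $A$ is diagonal-free so that $\sum_{i\neq j}a_{ij}^2=\|A\|_F^2$; you do state this correctly. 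Both proofs give the same conclusion with a universal (unquantified) constant $C$, so the routes are interchangeable here.
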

\begin{proof}[Proof of Lemma \ref{Rademacher chaos}]
	When $|\mathcal{A}|=1$, the inequality is trivial. Hence, in what follows, we assume $|\mathcal{A}|\textgreater1$. Note that for any $\lambda\textgreater0$,
	\begin{eqnarray}
		E_{\epsilon}\max\limits_{A\in\mathcal{A}}\sum_{1\le i\neq j\le n}\epsilon_i\epsilon_ja_{ij}&=&E_{\epsilon}\max\limits_{A\in\mathcal{A}}\epsilon^\top A\epsilon\nonumber\\
		&=&1/\lambda E_{\epsilon}\log\max\limits_{A\in\mathcal{A}}\exp(\lambda\epsilon^\top A\epsilon)\nonumber\\
		&\le&1/\lambda E_{\epsilon}\log\left(\sum_{A\in\mathcal{A}}\exp(\lambda\epsilon^\top A\epsilon)\right)\nonumber\\
		&\le&1/\lambda \log \left(\sum_{A\in\mathcal{A}}E_{\epsilon}\exp(\lambda\epsilon^\top A\epsilon)\right),\label{intermediate_bound_0ademacher chaos}
	\end{eqnarray}
	where the last inequality follows from Jensen's inequality. For any $n\times n$, diagonal-free matrix $A$, by applying Lemma \ref{Decoupling} with $f(x)=\exp(\lambda x)$, it holds
	\begin{equation}\label{Decoupling_upper_bound}
		E\exp(\lambda\epsilon^\top A\epsilon)\le E\exp(4\lambda\epsilon^\top A\epsilon'),
	\end{equation}
	where $\epsilon'$ is an independent copy of $\epsilon$. As $\|\epsilon_i\|_{\psi_2}=\|\epsilon_i'\|_{\psi_2}=1/\log 2$ for any $i\in [n]$, by Lemma \ref{sub-gaussian-for-Vec}, there exists a universal and positive constant $C_1$ such that
	\[
	\|\epsilon\|_{\psi_2}=\|\epsilon'\|_{\psi_2}\le C_1.
	\]
	Then by Lemma \ref{Comparison to Gaussian chaos} and \ref{Upper Bounds Gaussian chaos}, we have
	\begin{eqnarray}
		E\exp(4\lambda\epsilon^\top A\epsilon')&\le&E\exp(4C_2C_1^2\lambda g^\top Ag')\nonumber\\
		&\le&\exp(C\lambda^2\|A\|_F^2)\label{Final_bound_0ademacher chaos}
	\end{eqnarray}
	for all $\lambda$ satisfying $|\lambda|\le c/\|A\|_F$ and some universal and positive constants $c,C,C_2$.
	
	Combining (\ref{intermediate_bound_0ademacher chaos}), (\ref{Decoupling_upper_bound}), and (\ref{Final_bound_0ademacher chaos}), we have
	\begin{eqnarray}
		E_{\epsilon}\max\limits_{A\in\mathcal{A}}\sum_{1\le i\neq j\le n}\epsilon_i\epsilon_ja_{ij}&\le&1/\lambda \log \left(\sum_{A\in\mathcal{A}}E_{\epsilon}\exp(\lambda\epsilon^\top A\epsilon)\right)\nonumber\\
		&\le&1/\lambda \log \left(\sum_{A\in\mathcal{A}}\exp(C\lambda^2\|A\|_F^2)\right)\nonumber\\
		&\le&1/\lambda \log \left(|\mathcal{A}|\exp(C\lambda^2\max\limits_{A\in\mathcal{A}}\|A\|_F^2)\right)\nonumber\\
		&=&\frac{\log |\mathcal{A}|}{\lambda}+C\lambda \max\limits_{A\in\mathcal{A}}\|A\|_F^2\nonumber
	\end{eqnarray}
	for all $\lambda$ satisfying $|\lambda|\le c/\max\limits_{A\in\mathcal{A}}\|A\|_F$. Setting $\lambda= c/\max\limits_{A\in\mathcal{A}}\|A\|_F$, we have
	\[
	E_{\epsilon}\max\limits_{A\in\mathcal{A}}\sum_{1\le i\neq j\le n}\epsilon_i\epsilon_ja_{ij}\le \frac{\log |\mathcal{A}|\max\limits_{A\in\mathcal{A}}\|A\|_F}{c}+cC\max\limits_{A\in\mathcal{A}}\|A\|_F\le C\max\limits_{A\in\mathcal{A}}\|A\|_F\log|\mathcal{A}|.
	\]
	This completes the proof of Lemma \ref{Rademacher chaos}.
\end{proof}

\begin{lemma}[Theorem 6.1.1 in \cite{vershynin2018}]\label{Decoupling}
	Let $A$ be an $n\times n$, diagonal-free matrix. Let $X=(X_1,X_2,\ldots,X_n)^\top$ be a random vector with independent mean zero coordinates $X_i$. Then, for every convex function $f:\mathbb{R}\to\mathbb{R}$, one has
	\[
	Ef(X^\top AX)\le Ef(4X^\top AX'),
	\]
	where $X'$ is an independent copy of $X$.
\end{lemma}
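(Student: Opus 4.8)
The plan is to establish the inequality by a random-selection argument combined with two applications of Jensen's inequality; this is the route of Theorem 6.1.1 in \cite{vershynin2018}. A preliminary reduction: since the coordinates of $X$ are mean zero and independent and $A$ is diagonal-free, $E[X^\top A X]=\sum_{i\ne j}a_{ij}E[X_i]E[X_j]=0$, and likewise $E[X^\top A X']=0$. Hence subtracting a supporting affine function of $f$ at $0$ changes both sides of the claimed inequality by the same constant, so we may and do assume $f\ge 0$; this removes all integrability and Fubini concerns below.

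Next I would introduce selectors $\delta_1,\dots,\delta_n$ i.i.d.\ $\mathrm{Bernoulli}(1/2)$, independent of $X$ and $X'$, and put $I=\{i:\delta_i=1\}$. For each ordered pair $i\ne j$ one has $E[\mathbf{1}\{i\in I,\,j\notin I\}]=1/4$, so, integrating over $\delta$ only,
\[
X^\top A X=\sum_{i\ne j}a_{ij}X_iX_j=4\,E_\delta\!\!\sum_{i\in I,\,j\notin I}a_{ij}X_iX_j .
\]
Applying Jensen's inequality to $f$ (conditionally on $X$) and then taking expectations gives
\[
E f\bigl(X^\top A X\bigr)\le E\, f\Bigl(4\!\!\sum_{i\in I,\,j\notin I}a_{ij}X_iX_j\Bigr).
\]
For a frozen value of $\delta$, the sum on the right involves $\{X_i:i\in I\}$ only as left factors and $\{X_j:j\notin I\}$ only as right factors, and these two blocks are independent; hence replacing $\{X_j:j\notin I\}$ by the matching coordinates of the independent copy $X'$ leaves the distribution unchanged, so the right-hand side equals $E\, f\bigl(4\sum_{i\in I,\,j\notin I}a_{ij}X_iX'_j\bigr)$.

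It then remains to reinstate the dropped cross terms. Let $\mathcal{G}=\sigma\bigl(\delta,\ \{X_i:i\in I\},\ \{X'_j:j\notin I\}\bigr)$ and $W=4X^\top A X'=4\sum_{i\ne j}a_{ij}X_iX'_j$. The key computation is $E[W\mid\mathcal{G}]=4\sum_{i\in I,\,j\notin I}a_{ij}X_iX'_j$: a term $a_{ij}X_iX'_j$ with $i\in I$ and $j\notin I$ is $\mathcal{G}$-measurable and is kept, while in every other off-diagonal term at least one factor is an $X_i$ with $i\notin I$ or an $X'_j$ with $j\in I$, which is independent of $\mathcal{G}$ together with the companion factor and has mean zero, so its $\mathcal{G}$-conditional expectation is $0$. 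Conditional Jensen gives $f\bigl(E[W\mid\mathcal{G}]\bigr)\le E[f(W)\mid\mathcal{G}]$, whence
\[
E\, f\Bigl(4\!\!\sum_{i\in I,\,j\notin I}a_{ij}X_iX'_j\Bigr)\le E\, f(W)=E\, f\bigl(4X^\top A X'\bigr).
\]
Chaining the three displays finishes the proof. The main obstacle is the case analysis for $E[W\mid\mathcal{G}]$: one must argue cleanly that, for each pair type other than $\{i\in I,\ j\notin I\}$, the factor that is not $\mathcal{G}$-measurable is independent of $\mathcal{G}$ \emph{and} of its companion factor, so that the mean-zero property forces the conditional expectation to vanish — this is precisely where the hypotheses ``mean zero'' and ``independent coordinates'' are used.
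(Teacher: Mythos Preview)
Your proof is correct and follows exactly the random-selector argument of Theorem~6.1.1 in \cite{vershynin2018}. Note that the paper itself does not supply a proof of this lemma; it simply quotes the result from Vershynin, so there is no paper-specific argument to compare against---your write-up faithfully reproduces the cited source.
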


\begin{lemma}[Lemma 6.2.3 in \cite{vershynin2018}]\label{Comparison to Gaussian chaos}
	Consider independent mean-zero sub-gaussian random vectors $X,X'$ in $\mathbb{R}^n$ with $\|X\|_{\psi_2}\le K$ and $\|X'\|_{\psi_2}\le K$, where
	$\|X\|_{\psi_2}=\sup_{x\in\mathcal{S}^{n-1}}\|\langle X,x\rangle\|_{\psi_2}$ and $\|W\|_{\psi_2}=\inf\{t\textgreater0:E\exp(W^2
	/t^2)\le 2\}$ for a scalar random variable $W$. Consider also independent random vectors $g,g'\sim N(0, I_n)$. Let $A$ be an $n\times n$ matrix. Then there exists a universal constant and positive $C$ such that
	\[
	E\exp(\lambda X^\top AX')\le E\exp(CK^2\lambda g^\top Ag')
	\]
	for any $\lambda\in\mathbb{R}$.
\end{lemma}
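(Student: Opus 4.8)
\emph{Proof strategy.} I would treat $X^\top A X'$ as a decoupled bilinear chaos and reduce the inequality, in two conditioning steps, to a comparison between the moment generating functions of the positive semidefinite quadratic forms $X'^\top A^\top A X'$ and $g'^\top A^\top A g'$. First, condition on $X'$: since $X$ is mean-zero with $\|X\|_{\psi_2}\le K$, the scalar $\langle X, AX'\rangle$ is mean-zero with $\|\langle X, AX'\rangle\|_{\psi_2}\le K\|AX'\|_2$, so the standard sub-gaussian moment generating function bound \citep{vershynin2018} furnishes a universal constant $c_0$ with $E_X\exp\bigl(\lambda\langle X, AX'\rangle\bigr)\le \exp\bigl(c_0\lambda^2K^2\|AX'\|_2^2\bigr)$ for every $\lambda\in\mathbb{R}$. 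Averaging over $X'$ yields
\[
E\exp(\lambda X^\top A X')\ \le\ E\exp\bigl(c_0\lambda^2 K^2\, X'^\top A^\top A X'\bigr).
\]
On the Gaussian side, conditioning on $g'$ gives the exact identity $E\exp(CK^2\lambda\, g^\top A g')=E\exp\bigl(\tfrac12 C^2K^4\lambda^2\, g'^\top A^\top A g'\bigr)$, since $\langle g,Ag'\rangle$ is $N(0,\|Ag'\|_2^2)$ given $g'$.

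The crux is the following comparison for an arbitrary symmetric positive semidefinite $M$, a mean-zero sub-gaussian $\xi$ with $\|\xi\|_{\psi_2}\le K$, and any $s\ge 0$: there is a universal $c_1$ with $E\exp\bigl(s\,\xi^\top M\xi\bigr)\le E\exp\bigl(2c_1K^2 s\, g'^\top M g'\bigr)$. To prove it I would write $M=B^2$ with $B=M^{1/2}$ symmetric, so $\xi^\top M\xi=\|B\xi\|_2^2$, and linearize the square with an auxiliary standard Gaussian $h$ independent of $\xi$:
\[
\exp\bigl(s\|B\xi\|_2^2\bigr)=E_h\exp\bigl(\sqrt{2s}\,\langle B h,\xi\rangle\bigr).
\]
Since the integrand is positive, Tonelli permits swapping the expectations, and applying the sub-gaussian moment generating function bound to $\langle\sqrt{2s}\,Bh,\xi\rangle$ conditionally on $h$ gives $E_\xi\exp(s\,\xi^\top M\xi)\le E_h\exp\bigl(2c_1 K^2 s\,\|Bh\|_2^2\bigr)=E_h\exp\bigl(2c_1K^2 s\, h^\top M h\bigr)$. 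Finally $h^\top M h$ and $g'^\top M g'$ have the same law, each equal in distribution to $\sum_k\mu_k\eta_k^2$ with $\mu_k$ the eigenvalues of $M$ and $\eta_k$ i.i.d.\ $N(0,1)$, by rotation invariance of the standard Gaussian; this establishes the comparison.

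Applying this comparison with $M=A^\top A$ and $s=c_0\lambda^2 K^2$ and chaining with the first display gives $E\exp(\lambda X^\top A X')\le E\exp\bigl(2c_0 c_1\lambda^2 K^4\, g'^\top A^\top A g'\bigr)$. Because $g'^\top A^\top A g'=\|Ag'\|_2^2\ge 0$, taking $C:=2\sqrt{c_0c_1}$ makes $2c_0c_1\lambda^2 K^4\,g'^\top A^\top A g'\le \tfrac12 C^2K^4\lambda^2\,g'^\top A^\top A g'$ pointwise; taking expectations and invoking the Gaussian identity from the first paragraph yields $E\exp(\lambda X^\top A X')\le E\exp(CK^2\lambda\, g^\top A g')$ for every $\lambda\in\mathbb{R}$, with $C$ universal (independent of $n$ and $A$).

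The main obstacle is controlling the moment generating function of the quadratic form $X'^\top A^\top A X'$ in the sub-gaussian vector $X'$, whose coordinates need not be independent, so Hanson--Wright- or independence-based arguments do not apply directly and the moment generating function may be $+\infty$ for large parameters. The Gaussian-linearization identity $\exp(s\|B\xi\|_2^2)=E_h\exp(\sqrt{2s}\,\langle Bh,\xi\rangle)$ circumvents this: it converts the quadratic form back into a linear form in $\xi$, for which only the directional sub-gaussian norm $\|\xi\|_{\psi_2}$ is needed, and it makes the possibly-infinite regime harmless since the inequality is trivial there.
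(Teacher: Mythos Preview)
The paper does not supply its own proof of this lemma; it is quoted verbatim as Lemma~6.2.3 from \cite{vershynin2018} and used as a black box in the proof of Lemma~\ref{Rademacher chaos}. Your argument is correct and is essentially the same two-step conditioning plus Gaussian-linearization proof that Vershynin gives: first reduce the bilinear form to the moment generating function of the quadratic form $\|AX'\|_2^2$ by conditioning on $X'$ and applying the scalar sub-gaussian moment generating function bound, then handle the quadratic form by the identity $\exp(s\|B\xi\|_2^2)=E_h\exp(\sqrt{2s}\,\langle Bh,\xi\rangle)$ to convert it back into a linear form amenable to the same sub-gaussian bound, and finally match the result to the Gaussian side via $E_g\exp(\mu\langle g,v\rangle)=\exp(\mu^2\|v\|_2^2/2)$. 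The only cosmetic difference is that you track the constants $c_0,c_1$ separately and set $C=2\sqrt{c_0c_1}$ at the end, whereas Vershynin absorbs them as he goes; the substance is identical.
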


\begin{lemma}[Lemma 3.4.2 in \cite{vershynin2018}]\label{sub-gaussian-for-Vec}
	Let $X=(X_1,X_2,\ldots,X_n)^\top$ be a random vector with independent, mean zero, sub-gaussian coordinates $X_i$. Then $X$ is a sub-gaussian random vector, and there exists a universal constant and positive $C$ such that
	\[
	\|X\|_{\psi_2}\le C\max\limits_{1\le i\le n}\|X_i\|_{\psi_2}.
	\]
\end{lemma}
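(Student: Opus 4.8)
The plan is to reduce the vector statement to a one-dimensional statement about weighted sums of independent mean-zero sub-gaussian scalars, and then use the standard fact that the sub-gaussian class is closed under such sums with a dimension-free constant.

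First I would unwind the definition of the sub-gaussian norm of a random vector: by definition
\[
\|X\|_{\psi_2}=\sup_{x\in\mathcal{S}^{n-1}}\|\langle X,x\rangle\|_{\psi_2},
\]
so it suffices to bound $\|\langle X,x\rangle\|_{\psi_2}$ uniformly over unit vectors $x=(x_1,\ldots,x_n)^\top$. Writing $\langle X,x\rangle=\sum_{i=1}^n x_iX_i$, this is a sum of the independent, mean-zero, sub-gaussian scalars $Y_i:=x_iX_i$, and by homogeneity of the scalar $\psi_2$-norm one has $\|Y_i\|_{\psi_2}=|x_i|\,\|X_i\|_{\psi_2}$.

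The key step is the scalar inequality $\big\|\sum_{i=1}^n Y_i\big\|_{\psi_2}^2\le c_0\sum_{i=1}^n\|Y_i\|_{\psi_2}^2$ for independent mean-zero sub-gaussian $Y_i$, which I would prove through the moment-generating function. Using the classical equivalence that $\|Y\|_{\psi_2}\le K$ (for mean-zero $Y$) implies $E\exp(\lambda Y)\le\exp(c_1\lambda^2K^2)$ for all $\lambda\in\mathbb{R}$, independence gives
\[
E\exp\Big(\lambda\sum_{i=1}^n Y_i\Big)=\prod_{i=1}^n E\exp(\lambda Y_i)\le\exp\Big(c_1\lambda^2\sum_{i=1}^n\|Y_i\|_{\psi_2}^2\Big),
\]
and the converse implication (a quadratic MGF bound with parameter $\sigma^2$ forces $\|\cdot\|_{\psi_2}\le c_2\sigma$) yields the claimed scalar inequality with $c_0=c_1c_2^2$.

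Finally I would substitute $\|Y_i\|_{\psi_2}^2=x_i^2\|X_i\|_{\psi_2}^2$ and use $\sum_i x_i^2=1$:
\[
\|\langle X,x\rangle\|_{\psi_2}^2\le c_0\sum_{i=1}^n x_i^2\|X_i\|_{\psi_2}^2\le c_0\max_{1\le i\le n}\|X_i\|_{\psi_2}^2,
\]
and taking the supremum over $x\in\mathcal{S}^{n-1}$ gives the lemma with $C=\sqrt{c_0}$. The only non-mechanical ingredient is the two-sided equivalence between the Orlicz $\psi_2$-norm $\inf\{t>0:E\exp(Y^2/t^2)\le 2\}$ and the quadratic MGF bound, with universal constants; this is classical (see the sub-gaussian characterizations in \cite{vershynin2018}) and follows from Markov's inequality, the layer-cake representation of moments, and Taylor expansion of the exponential, so it is the main---though entirely routine---obstacle.
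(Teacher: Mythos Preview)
Your proof is correct and is essentially the standard argument from Vershynin's book. Note, however, that the paper does not actually prove this lemma: it is stated as a supporting result with an explicit citation to Lemma~3.4.2 in \cite{vershynin2018} and is used without proof in the derivation of Lemma~\ref{Rademacher chaos}. So there is nothing to compare against---your write-up simply fills in the textbook proof that the paper omits.
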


\begin{lemma}[Lemma 6.2.2 in \cite{vershynin2018}]\label{Upper Bounds Gaussian chaos}
	Let $X,X'\sim N(0, I_n)$ be independent and let $A = (a_{ij})$ be an $n\times n$ matrix. Then there exists two universal and positive constants $c,C$ such that
	\[
	E\exp(\lambda X^\top AX')\le \exp(C\lambda^2\|A\|_F^2)
	\]
	for all $\lambda$ satisfying $|\lambda|\le c/\|A\|_F$.
\end{lemma}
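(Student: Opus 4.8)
The plan is to prove the estimate by a two-stage conditioning argument: first condition on $X'$, then exploit the spectral decomposition of $A^\top A$. Every ingredient is a standard Gaussian computation, so the emphasis will be on tracking the constants and the admissible range of $\lambda$.

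First I would fix $X'$ and observe that $X^\top AX'=\langle X,AX'\rangle$ is a linear image of the standard Gaussian vector $X$, hence a centered Gaussian with variance $\|AX'\|_2^2$; the Gaussian moment generating function then gives $E[\exp(\lambda X^\top AX')\mid X']=\exp(\tfrac12\lambda^2\|AX'\|_2^2)$, so that after taking the expectation over $X'$ it remains to bound $E_{X'}\exp(\tfrac12\lambda^2\,(X')^\top A^\top A\,X')$. Next I would diagonalize: writing the spectral decomposition $A^\top A=\sum_{i=1}^n s_i^2 v_i v_i^\top$ with $s_1,\dots,s_n\ge0$ the singular values of $A$ and $\{v_i\}$ an orthonormal basis of $\mathbb{R}^n$, the coordinates $g_i:=v_i^\top X'$ are i.i.d.\ $N(0,1)$ and $(X')^\top A^\top A X'=\sum_i s_i^2 g_i^2$. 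By independence and the chi-square moment generating function, $E_{X'}\exp(\tfrac12\lambda^2\sum_i s_i^2 g_i^2)=\prod_{i=1}^n(1-\lambda^2 s_i^2)^{-1/2}$, which is finite precisely when $\lambda^2 s_i^2<1$ for every $i$.

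Finally I would take logarithms and invoke the elementary inequality $-\tfrac12\log(1-t)\le t$ on $t\in[0,\tfrac12]$ (the difference $t+\tfrac12\log(1-t)$ vanishes at $t=0$ and has nonnegative derivative on $[0,\tfrac12]$). Setting the absolute constant $c=1/\sqrt2$, the hypothesis $|\lambda|\le c/\|A\|_F$ forces $\lambda^2 s_i^2\le\lambda^2\sum_j s_j^2=\lambda^2\|A\|_F^2\le c^2=\tfrac12$ for each $i$, so the inequality applies termwise and
\[
\log E\exp(\lambda X^\top AX')=-\tfrac12\sum_{i=1}^n\log(1-\lambda^2 s_i^2)\le\lambda^2\sum_{i=1}^n s_i^2=\lambda^2\|A\|_F^2,
\]
using $\sum_i s_i^2=\mathrm{tr}(A^\top A)=\|A\|_F^2$; exponentiating yields the claim with $C=1$ and $c=1/\sqrt2$. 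The computation involves $\lambda$ only through $\lambda^2$, so no restriction on the sign of $\lambda$ is needed.

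There is no real obstacle here. The only two points requiring a little care are: (i) observing that the Frobenius-norm constraint $|\lambda|\le c/\|A\|_F$ is \emph{strictly stronger} than the bare convergence requirement $\lambda^2\|A\|_{\mathrm{op}}^2<1$ (because $\|A\|_{\mathrm{op}}\le\|A\|_F$), so it simultaneously guarantees that the product is finite and that the logarithmic inequality is valid for every summand; and (ii) keeping the constants explicit so that the resulting $C$ and $c$ are genuinely universal, independent of $n$ and of $A$.
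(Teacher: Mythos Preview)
Your proof is correct. The paper does not supply its own argument for this lemma; it simply cites it as Lemma~6.2.2 in Vershynin's \emph{High-Dimensional Probability}, and your two-stage conditioning proof (integrate out $X$ to get a Gaussian MGF, then diagonalize $A^\top A$ and use the $\chi^2$ MGF with the elementary bound $-\tfrac12\log(1-t)\le t$ on $[0,\tfrac12]$) is essentially the standard derivation found there, with the explicit constants $C=1$, $c=1/\sqrt{2}$.
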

\begin{proof}[Proof of Lemma \ref{Symmetrization}] Note that
	\begin{eqnarray*}
		&&h(Z_i,Z_j)-\tilde{h}(Z_i)-\tilde{h}(Z_j)+Eh(Z_i,Z_j)\\
		&=&h(Z_i,Z_j)-E_{Z'}h(Z_i,Z'_j)-E_{Z'}h(Z'_i,Z_j)+E_{Z'}h(Z'_i,Z'_j).
	\end{eqnarray*}
	Then we have
	\begin{eqnarray*}
		&&E\sup\limits_{h\in\mathcal{H}}\mathcal{U}_n^{\textup{Deg}}(h)\\
		&=&E\sup\limits_{h\in\mathcal{H}}\frac{1}{n(n-1)}\sum_{1\le i\neq j\le n}\left\{h(Z_i,Z_j)-\tilde{h}(Z_i)-\tilde{h}(Z_j)+Eh(Z_i,Z_j)\right\}\\
		&=&E\sup\limits_{h\in\mathcal{H}}\frac{1}{n(n-1)}\sum_{1\le i\neq j\le n}\left\{h(Z_i,Z_j)-E_{Z'}h(Z_i,Z'_j)-E_{Z'}h(Z'_i,Z_j)+E_{Z'}h(Z'_i,Z'_j)\right\}\\
		&\le&EE_{Z'}\sup\limits_{h\in\mathcal{H}}\frac{1}{n(n-1)}\sum_{1\le i\neq j\le n}\left\{h(Z_i,Z_j)-h(Z_i,Z'_j)-h(Z'_i,Z_j)+h(Z'_i,Z'_j)\right\}\\
		&=&EE_{\epsilon}\sup\limits_{h\in\mathcal{H}}\frac{1}{n(n-1)}\sum_{1\le i\neq j\le n}\epsilon_i\epsilon_j\left\{h(Z_i,Z_j)-h(Z_i,Z'_j)-h(Z'_i,Z_j)+h(Z'_i,Z'_j)\right\},
	\end{eqnarray*}
	where the inequality follows from Jensen's inequality and the last equality is due to Fubini's Theorem and a direct corollary of the symmetrization inequality in \cite{Sherman1994}.
	
	We prove the second inequality in (\ref{Dudley_Rad_Chaos}) through the chaining technique.  Let $\alpha_0=b$ and for any $t\in \mathbb{N}_+$, let $\alpha_t=2^{-t}b$. For each $t$, let $\mathcal{C}_t$ be a $\alpha_t$-cover of $\mathcal{H}$ w.r.t. $d_{U,n}(\cdot,\cdot)$ such that $|\mathcal{C}_t| = N(\alpha_t, \mathcal{H}, d_{U,n})$. For each $h\in\mathcal{H}$, there exists a function $\hat{h}_t\in \mathcal{C}_t$ such that $d_{U,n}(h,\hat{h}_t)\le\alpha_t$. Let $\hat{h}_0\equiv0$ and for any $T\in\mathbb{N}_+$, we have the following chaining expression for $h$:
	\[
	h=h-\hat{h}_T+\sum_{t=1}^{T}(\hat{h}_t-\hat{h}_{t-1}).
	\]
	Let
	\[
	G(h,Z_i,Z_j,Z'_i,Z'_j)=h(Z_i,Z_j)-h(Z_i,Z'_j)-h(Z'_i,Z_j)+h(Z'_i,Z'_j).
	\]
	Hence for any $T\in\mathbb{N}_+$, we have
	\begin{eqnarray}
		\mathcal{S}_n^{\textup{Deg}}(\mathcal{H})&=&E_{\epsilon}\sup\limits_{h\in\mathcal{H}}\frac{1}{n(n-1)}\sum_{1\le i\neq j\le n}\epsilon_i\epsilon_jG(h,Z_i,Z_j,Z'_i,Z'_j)\nonumber\\
		&\le&E_{\epsilon}\sup\limits_{h\in\mathcal{H}}\frac{1}{n(n-1)}\sum_{1\le i\neq j\le n}\epsilon_i\epsilon_jG(h-\hat{h}_T,Z_i,Z_j,Z'_i,Z'_j)\nonumber\\
		&+&\sum_{t=1}^{T}E_{\epsilon}\sup\limits_{h\in\mathcal{H}}\frac{1}{n(n-1)}\sum_{1\le i\neq j\le n}\epsilon_i\epsilon_jG(\hat{h}_t-\hat{h}_{t-1},Z_i,Z_j,Z'_i,Z'_j)\nonumber\\
		&\le&\alpha_T+\sum_{t=1}^{T}E_{\epsilon}\sup\limits_{h\in\mathcal{H}}\frac{1}{n(n-1)}\sum_{1\le i\neq j\le n}\epsilon_i\epsilon_jG(\hat{h}_t-\hat{h}_{t-1},Z_i,Z_j,Z'_i,Z'_j),\label{Chaining_upper_bound}
	\end{eqnarray}
	where the last inequality is due to
	\begin{eqnarray*}
		&&\frac{1}{n(n-1)}\sum_{1\le i\neq j\le n}\epsilon_i\epsilon_jG(h-\hat{h}_T,Z_i,Z_j,Z'_i,Z'_j)\\
		&\le&\left\{\frac{1}{n(n-1)}\sum_{1\le i\neq j\le n}(\epsilon_i\epsilon_j)^2\right\}^{1/2}\left\{\frac{1}{n(n-1)}\sum_{1\le i\neq j\le n}G^2(h-\hat{h}_T,Z_i,Z_j,Z'_i,Z'_j)\right\}^{1/2}\\
		&=&d_{U,n}(h,\hat{h}_T)\le\alpha_T.
	\end{eqnarray*}
	Now the second term in (\ref{Chaining_upper_bound}) is the summation of empirical Rademacher chaos w.r.t. the function classes $\mathcal{C}_t-\mathcal{C}_{t-1}=\{h_t-h_{t-1}:h_t\in\mathcal{C}_t,h_{t-1}\in\mathcal{C}_{t-1}\},t=1,2,\ldots,T$. For any measurable function $h\in\mathcal{H}$, let
	\[
	M(h)=(a^h_{ij})\in \mathbb{R}^{n\times n}~\text{with}~a^h_{ij}=0,~\text{if}~i=j~\text{and otherwise}~a^h_{ij}=G(\hat{h}_t-\hat{h}_{t-1},Z_i,Z_j,Z'_i,Z'_j).
	\]
	Then for any $h_t\in\mathcal{C}_t,h_{t-1}\in\mathcal{C}_{t-1}$,
	\begin{eqnarray*}
		&&1/\sqrt{n(n-1)}\|M(\hat{h}_t-\hat{h}_{t-1})\|_F\\
		&=&\left\{\frac{1}{n(n-1)}\sum_{1\le i\neq j\le n}G^2(\hat{h}_t-\hat{h}_{t-1},Z_i,Z_j,Z'_i,Z'_j)\right\}^{1/2}\\
		&=&d_{U,n}(\hat{h}_t,\hat{h}_{t-1})\\
		&\le&d_{U,n}(h,\hat{h}_{t})+d_{U,n}(h,\hat{h}_{t-1})\\
		&\le&\alpha_t+\alpha_{t-1}\\
		&=&3\alpha_t.
	\end{eqnarray*}
	Applying Lemma \ref{Rademacher chaos} to $\mathcal{C}_t-\mathcal{C}_{t-1},t=1,2,\ldots,T$, we have there exists a universal constant $C$ such that
	\begin{eqnarray*}
		&&\mathcal{S}_n^{\textup{Deg}}(\mathcal{H})\\
		&\le&\alpha_T+\sum_{t=1}^{T}E_{\epsilon}\sup\limits_{h\in\mathcal{H}}\frac{1}{n(n-1)}\sum_{1\le i\neq j\le n}\epsilon_i\epsilon_jG(\hat{h}_t-\hat{h}_{t-1},Z_i,Z_j,Z'_i,Z'_j)\\
		&\le&\alpha_T+\frac{3C\sum_{t=1}^{T}\alpha_t\log(|\mathcal{C}_t|\cdot|\mathcal{C}_{t-1}|)}{\sqrt{n(n-1)}}\\
		&\le&\alpha_T+\frac{6C\sum_{t=1}^{T}\alpha_t\log(|\mathcal{C}_t|)}{\sqrt{n(n-1)}}\\
		&=&\alpha_T+\frac{12C\sum_{t=1}^{T}(\alpha_t-\alpha_{t+1})\log(N(\alpha_t, \mathcal{H}, d_{U,n}))}{\sqrt{n(n-1)}}\\
		&\le&\alpha_T+\frac{12C\int_{\alpha_{T+1}}^{\alpha_{0}}\log(N(u, \mathcal{H}, d_{U,n}))du}{\sqrt{n(n-1)}}\\
		&\le&\alpha_T+\frac{24C\int_{\alpha_{T+1}}^{\alpha_{0}}\log(N(u, \mathcal{H}, d_{U,n}))du}{n}.
	\end{eqnarray*}
	For any $0\textless\delta\textless b$, let $T=\min\{K\in\mathbb{N}_+:2^{-K}b\le\delta\}$, then
	\begin{eqnarray*}
		\mathcal{S}_n^{\textup{Deg}}(\mathcal{H})&\le&\delta+\frac{24C\int_{\delta/2}^{b}\log(N(u, \mathcal{H}, d_{U,n}))du}{n} \nonumber  \\
		& \le& \delta+\frac{24C\int_{\delta/4}^{b}\log(\mathcal{N}_{2n(n-1)}(u,\|\cdot\|_{\infty},\mathcal{H}))du}{n}
	\end{eqnarray*}
	where the last inequality follows from (\ref{Covering_Relationship}). This completes the proof of Lemma \ref{Symmetrization}.
\end{proof}

\begin{lemma}\label{Network_Construction}
	(a) Let $\mathcal{D}\otimes\mathcal{R}=\{h(z)=D(y,R(x)):D:\mathbb{R}^{d_Y+d_0}\to\mathbb{R}\in\mathcal{D},R:\mathbb{R}^{d_X}\to\mathbb{R}^{d_0}\in\mathcal{R}\}$. If $\mathcal{D}$ is a class of functions implemented by a ReLU network with width $\mathcal{W}_{\mathcal{D}}$ and depth $\mathcal{L}_{\mathcal{D}}$, and $\mathcal{R}$ is a class of functions implemented by a ReLU network with width $\mathcal{W}_{\mathcal{R}}$ and depth $\mathcal{L}_{\mathcal{R}}$, then each element of $\mathcal{D}\otimes\mathcal{R}$ can be implemented by a ReLU network with width $\max\{\mathcal{W}_{\mathcal{D}},\mathcal{W}_{\mathcal{R}}+2d_Y,2(d_Y+d_0)\}$ and depth $\mathcal{L}_{\mathcal{D}}+\mathcal{L}_{\mathcal{R}}+1$.
	
	(b) Let $\mathcal{Q}\circ\mathcal{R}=\{h(z)=Q(R(x)):Q:\mathbb{R}^{d_0}\to\mathbb{R}^{d_{\textup{out}}}\in\mathcal{Q},R:\mathbb{R}^{d_X}\to\mathbb{R}^{d_0}\in\mathcal{R}\}$. If $\mathcal{Q}$ is a class of functions implemented by a ReLU network with width $\mathcal{W}_{\mathcal{Q}}$ and depth $\mathcal{L}_{\mathcal{Q}}$, and $\mathcal{R}$ is a class of functions implemented by a ReLU network with width $\mathcal{W}_{\mathcal{R}}$ and depth $\mathcal{L}_{\mathcal{R}}$, then each element of $\mathcal{Q}\circ\mathcal{R}$ can be implemented by a ReLU network with width $\max\{\mathcal{W}_{\mathcal{Q}},\mathcal{W}_{\mathcal{R}},2d_0\}$ and depth $\mathcal{L}_{\mathcal{Q}}+\mathcal{L}_{\mathcal{R}}+1$.
\end{lemma}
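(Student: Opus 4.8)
The plan is to prove both statements by the familiar ReLU composition-and-parallelization construction, tracking carefully the affine-then-ReLU layer pattern fixed in Section~\ref{DNN_description}. Two elementary facts will be used throughout. First, the composition of two affine maps is again affine, so when two sub-networks are stacked the trailing affine map of the first and the leading affine map of the second can be absorbed into a single layer. Second, a scalar $v$ can be transported unchanged through one ReLU layer by storing the pair $(\sigma(v),\sigma(-v))$ and recovering $v=\sigma(v)-\sigma(-v)$ at the next affine map; this costs two neurons per transported coordinate, and since the stored values are nonnegative the pair then propagates through every later ReLU layer by the identity.

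For part (a), write the input as $(y,x)\in\mathbb{R}^{d_Y}\times\mathbb{R}^{d_X}$. First I would let the composed network run $R$ on the $x$-block while carrying the $y$-block alongside: the initial affine map produces the pre-activations of the first hidden layer of $R$ together with the pair $(y,-y)$, and every subsequent affine map of $R$ is augmented by an identity block acting on the stored $y$-coordinates. This uses the $\mathcal{L}_{\mathcal{R}}$ hidden layers of $R$, each now of width at most $\mathcal{W}_{\mathcal{R}}+2d_Y$, and the last affine map of $R$ is set up to emit $R(x)$, its negation, and the reconstruction of $y$ in sign-split form, so that after the ensuing ReLU one obtains a single buffer hidden layer holding $\sigma(\pm R(x))$ and $\sigma(\pm y)$, of width $2(d_Y+d_0)$. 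Finally I would fuse the map that reconstructs $(y,R(x))$ from this buffer with the first affine map of $D$ into one layer, and run the remaining $\mathcal{L}_{\mathcal{D}}$ hidden layers of $D$ unchanged; the output equals $D(y,R(x))$. Counting hidden layers gives $\mathcal{L}_{\mathcal{R}}+1+\mathcal{L}_{\mathcal{D}}$, and the maximal width is $\max\{\mathcal{W}_{\mathcal{R}}+2d_Y,\ 2(d_Y+d_0),\ \mathcal{W}_{\mathcal{D}}\}$, which is the assertion.

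Part (b) is the same construction with the $y$-block absent: run $R$ on $x$ through its $\mathcal{L}_{\mathcal{R}}$ hidden layers of width at most $\mathcal{W}_{\mathcal{R}}$, let the last affine map of $R$ emit $R(x)$ and its negation so that one buffer hidden layer of width $2d_0$ holds $\sigma(\pm R(x))$, fuse the reconstruction of $R(x)$ with the first affine map of $Q$, and run the $\mathcal{L}_{\mathcal{Q}}$ hidden layers of $Q$; this yields depth $\mathcal{L}_{\mathcal{Q}}+\mathcal{L}_{\mathcal{R}}+1$ and width $\max\{\mathcal{W}_{\mathcal{Q}},\mathcal{W}_{\mathcal{R}},2d_0\}$. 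The argument is entirely constructive and elementary, so no genuine obstacle arises beyond the bookkeeping; the only subtle point is that the output of $R$ and the carried copy of $y$ leave affine maps with no intervening nonlinearity and may have arbitrary sign, which is exactly why inserting the dedicated buffer layer — and paying the extra ``$+1$'' in depth and the $2(d_Y+d_0)$, respectively $2d_0$, in width — is the clean way to re-expose them, in sign-split form, as legitimate inputs to the next sub-network without any assumption on their ranges.
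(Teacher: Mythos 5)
Your construction is correct and is precisely the concatenation, parallelization, and identity-realization scheme that the paper's one-line proof defers to Subsection~B.1.1 of \cite{nakada2020adaptive}; you have simply spelled out the bookkeeping (sign-split carry of $y$, the buffer layer to re-expose signed quantities, and the absorption of adjacent affine maps) that the paper leaves to the reference. The only minor remark is that the ``$+1$'' buffer layer is not strictly needed --- the final affine map of $R$, the sign-reconstruction, and the first affine map of $D$ (or $Q$) are all affine and could be fused into one, giving depth $\mathcal{L}_{\mathcal{D}}+\mathcal{L}_{\mathcal{R}}$ and width $\max\{\mathcal{W}_{\mathcal{D}},\mathcal{W}_{\mathcal{R}}+2d_Y\}$ in part~(a) --- but your more conservative count matches the bound the lemma actually asserts, so nothing is lost.
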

\begin{proof}[Proof of Lemma \ref{Network_Construction}]
	The proof follows from the method in Subsection B.1.1 in \cite{nakada2020adaptive}, which shows how to construct a neural network from sub-neural networks through concatenation, parallelization, and realization of identity function.
\end{proof}

\section{Additional numerical studies and experiment details}\label{Additional_Simulation}
This Appendix contains Table \ref{Table_for_p30} which includes some additional simulation results.

\begin{table}[H]
	\caption{\label{Table_for_p30}Distance correlation (DC), average prediction errors (APE), and their standard errors (based on 6-fold validation and best hyperparameters selected by validation method) for $p=30$}
	\centering
	 \resizebox{\textwidth}{!}{
		\begin{tabular}{cccccccccc}
			\hline
			\hline
			&        & \multicolumn{2}{c}{Model \textup{\uppercase\expandafter{\romannumeral1}}} & \multicolumn{2}{c}{Model \textup{\uppercase\expandafter{\romannumeral2}}} & \multicolumn{2}{c}{Model \textup{\uppercase\expandafter{\romannumeral3}}} & \multicolumn{2}{c}{Model \textup{\uppercase\expandafter{\romannumeral4}}} \\
			\hline
			& Method & DC           & APE          & DC           & APE          & DC           & APE          & DC           & APE          \\
			\hline
			\multirow{5}{*}{S(\romannumeral1)} & MSRL  & .95(.00)     & 0.30(.00)    & .86(.01)     & \textbf{0.30}(.01)    & .92(0.01)    & \textbf{0.54}(.05)    & .94(.00)     & \textbf{0.31}(.01)    \\
			& SIR    & .97(.00)     & \textbf{0.25}(.01)    & .00(.00)     & 0.79(.01)    & .20(0.00)    & 1.70(.02)    & .09(.02)     & 1.22(.01)    \\
			& SAVE   & .96(.00)     & \textbf{0.25}(.01)    & .00(.00)     & 0.79(.01)    & .20(0.01)    & 1.71(.04)    & .08(.02)     & 1.23(.01)    \\
			& GSIR   & .94(.00)     & 0.32(.01)    & .84(.01)     & 0.32(.02)    & .74(0.01)    & 0.72(.01)    & .65(.04)     & 0.34(.01)    \\
			& GSAVE  & .96(.00)     & 0.27(.01)    & .35(.10)     & 0.63(.05)    & .42(0.04)    & 1.29(.06)    & .32(.04)     & 0.86(.09)\\
			\hline
			\multirow{5}{*}{S(\romannumeral2)} & MSRL  & .93(.00)     & 0.29(.00)    & .88(.01)     & \textbf{0.31}(.01)    & .87(0.03)    & \textbf{0.60}(.08)    & .92(.01)     & 0.43(.05)    \\
			& SIR    & .95(.00)     & \textbf{0.25}(.00)    & .00(.00)     & 0.90(.04)    & .16(0.02)    & 1.51(.06)    & .05(.02)     & 1.44(.10)    \\
			& SAVE   & .95(.00)     & \textbf{0.25}(.00)    & .00(.00)     & 0.90(.04)    & .15(0.02)    & 1.54(.06)    & .06(.02)     & 1.44(.10)    \\
			& GSIR   & .93(.01)     & 0.30(.01)    & .87(.01)     & 0.32(.01)    & .72(0.03)    & 0.68(.03)    & .80(.03)     & \textbf{0.38}(.04)    \\
			& GSAVE  & .95(.00)     & 0.26(.00)    & .42(.02)     & 0.68(.04)    & .33(0.04)    & 1.32(.04)    & .30(.08)     & 0.99(.13)    \\
			\hline
			\multirow{5}{*}{S(\romannumeral3)} & MSRL  & .99(.00)     & 0.30(.01)    & .97(.00)     & \textbf{0.38}(.01)    & .94(0.01)    & \textbf{2.01}(.21)    & .97(.01)     & \textbf{0.66}(.08)    \\
			& SIR    & .99(.00)     & \textbf{0.25}(.00)    & .00(.00)     & 2.03(.04)    & .33(0.02)    & 6.36(.23)    & .02(.01)     & 3.65(.07)    \\
			& SAVE   & .99(.00)     & 0.27(.00)    & .00(.00)     & 2.04(.03)    & .32(0.02)    & 6.41(.25)    & .03(.01)     & 3.63(.06)    \\
			& GSIR   & .84(.04)     & 0.97(.14)    & .93(.02)     & 0.52(.06)    & .71(0.04)    & 2.35(.31)    & .72(.01)     & 0.96(.21)    \\
			& GSAVE  & .99(.00)     & 0.34(.03)    & .66(.13)     & 1.16(.24)    & .64(0.01)    & 3.85(.42)    & .36(.02)     & 2.47(.09)    \\
			\hline
			\multirow{5}{*}{S(\romannumeral4)} & MSRL  & .96(.00)     & 0.28(.01)    & .89(.01)     & \textbf{0.33}(.01)    & .90(0.02)    & \textbf{0.87}(.08)    & .93(.01)     & 0.41(.02)    \\
			& SIR    & .97(.00)     & \textbf{0.25}(.00)    & .00(.00)     & 0.99(.05)    & .19(0.02)    & 2.38(.20)    & .09(.01)     & 1.44(.09)    \\
			& SAVE   & .97(.00)     & \textbf{0.25}(.00)    & .00(.00)     & 0.99(.06)    & .18(0.03)    & 2.39(.21)    & .09(.01)     & 1.44(.09)    \\
			& GSIR   & .95(.00)     & 0.29(.01)    & .89(.01)     & \textbf{0.33}(.01)    & .68(0.06)    & 0.93(.20)    & .75(.04)     & \textbf{0.36}(.05)    \\
			& GSAVE  & .95(.00)     & 0.29(.01)    & .70(.02)     & 0.54(.03)    & .53(0.06)    & 1.52(.22)    & .50(.06)     & 0.81(.06)  \\
			\hline
		\end{tabular}}
\end{table}

\subsection{Experimental Details}\label{Network_Struc}
Python codes will be made publicly available on a GitHub repository, we still
Here we provide the details for the numerical experimental details.  In all the experiments, we set $\lambda=2$ and use the Adam optimizer \citep{kingma2014adam} in Pytorch with learning rate $\textit{lr}$ and weight decay parameter $\textit{wd}$ specified in each experiment.  The activation function for the neural networks is the LeakyReLU.  The batch size in all experiments is $512$.

\subsection{Simulation for Model (\ref{toy_model})}\label{Descrip_Implement_toy}
The training data sample size is $4000$ and no ES criterion is applied. The maximum number of epoches is $3000$. When the reference distribution is $\gamma_{U}$, $\mathcal{R}$ has $2$ hidden layer with width $(64,64)$. When the reference distribution is $\gamma_{S}$, $\mathcal{R}$ has $3$ hidden layer with width $(64,64,64)$. Both $\mathcal{D}$ and $\mathcal{Q}$ has $2$ hidden layers with widths $(32,32)$. $\textit{lr}$ is $0.003$ and $\textit{wd}$ is $0$. For GSIR and GSAVE, we use some default tuning parameters. And the kernel estimates are based on $10000$ new samples.

\subsection{Simulation Study in Subsection \ref{Simulation_Study}}\label{Descrip_Implement_Sim} In this experiment, the total $6000$ data points are first divided into $5000$ training-validation data points and $1000$ testing data points and then $5000$ training-validation data points are divided into $4000$ training data points and $1000$ validation data points.

In MSRL training, ES criterion is applied and it is the empirical distance correlation (DC) calculated based on the validation data. The maximum number of epochs is $1000$ and $\textit{wd}$ is $0.0001$. The \textit{patience} is $200$, where \textit{patience} is the number of epochs until termination if no improvement is made on the validation dataset. The learning procedure is replicated $10$ times with different initial weights for the parameters of neural networks and we choose the one with the largest empirical DC on the validation dataset among the ten representation estimate as the ultimate representation estimate. $\mathcal{R}$ has $3$ hidden layers with widths $(32, 16, 8)$. Both $\mathcal{D}$ and  $\mathcal{Q}$ has $2$ hidden layers with widths $(16,8)$. For Model \textup{\uppercase\expandafter{\romannumeral1}} \& \textup{\uppercase\expandafter{\romannumeral2}}, $\textit{lr}=0.001$, $d_0=1$. For Model \textup{\uppercase\expandafter{\romannumeral3}} $\textit{lr}=0.0003$, $d_0=2$. For Model \textup{\uppercase\expandafter{\romannumeral4}}, $\textit{lr}=0.001$, $d_0=2$.

For SIR and SAVE, the tuning parameter is the slice number $\textit{sn}$. For each $\textit{sn}\in\{5, 10, 15, 20, 25, 30\}$, we use the validation data to calculate the empirical DC between the response and the representation learned from training data. We use the slice number with the largest empirical DC as the ultimate choice for the tuning parameter $\textit{sn}$.  For GSIR and GSAVE, we apply the tuning parameter selection approach provided in \cite{Li2018book} to the validation data.

\subsection{Real Datasets}\label{Descrip_Implement_Real} For each data split, $20\%$ of data is used as the testing data and then from the other $80\%$ of the data, we randomly select $3000$ data points as the validation data and the rest part is the training data.

For GSIR and GSAVE, limited by our computational capability, we only subsample $4000$, $2000$, $3000$ data points from the above training, validation, and testing data as the new training, validation, and testing data, and conduct the estimation procedure we have stated in subsection \ref{Descrip_Implement_Sim}

In the superconductivty dataset, for our MSRL, $\mathcal{R}$ has $3$ hidden layers with widths $(128, 128, 128)$. Both $\mathcal{D}$ and  $\mathcal{Q}$ has $2$ hidden layers with widths $(64, 64)$.  The maximum number of epochs is $2000$, $\textit{lr}=0.0001,\textit{wd}=0.001$ and the \textit{patience} is $400$.  The number of different initial weights is still $10$.  For the goal of data visualization, we randomly sample $800$ data points from the testing data for one fold to conduct plotting.

In the pole-telecommunication dataset, for MSRL, $\mathcal{R}$ has $2$ hidden layers with widths $(30, 25)$. Both $\mathcal{D}$ and  $\mathcal{Q}$ has $2$ hidden layers with widths $(16, 8)$.  The maximum number of epochs is $1000$, $\textit{lr}=0.001,\textit{wd}=0.0001$ and the \textit{patience} is $200$.  The number of different initial weights is still $10$. For the goal of data visualization, we randomly sample $400$ data points from the testing data for one fold to conduct plotting.

\subsection{The extra plots for the superconductivity and pole-telecommunication datasets}\label{extra-plots}
In Figure \ref{SuperConduct_drawingA}, we plot the critical temperature against each component of the learned representation with $d_0=5$ for SIR and SAVE.  We fit a quadratic model to check how the components impact the response.
\begin{figure}[htbp]
	\centering
	\begin{minipage}[t]{0.9\linewidth}
		\centering
		\rotatebox{90}{\quad\tiny SIR}
		\includegraphics[width=0.10\textheight]{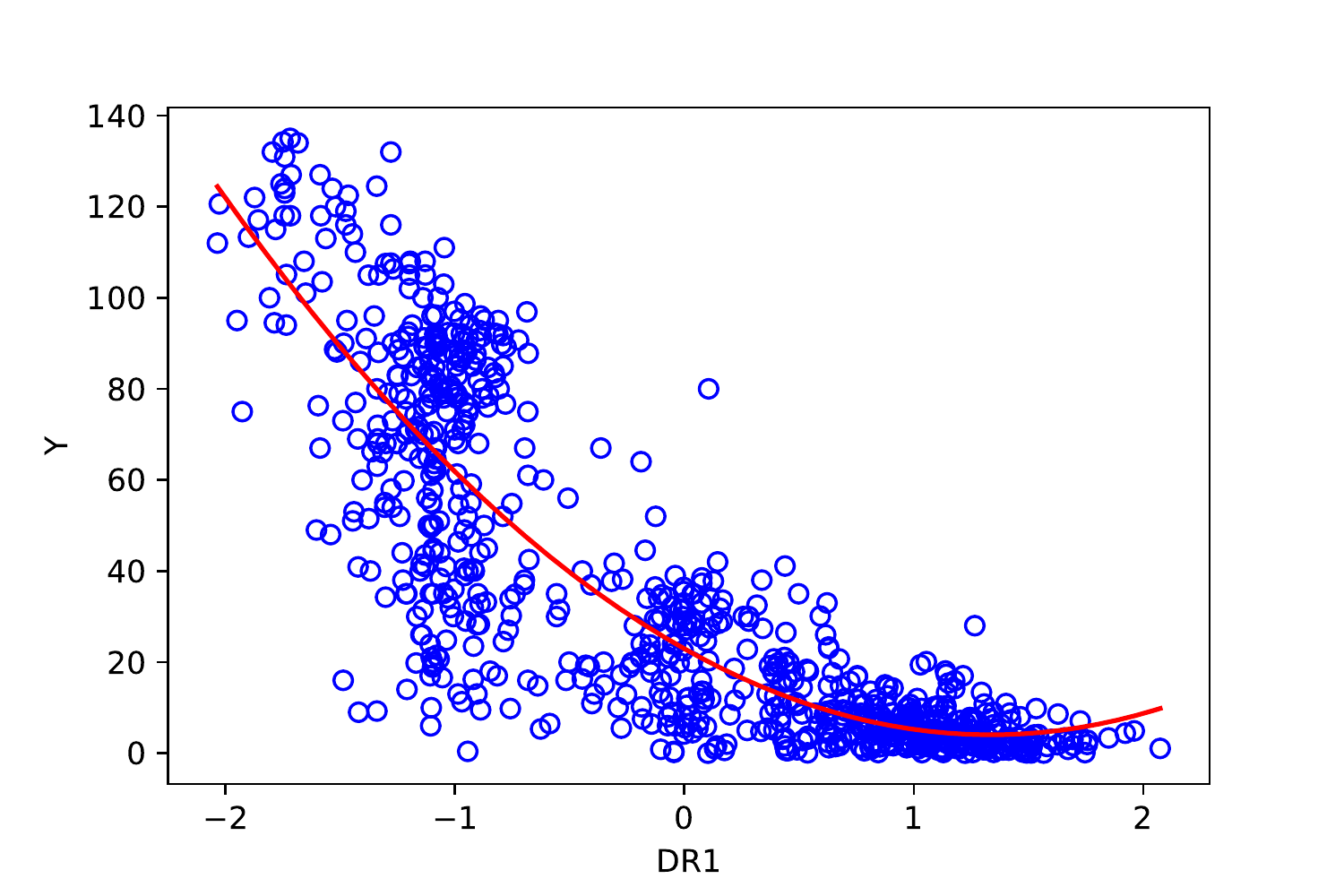} \includegraphics[width=0.10\textheight]{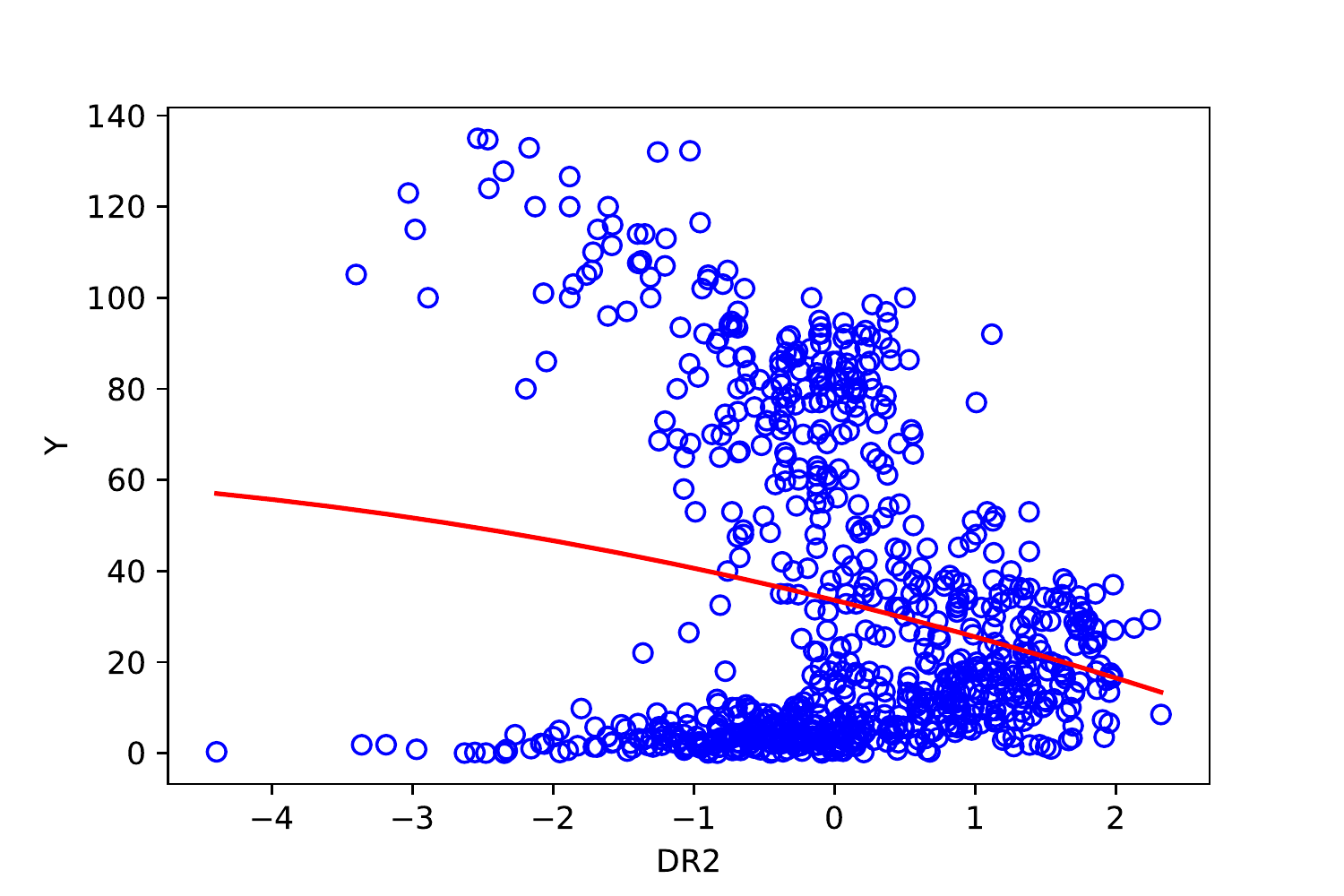}
		\includegraphics[width=0.10\textheight]{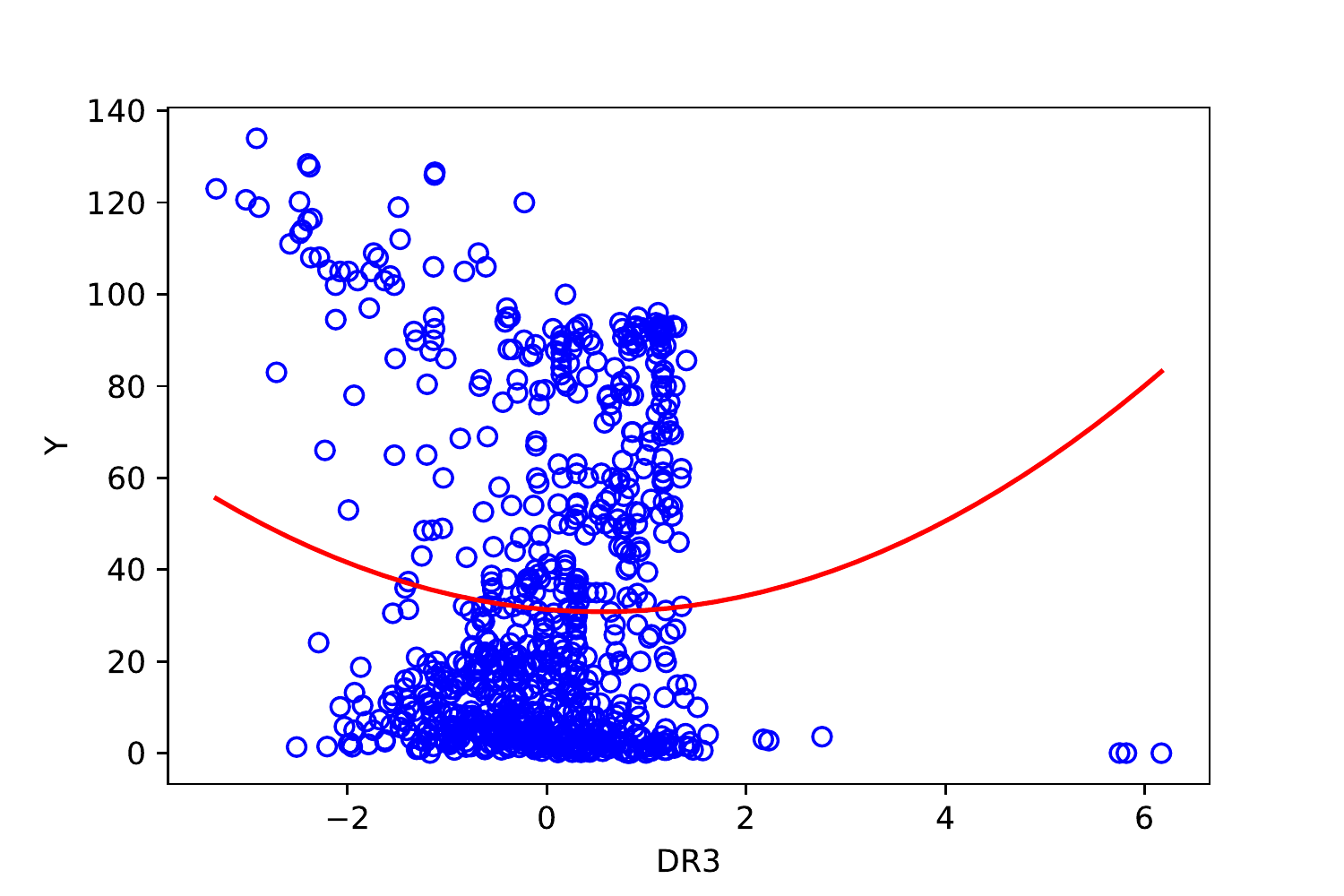} \includegraphics[width=0.10\textheight]{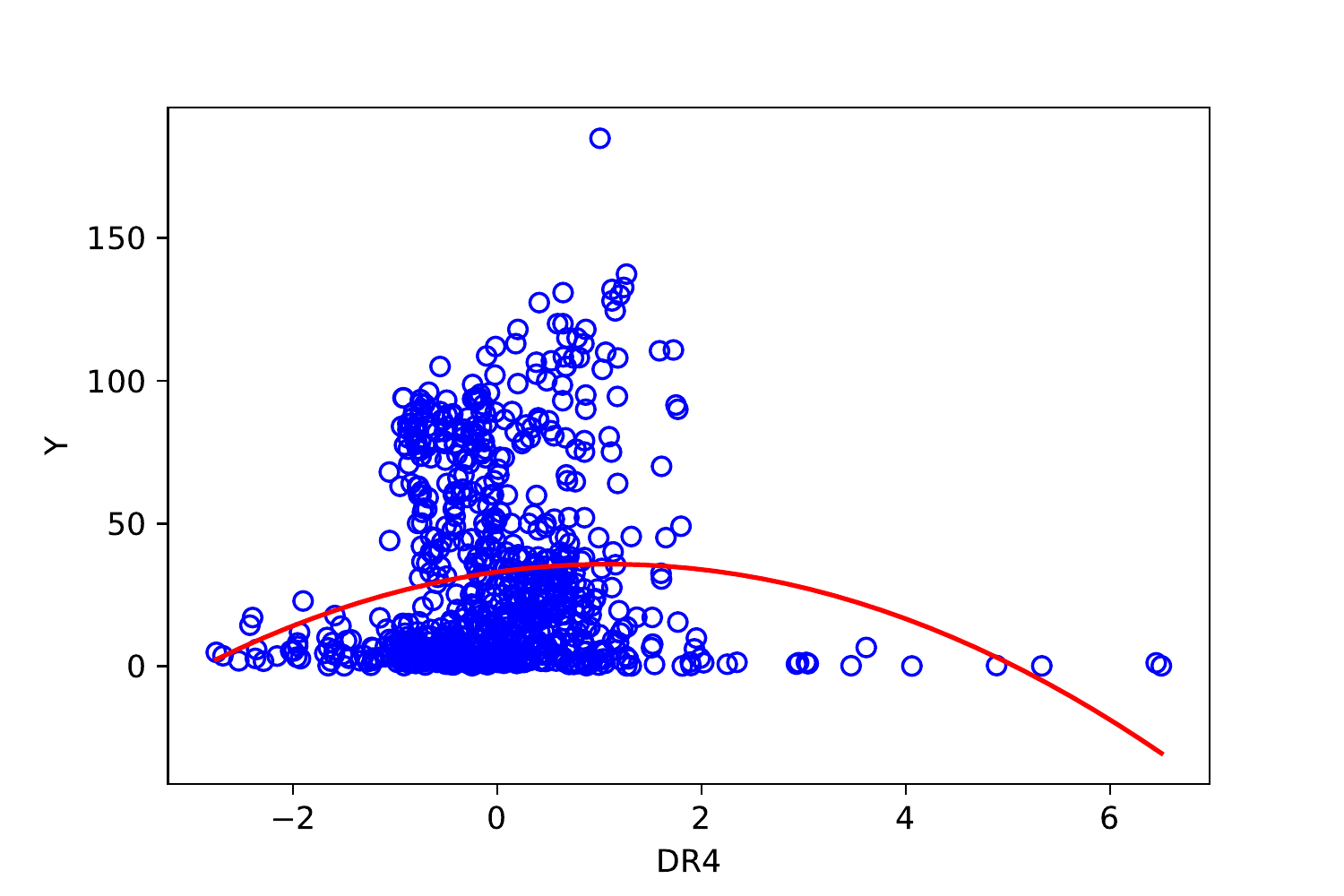}
		\includegraphics[width=0.10\textheight]{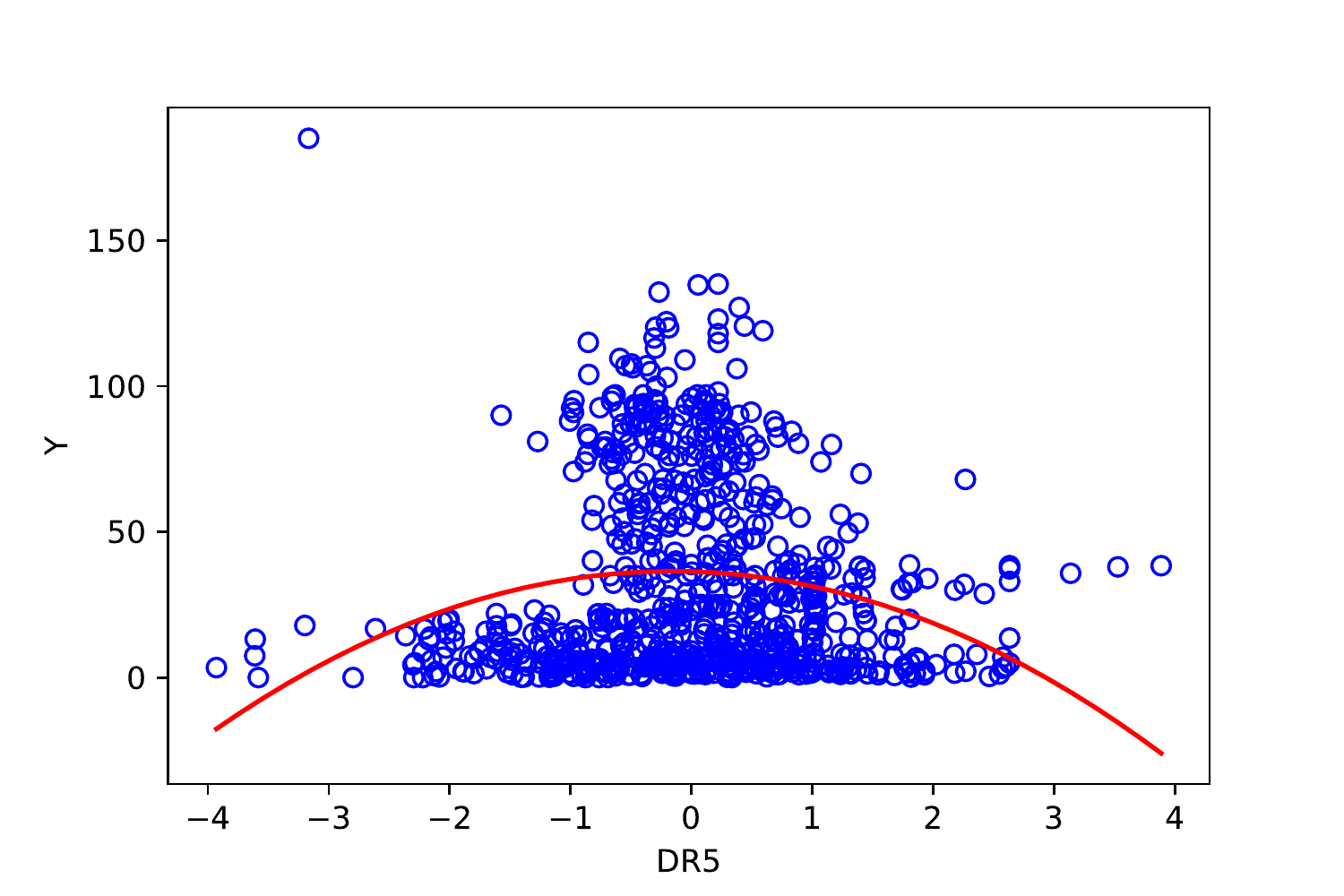}
	\end{minipage}
	\\
	\centering
	\begin{minipage}[t]{0.9\linewidth}
		\centering
		\rotatebox{90}{\quad\tiny SAVE}
		\includegraphics[width=0.10\textheight]{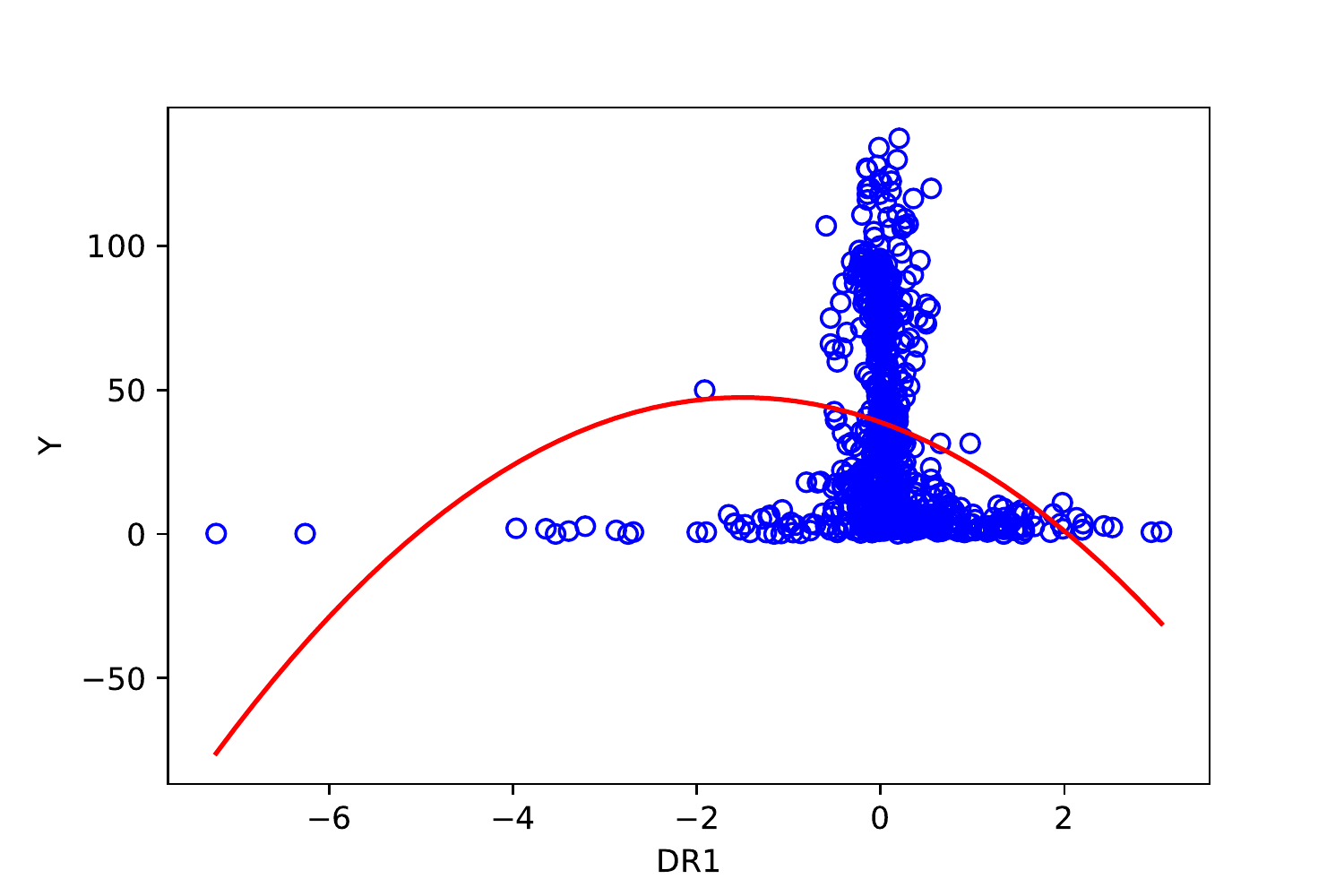} \includegraphics[width=0.10\textheight]{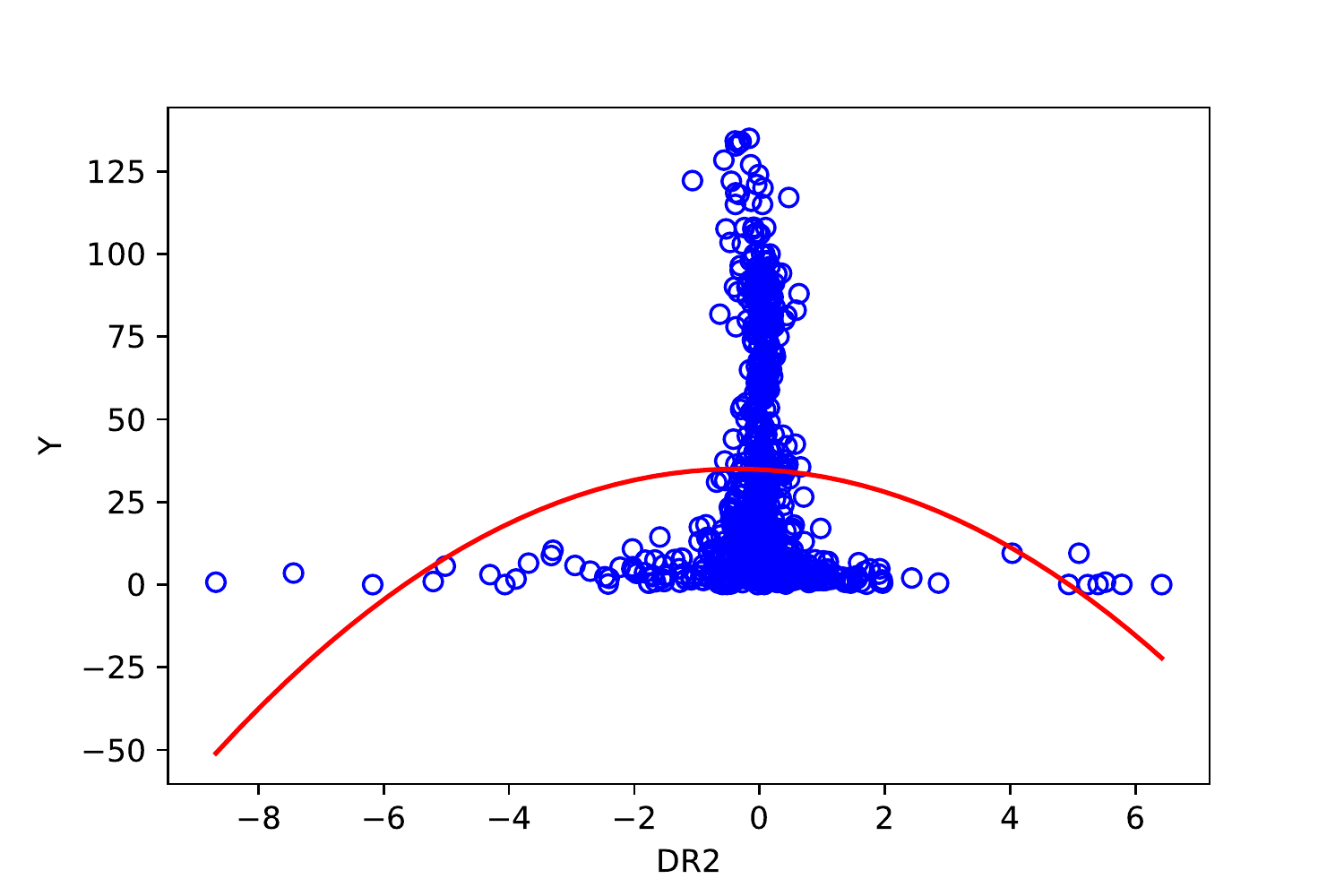}
		\includegraphics[width=0.10\textheight]{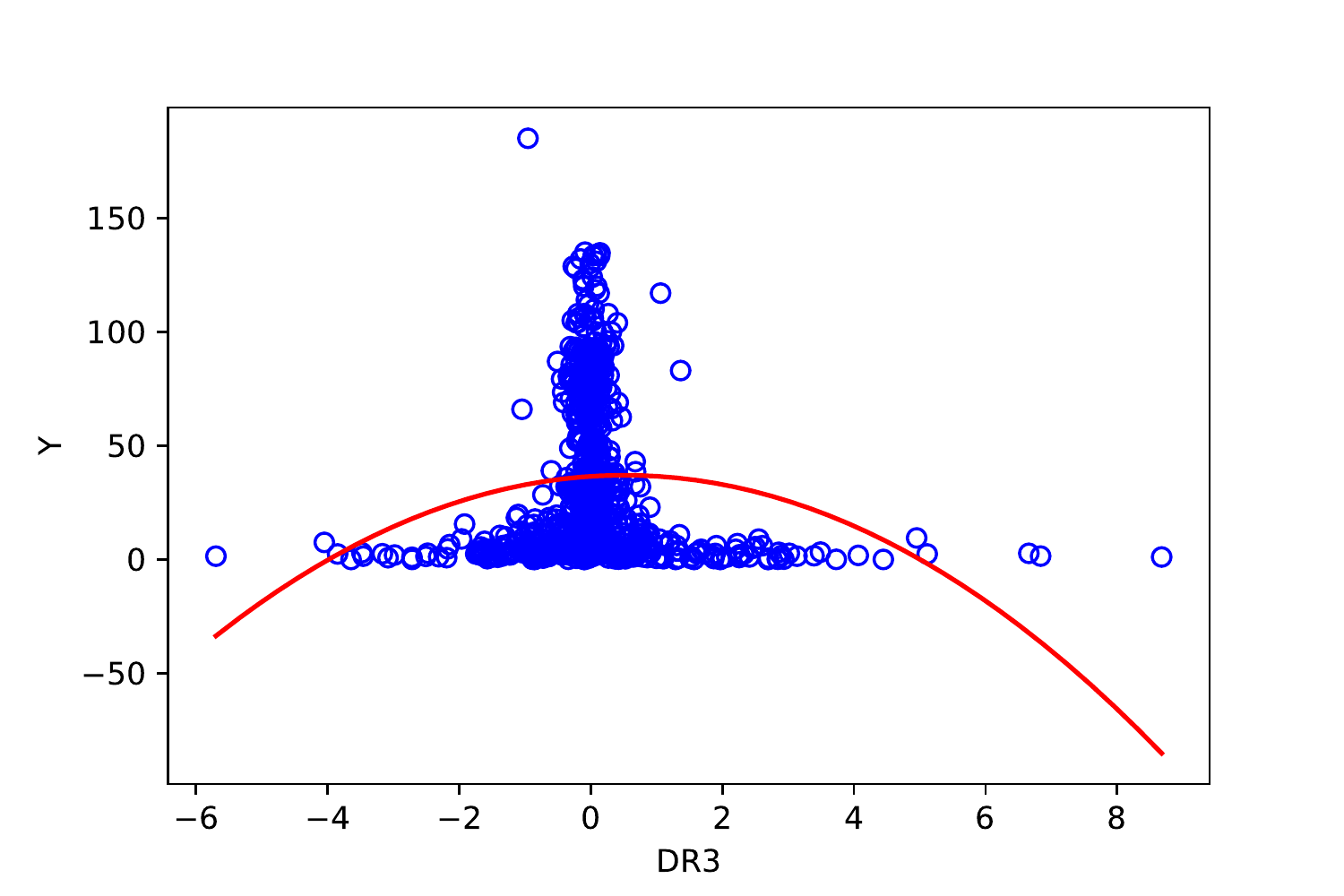} \includegraphics[width=0.10\textheight]{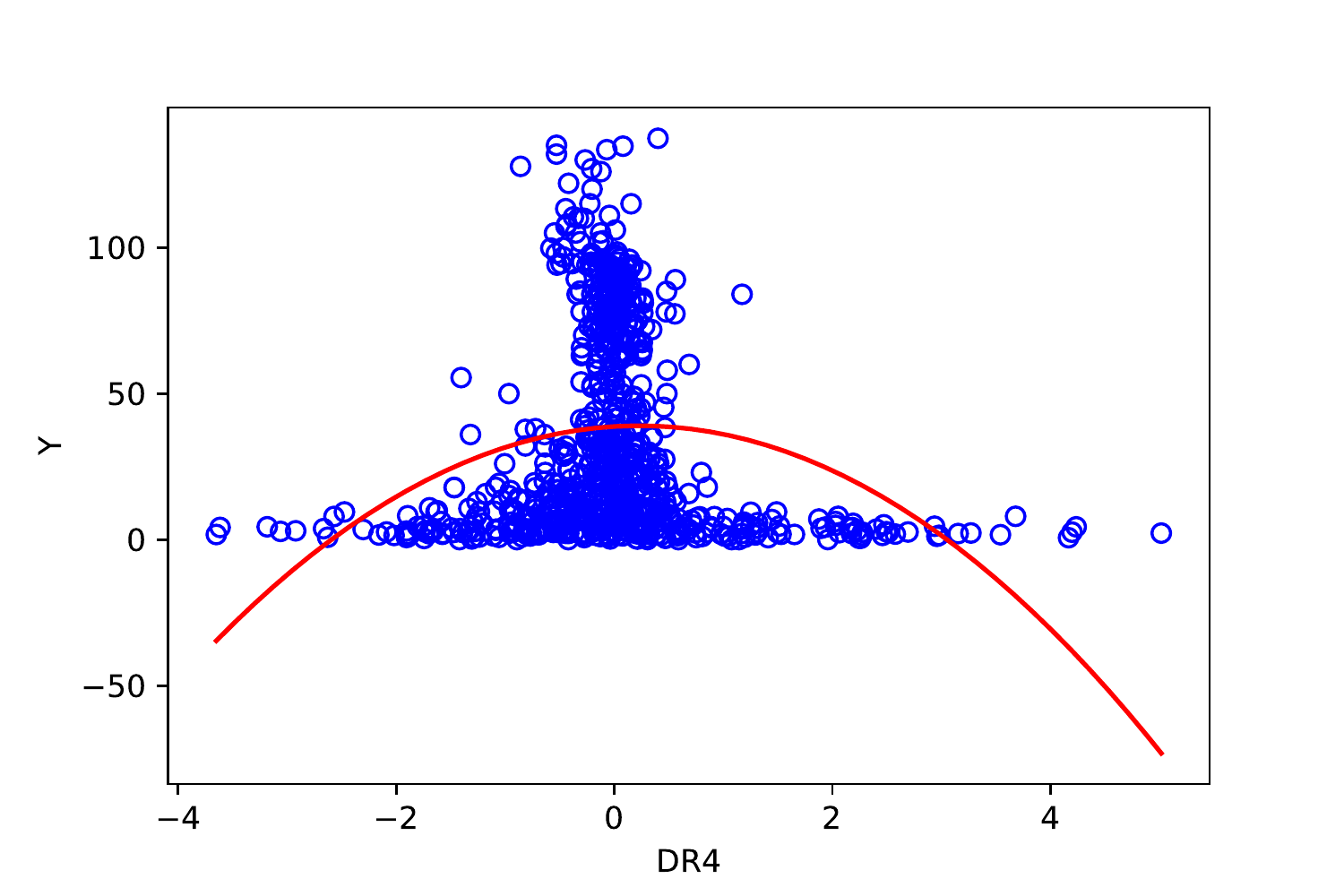}
		\includegraphics[width=0.10\textheight]{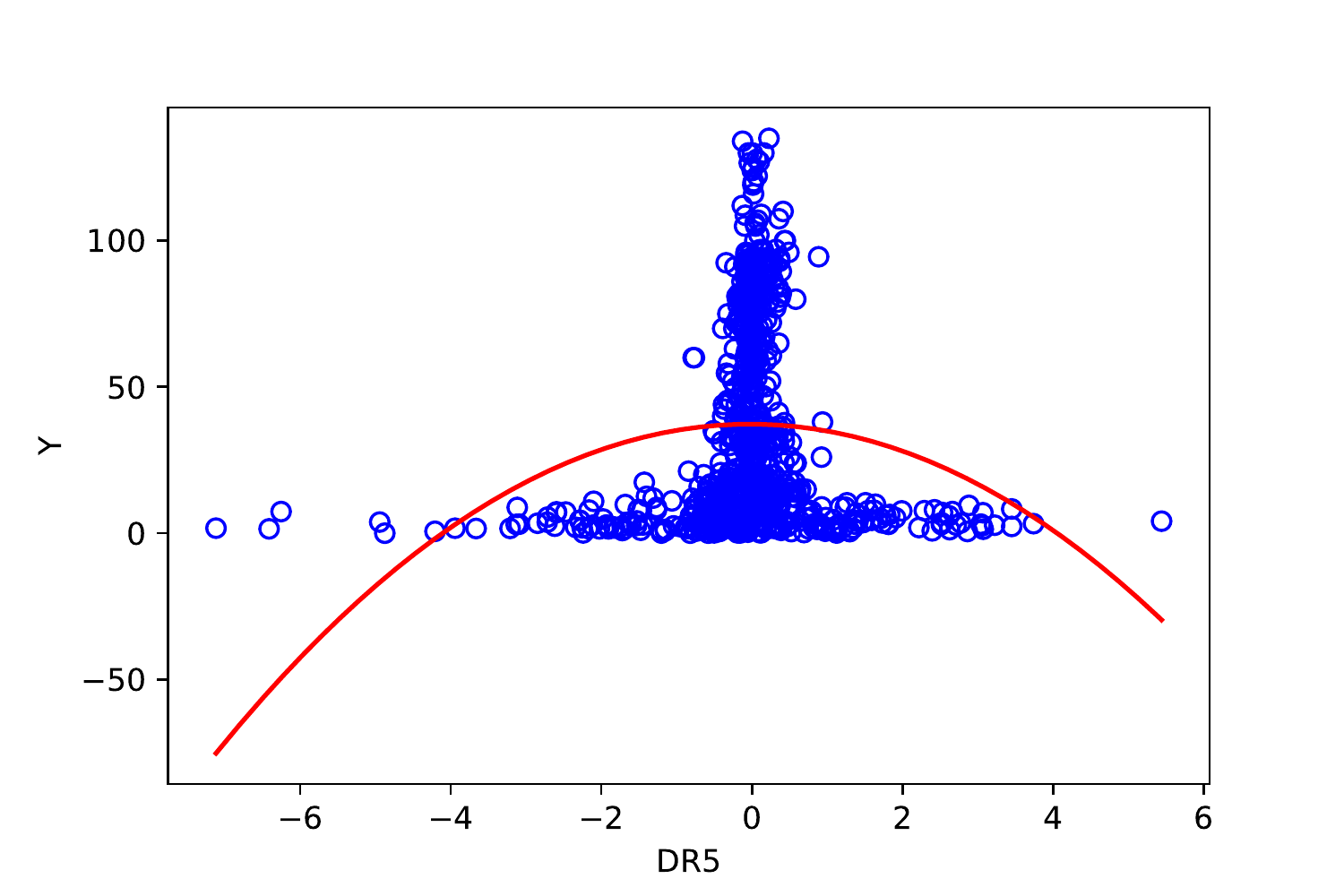}
	\end{minipage}
	\caption{\label{SuperConduct_drawingA}SIR and SAVE: The plots of the critical temperature against each component of the learned representation by SIR and SAVE for the superconductivity dataset.  The red line is a fitting quadratic model.}
\end{figure}

In Figure \ref{Poldata_drawingA},
we plot the transformed data based on each pair of the components of the learned representation with $d_0=5$ for SIR and SAVE.
\begin{figure}[htbp]
	\centering
	\begin{minipage}[t]{0.9\linewidth}
		\parbox[c][0.5cm]{0.18cm}{\rotatebox{90}{\tiny SIR}}
		\begin{minipage}[t]{\linewidth}
			\includegraphics[width=0.10\textheight]{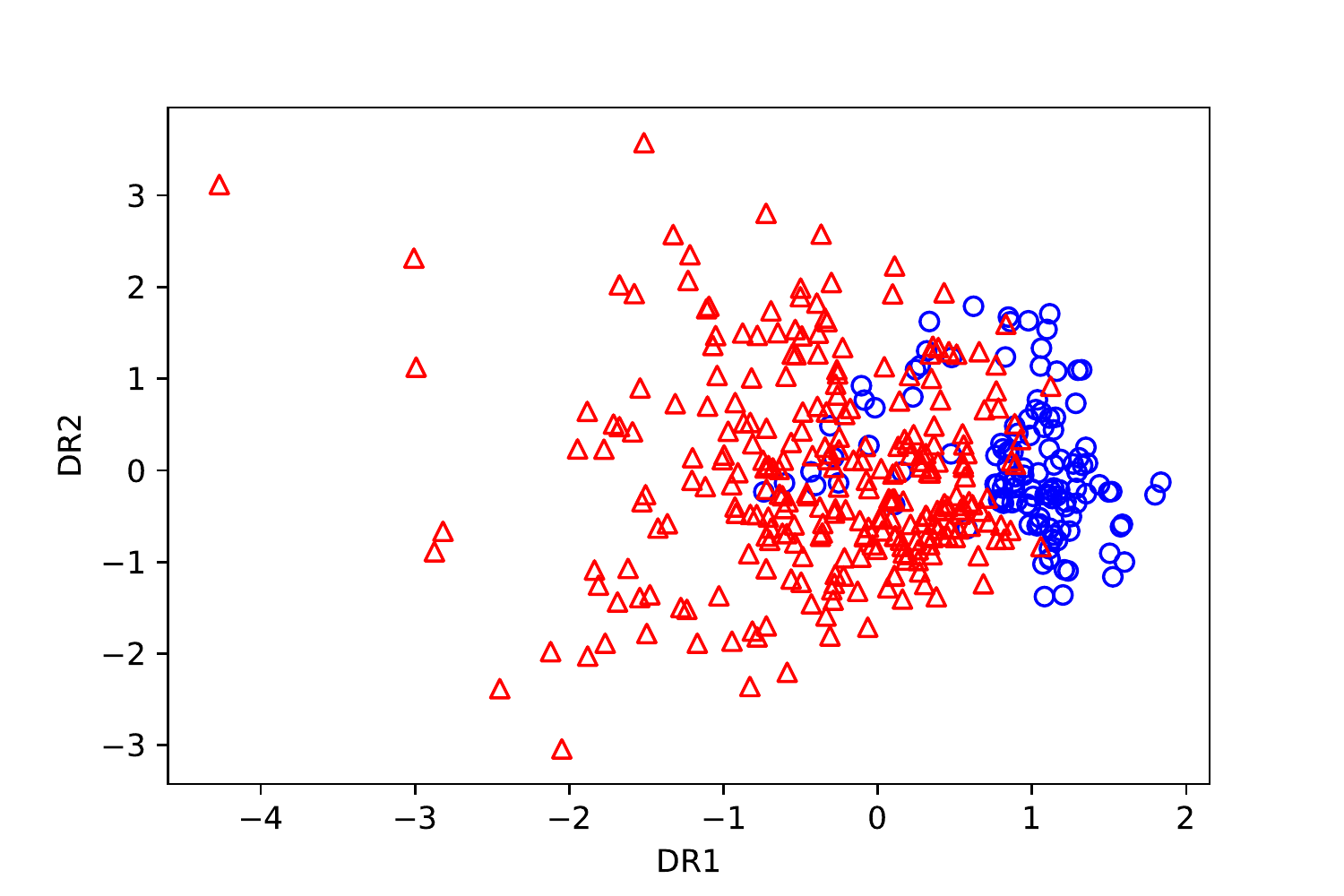}
			\includegraphics[width=0.10\textheight]{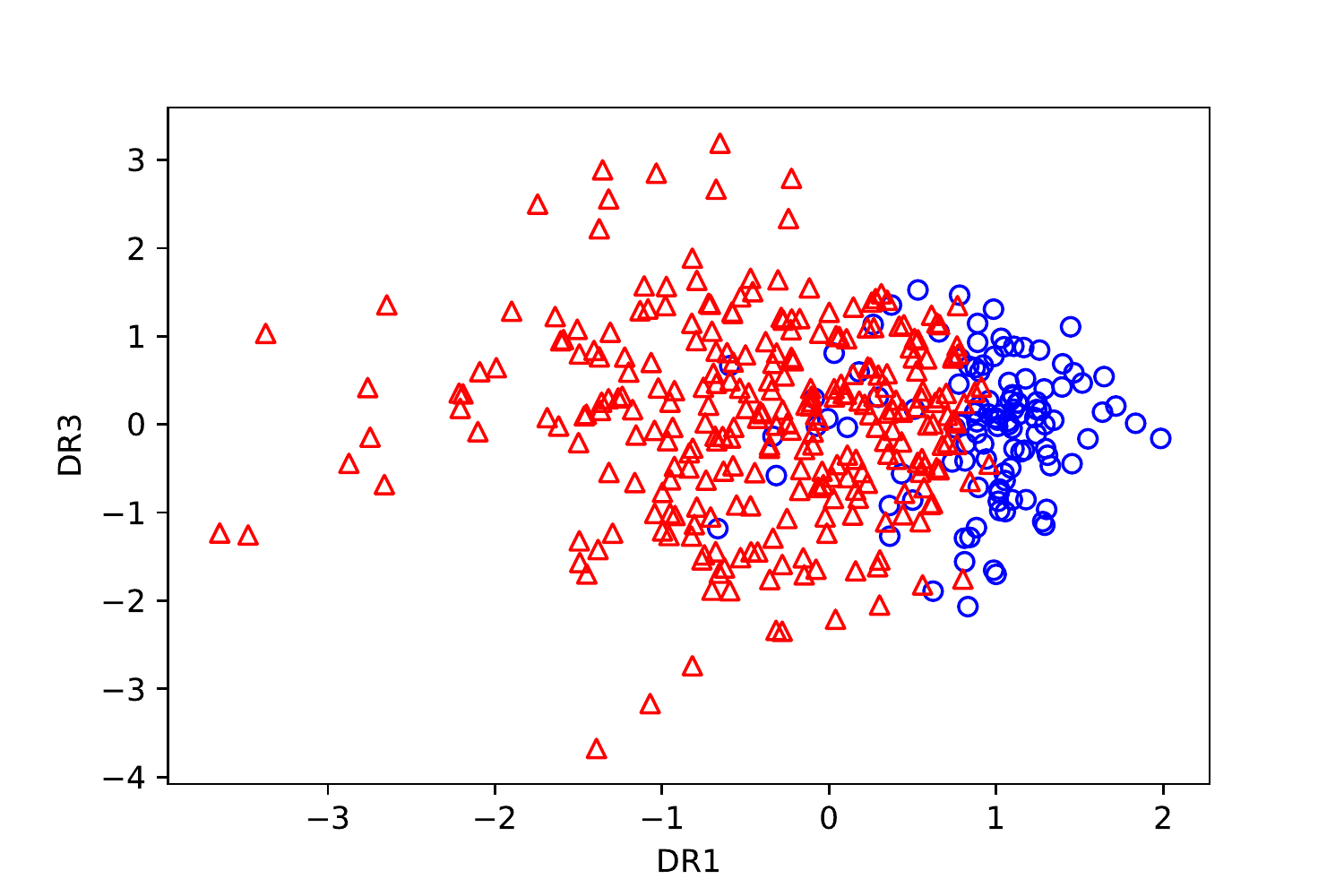}
			\includegraphics[width=0.10\textheight]{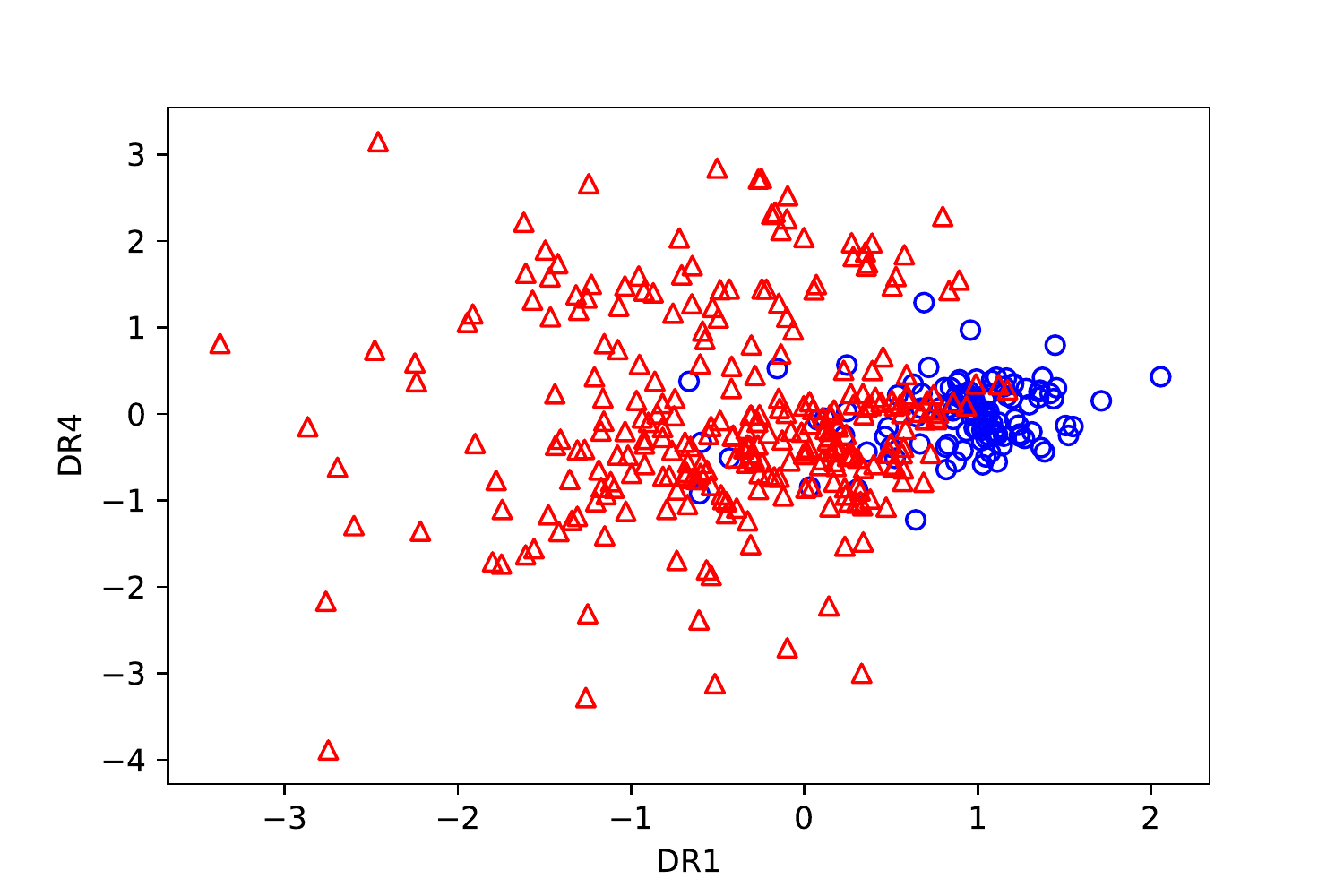}
			\includegraphics[width=0.10\textheight]{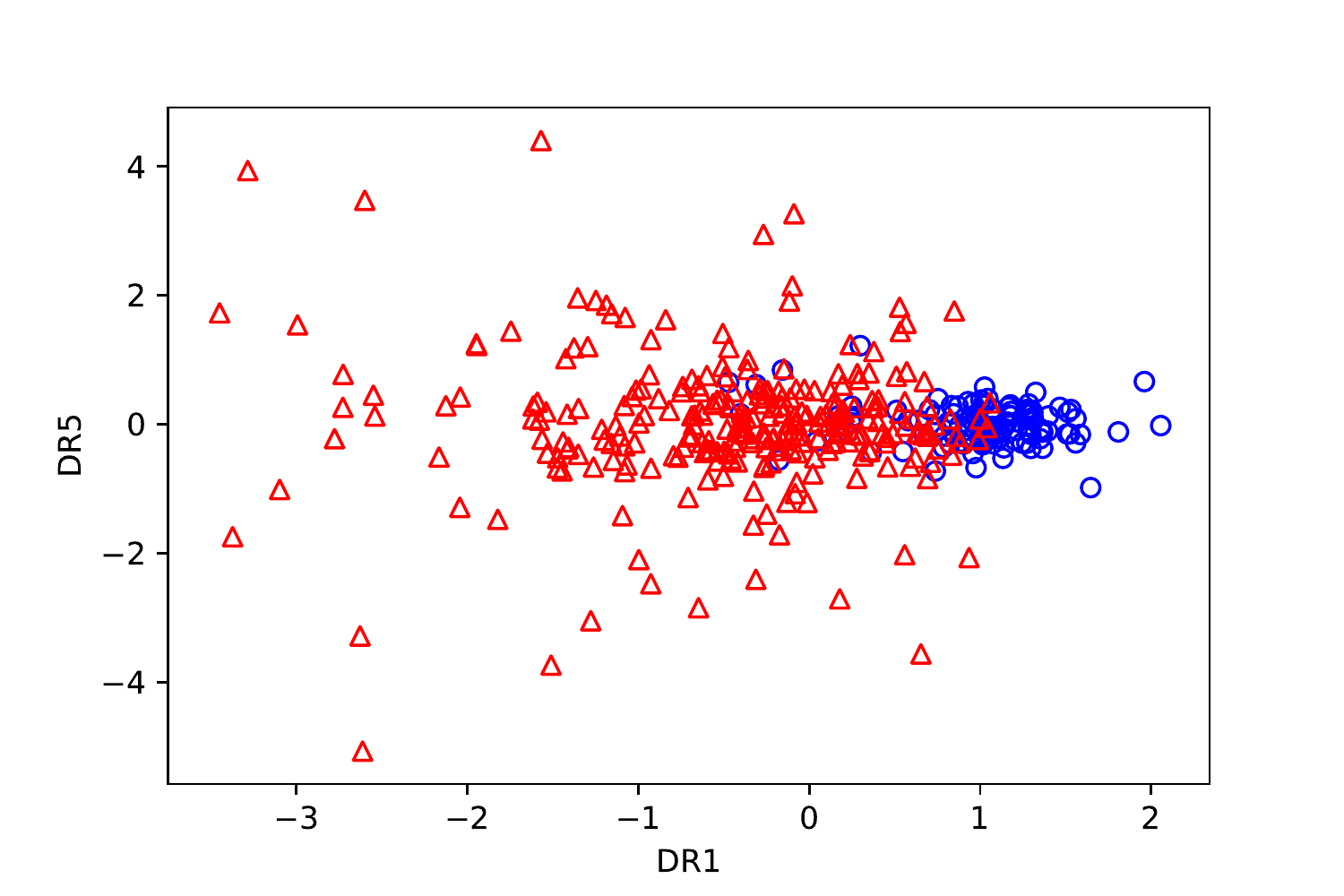}
			\includegraphics[width=0.10\textheight]{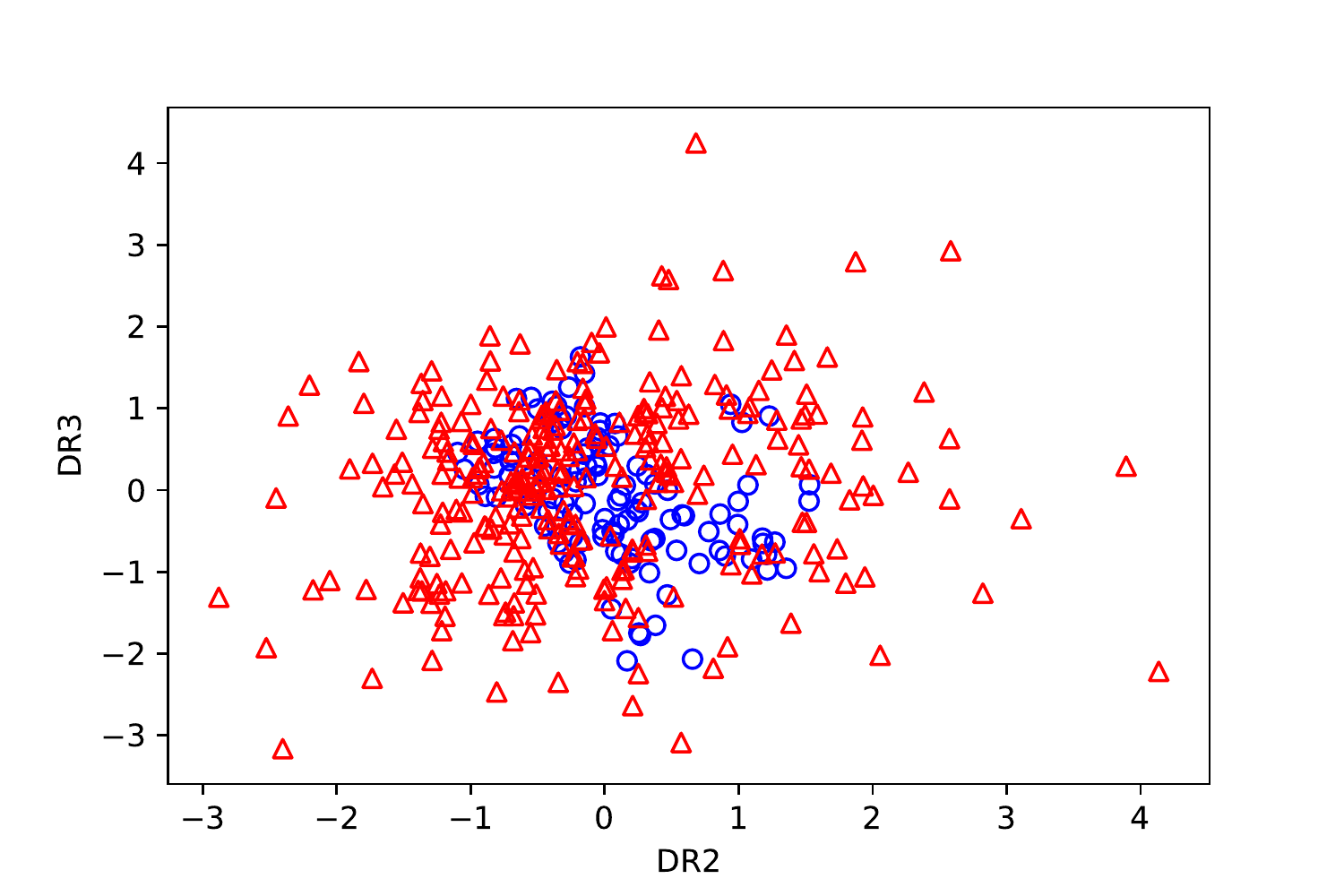}\\
			\includegraphics[width=0.10\textheight]{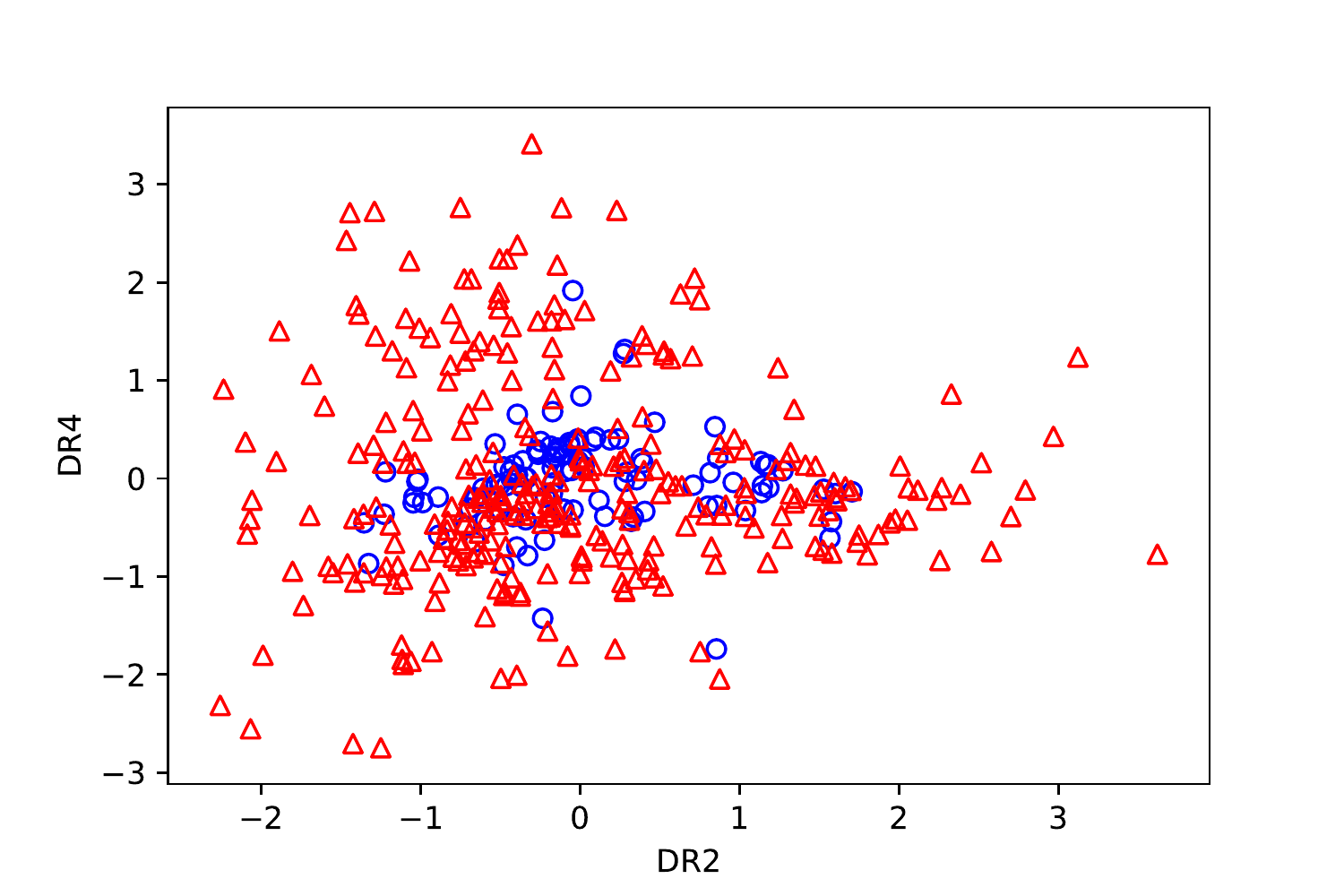}
			\includegraphics[width=0.10\textheight]{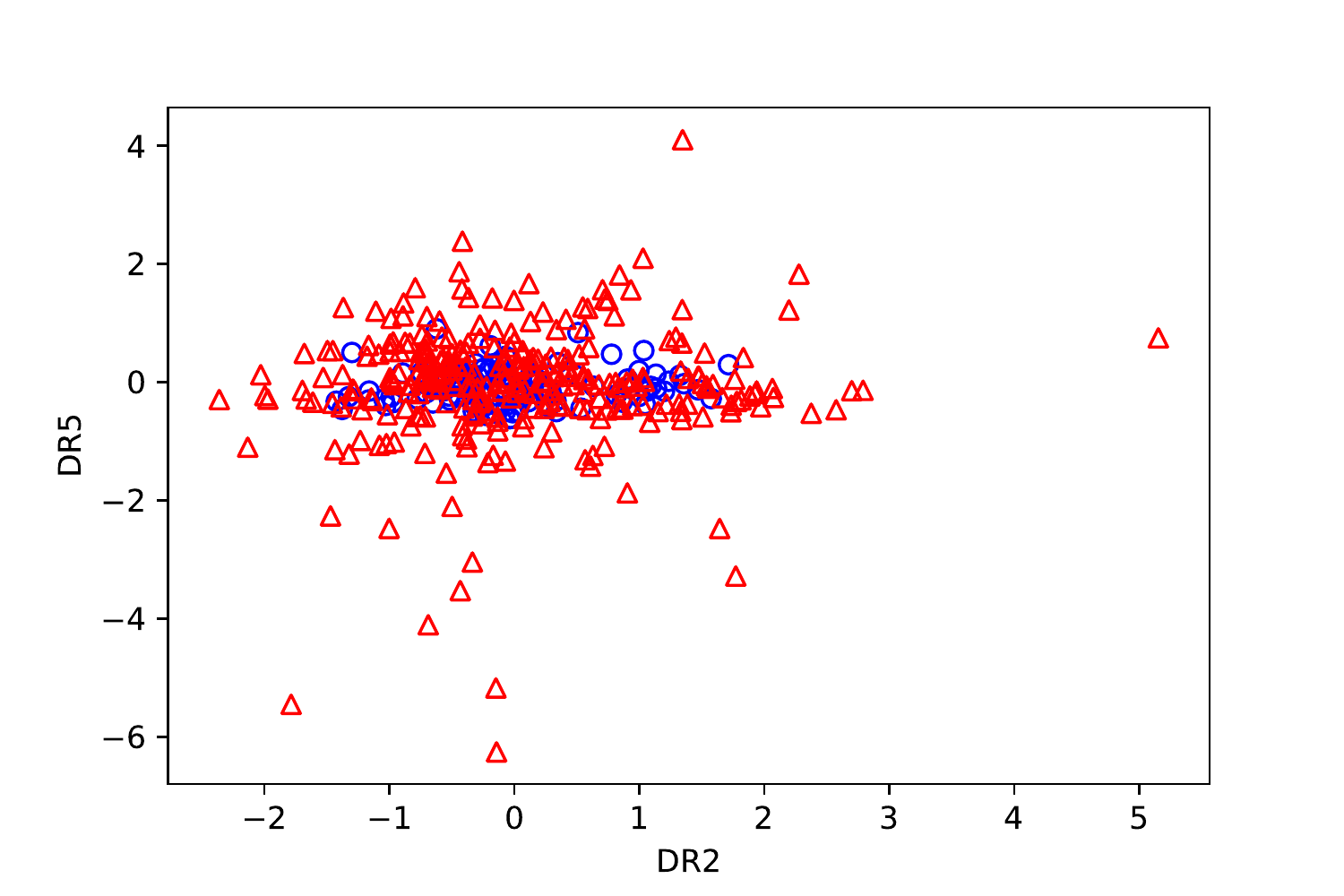}
			\includegraphics[width=0.10\textheight]{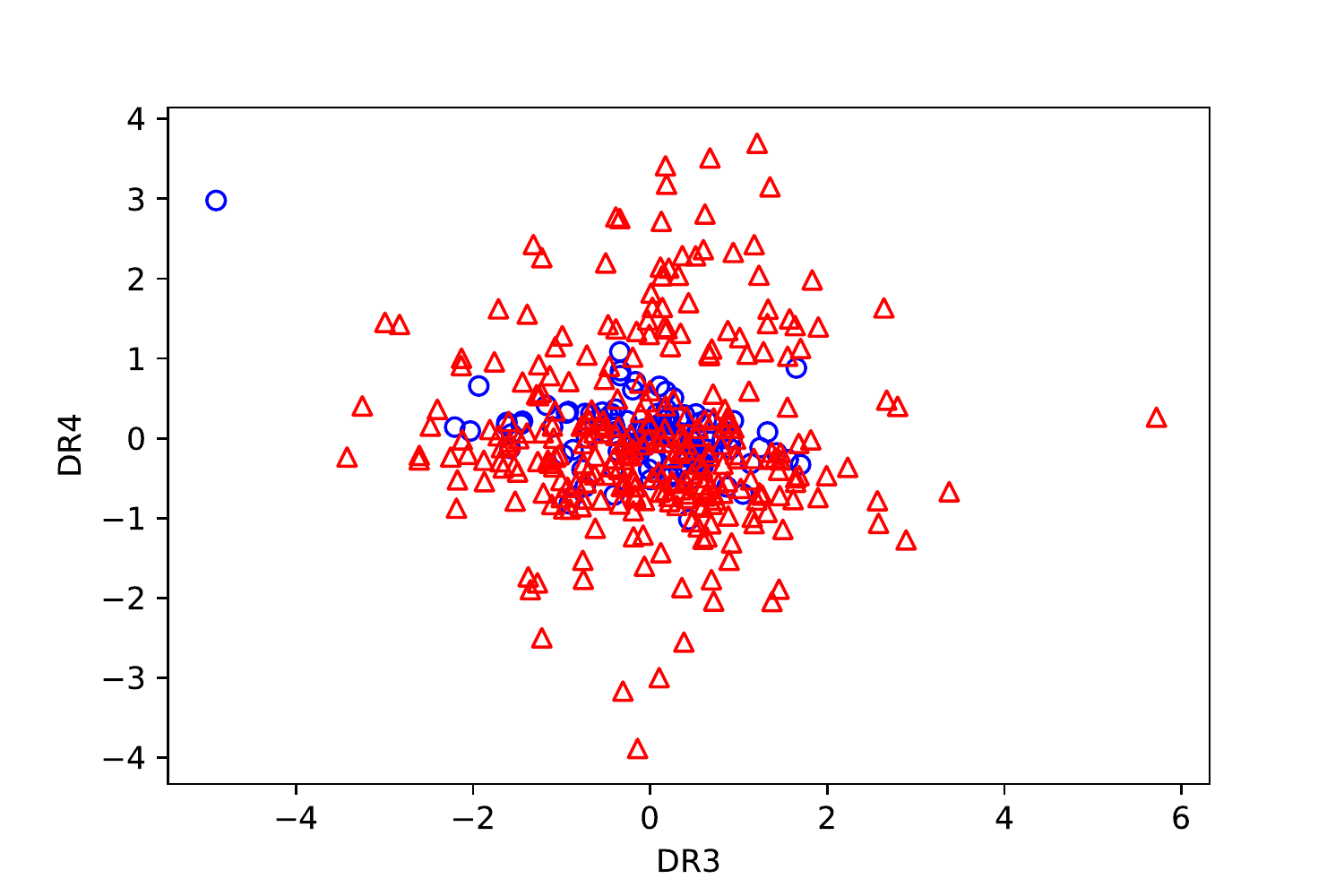}
			\includegraphics[width=0.10\textheight]{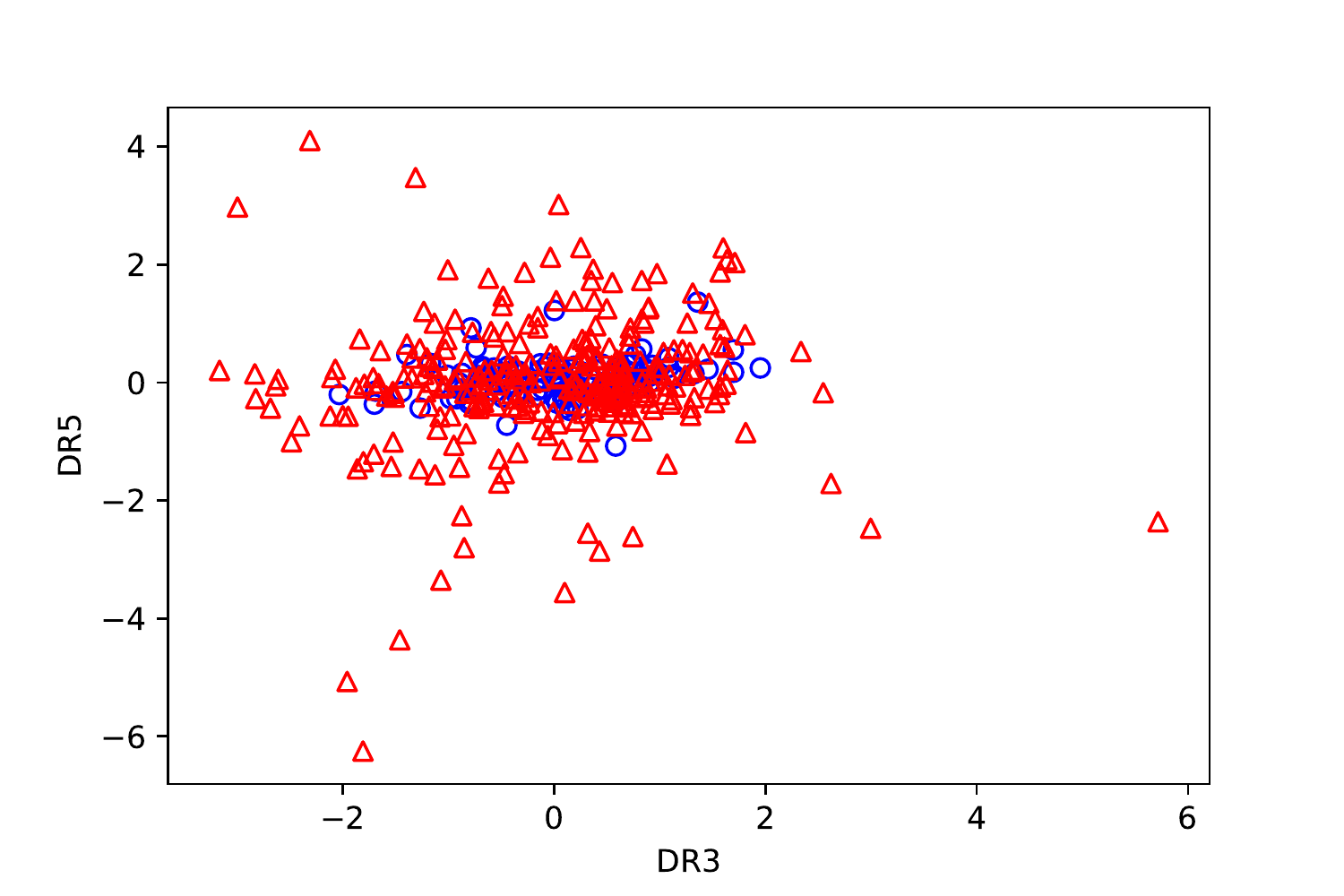}
			\includegraphics[width=0.10\textheight]{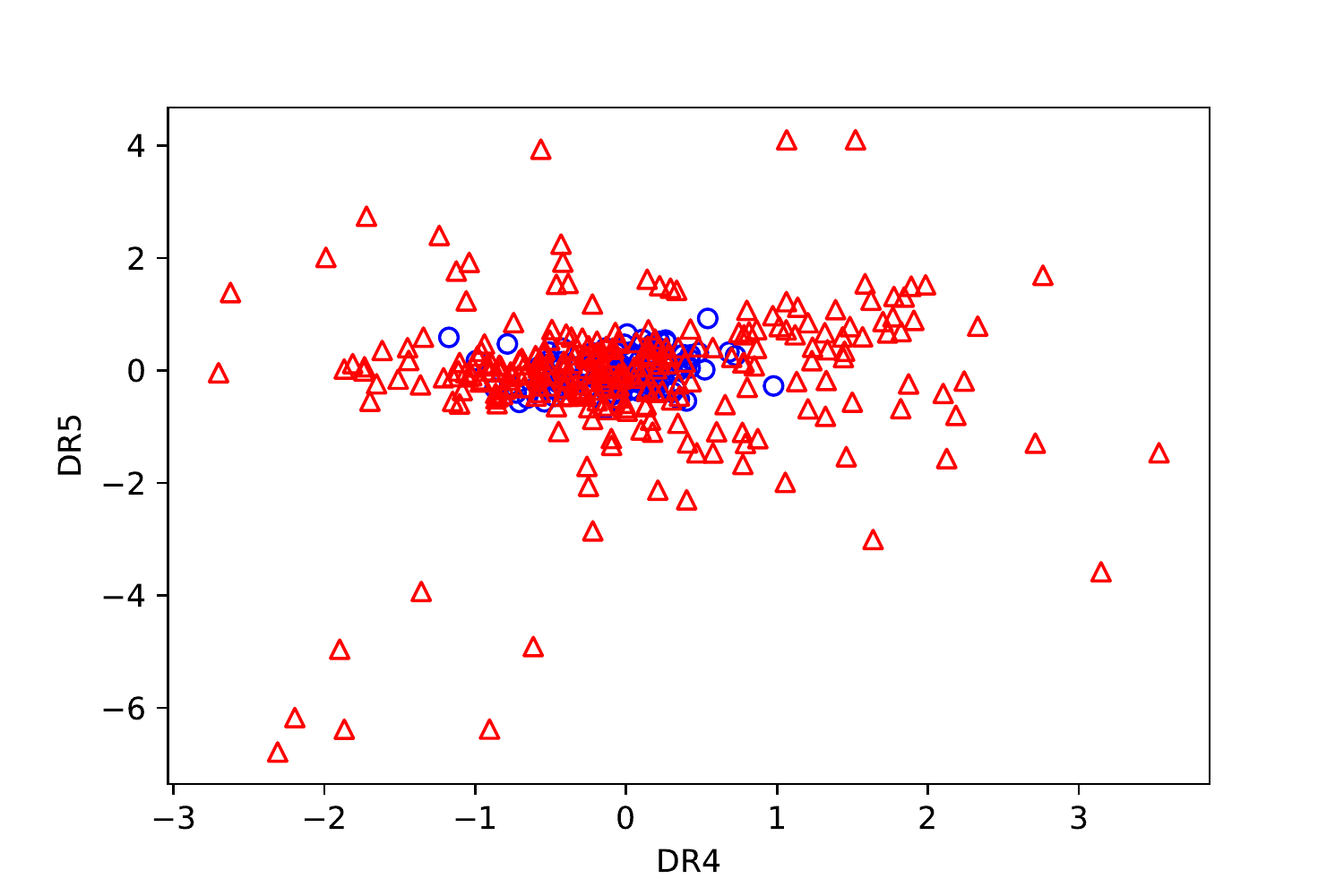}
		\end{minipage}
	\end{minipage}
	\\
	\begin{minipage}[t]{0.9\linewidth}
		\parbox[c][0.5cm]{0.18cm}{\rotatebox{90}{\tiny SAVE}}
		\begin{minipage}[t]{\linewidth}
			\includegraphics[width=0.10\textheight]{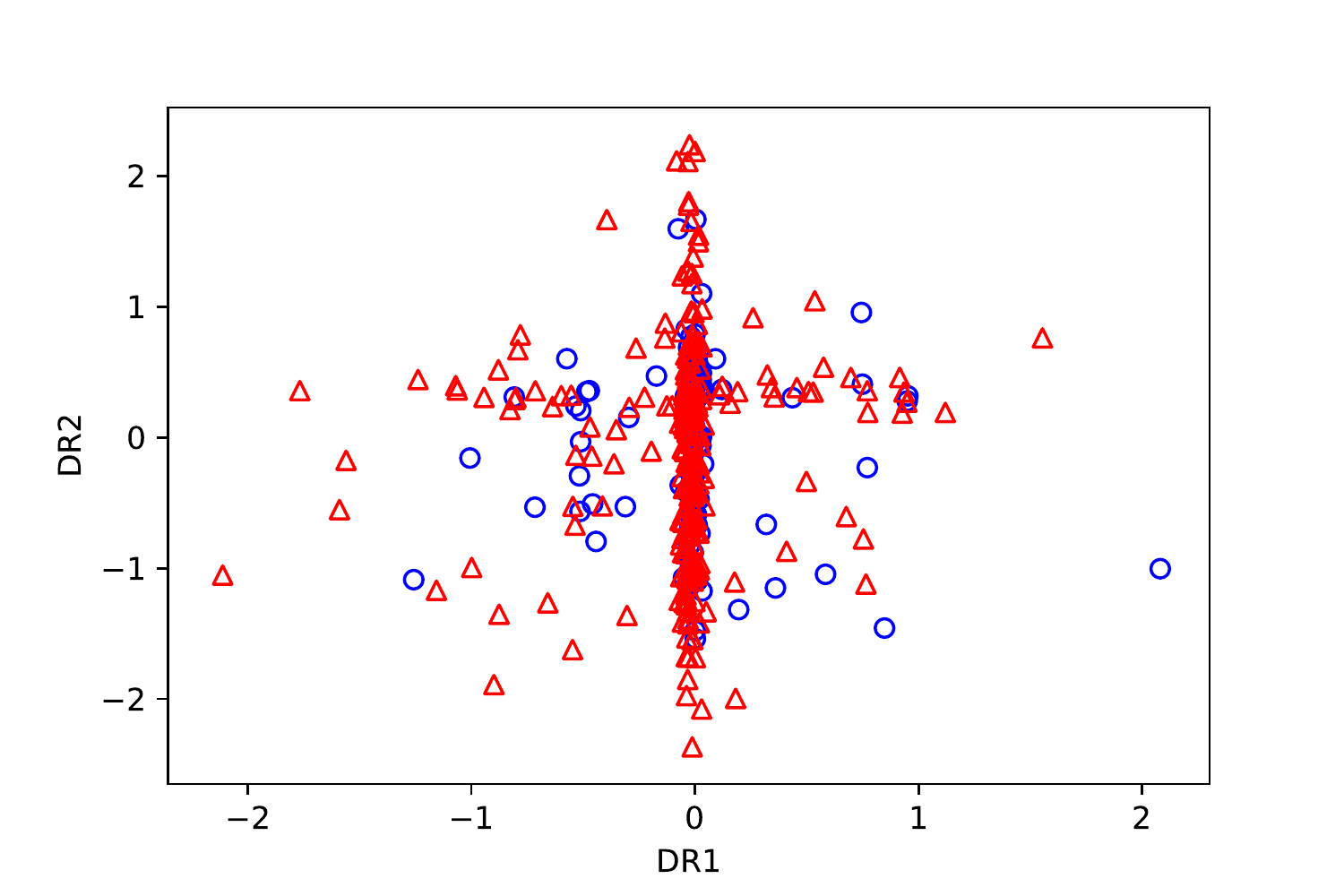}
			\includegraphics[width=0.10\textheight]{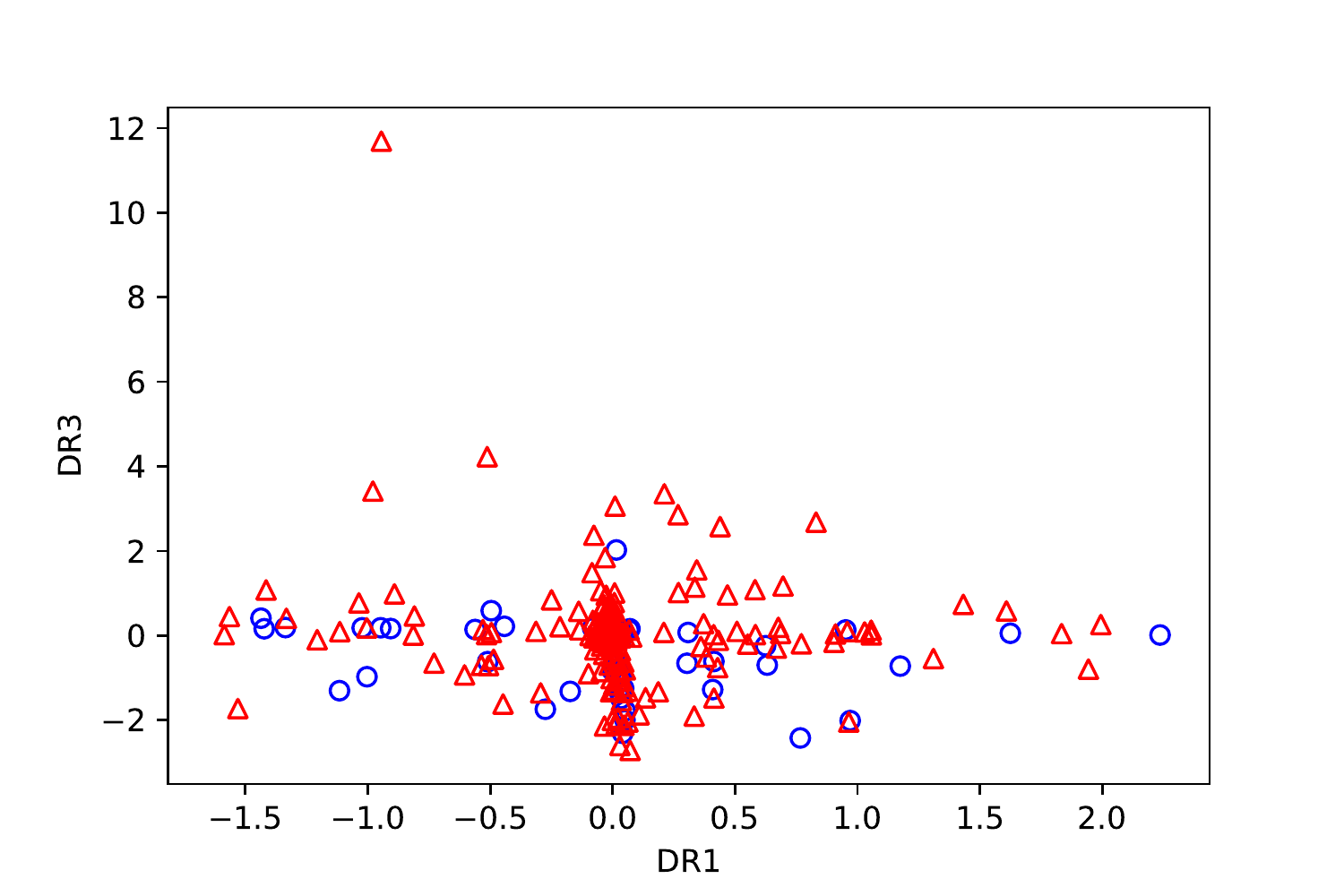}
			\includegraphics[width=0.10\textheight]{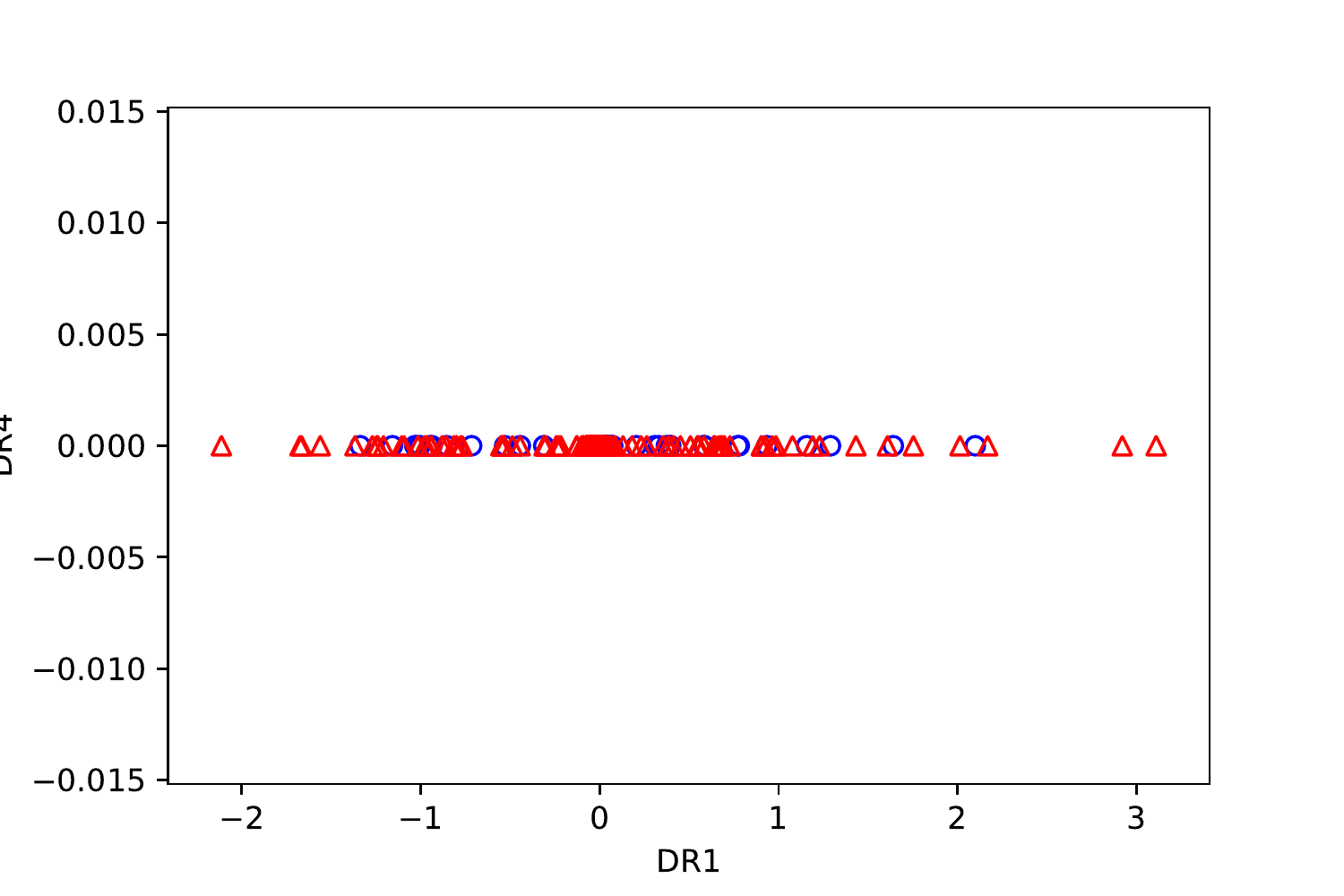}
			\includegraphics[width=0.10\textheight]{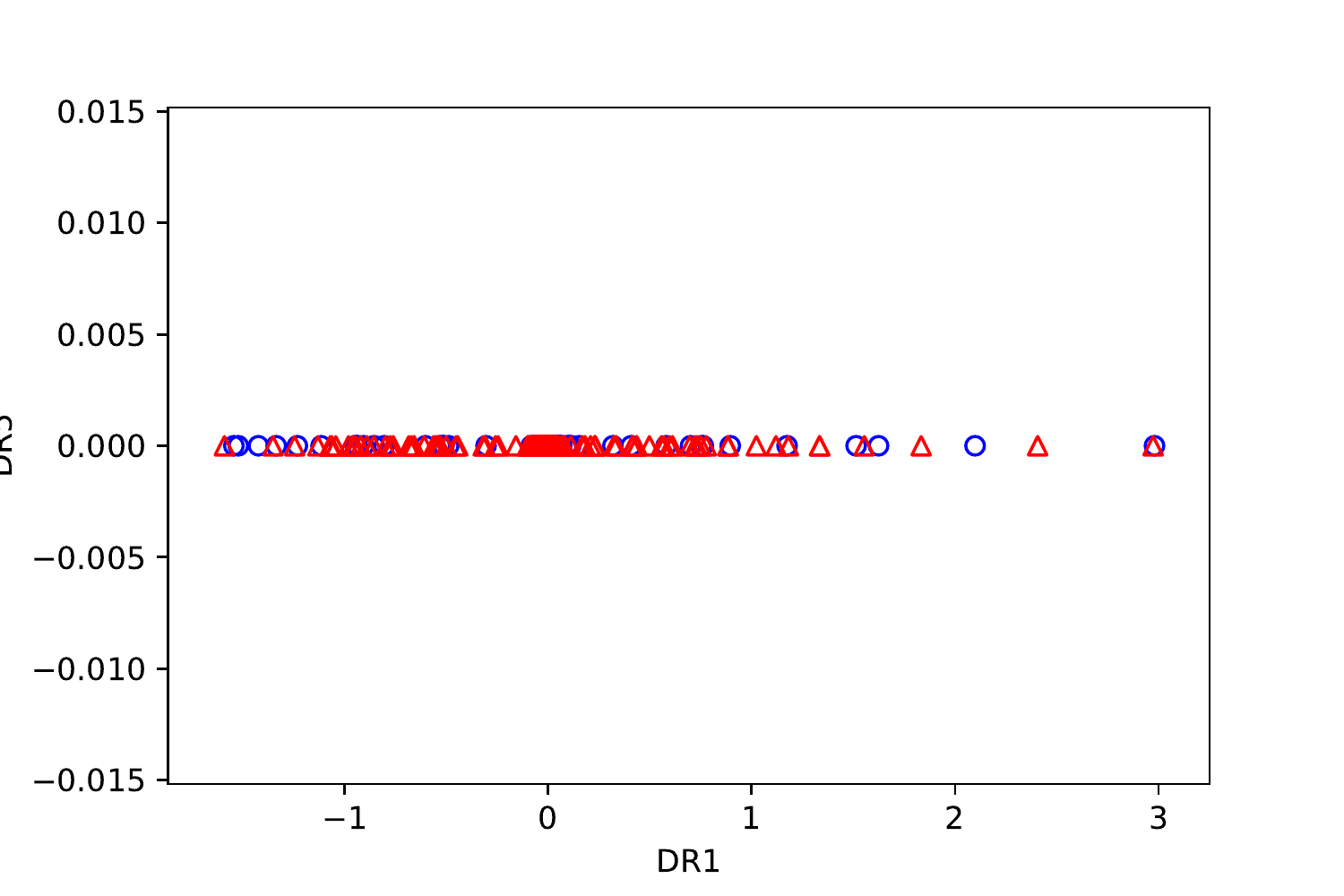}
			\includegraphics[width=0.10\textheight]{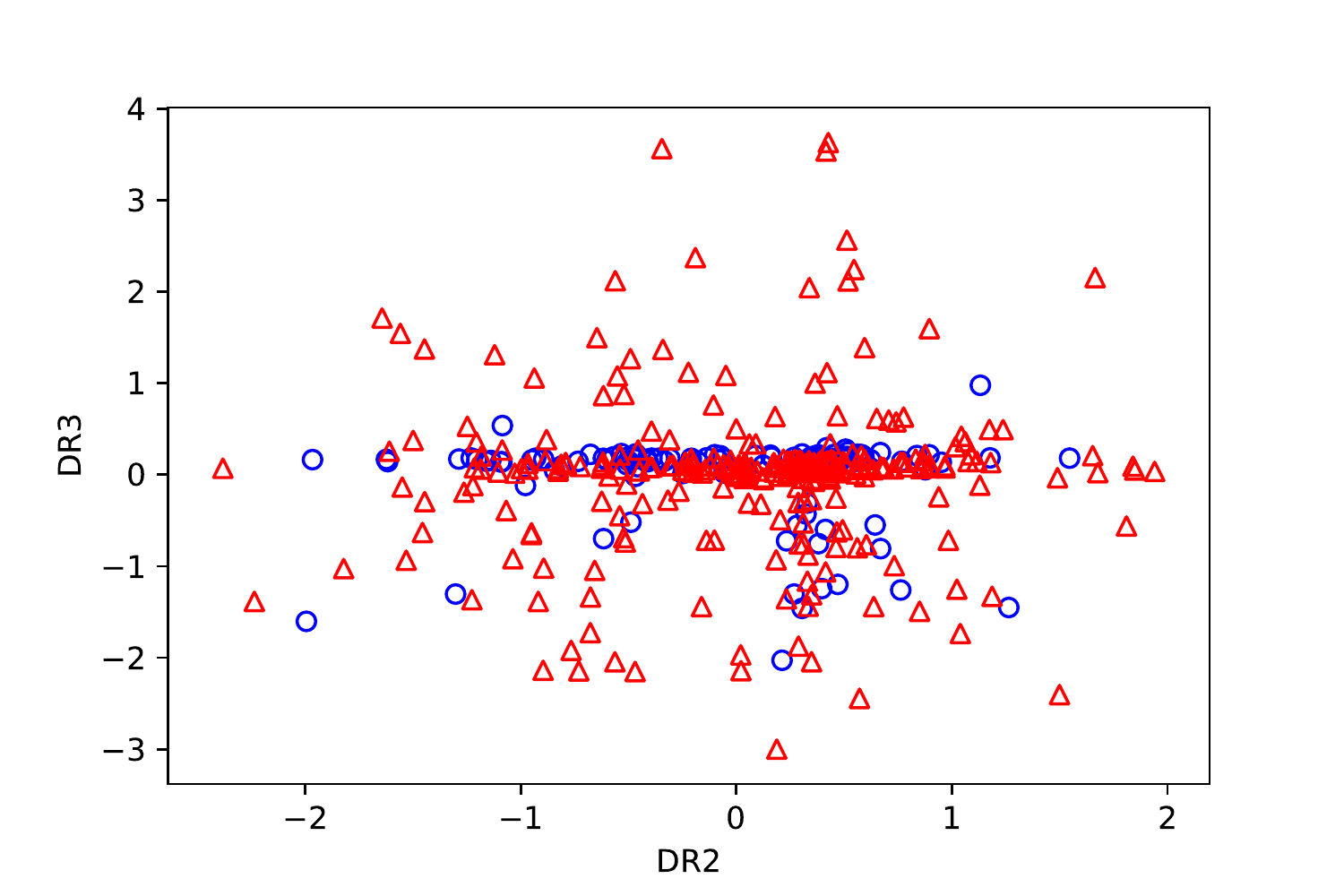}\\
			\includegraphics[width=0.10\textheight]{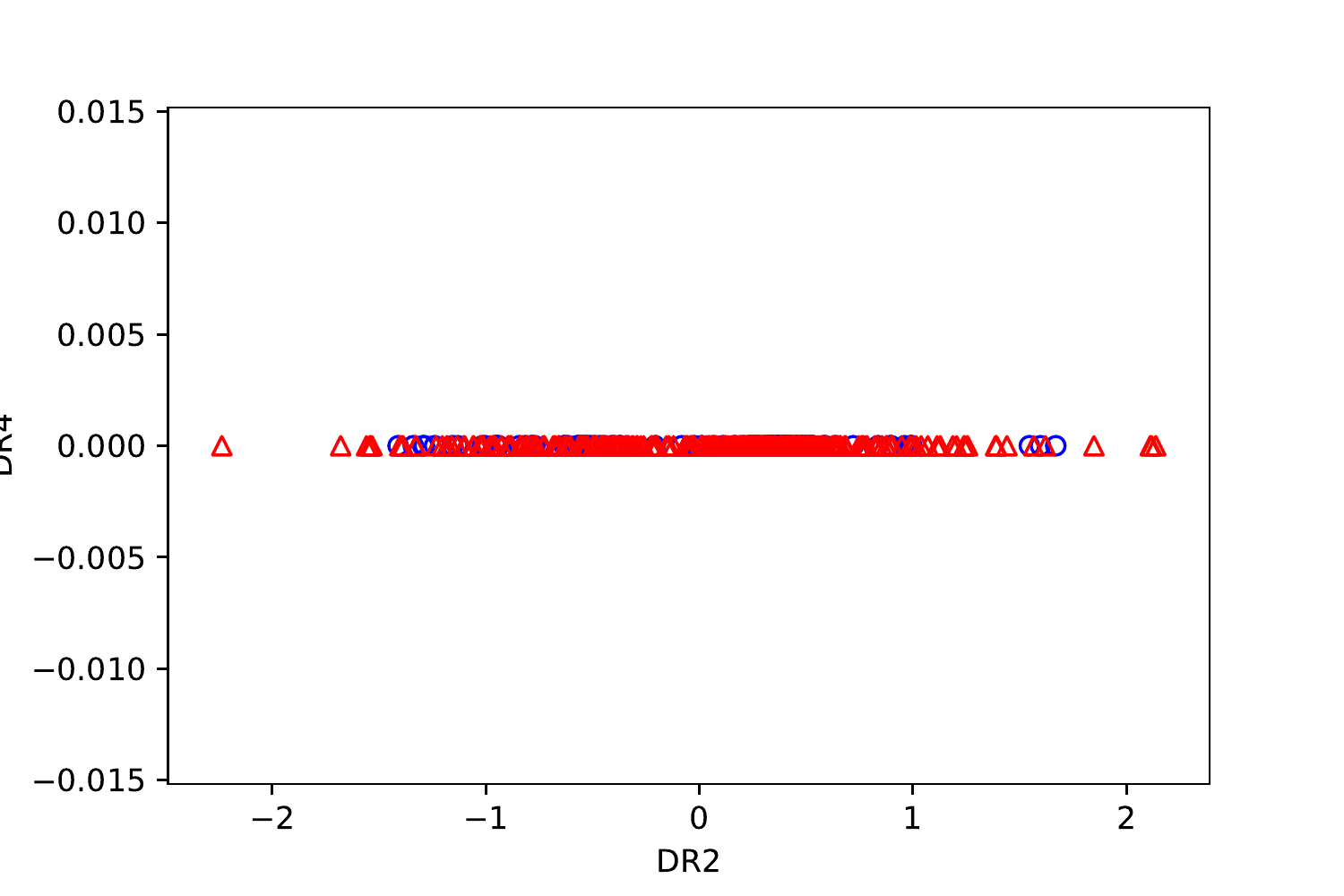}
			\includegraphics[width=0.10\textheight]{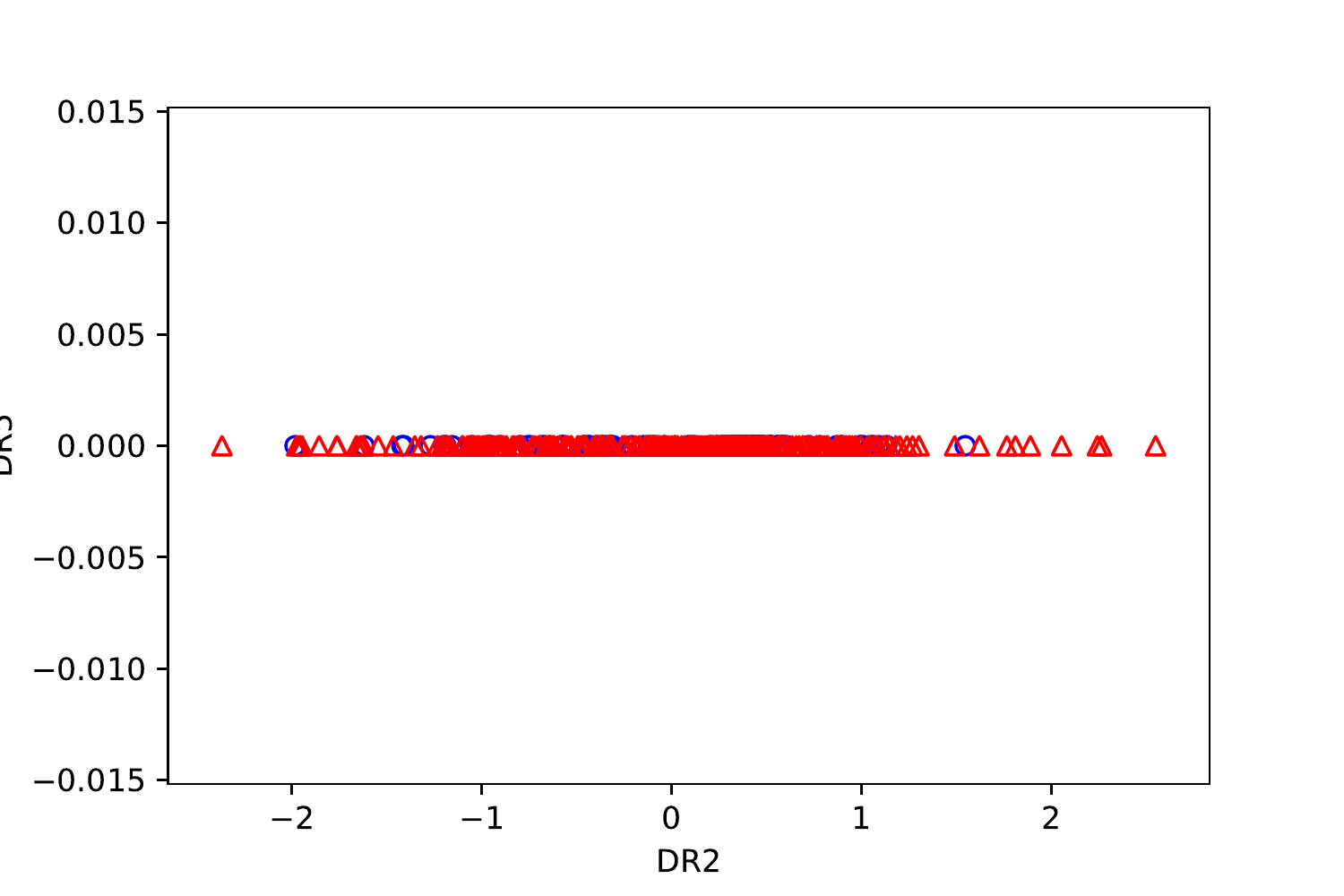}
			\includegraphics[width=0.10\textheight]{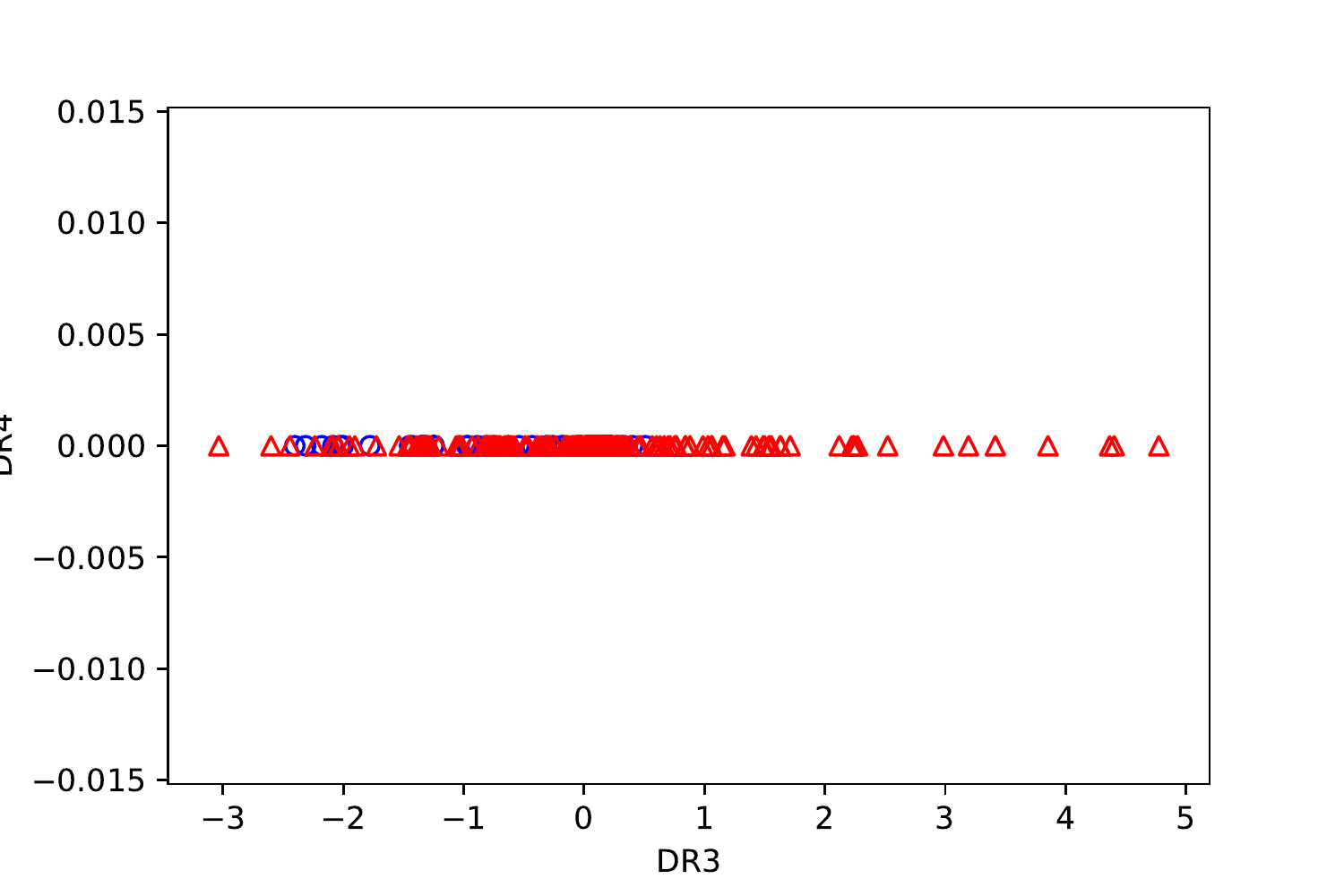}
			\includegraphics[width=0.10\textheight]{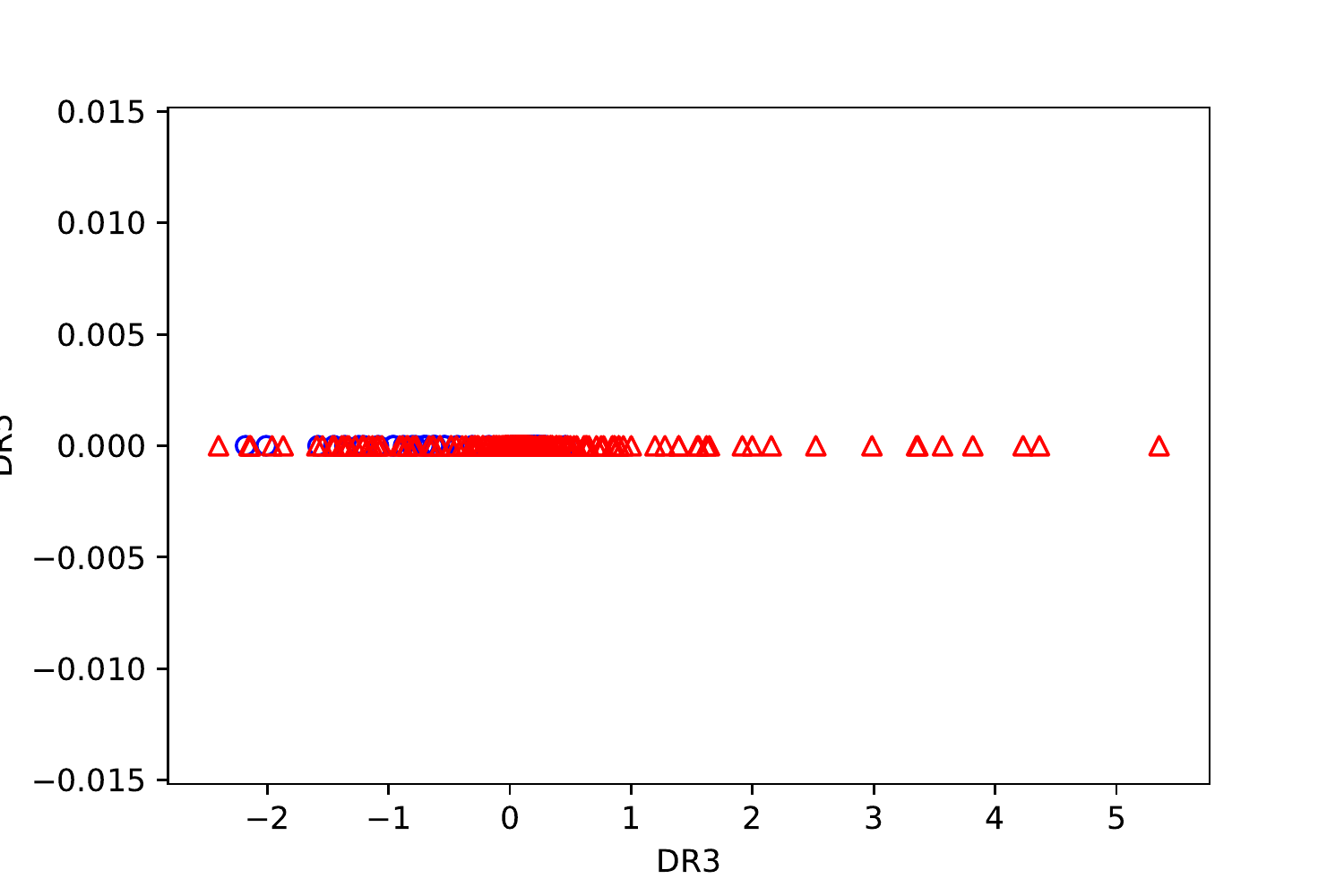}
			\includegraphics[width=0.10\textheight]{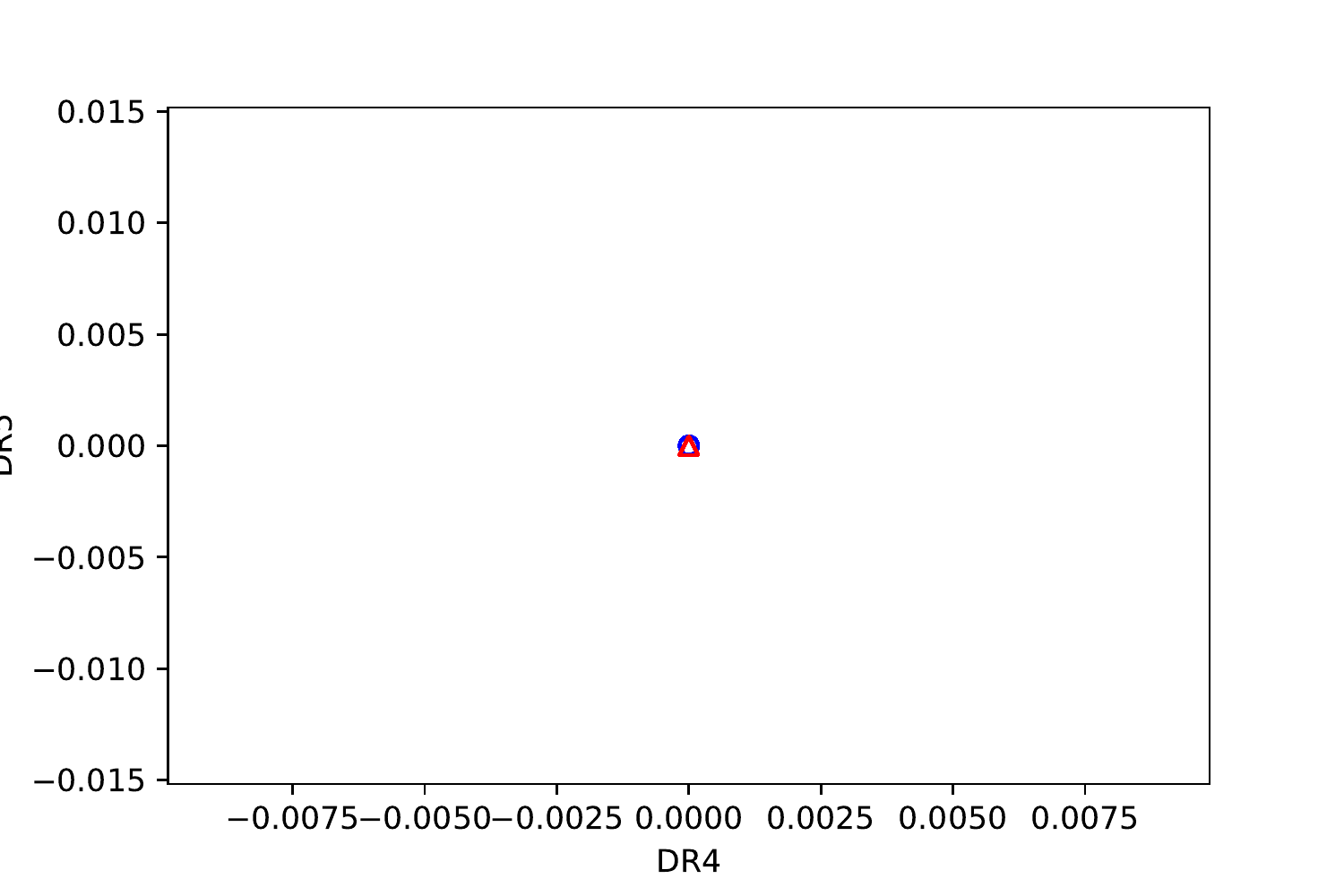}
		\end{minipage}
	\end{minipage}
	\caption{\label{Poldata_drawingA}The plots of the transformed data based on each pair of the components of the learned representation by SIR and SAVE for the Pole-Telecommunication dataset. Bule 'O' and red '$\Delta$' means that $Y\ge50$ and $Y\textless50$, respectively.}
\end{figure}

\subsection{Intrinsic Dimensionality Determination of Model (\ref{Toy_Model_d_est})}\label{Descrip_Implement_dest}
The number of total data points is $8000$. $8000$ data points are first divided into $6000$ training-validation data points and $2000$ testing sdata points and then $6000$ training-validation data points are divided into $4000$ training data points and $2000$ validation data points. $\mathcal{R}$ has $1$ hidden layer with width $128$. Both $\mathcal{D}$ and  $\mathcal{Q}$ also has $1$ hidden layer with width $64$.  The maximum number of epochs is $2000$, $\textit{lr}=0.001,\textit{wd}=0.0001$ and the \textit{patience} is $200$.

\newpage
\bibliographystyle{imsart-nameyear}
\bibliography{RLMI_bib_new}

\end{document}